\title[Dissecting Hessian]{Dissecting Hessian: Understanding Common Structure of Hessian in Neural Networks}
\begin{document}

\maketitle

\begin{abstract}%
Hessian captures important properties of the deep neural network loss landscape. Previous works have observed low rank structure in the Hessians of neural networks. In this paper, we propose a decoupling conjecture that decomposes the layer-wise Hessians of a network as the Kronecker product of two smaller matrices. We can analyze the properties of these smaller matrices and prove the structure of top eigenspace random 2-layer networks. 
The decoupling conjecture has several other interesting implications \--- top eigenspaces for different models have surprisingly high overlap, and top eigenvectors form low rank matrices when they are reshaped into the same shape as the corresponding weight matrix. All of these can be verified empirically for deeper networks. Finally, we use the structure of layer-wise Hessian to get better explicit generalization bounds for neural networks.
\end{abstract}

\begin{keywords}%
  Hessian, eigenspace, neural network, nonconvex landscape
\end{keywords}

\section{Introduction}
\label{sec:intro}


The loss landscape for neural networks is crucial for understanding training and generalization. In this paper we focus on the structure of Hessians, which capture important properties of the loss landscape. For optimization, Hessian information is used explicitly in second order algorithms, and even for gradient-based algorithms properties of the Hessian are often leveraged in analysis \citep{sra2012optimization}. For generalization, the Hessian captures the local structure of the loss function near a local minimum, which is believed to be related to generalization gaps \citep{keskar2016large}. 

Several previous results including \citet{sagun2017empirical, papyan2018full} observed interesting structures in Hessians for neural networks \--- it often has around $c$ large eigenvalues where $c$ is the number of classes. In this paper we ask:
\begin{center}
    \emph{Why does the Hessian of neural networks have special structures in its top eigenspace?}
\end{center}

A rigorous analysis of the Hessian structure would potentially allow us to understand what the top eigenspace of the Hessian depends on (e.g., the weight matrices or data distribution), as well as predicting the behavior of the Hessian when the architecture changes. 

Towards this goal, we focus on the layer-wise Hessians in this paper. One difficulty in analyzing the layer-wise Hessian lies in its size \--- for a fully-connected layer with a $n\times n'$ weight matrix, the layer-wise Hessian is a $nn'\times nn'$ matrix. We propose a {\em decoupling conjecture} that approximates this matrix by the Kronecker product of two smaller matrices \--- a $n\times n$ input autocorrelation matrix and a $n'\times n'$ output Hessian matrix. We then study the properties of these two smaller matrices, which together with the decoupling conjecture give an explanation of why there are just a few large eigenvalues, as well as a heuristic formula to efficiently compute the top eigenspace. We prove the decoupling conjecture and structure of the output Hessian matrix for a simple model of 2-layer network. We then empirically verify that these results extend to much more general settings.

\subsection{Outline}

\textbf{Understanding Hessian Structure using Kronecker Factorization:} 
In \sectionref{sec:hessian} We first formalize a decoupling conjecture that states  the layer-wise Hessian can be approximated by the Kronecker product of the output Hessian and input auto-correlation. 

The auto-correlation of the input is often very close to a rank 1 matrix, because the inputs for most layers have a nonzero expectation. We show that when the input auto-correlation component is approximately rank 1, top eigenspace of the layerwise Hessian is very similar to that of the output Hessian. 
On the contrary, when inputs have mean 0 (e.g., when the model is trained with batch normalization), the input auto-correlation matrix is much farther from rank 1 and the layer-wise Hessian often does not have the same low rank structure.

In \sectionref{sec:theoretical} we prove that in an over-parametrized two-layer neural network on random data, the output Hessian is approximately rank $c-1$. Further, we can compute the top $c-1$ eigenspace directly from weight matrices. We show a similar low rank result for 
the layer-wise Hessian. 


\textbf{Implication on the Structure of Top Eigenspace for Hessians:} The decoupling conjecture, together with our characterizations of its two components, have surprising implications to the structure of top-eigenspace for layer-wise Hessians. Since the eigenvector of a Kronecker product is just the outer product of eigenvectors of its components, if we express the top eigenvectors of a layer-wise Hessian as a matrix with the same dimensions as the weight matrix, then the matrix is approximately rank 1. In \figureref{fig:intro_figs}.a we show the singular values of several such reshaped eigenvectors. Another more surprising phenomenon considers the overlap between top eigenspaces for different models.

\begin{figure}[th]
\vspace{-6pt}
    \centering
\captionsetup[sub]{format=subcaptionformat}
    \begin{subfigure}[b]{0.58\textwidth}
        \centering
        \captionsetup{justification=centering}
        \includegraphics[width=0.45\textwidth]{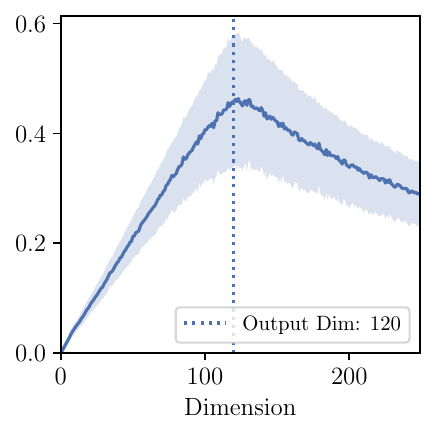}
        \includegraphics[width=0.45\textwidth]{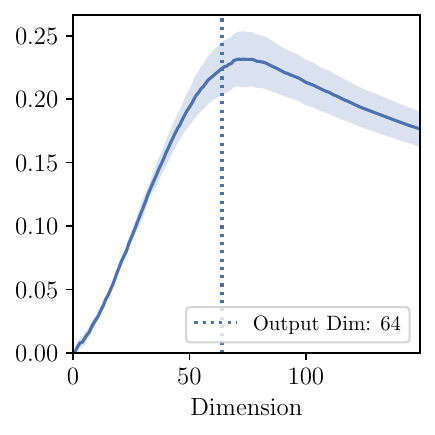}
        \caption{Overlap between dominate eigenspace of layer-wise Hessian at different minima for fc1:LeNet5 (\textbf{left}) with output dimension 120 and conv11:ResNet18-W64 (\textbf{right}) with output dimension 64.}
        \label{fig:intro_overlap}
    \end{subfigure}%
    \begin{subfigure}[b]{0.38\textwidth}
        \centering
        \captionsetup{justification=centering}
        \includegraphics[width=0.9\textwidth]{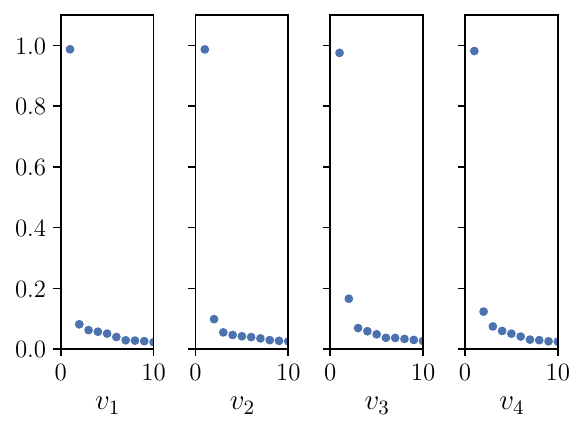}
        \caption{Top 10 singular values of the top 4\linebreak eigenvectors of the layer-wise Hessian of\linebreak fc1:LeNet5 after reshaped as matrix.}
        \label{fig:intro_lowrank}
    \end{subfigure}%
    \caption{Some interesting observations on the structure of layer-wise Hessians.  The eigenspace overlap is defined in \cref{def:overlap} and the reshape operation is defined in \cref{def:matricization}}
    \label{fig:intro_figs}
    \vspace{-6pt}
\end{figure}
Consider two neural networks trained with different random initializations and potentially different hyper-parameters; their weights are usually nearly orthogonal. One might expect that the top eigenspace of their layer-wise Hessians are also very different. However, empirically one observe that the top eigenspace of the layer-wise Hessians have a very high overlap, and the overlap peaks at the dimension of the layer's output (see \figureref{fig:intro_overlap}). This is a direct consequence of the Kronecker product and the fact that the input auto-correlation matrix is close to rank 1.


\textbf{Applications:} As a direct application of our results, in \sectionref{sec:pac} we show that the Hessian structure can be used to improve the PAC-Bayes bound computed in \citet{dziugaite2017computing}. 

\subsection{Related Works}
\textbf{Hessian-based analysis for neural networks (NNs):} Hessian matrices for NNs reflect the second order information about the loss landscape, which is important in characterizing SGD dynamics \citep{jastrzebski2018relation} and related to generalization \citep{li2020hessian}, robustness to adversaries \citep{yao2018hessian} and interpretation of NNs \citep{singla2019understanding}. People have empirically observed several interesting phenomena of the Hessian, e.g., the gradient during training converges to the top eigenspace of Hessian \citep{gur2018gradient, ghorbani2019investigation}, and the eigenspectrum of Hessian contains a ``spike" which has about $c-1$ large eigenvalues and a continuous ``bulk" \citep{sagun2016eigenvalues, sagun2017empirical, papyan2018full}. 
People have developed different frameworks to explain the low rank structure of the Hessians including hierarchical clustering of logit gradients \citep{papyan2019measurements, papyan2020traces}, independent Gaussian model for logit gradients \citep{fort2019emergent}, and Neural Tangent Kernel \citep{jacot2019asymptotic}.
A distinguishing feature of this work is that we are able to characterize the top eigenspace of the Hessian directly by the weight matrices of the network.

\textbf{Layer-wise Kronecker factorization (K-FAC) for training NNs:} 
The idea of using Kronecker product to approximate Hessian-like matrices is not new. \citet{heskes2000natural} uses this idea to approximate Fisher Information Matrix (FIM).
\citet{martens2015optimizing} proposed Kronecker-factored approximate curvature which approximates the inverse of FIM using layer-wise Kronecker product. Kronecker factored eigenbasis has also been utilized in training \citep{george2018fast}.
Our paper focuses on a different application with different matrix (Hessian vs. inverse FIM) and different ends of the spectrum (top vs. bottom eigenspace).

\textbf{Theoretical Analysis for Hessians Eigenstructure:} \citet{karakida2019pathological} showed that the largest $c$ eigenvalues of the FIM for a randomly initialized neural network are much larger than the others. Their results rely on the eigenvalue spectrum analysis in \citet{karakida2019universal, karakida2019normalization}, which assumes the weights used during forward propagation are drawn independently from the weights used in back propagation \citep{schoenholz2016deep}. More recently, \citet{singh2021analytic} provided a Hessian rank formula for linear networks and \citet{liao2021hessian} provided a characterization on the eigenspace structure of G-GLM models (including 1-layer NN). To our best knowledge, theoretical analysis on the Hessians of nonlinear deeper neural networks is still vacant. 


\textbf{PAC-Bayes generalization bounds:} People have established generalization bounds for neural networks under PAC-Bayes framework by \citet{mcallester1999some}.
For neural networks, \citet{dziugaite2017computing} proposed the first non-vacuous generalization bound, which used PAC-Bayesian approach with optimization to bound the generalization error for a stochastic neural network.

\section{Preliminaries and Notations}
\label{sec:prelim}
\textbf{Basic Notations:} In this paper, we generally follow the default notation suggested by \citet{goodfellow2016deep}. Additionally, for a matrix $\mM$, let $\|\mM\|_F$ denote its Frobenius norm and $\|\mM\|$ denote its spectral norm. For two matrices $\mM \in \R^{a_1\times b_1}, \mN\in \R^{a_2\times b_2}$, let $\mM \otimes \mN\in\R^{(a_1a_2)\times (b_1b_2)}$ be their Kronecker product such that $[\mM \otimes \mN]_{(i_1-1)\times a_2 + i_2, (j_1-1)\times b_2 + j_2} = \mM_{i_1,i_2} \mN_{j_1,j_2}$.

\textbf{Neural Networks:}
For a $c$-class classification problem with training samples $S = \{(\vx_i, \vy_i)\}_{i=1}^N$ where $(\vx_i,\vy_i)\in\R^d\times \{0,1\}^c$ for all $i\in[N]$, assume $S$ is i.i.d. sampled from the underlying data distribution $\mathcal{D}$. Consider an $L$-layer fully connected ReLU neural network $f_\vtheta: \R^d\to \R^c$. With $\sigma(x) = x\1_{x \geq 0}$ as the Rectified Linear Unit (ReLU) function, the output of this network is a series of logits $\vz \in \R^c$ computed recursively as $ \vz^{(p)} := \mW^{(p)}\vx^{(p)} + \vb^{(p)}$ and $\vx^{(p)} := \sigma(\vz^{(p)})$

Here we denote the input and output of the $p$-th layer as $\vx^{(p)}$ and $\vz^{(p)}$, and set $\vx^{(1)}=\vx$, $\vz := f_\vtheta(\vx)=\vz^{(L)}$.
We denote $\vtheta := (\vw^{(1)}, \vb^{(1)}, \vw^{(2)}, \vb^{(2)},\cdots, \vw^{(L)}, \vb^{(L)})\in\R^P$ the parameters of the network. For the $i$-th layer, $\vw^{(i)}$ is the flattened weight matrix $\mW^{(i)}$ and $\vb^{(i)}$ is its corresponding bias vector. For convolutional networks, a similar framework is introduced in \sectionref{sec:appendix_conv}.

For a single input $\vx\in\R^d$ with one-hot label $\vy$ and logit output $\vz$, let $n^{(p)}$ and $m^{(p)}$ be the lengths of $\vx^{(p)}$ and $\vz^{(p)}$.  For convolutional layers, we consider the number of output channels as $m^{(p)}$ and width of unfolded input as $n^{(p)}$. Note that $\vx^{(1)}=\vx,\vz^{(L)}=\vz = f_\vtheta(\vx)$. We denote $\vp := \softmax(\vz) = e^{\vz}/\sum_{i=1}^ce^{\vz_i}$ as the output confidence.
With the cross-entropy loss function $\ell(\vp, \vy) = -\sum_{i=1}^c\vy_i\log(\vp_i)\in\R^+$,
the training process optimizes parameter $\vtheta$ to minimize the empirical training loss $\mathcal{L}(\vtheta):=\mathop{\E}_{(\vx, \vy)\in S}\left[\ell\left(\vz, \vy\right)\right].$

\textbf{Hessians:} Fixing the parameter $\vtheta$, we use $\mH_{\ell}(\vv,\vx) = \nabla^2_\vv \ell(f_\vtheta(\vx), \vy) = \nabla^2_\vv \ell(\vz, \vy)$ to denote the Hessian of some vector $\vv$ with respect to scalar loss function $\ell$ at input $\vx$.  
Note that $\vv$ can be any vector. For example, the full parameter Hessian is $\mH_\ell(\vtheta, \vx)$ where we take $\vv = \vtheta$, and the layer-wise weight Hessian of the $p$-th layer is $\mH_\ell(\vw^{(p)}, \vx)$ where we take $\vv = \vw^{(p)}$.

For simplicity, define $\E$ as the empirical expectation operator over the training sample $S$ unless explicitly stated otherwise.
We mainly focus on the layer-wise weight Hessians $\HessL(\vw^{(p)}) = \E[\Hessl(\vw^{(p)}, \vx)]$ with respect to loss, which are diagonal blocks in the full Hessian $\HessL(\vtheta) = \E[\Hessl(\vtheta, \vx)]$ corresponding to the cross terms between the weight coefficients of the same layer.
We define $\mM_{\vx}^{(p)} := \Hessl(\vz^{(p)}, \vx)$
as the Hessian of output $\vz^{(p)}$ with respect to empirical loss.
With the notations defined above, we have the $p$-th layer-wise Hessian for a single input as
\begin{align}
    \Hessl(\vw^{(p)}, \vx) = \nabla^2_{\vw^{(p)}} \ell(\vz, \vy) = \mM_{\vx}^{(p)}\otimes (\vx^{(p)}\vx^{(p)T}).
\end{align}
It follows that
\begin{equation}
\HessL(\vw^{(p)}) = \E\left[\mM^{(p)}_{\vx}\otimes \vx^{(p)}\vx^{(p)T}\right] = \E\left[\mM \otimes\vx\vx^\T\right].\label{eqn:decomp}
\end{equation}
The subscription $\vx$ and the superscription $(p)$ will be omitted when there is no confusion, as our analysis primarily focuses on the same layer unless otherwise stated.
We also define subspace overlap and layer-wise eigenvector matricization for our analysis.
\begin{definition}[Subspace Overlap]
For $k$-dimensional subspaces $\mU,\mV$ in $\R^d$ ($d\geq k$) where the basis vectors $\vu_i$'s and $\vv_i$'s are column vectors, with $\vphi$ as the size $k$ vector of canonical angles between $\mU$ and $\mV$, we define the subspace overlap of $\mU$ and $\mV$ as
$
    \Overlap(\mU, \mV) := \|\mU^\T\mV\|^2_F/k =\|\cos\vphi\|_2^2/k.
    $
\label{def:overlap}
\end{definition}
\begin{definition}[Layer-wise Eigenvector Matricization] Consider a layer with input dimension $n$ and output dimension $m$. For an eigenvector $\vh\in\R^{mn}$ of its layer-wise Hessian, the matricized form of $\vh$ is $\Mat(\vh)\in\R^{m\times n}$ where $\Mat(\vh)_{i,j} = \vh_{(i-1)m+j}$.
\label{def:matricization}
\end{definition}

\section{Decoupling Conjecture and Implications on the Structures of Layer-wise Hessian}
\label{sec:hessian}
The fact that layer-wise Hessian for a single sample can be decomposed into Kronecker product of two components naturally leads to the following informal conjecture:
\begin{conjecture}[Decoupling Conjecture] The layer-wise Hessian can be approximated by a Kronecker product of the expectation of its two components, that is
\begin{equation}
    \HessL(\vw^{(p)}) = \E[\mM\otimes \vx\vx^\T] \approx \E[\mM] \otimes \E[\vx\vx^\T].
    \label{eqn:decouple}
\end{equation}
More specifically, we conjecture that 
$\frac{\|\E[\mM] \otimes \E[\vx\vx^\T]-\E[\mM\otimes \vx\vx^\T]\|}{\|\E[\mM\otimes \vx\vx^\T]\|} \leq \epsilon$,
where $\epsilon$ is a small constant.
\end{conjecture}
Note that this conjecture is certainly true when $\mM$ and $\vx\vx^\T$ are approximately statistically independent. One immediate implication is that the top eigenvalues and eigenspace of $\HessL(\vw^{(p)})$ is close to those of $\E[\mM] \otimes \E[\vx\vx^\T]$. In \sectionref{sec:theoretical} we prove that the eigenspaces are indeed close for a simple setting, and in \sectionref{subsec:approx} we show that this conjecture is empirically true in practice.


Assuming the decoupling conjecture, we can analyze the layer-wise Hessian by analyzing the two components separately. Note that $\E[\mM]$ is the Hessian of the layer-wise output with respect to empirical loss, and $\E[\vx\vx^\T]$ is the auto-correlation matrix of the layer-wise inputs. For simplicity we call $\E[\mM]$ the output Hessian and $\E[\vx\vx^\T]$ the input auto-correlation. For convolutional layers we can a similar factorization $\E[\mM]\otimes \E[\vx\vx^\T]$ for the layer-wise Hessian, but with a different $\mM$ motivated by \citet{grosse2016kronecker}. (See \sectionref{sec:appendix_conv})
We note that the off-diagonal blocks of the full Hessian can also be decomposed similarly, which in turn allows 
us to approximate the eigenvalues and eigenvectors of the full parameter Hessian. The details of this approximation is stated in \sectionref{sec:appendix_full_hessian}.

\subsection{Structure of Input Auto-correlation Matrix \texorpdfstring{$\E[\vx\vx^\T]$}{ExxT} and output Hessian \texorpdfstring{$\E[\mM]$}{M}}
\label{sec:xxT} 
For the  auto-correlation matrix, one can decompose it as $\E[\vx\vx^\T] = \E[\vx]\E[\vx]^\T + \mbox{Var}[\vx]$. A key observation is that the input $\vx$ for most layers are outputs of a ReLU, hence it is nonnegative. For large networks the mean component $\E[\vx]\E[\vx]^\T$ will dominate the variance, making $\E[\vx\vx^\T]$ approximately rank-1 with top eigenvector being very close to $\E[\vx]$. 
We empirically verified this phenomenon on a variety of networks and datasets (see \sectionref{sec:appendix_xxT}).

For the output Hessian, we observe that $\E[\mM]$ is approximately rank $c-1$ (with $c-1$ significantly large eigenvalues) in most cases. In \sectionref{sec:theoretical}, we show this is indeed the case in a simplified setting, and give a formula for computing the top $c-1$ eigenspace using rows of weight matrices. 

%

\subsection{Implications on the eigenspectrum and eigenvectors of layer-wise Hessian}
\label{sec:conjecture-implication}
The eigenvectors of a Kronecker product is the tensor product of eigenvectors of its components. As a result, let $\vh_i$ be the $i$-th eigenvector of a layer-wise Hessian $\mH$, if we matricize it as defined in 
\definitionref{def:matricization},
$\Mat(\vh_i)$ would be approximately rank 1. Since $\E[\vx\vx^\T]$ is close to rank 1, by the decoupling conjecture, the top eigenvalues of layer-wise Hessian can be approximated as the top eigenvalues of $\E[\mM]$ multiplied by the first eigenvalue of $\E[\vx\vx^\T]$. 
The low rank structure of the layer-wise Hessian $\mH$ is due to the low rank structure of $\E[\mM]$.


Another implication is related to eigenspace overlap for different models. Even though the output Hessians of two randomly trained models may be very different, the top eigenspace of the Hessian will be close to $\E[\vx]\otimes I$, so the top eigenspace of the two models will have a high overlap that peaks at the output dimension. See \cref{sec:models} for more details.

\section{Decoupling Theorem for Infinite Width Two-Layer ReLU Neural Network}
\label{sec:theoretical}

In this section, 
we show that for a simple setting of 2-layer networks, 
the layer-wise parameter Hessian has $c-1$ large eigenvalues and its top $c-1$ eigenspace is close to the top $c-1$ eigenspace of the Kronecker product approximation.

\paragraph{Problem Setting and Notations}
\label{sec:theory:setting}
Let bold non-italic letters such as $\rvv, \rmM$ denote random vectors (lowercase) and matrices (uppercase).
Consider a two layer fully connected ReLU activated neural network with input dimension $d$, hidden layer dimension $n$ and output dimension $c$. In particular, let $d=n^{1+\alpha}$ for some constant $\alpha>0$. Let the network has positive input from a rectified Gaussian $\rvx\sim \rectNormal(0, \mI_d)$ where every entry is identically distributed as $\max\{\hat{\rvx},0\}$ for $\hat{\rvx}\sim \mathcal{N}(0,1)$. 
Let $\mW^{(1)}\in\R^{n\times d}$ and $\mW^{(2)}\in\R^{c\times n}$ be the weight matrices. In this problem we consider a random Gaussian initialization that $\mW^{(1)}\sim \cN(0,\frac1d\mI_{dn})$ and $\mW^{(2)}\sim \cN(0,\frac1n\mI_{nc})$. Both weight matrices has expected row norm of 1. Let the loss objective be cross entropy $\ell$. Training labels are irrelevant as they are independent from the Hessian at initialization. 

Denote the output of the first and second layer as $\rvy$ and $\rvz$ respectively. We have $\rvy = \sigma(\mW^{(1)}\rvx)$ and $\rvz = \mW^{(2)}\rvy.$ Here $\sigma$ is the element-wise ReLU function.
Let $\rmD\triangleq\diag(\ind\sbr{\rvy\geq 0})\in\R^{n\times n}$ denote the 0/1 diagonal matrix representing the activation of $\sigma$ that $\rvy=\rmD\mW^{(1)}\rvx$.
Let $\rvp=\mbox{softmax}(\rvz)$ and let $\rmA\triangleq\diag(\rvp)-\rvp\rvp^\T$. Note $\rmA$ is rank $c-1$ with the null space of the all one vector.  We give full details about our settings in \sectionref{sec:proof-prelim}.
By simple matrix calculus (see \sectionref{sec:appendix_derivation}), the output Hessian of $\mM^{(1)}$ and the full layer-wise Hessian has closed-form
\begin{equation}
    \mM^{(1)} = \exop{\rvx\sim \rectNormal(0, \mI_d)}{\rmD\mW^{(2)\T}\rmA\mW^{(2)}\rmD}, \mH^{(1)} = \exop{\rvx\sim \rectNormal(0, \mI_d)}{\rmD\mW^{(2)\T}\rmA\mW^{(2)}\rmD\otimes \rvx\rvx^\T}.
\end{equation}
Following the decoupling conjecture, the Kronecker approximation of the layer-wise Hessian is \begin{equation}
    \hat\mH^{(1)} \triangleq \exop{\rvx\sim \rectNormal(0, \mI_d)}{\rmD\mW^{(2)\T}\rmA\mW^{(2)}\rmD}\otimes \exop{\rvx\sim \rectNormal(0, \mI_d)}{\rvx\rvx^\T}.
\end{equation}
Since we are always taking the expectation over the input $\rvx$, we will neglect the subscript and use $\E$ for expectation. Now we are ready to state our main theorem.

\begin{theorem}
\label{thm:main-full}
For an infinite width two-layer ReLU activated neural network with Gaussian initialization as defined above, let $V_1$ and $V_2$ be the top $c-1$ eigenspaces of $\mH^{(1)}$ and $\hat\mH^{(1)}$ respectively, for all $\eps>0$, 
$
    \lim_{n\to\infty}\mathop{\Pr}_{\mW^{(1)}\sim\gN(0,\frac{1}{d}\mI_{nd}), \mW^{(2)}\sim\gN(0,\frac{1}{n}\mI_{cn})}\left[\Overlap\left(V_1,V_2\right)>1-\eps\right] = 1.
$
Moreover $\mH^{(1)}$ has $c-1$ large eigenvalues that, \begin{equation}
    \lim_{n\to\infty}\mathop{\Pr}_{\mW^{(1)}\sim\gN(0,\frac{1}{d}\mI_{nd}), \mW^{(2)}\sim\gN(0,\frac{1}{n}\mI_{cn})}\left[\left(\left.\frac{\lambda_c(\mH^{(1)})}{\lambda_{c-1}(\mH^{(1)})}\right|_{\mW^{(1)}, \mW^{(2)}}\right) < \eps\right] = 1.
\end{equation}
\end{theorem}

Instead of directly working on the layer-wise Hessian, we first show a similar theorem for the output Hessian $\mM^{(1)}$. We will then show that the proof technique of the following theorem can be easily generalized to prove our main theorem.

\begin{theorem}
\label{thm:main-out}
For the same network as in \theoremref{thm:main-full}, let $\mM^*\triangleq \ex{\rmD'\mW^{(2)\T}\rmA\mW^{(2)}\rmD'}$ where $\rmD'$ is an independent copy of $\rmD$ and is independent of $\rmA$. Let $S_1$ and $S_2$ be the top $c-1$ eigenspaces of $\mM^{(1)}$ and $\mM^*$ respectively, $S_2$ is approximately $\gR\{\mW_i\}_{i=1}^c\backslash\{\textbf{1}^\T\mW\}$ where $\gR$ is the row span, and for all $\eps>0$,
$\lim_{n\to\infty}\mathop{\Pr}_{\mW^{(1)}\sim\gN(0,\frac{1}{d}\mI_{nd}), \mW^{(2)}\sim\gN(0,\frac{1}{n}\mI_{cn})}\left[\Overlap\left(S_1,S_2\right)>1-\eps\right] = 1.$
Moreover, $\mM$ has $c-1$ large eigenvalues that \begin{equation}
    \lim_{n\to\infty}\mathop{\Pr}_{\mW^{(1)}\sim\gN(0,\frac{1}{d}\mI_{nd}), \mW^{(2)}\sim\gN(0,\frac{1}{n}\mI_{cn})}\left[\left(\left.\frac{\lambda_c(\mM^{(1)})}{\lambda_{c-1}(\mM^{(1)})}\right|_{\mW^{(1)}, \mW^{(2)}}\right) < \eps\right] = 1.
\end{equation}
\end{theorem}

\textbf{Remark.}
The closed form approximating of $S_1$ in \cref{thm:main-out} can be heuristically extended to the case with multiple layers, that the top eigenspace of the output Hessian of the $k$-layer would be approximately $\gR(\mS^{(k)})\setminus\{\textbf{1}^\T\mS^{(k)}\}$
where $\mS^{(k)} = \mW^{(n)}\mW^{(n-1)}\cdots\mW^{(k+1)}$ and $\gR(\mS^{(k)})$ is the row space of $\mS^{(k)}$.
Though our result was only proven for random initialization and random data, we observe that this subspace also has high overlap with the top eigenspace of output Hessian at the minima of models trained with real datasets. The corresponding empirical results are shown in \cref{sec:app_outhessian_exp}. 

\paragraph{Proof Sketch for \theoremref{thm:main-out}}
For simplicity of notations, in this section we will use $\mW$ to denote $\mW^{(2)}$ and $\mM$ to denote $\mM^{(1)}$ unless specified otherwise.
Our proof of \theoremref{thm:main-out} mainly consists of three parts. First we analyze the structure of $\mM^*$ and show that it is approximately rank $c-1$. Then we show that $\mM^*$ and $\mM$ are roughly equivalent via an approximate independence between $\rmD$ and $\rmA$. Finally, by projecting both $\mM$ and $\mM^*$ onto a $c\times c$ matrix using $\mW$, we can apply the approximate independence and prove that the top $c-1$ eigenspace of $\mM^*$ is approximately that of $\mM$, which concludes the proof.

\paragraph{(1) Structure of $\mM^*$} When $n\to\infty$, the output of the second layer $\rvy$ converges to a multivariate Gaussian (\lemmaref{lemma:y-gaussian}), hence we can consider each diagonal entry of $\rmD$ as a $p=\frac12$ Bernoulli random variable. Since we assumed that $\rmD'$ and $\rmA$ are independent, by some simple calculation,
\begin{equation}
    \mM^*=\frac14\left(\mW^\T\E[\rmA]\mW+\text{diag}(\mW^\T\E[\rmA]\mW)\right).
\end{equation}
Here $\E[\rmA]$ is rank $c-1$ with the $(c-1)$-th eigenvalue bounded below from 0 (\lemmaref{lemma:A-rank-c-1}).
Since the two terms in the sum has the same trace while $\mW^\T\E[\rmA]\mW$ is rank $c-1$ compared to rank $n$ of $\diag(\mW^\T\E[\rmA]\mW)$, we can show that the top eigenspace is dominated by the eigenspace of $\mW^\T\E[\rmA]\mW$, which is approximately $\gR\{\mW_i\}_{i=1}^c\backslash\{\textbf{1}^\T\mW\}$. 

\paragraph{(2) Approximate Independence Between $\rmA$ and $\rmD$} Intuitively, if $\rmD$ and $\rmA$ are independent, then $\mM = \mM^*$. However, this is clearly not true - if the activations align with a row of $\mW$ then the corresponding output is going to be large, which changes $\rmA$ significantly. To address this problem, we observe that the formula for $\mM$ is only of degree 2 in $\rmD$, so one can focus on conditioning on two of the activations \--- a negligible fraction in the limit. More precisely, if one expand out the expression of each element squared in $\mM$, it is an homogeneous polynomial of the form
$p(\rmA,\rmD,\bar\rmA,\bar\rmD) = \sum_{i,j,k,l=1}^c\sum_{p,q=1}^n c_{ijklpq}\rmA_{ij}\bar\rmA_{kl}\rmD_{pp}\bar\rmD_{qq},
$
where $(\bar\rmA,\bar\rmD)$ are independent copies of $(\rmA, \rmD)$. The same element squared in $\mM^*$ is just going to be $p(\rmA,\rmD',\bar\rmA,\bar\rmD')$.
By nice properties of the Gaussian initialized weight matrix, we show that as $n\to\infty$, $\rmA$ is invariant when conditioning on two entries of $\rmD$ (\lemmaref{lemma:z-invariant}). Therefore, in the limit we have $\lim_{n\to\infty}\ex{p(\rmA, \rmD, \bar\rmA, \bar\rmD)} = \ex{p(\rmA, \rmD', \bar\rmA, \bar\rmD')}$ (detailed proof in Appendix). 

\paragraph{(3) Equivalence between $\mM^*$ and $\mM$} Since the size of $\mM$ also goes to infinity as we take the limit on $n$, it is technically difficult to directly compare their eigenspaces. In this case we utilize the fact that $\mW$ has approximately orthogonal rows, and project $\mM$ onto $\mW\mM\mW^\T$. In particular, by expanding out the Frobenious norms as polynomials and bounding the $\ell_1$ norm of the coefficients, using \lemmaref{lemma:z-invariant} we are able to show that $\fns{\mM}\approx\fns{\mW\mM\mW^\T}\approx \fns{\mW\mM^*\mW^\T}\approx \fns{\mM^*}$ (\lemmaref{lemma:M-proj-preserve-f-norm}- \lemmaref{lemma:F-norm-equal}).
This result tells us that the projection does not lose information, and hence indirectly gives us the dominating eigenspace of $\mM$. This concludes our proof for \theoremref{thm:main-out}

\paragraph{Proving \theoremref{thm:main-full} and Beyond}
To prove Theorem~\ref{thm:main-full}, we use a very similar strategy. We consider a re-scaled Hessian $\tmH\triangleq\frac1d\mH$ and show that in the independent setting $\tmH^* = \frac1d\E[\rmD'\mW\rmA\mW\rmD'\otimes \rvx''\rvx''^\T] = \mM^*\otimes\frac1d \E[\rvx''\rvx''^\T].$ We then generalize the conditioning technique to involve conditioning on two entries of $x$. 

\section{Empirical Observation and Verification}
\label{sec:empirical}
In this section, we present some empirical observations that either verifies, or are induced by the decoupling conjecture.
We conduct experiments on the CIFAR-10, CIFAR-100  \citep{Krizhevsky09learningmultiple}, and MNIST  \citep{lecun1998gradient} datasets as well as their random labeled versions, namely MNIST-R and CIFAR10-R. 
We used different fully connected (fc) networks (a fc network with $m$ hidden layers and $n$ neurons each hidden layer is denoted as F-$n^m$), several variations of LeNet \citep{lecun1998gradient}, VGG11 \citep{simonyan2014very}, and ResNet18 \citep{kaiming2015}.
We use ``layer:network'' to denote a layer of a particular network. For example, conv2:LeNet5 refers to the second convolutional layer in LeNet5.
More empirical results are included in \sectionref{sec:appendix_exp_res}.

\subsection{Kronecker Approximation of Layer-wise Hessian and Full Hessian}\label{subsec:approx}

To verify the decoupling conjecture in practical settings, we compare the top eigenvalues and eigenspaces of the approximated Hessian and the true Hessian. We use subspace overlap (\definitionref{def:overlap}) to measure the similarity between top eigenspaces. As shown in \figureref{fig:eigeninfo_approx}, this approximation works reasonably well on the top eigenspace.

\begin{figure}[ht]
    \centering
\begin{subfigure}[b]{0.24\columnwidth}
    \captionsetup{justification=centering}
    \includegraphics[width=\columnwidth]{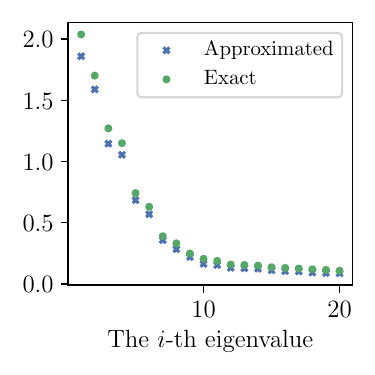}
    \vspace{-0.2in}
    \caption{Top eigenvalues of layer-wise Hessian of fc1}
    \label{fig:eigenval_approx}
\end{subfigure}%
\begin{subfigure}[b]{0.24\columnwidth}
    \captionsetup{justification=centering}
    \includegraphics[width=\columnwidth]{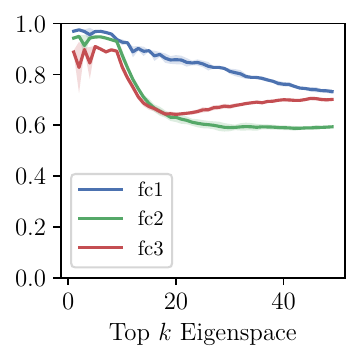}
    \vspace{-0.2in}
    \caption{Top eigenspace of layer-wise Hessians}
    \label{fig:overlap_approx}
\end{subfigure}
\begin{subfigure}[b]{0.24\columnwidth}
    \captionsetup{justification=centering}
    \includegraphics[width=\columnwidth]{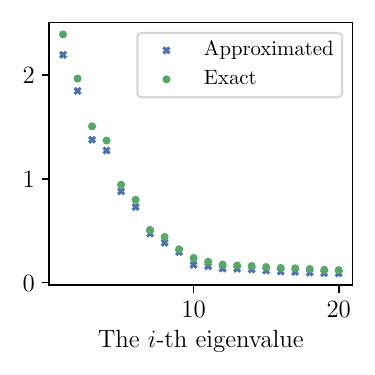}
    \vspace{-0.2in}
    \caption{Top eigenvalues of the full Hessian}
    \label{fig:eigenval_approx_full}
\end{subfigure}
\begin{subfigure}[b]{0.24\columnwidth}
    \captionsetup{justification=centering}
    \includegraphics[width=\columnwidth]{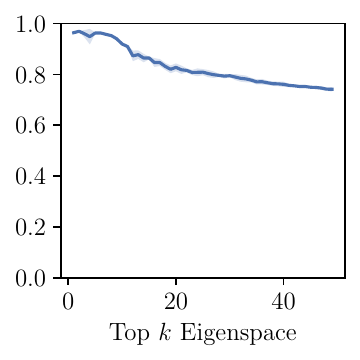}
    \vspace{-0.2in}
    \caption{Top eigenspace of the full Hessian}
    \label{fig:overlap_approx_full}
\end{subfigure}
\caption{Comparison between the approximated and true layer-wise Hessian of F-$200^2$.}
\vspace{-4pt}
\label{fig:eigeninfo_approx}
\end{figure}
\subsection{Low Rank Structure of \texorpdfstring{$\E[\mM]$}{EM} and \texorpdfstring{$\mH$}{H}}
\label{sec:emp_outlier}
Another way to empirically verify the decoupling conjecture is to show the similarity between the outliers in eigenspectrum of the layer-wise Hessian $\E[\mM]$ and the output Hessian $\mH_\cL$.
\figureref{fig:UTAU_H_spec} shows the similarity of eigenvalue spectrum between $\E[\mM]$ and layer-wise Hessians in different situations, which agrees with our prediction. For (a) and  (b) we are also seeing the eigengap at $c-1$, which is consistent with our analysis and previous observations \citep{sagun2017empirical,papyan2019measurements}. However, the eigengap does not appear at minimum for random labeled data with a under-parameterized network, meaning that our theory may not generalize to all settings.

\begin{figure*}[h]
\resizebox{\textwidth}{!}{%
    \centering
    \captionsetup[sub]{format=subcaptionformat}
    \begin{subfigure}[b]{0.3\textwidth}
        \centering
        \captionsetup{justification=centering}
        \includegraphics[width=\textwidth]{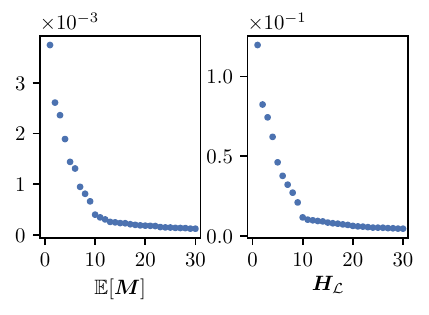}
        \caption{fc1:LeNet5 at initialization (CIFAR10).}
        \label{fig:UTAU_H_spec_FC2}
    \end{subfigure}%
    \begin{subfigure}[b]{0.3\textwidth}
        \centering
        \captionsetup{justification=centering}
        \includegraphics[width=\textwidth]{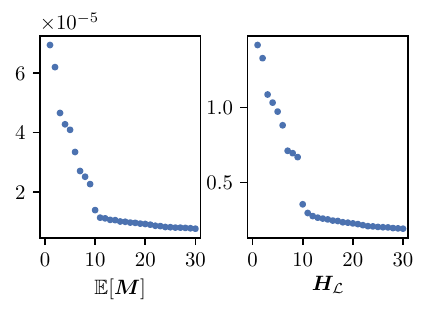}
        \caption{fc1:LeNet5 at minimum\\ (CIFAR10).}
        \label{fig:UTAU_H_spec_Lenet}
    \end{subfigure}%
    \begin{subfigure}[b]{0.3\textwidth}
        \centering
        \captionsetup{justification=centering}
        \includegraphics[width=\textwidth]{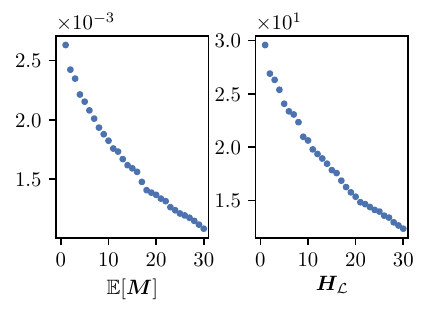}
        \caption{fc1:LeNet5 at minimum\\ (CIFAR10-R).}
        \label{fig:UTAU_H_spec_RL}
    \end{subfigure}%
    }
    
    \caption{Eigenspectrum of the layer-wise output Hessian $\E[\mM]$ and the layer-wise weight Hessian $\mH_\Ls(\vw^{(p)})$. The vertical axes denote the eigenvalues. Similarity between the two eigenspectra is a direct consequence of a low rank $\E[\vx\vx^T]$ and the decoupling conjecture.}
    \label{fig:UTAU_H_spec}
    \vskip -0.05in
\end{figure*}

\subsection{Eigenspace Overlap of Different Models}
\label{sec:models}
Apart from the phenomena that are direct consequences of the decoupling conjecture, we observe another nontrivial phenomenon involving different minima.
Consider models with the same structure, trained on the same dataset, but using different random initializations, despite no obvious correlation between their parameters, we observe surpisingly high overlap between the dominating eigenspace of some of their layer-wise Hessians.

\begin{figure}[H]
    \captionsetup[sub]{format=subcaptionformat}
    \centering
    \begin{subfigure}[h]{0.32\columnwidth}
        \centering
        \captionsetup{justification=centering}
        \includegraphics[width=\textwidth]{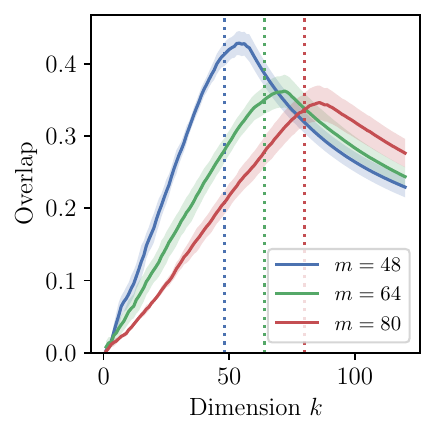}
        \vspace{-0.2in}
        \caption{conv12:ResNet18 (CIFAR100) with 48/64/80 output channels}

        \label{fig:Overlap_resnet_conv1}
    \end{subfigure}
    \begin{subfigure}[h]{0.32\columnwidth}
        \centering
        \captionsetup{justification=centering}
        \includegraphics[width=\textwidth]{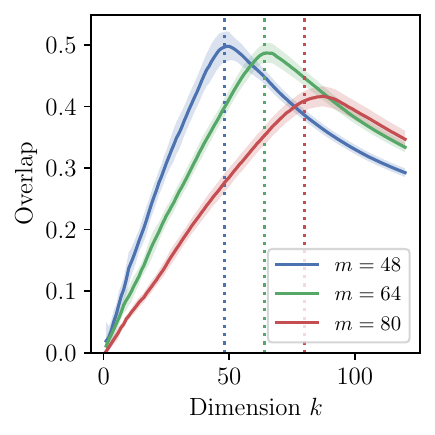}
        \vspace{-0.2in}
        \caption{conv6:VGG11 (CIFAR100) with 48/64/80 output channels}

        \label{fig:Overlap_resnet_conv2}
    \end{subfigure}
    \begin{subfigure}[h]{0.32\columnwidth}
        \centering
        \captionsetup{justification=centering}
        \includegraphics[width=\textwidth]{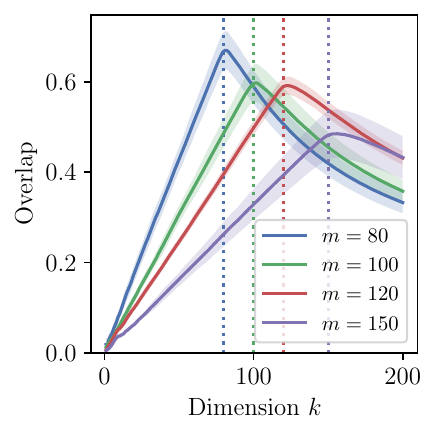}
        \vspace{-0.2in}
        \caption{fc1:LeNet5 (CIFAR10) with 80/100/120/150 output neurons}

        \label{fig:Overlap_fc1}
    \end{subfigure}

    \caption{Overlap between the top $k$ dominating eigenspace of different independently trained models. The overlap peaks at the output dimension $m$. The eigenspace overlap is defined in \cref{def:overlap}.}
    \label{fig:overlap}
    \vskip -0.1in
\end{figure}

It turns out that the nontrivial overlap is also a consequence of the decoupling conjecture, which arises when the output Hessian and autocorrelation are related in the following way: When the small eigenvalues of $\E[\mM]\in\R^{m\times m}$ approaches 0 slower than the small eigenvalues of $\E[\vx\vx^\T]$, the top $m$ eigenspace will then be approximately spanned by $\mI_m\otimes \E[\vx]^\T$ by the decoupling conjecture.
Now suppose we have two different models with $\hE[\vx]_1$ and $\hE[\vx]_2$ respectively. Their top-$m$ eigenspaces are approximately $\mI_m \otimes \hE[\vx]_1$ and $\mI_m \otimes \hE[\vx]_2$. Thus the overlap at dimension $m$ is approximately $(\hE[\vx]_1^\T \hE[\vx]_2)^2$, which is large since $\hE[\vx]_1$ and $\hE[\vx]_2$ are the same for the input layer and all non-negative for other layers.
While this particular relation between $\E[\mM]$ and $\E[\vx\vx^\T]$ are true in many shallow networks and in later layers of deeper networks, they are not satisfied for earlier layers of deeper networks.  In \sectionref{sec:appendix_model_overlap} we explain how one can still understand the overlap using correspondence matrices when the above simplified argument does not hold. 

\section{Tighter PAC-Bayes Bound with Hessian Information}
\label{sec:pac}
The PAC-Bayes bound is a commonly used bound for the generalization gap of neural networks. In this section we show how we can obtain tighter PAC-Bayes bounds using the Kronecker approximation of Hessian eigenbasis.
\begin{theorem}[PAC-Bayes Bound]
\citep{mcallester1999some,langford2001bounds}
With the hypothesis space $\gH$ parametrized by model parameters.
For any prior distribution $P$ in $\gH$ that is chosen independently from the training set $S$, and any posterior distribution $Q$ in $\gH$ whose choice may inference $S$, with probability $1-\delta$, $\KL\left(\hat{e}(Q)||e(Q)\right)\leq \frac{1}{|S|-1}\left[\KL(Q||P) + \log\frac{|S|}{\delta}\right]$. Where $e(Q)$ is the expected classification error for the posterior over the underlying data distribution and $\hat{e}(Q)$ is the classification error for the posterior over the training set.
\end{theorem}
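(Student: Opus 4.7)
The plan is to prove this as a consequence of a standard change-of-measure inequality combined with a moment-generating-function bound for the empirical error of a fixed hypothesis. Write $m = |S|$ and, for a hypothesis $h$, let $\hat{e}(h)$ be the empirical classification error and $e(h)$ its expectation. Since classification errors are Bernoulli, $\hat{e}(h)$ is a rescaled binomial random variable of mean $e(h)$.

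First I would prove the one-hypothesis concentration lemma: for every fixed $h$,
\begin{equation}
\E_{S \sim \mathcal{D}^m}\bigl[\exp\bigl((m-1)\,\KL(\hat{e}(h)\,\|\,e(h))\bigr)\bigr] \le m.
\end{equation}
This is the Maurer/Langford-Seeger binomial MGF bound: expanding the expectation as a sum over the $m+1$ possible values of $m\hat{e}(h)$, each summand is a binomial probability times the exponential of $(m-1)$ times the relative entropy between two Bernoullis, and one checks directly (via the Chernoff identity $\binom{m}{k}p^k(1-p)^{m-k} \le \exp(-m\,\KL(k/m\,\|\,p))$) that each summand is at most $1$, giving a total of at most $m+1$; a slightly tighter counting gives $m$.

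Second, I would invoke the Donsker-Varadhan change-of-measure inequality: for any measurable $\phi:\mathcal{H}\to\mathbb{R}$ and any posterior $Q$ absolutely continuous with respect to the prior $P$,
\begin{equation}
\E_{h\sim Q}[\phi(h)] \le \KL(Q\,\|\,P) + \log \E_{h\sim P}\bigl[e^{\phi(h)}\bigr].
\end{equation}
Applied with $\phi(h) = (m-1)\KL(\hat{e}(h)\,\|\,e(h))$ and combined with Markov's inequality on the non-negative random variable $\E_{h\sim P}[\exp((m-1)\KL(\hat{e}(h)\,\|\,e(h)))]$ (where the randomness is over $S$), the previous lemma and Fubini give that with probability at least $1-\delta$ over $S$,
\begin{equation}
\E_{h\sim P}\bigl[\exp\bigl((m-1)\KL(\hat{e}(h)\,\|\,e(h))\bigr)\bigr] \le \frac{m}{\delta}.
\end{equation}
Chaining these yields $(m-1)\,\E_{h\sim Q}[\KL(\hat{e}(h)\,\|\,e(h))] \le \KL(Q\,\|\,P) + \log(m/\delta)$ on the same high-probability event.

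Finally, I would apply Jensen's inequality to pull the expectation over $Q$ inside the KL: because $\KL(a\,\|\,b)$ is jointly convex in $(a,b)$ (a consequence of the log-sum inequality), we have
\begin{equation}
\KL(\hat{e}(Q)\,\|\,e(Q)) \le \E_{h\sim Q}[\KL(\hat{e}(h)\,\|\,e(h))],
\end{equation}
and dividing by $m-1$ gives the stated bound. The main obstacle is the first step: the binomial MGF bound is elementary but requires the right exponent $(m-1)$, and the sharp constant matters because it appears inside the logarithm in the final inequality. The remaining steps (change of measure, Markov, Jensen) are then routine and independent of the hypothesis space, which is exactly why the bound holds uniformly over all posteriors $Q$ that may depend on $S$.
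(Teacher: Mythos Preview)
Your proof is correct and follows the standard Langford--Seeger/Maurer route (binomial MGF bound, Donsker--Varadhan change of measure, Markov over $S$, then Jensen via joint convexity of the Bernoulli KL). There is nothing to compare against in the paper itself: the theorem is stated only as a citation to \cite{mcallester1999some,langford2001bounds} and is used as a black box, with no proof or proof sketch given anywhere in the main text or appendix. So your write-up supplies exactly the argument the cited references contain, and there is no alternative approach in the paper to contrast it with.
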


Intuitively, if one can find a posterior $Q$ that has low loss on the training set, and is close to the prior $P$, then the generalization error on $Q$ must be small. \citet{dziugaite2017computing} uses optimization techniques to find an optimal posterior in the family of Gaussians with diagonal covariance. They showed that the bound can be nonvacuous for several neural network models.

We follow \citet{dziugaite2017computing} to set the prior $P$ to be a multi-variant Gaussian. The covariance is invariant with respect to the change of basis since it is a multiple of identity. Thus, 
For the posterior, when the variance in one direction is larger, the distance with the prior decreases; however this also has the risk of increasing the empirical loss over the posterior. In general, one would expect the variance to be larger along a flatter direction in the loss landscape and smaller along a sharper direction.
However, since the covariance matrix of $Q$ is fixed to be diagonal in \citet{dziugaite2017computing}, the search of optimal deviation happens in standard basis vectors which are not aligned with the local loss landscape.
Using the Kronecker factorization as in Equation~\ref{eqn:decouple}, we can approximate the layer-wise Hessian's eigenspace. We set $Q$ to be a Gaussian whose covariance is diagonal in the approximated eigenbasis of the layer-wise Hessians.
Under this posterior change of basis, we can obtain tighter bounds compared to \citet{dziugaite2017computing}. In our experiments, the final posterior variance $\vs'$ is smaller along the direction of eigenvectors with larger eigenvalues (see \figureref{fig:app_PAC}). This agrees with our presumption that the alignment of sharp and flat directions will result in a better optimized posterior $Q$ and thus a tighter bound on classification error.

Detailed algorithm description, experiment results, and plots are shown in \sectionref{sec:appendix_pac}.

\begin{table}[h]
\centering
\caption{Optimized PAC-Bayes bounds using different methods. T-$n^m$ and R-$n^m$ represents network F-$n^m$ trained with true/random labels. \textsc{TestEr.} gives the empirical generalization gap. \textsc{Base} represents the bound given by 
\citet{dziugaite2017computing}. \textsc{Ours} represents the bound we get.} 
\label{tab:pac}
\begin{center}
\begin{small}
\begin{tabular}{cccccccc}
\toprule
\multicolumn{1}{l}{Model} & \multicolumn{1}{l}{T-$600$}     & \multicolumn{1}{l}{T-$1200$}    & \multicolumn{1}{l}{T-$300^2$}   & \multicolumn{1}{l}{T-$600^2$}   & \multicolumn{1}{l}{R-$600$}     & \multicolumn{1}{l}{T-$600_{10}$} & \multicolumn{1}{l}{T-$200_{10}^2$} \\
\midrule
TestEr.   & 0.015  & 0.016  & 0.015  & 0.015  & 0.493  & 0.018   & 0.021     \\
\textsc{Base}      & 0.154  & 0.175  & 0.169  & 0.192  & 0.605  & 0.287   & 0.417     \\
\textsc{Ours}      & \textbf{0.120} & \textbf{0.142} & \textbf{0.125} & \textbf{0.146} & \textbf{0.568}  & \textbf{0.213}  & \textbf{0.215}    \\
\bottomrule
\end{tabular}
\end{small}
\end{center}
\end{table}



  


\section{Conclusions}
\label{sec:conclusion}
In this paper we proposed the decoupling conjecture which helps in understanding many different structures for the top eigenspace of layer-wise Hessian. Our theory only applies to the initialization for a 2-layer network. How the property can be maintained throughout training is a major open problem. 
However, the implications of the decoupling conjecture can be verified empirically. Having such a conjecture allows us to predict how the structure of the Hessian changes based on architecture/training method (such as batch normalization), and has potential applications in understanding training and generalization (as we demonstrated by the new generalization bounds in Section~\ref{sec:pac}). We hope this work would be a starting point towards formally proving the structures of neural network Hessians.

\acks{We thank a bunch of people and funding agency.}

\bibliography{VOLDNN}

\appendix

\section{Detailed Derivations}
\label{sec:appendix_derivation_main}
\subsection{Derivation of Hessian}
\label{sec:appendix_derivation}
For an input $\vx$ with label $\vy$, we define the Hessian of single input loss with respect to vector $\vv$ as
\begin{equation}
    \mH_\ell(\vv, \vx) = \nabla^2_\vv \ell(f_\vtheta(\vx), \vy) = \nabla^2_\vv \ell(\zx, \vy).
\end{equation}
We define the Hessian of loss with respect to $\vv$ for the entire training sample as
\begin{equation}
   \HessL(\vv) = \nabla^2_\vv\Ls(\vtheta) = \sum_{i=1}^N \nabla^2_\vv \ell(f_\vtheta(\vx_i), \vy_i)= \sum_{i=1}^N \mH_\ell(\vv, \vx_i) = \E\left[ \mH_\ell(\vv, \vx)\right].
\end{equation}
We now derive the Hessian for a fixed input label pair ($\vx, \vy$). Following the definition and notations in \sectionref{sec:prelim}, we also denote output as $\vz = f_\vtheta(\vx)$. We fix a layer $p$ for the layer-wise Hessian. Here the layer-wise weight Hessian is $\mH_\ell(\vw^{(p)}, \vx)$. We also have the output for the layer as $\vz^{(p)}$. Since $\vw^{(p)}$ only appear in the layer but not the subsequent layers, we can consider $ \vz = f_\vtheta(\vx) = g_\vtheta(\vz^{(p)}(\vw,\vx))$ where $g_\vtheta$ only contains the layers after the $p$-th layer and does not depend on $\vw^{(p)}$. Thus, using the Hessian Chain rule \citep{skorski2019chain}, we have 
\begin{equation}
    \mH_\ell(\vw^{(p)}, \vx) = \left(\frac{\partial \vz^{(p)}}{\partial\vw^{(p)}}\right)^\T\mH_\ell(\vz^{(p)}, \vx)\left(\frac{\partial \vz^{(p)}}{\partial\vw^{(p)}}\right) + \sum_{i=1}^{m^{(p)}} \frac{\partial \ell(\vz, \vy)}{\partial\evz_i^{(p)}} \nabla^2_{\vw^{(p)}} \evz_i^{(p)},
\end{equation}
where $\evz_i^{(p)}$ is the $i$th entry of $\vz^{(p)}$ and $m^{(p)}$ is the number of neurons in $p$-th layer (size of $\vz^{(p)}$).

Since $\vz^{(p)} = \mW^{(p)}\vx^{(p)} + \vb^{(p)}$ and $\vw^{(p)} = \vect(\mW^{(p)})$ we have
\begin{equation}
    \frac{\partial \vz^{(p)}}{\partial\vw^{(p)}} = \mI_{m^{(p)}} \otimes \vx^{(p)\T}.
\end{equation}
Since $\frac{\partial \vz^{(p)}}{\partial\vw^{(p)}}$ does not depend on $\vw^{(p)}$, for all $i$ we have $\nabla^2_{\vw^{(p)}} \evz_i^{(p)} = 0$.
Thus, \begin{equation}
    \mH_\ell(\vw^{(p)}, \vx) = \left(\mI_{m^{(p)}} \otimes \vx^{(p)}\right)\mH_\ell(\vz^{(p)}, \vx)\left(\mI_{m^{(p)}} \otimes \vx^{(p)\T}\right).
\end{equation}
We define $\mM^{(p)}_\vx = \mH_\ell(\vz^{(p)}, \vx)$ as in \sectionref{sec:prelim} so that \begin{equation}
    \mH_\ell(\vw^{(p)}, \vx) = \left(\mI_{m^{(p)}} \otimes \vx^{(p)}\right)\mM^{(p)}_\vx\left(\mI_{m^{(p)}} \otimes \vx^{(p)\T}\right) = \mM_\vx^{(p)} \otimes \vx^{(p)}\vx^{(p)\T}.
    \label{eqn:appendix_closeformhessian}
\end{equation}

We now look into $\mM_x^{(p)} = \mH_\ell(\vz^{(p)}, \vx)$. Again we have $\vz = g_\vtheta(\vz^{(p)})$ and can use chain rule here,
\begin{equation}
    \mH_\ell(\vz^{(p)}, \vx) = \left(\frac{\partial \vz}{\partial\vz^{(p)}}\right)^\T\mH_\ell(\vz, \vx)\left(\frac{\partial \vz}{\partial\vz^{(p)}}\right) + \sum_{i=1}^{c} \frac{\partial \ell(\vz, \vy)}{\partial\evz_i} \nabla^2_{\vz^{(p)}} \evz_i
\end{equation}
By letting $\vp := \softmax(\vz)$ be the output confidence vector, we define the Hessian with respect to output logit $\vz$ as $\mA_\vx$ and have
\begin{equation}
    \label{eqn:hessian_decomp_general}
    \mA_\vx := \mH_\ell(\vz, \vx) = \nabla^2_\vz l(\vz,\vy)= \diag(\vp)- \vp\vp^\T,
\end{equation}
according to \citet{singla2019understanding}.

We also define the Jacobian of $\vz$ with respect to $\vz^{(p)}$ (informally logit gradient for layer $p$) as $\mG^{(p)}_\vx := \frac{\partial \vz}{\partial\vz^{(p)}}$.
For FC layers with ReLUs, we can consider ReLU after the $p$-th layer as multiplying $\vz^{(p)}$ by an indicator function $\1_{\vz^{(p)} > 0}$. To use matrix multiplication, we can turn the indicator function into a diagonal matrix and define it as $\mD^{(p)}$ where
\begin{equation}
    \mD^{(p)} := \diag\left(\1_{\vz^{(p)} > 0}\right).
\end{equation}
Thus, we have the input of the next layer as $\vx^{(p+1)} = \mD^{(p)}\vz^{(p)}$.
The FC layers can then be considered as a sequential matrix multiplication and we have the final output as
\begin{equation}
    \vz = \mW^{(L)}\mD^{(L-1)}\mW^{(L-1)}\mD^{(L-2)}\cdots \mD^{(p)}\vz^{(p)}.
\end{equation}
Thus, \begin{equation}
    \mG_\vx^{(p)} = \frac{\partial \vz}{\partial\vz^{(p)}} = \mW^{(L)}\mD^{(L-1)}\mW^{(L-1)}\mD^{(L-2)}\cdots \mD^{(p)}.
\end{equation}
Since $\mG_\vx^{(p)}$ is independent of $\vz^{(p)}$, we have
\begin{equation}
    \nabla^2_{\vz^{(p)}} \evz_i = 0, \forall i.
    \label{eqn:appendix_zero_logit_output_hessian}
\end{equation}
Thus, \begin{equation}
    \mM_\vx^{(p)} = \mH_\ell(\vz^{(p)}, \vx) = \mG_\vx^{(p)\T}\mA_\vx\mG_\vx^{(p)}.
\end{equation}
Moreover, loss Hessian with respect to the bias term $\vb^{(p)}$ equals to that with respect to the output of that layer $\vz^{(p)}$. We thus have
\begin{equation}
    \mH_\ell(\vb^{(p)}, \vx) = \mM_\vx^{(p)} = \mG_\vx^{(p)\T}\mA_\vx\mG_\vx^{(p)}.
\end{equation}

The Hessians of loss for the entire training sample are simply the empirical expectations of the Hessian for single input. We have the formula as the following:
\begin{align}
    \HessL(\vw^{(p)}) &= \E\left[\mH_\ell(\vw^{(p)}, \vx)\right] = \E\left[\mM^{(p)}_\vx \otimes \vx^{(p)}\vx^{(p)\T}\right],\label{eqn:app_layerwise_approx}\\
    \HessL(\vb^{(p)}) &= \HessL(\vz^{(p)}) = \E\left[\mM^{(p)}_\vx\right] = \E\left[\mG_\vx^{(p)\T}\mA_\vx\mG_\vx^{(p)}\right].
\end{align}

Note that we can further decompose $\mA_\vx = \mQ_\vx^\T\mQ_\vx$, where 
\begin{equation}
    \Qx = \diag\left(\sqrt{\vp}\right)\left(\mI_c-\1_c\vp^\T\right),
    \label{eqn:app_qx}
\end{equation}
with $\1_c$ is a all one vector of size $c$, proved in \citet{papyan2019measurements}.

We can further extend the close form expression to off diagonal blocks and the bias entries to get the full Gauss-Newton term of Hessian. Let
\begin{align}
    \mF^\T_\vx = \begin{pmatrix}
    \Gx^{(1)\T} \otimes \vx^{(1)}\\
    \Gx^{(1)\T}\\
    \Gx^{(2)\T} \otimes \vx^{(2)}\\
    \Gx^{(2)\T}\\
    \vdots\\
    \Gx^{(L)\T} \otimes \vx^{(n)}\\
    \Gx^{(L)\T}
    \end{pmatrix}.
\end{align}
The full Hessian is given by
\begin{equation}
    \HessL(\vtheta) = \E \left[\mF^\T_{\vx}\mA_\vx\mF_{\vx}\right] + \E\left[\sum_{i=1}^c \frac{\partial \ell(\vz,\vy)}{\evz_i} \nabla^2_\vtheta \evz_i \right].
\label{eqn:app_full_hessian}
\end{equation}
\subsection{Approximating Weight Hessian of Convolutional Layers}
\label{sec:appendix_conv}
The approximation of weight Hessian of convolutional layer is a trivial extension from the approximation of Fisher information matrix of convolutional layer by \citet{grosse2016kronecker}.

Consider a two dimensional convolutional layer of neural network with $m$ input channels and $n$ output channels. Let its input feature map $\tX$ be of shape $(n, X_1, X_2)$ and output feature map $\tZ$ be of shape $(m, P_1, P_2)$. Let its convolution kernel be of size $K_1\times K_2$. Then the weight $\tW$ is of shape $(m, n, K_1, K_2)$, and the bias $\vb$ is of shape $(m)$. Let $P$ be the number of patches slide over by the convolution kernel, we have $P=P_1P_2$.

Follow  \citet{dangel2020modular}, we define $\mZ\in\R^{m\times P}$ as the reshaped matrix of $\tZ$ and $\mW\in\R^{m \times nK_1K_2}$ as the reshaped matrix of $\tW.$
Define $\mB\in\R^{m\times P}$ by broadcasting $\vb$ to $P$ dimensions. Let $\mX\in\R^{nK_1K_2 \times P}$ be the unfolded $\tX$ with respect to the convolutional layer. The unfold operation \citep{NEURIPS2019_9015} is commonly used in computation to model convolution as matrix operations.

After the above transformation, we have the linear expression of the $p$-th convolutional layer similar to FC layers:\begin{equation}
    \label{eqn:conv_linear}
    \mZ^{(p)} = \mW^{(p)}\mX^{(p)} + \mB^{(p)}
\end{equation}
We still omit superscription of $(p)$ for dimensions for simplicity. We also denote $\vz^{(p)}$ as the vector form of $\mZ^{(p)}$ and has size $mP$.
Similar to fully connected layer, we have analogue of \equationref{eqn:appendix_closeformhessian} for convolutional layer as
\begin{equation}
    \label{eqn:appendix_closeformhessian_conv}
    \mH_\ell(\vw^{(p)}, \mX) = \left(\mI_{m} \otimes \mX^{(p)}\right)\mM_\vx^{(p)}\left(\mI_{m} \otimes \mX^{(p)\T}\right),
\end{equation}
where $\mM_\vx^{(p)} = \mH_\ell(\vz^{(p)}, \mX)$ and is a $mP \times mP$ matrix.
Also, since convolutional layers can also be considered as linear operations (matrix multiplication with reshape) together with FC layers and ReLUs, \equationref{eqn:appendix_zero_logit_output_hessian} still holds. Thus, we still have  \begin{equation}
    \mH_\ell(\vz^{(p)}, \mX) = \mM_\vx^{(p)} = \mG_\vx^{(p)\T}\mA_\vx\mG_\vx^{(p)},
\end{equation}
where $\mG_\vx^{(p)} = \frac{\partial \vz}{\partial\vz^{(p)}}$ and has dimension $c \times mP$, although is cannot be further decomposed as direct multiplication of weight matrices as in the FC layers.

However, for convolutional layers, $\mX^{(p)}$ is a matrix instead of a vector. Thus, we cannot make \equationref{eqn:appendix_closeformhessian_conv} into the form of a Kronecker product as in \equationref{eqn:appendix_closeformhessian}.

Despite this, it is still possible to have a Kronecker factorization of the weight Hessian in the form
\begin{equation}
   \mH_\ell(\vw^{(p)}, \mX) \approx \tmM^{(p)}_\vx \otimes \mX^{(p)}\mX^{(p)\T},
\end{equation}
using further approximation motivated by \cite{grosse2016kronecker}.
Note that $\tmM^{(p)}_\vx$ need to have a different shape ($m\times m$) from $\mM^{(p)}_\vx$ ($mP\times mP$), since $\mH_\ell(\vw^{(p)},\mX)$ is $mnK1K2 \times mnK1K2$ and $\mX^{(p)}\mX^{(p)\T}$ is $nK1K2 \times nK1K2$.

Since we can further decompose $\Ax = \Qx^\T\Qx$, we then have
\begin{equation}
    \Mx^{(p)} = \mG_\vx^{(p)\T}\Ax\mG_\vx^{(p)} = \left(\Qx\Gx^{(p)}\right)^\T\left(\Qx\Gx^{(p)}\right).
\end{equation}
We define $\mN_\vx^{(p)} =\Qx\Gx^{(p)}$. Here $\Qx$ is $c\times c$ and $\Gx^{(p)}$ is $c\times mP$ so that $\mN_\vx^{(p)}$ is $c \times mP$. We can reshape $\mN_\vx^{(p)}$ into a $cP\times m$ matrix $\tmN_\vx^{(p)}$. We then reduce $\mM^{(p)}_\vx$ ($mP\times mP$) into a $m\times m$ matrix as 
\begin{equation}
    \tmM^{(p)}_\vx = \frac{1}{P}\tmN_\vx^{(p)\T}\tmN_\vx^{(p)}.
\end{equation}
The scalar $\frac{1}{P}$ is a normalization factor since we squeeze a dimension of size $P$ into size 1.

Thus, we can have similar Kronecker factorization approximation as
\begin{align}
    \HessL(\vw^{(p)}) &= \E\left[\mH_\ell(\vw^{(p)}, \mX)\right] = 
    \E\left[\left(\mI_{m} \otimes \mX^{(p)}\right)\mM_\vx^{(p)}\left(\mI_{m} \otimes \mX^{(p)\T}\right)\right] \\ &\approx \E\left[\tmM^{(p)}_\vx \otimes \mX^{(p)}\mX^{(p)\T}\right] \approx \E\left[\tmM^{(p)}_\vx\right] \otimes \E\left[\mX^{(p)}\mX^{(p)\T}\right].
\end{align}
\newpage
\section{Main Proof}
\label{sec:main-proof-full}
This is the complete proof for the two main theorems sketched in \sectionref{sec:theoretical}.
\subsection{Preliminaries}
\label{sec:proof-prelim}
\subsubsection{Notations}
In this section, we generally follow the notation standard by \citet{goodfellow2016deep}. We will use bold italic lowercase letters ($\vv$) to denote vectors, bold non-italic lowercase letters to denote random vectors ($\rvv$), bold italic uppercase letters ($\mA$) to denote matrices, and bold italic uppercase letters ($\rmA$) to denote random matrices.

Moreover, we use $[n]$ for positive integer $n$ to denote the set $\{1,\cdots,n\}$, and $\norm{\mM}$ to denote the spectral norm of a matrix $\mM$. We use $\innerf{\mA, \mB}$ to denote the Frobenius inner product of two matrices $\mA$ and $\mB$, namely $\innerf{\mA, \mB} \triangleq\sum_{i,j}\mA_{i,j}\mB_{i,j}$. We use $\tr(\mM)$ to denote the trace of a matrix $\mM$, and we use $\textbf{1}_c$ to denote the all-one vector of dimension $c$ (the subscript may be omitted when it's clear from the context).

For probability distributions, we use $\rectNormal(\mu,\sigma)$ to denote the rectified Gaussian distribution which has density function\begin{equation}
    f_\rectNormal(x;\mu,\sigma) = \Phi\rbr{\frac\mu\sigma}\delta(x)+\frac{1}{\sqrt{2\pi\sigma^2}}\exp\rbr{-\frac{(x-\mu)^2}{2\sigma^2}}\ind\sbr{x> 0}.
\end{equation}
Here $\Phi$ is the CDF of standard normal distribution, $\delta(x)$ is the Dirac delta function. Note that when $\mu=0$, the density function simplifies to\begin{align}
    f_\rectNormal(x;0,\sigma) = 
    \frac12\delta(x)+\frac{1}{\sqrt{2\pi\sigma^2}}\exp\rbr{-\frac{x^2}{2\sigma^2}}\ind\sbr{x> 0}.
\end{align}
We will use the same notation for multivariate rectified Gaussian distribution, which will be used to characterize the inputs of the network.
\subsubsection{Problem Setting}

Consider a two layer fully connected ReLU activated neural network with input dimension $d$, hidden layer dimension $n$ and output dimension $c$. In particular, $n$ goes to infinity, $d=n^{1+\alpha}$ for some $\alpha>0$, and $c$ is a finite constant. Let network be trained with cross-entropy objective $\gL$. Let $\sigma$ denote the element-wise ReLU activation function which acts as $\sigma(x) = x\cdot\ind_{x\geq 0}$ and the product here is applied element-wise.
Let $\mW^{(1)}\in\R^{n\times d}$ and $\mW^{(2)}\in\R^{c\times n}$ denote the weight matrices of the first and second layer respectively. 

We consider the case that the neural network has rectified standard Gaussian input $\rvx\sim \rectNormal(0, \mI_d)$. 
Denote the output of the first and second layer as $\rvy$ and $\rvz$ respectively. We have $\rvy = \sigma(\mW^{(1)}\rvx)$ and $\rvz = \mW^{(2)}\rvy.$ Let $\rvp=\mbox{softmax}(\rvz)$ denote the softmax output of the network and let $\rmA\triangleq\diag(\rvp)-\rvp\rvp^\T$.

In this problem, we look into the state of random Gaussian initialization, in which entries of both matrices are i.i.d. sampled from a standard normal distribution, and then re-scaled such that each row of $\mW^{(1)}$ and $\mW^{(2)}$ has norm 1. When taking $n$ and $d$ to infinity, with the concentration of norm in high-dimensional Gaussian random variables, we assume in this problem that entries of $\mW^{(1)}$ are iid sampled from a zero-mean distribution with variance $1/d$, and entries of $\mW^{(2)}$ are iid sampled from a zero-mean distribution with variance $1/n$. This initialization is standard in training neural networks. From the previous analysis of Hessian, the output Hessian corresponding to the first layer has closed form
\begin{equation}
    \mM^{(1)} \triangleq \exop{\rvx\sim \rectNormal(0, \mI_d)}{\rmD\mW^{(2)\\T}\rmA\mW^{(2)}\rmD},
\end{equation}
where $\rmD\triangleq\diag(\ind\sbr{\rvy\geq 0})\in\R^{n\times n}$ is the random 0/1 diagonal matrix representing the activations of ReLU function after the first layer. Note that the output Hessian of the second layer is simply $\mM^{(2)}\triangleq\ex{\rmA}$.

By the Kronecker decomposition, the closed form of the layer-wise Hessians of the first and the second layer are\begin{align*}
    \mH^{(1)} &\triangleq \exop{\rvx\sim \rectNormal(0, \mI_d)}{\rmD\mW^{(2)\T}\rmA\mW^{(2)}\rmD\otimes \rvx\rvx^\T},\\
    \mH^{(2)} &\triangleq \exop{\rvx\sim \rectNormal(0, \mI_d)}{\rmA\otimes \rmD\mW^{(1)}\rvx\rvx^\T\mW^{(1)\T}\rmD}.
\end{align*}
Following the decoupling conjecture, let the Kronecker approximation of the Hessians above be\begin{align*}
    \hat\mH^{(1)} &\triangleq \exop{\rvx\sim \rectNormal(0, \mI_d)}{\rmD\mW^{(2)\T}\rmA\mW^{(2)}\rmD}\otimes\exop{\rvx\sim \rectNormal(0, \mI_d)}{\rvx\rvx^\T},\\
    \hat\mH^{(2)} &\triangleq \exop{\rvx\sim \rectNormal(0, \mI_d)}{\rmA}\otimes\exop{\rvx\sim \rectNormal(0, \mI_d)}{\rmD\mW^{(1)}\rvx\rvx^\T\mW^{(1)\T}\rmD}.
\end{align*}
The decoupling conjecture is then equivalent to $\mH^{(1)}\approx \hat\mH^{(1)}$, $\mH^{(2)}\approx \hat\mH^{(2)}$.

Since our formulae for the Hessians are going to depend on the weight matrices, throughout the section we will condition on the value of $\mW^{(1)}$ and $\mW^{(2)}$ when we take expectation (i.e. the expectation is only taken over the input $\rvx\sim \rectNormal(0, \mI_d)$). We will neglect this under-script of the expectation operator $\E$ as there will be no confusion. When we are discussing the Hessians of a certain layer, we will also neglect the upper-script and just use $\mH$ and $\mM$ when there is no confusion. Moreover, we denote $\rmX\triangleq\ex{\rvx\rvx^\T}$ as the autocorrelation of the input.

Furthermore, for simplicity of notations, we will sometimes use the verbal description ``with probability 1 over $\mW^{(1)}$/$\mW^{(2)}$, event $E$ is true'' to denote
\begin{equation}
    \lim_{n\to\infty}\prop{\mW^{(1)}\sim\gN(0,\frac{1}{d}\mI_{nd}), \mW^{(2)}\sim\gN(0,\frac{1}{n}\mI_{cn})}{E} = 1.
\end{equation}

\subsection{Detailed Proof}
\label{sec:detailed-proof}
First, we restate our main theorems:

\noindent\textbf{\theoremref{thm:main-full} (Decoupling Theorem)}
\emph{
Let $V_1$ and $V_2$ be the top $c-1$ eigenspaces of $\mH^{(1)}$ and $\hat\mH^{(1)}$ respectively, for all $\eps>0$, 
\begin{equation}
    \lim_{n\to\infty}\mathop{\Pr}_{\mW^{(1)}\sim\gN(0,\frac{1}{d}\mI_{nd}), \mW^{(2)}\sim\gN(0,\frac{1}{n}\mI_{cn})}\left[\Overlap\left(V_1,V_2\right)>1-\eps\right] = 1.
\end{equation}
Moreover $\mH^{(1)}$ has $c-1$ large eigenvalues that, \begin{equation}
    \lim_{n\to\infty}\mathop{\Pr}_{\mW^{(1)}\sim\gN(0,\frac{1}{d}\mI_{nd}), \mW^{(2)}\sim\gN(0,\frac{1}{n}\mI_{cn})}\left[\left(\left.\frac{\lambda_c(\mH^{(1)})}{\lambda_{c-1}(\mH^{(1)})}\right|_{\mW^{(1)}, \mW^{(2)}}\right) < \eps\right] = 1.
\end{equation}}

\noindent\textbf{\theoremref{thm:main-out} }
\emph{
Let $\mM^*\triangleq \ex{\rmD'\mW^{(2)\T}\rmA\mW^{(2)}\rmD'}$ where $\rmD'$ is an independent copy of $\rmD$ and is independent of $\rmA$. Let $S_1$ and $S_2$ be the top $c-1$ eigenspaces of $\mM^{(1)}$ and $\mM^*$ respectively, for all $\eps>0$,
\begin{equation}
    \lim_{n\to\infty}\mathop{\Pr}_{\mW^{(1)}\sim\gN(0,\frac{1}{d}\mI_{nd}), \mW^{(2)}\sim\gN(0,\frac{1}{n}\mI_{cn})}\left[\Overlap\left(S_1,S_2\right)>1-\eps\right] = 1.
\end{equation}
Moreover,  \begin{equation}
    \lim_{n\to\infty}\mathop{\Pr}_{\mW^{(1)}\sim\gN(0,\frac{1}{d}\mI_{nd}), \mW^{(2)}\sim\gN(0,\frac{1}{n}\mI_{cn})}\left[\left(\left.\frac{\lambda_c(\mM)}{\lambda_{c-1}(\mM)}\right|_{\mW^{(1)}, \mW^{(2)}}\right) < \eps\right] = 1.
\end{equation}
}

\subsubsection{Properties of Infinite Width Weight Matrices}
\label{sec:pf-W-properties}
We will first prove some simple properties of the Gaussian initialized weight matrices $\mW^{(1)}$ and $\mW^{(2)}$ that will facilitate our analysis. Recall that $\mW^{(1)}\in\R^{d\times n}$ and $\mW^{(2)}\in\R^{n\times c}$ where the output dimension $c$ is a finite constant, the hidden layer width $n$ goes to infinity, and the input dimension $d=n^{1+\alpha}$ for some constant $\alpha>0$.
\begin{lemma}
\label{lemma:W-expectation}
 For all $i\in[c]$, for all $\eps>0$,
 \begin{equation}
    \lim_{n\to\infty}\pr{\left|\sum_{j=1}^n\mW_{ij}^{(2)}\right|\geq\eps}=0.
 \end{equation}
\end{lemma}
\begin{proofof}{\lemmaref{lemma:W-expectation}}
Since each entry of $\mW^{(2)}$ is initialized independently from $\gN(0,\frac1n)$, by Central Limit Theorem we have $\sum_{j=1}^n\mW_{ij}^{(2)}\sim \gN(0,\frac{1}{n})$. For any $\eps > 0$, fix $\eps$. By Chebyshev's inequality,
\begin{equation}
    \lim_{n\to\infty}\pr{\left|\sum_{j=1}^n\mW^{(2)}_{ij}\right|\geq\epsilon} < \lim_{n\to\infty}\frac{1}{n\eps^2} = 0. 
\end{equation}
\end{proofof}

\begin{lemma}
\label{lemma:chi2-tail}\citep{laurent2000adaptive} For $X\sim\chi_n^2$,\begin{equation}
    \pr{X-n\geq 2\sqrt{nt}+2t}\leq e^{-t},\qquad \pr{X-n\leq -2\sqrt{nt}}\leq e^{-t}.
\end{equation}

\end{lemma}

\begin{lemma}
\label{lemma:W-norm}
For all $\eps>0$, 
\begin{equation}
    \lim_{n\to\infty}\pr{\abs{\fns{\mW^{(2)}}-c}\geq\eps}=0.
\end{equation}
Beside, for all $i\in[c]$,
\begin{equation}
    \lim_{n\to\infty}\pr{\abs{\ns{\mW_i^{(2)}}-1}\geq\eps}=0.
\end{equation}
\end{lemma}
\begin{proofof}{\lemmaref{lemma:W-norm}}
For simplicity of notations, we will use $\mW$ to denote $\mW^{(2)}$ in this proof.
Since each entry of $\mW$ is initialized independently from $\gN(0,\frac1n)$, we know that $n\fns{\mW}=\sum_{i=1}^c\sum_{j=1}^nn{\mW_{i,j}}^2$ follows a $\chi_{cn}^2$-distribution. From \lemmaref{lemma:chi2-tail} we know that for large enough $n$,
\begin{equation}
\pr{|n\fns{\mW}-cn|\geq n\eps}\geq\pr{|n\fns{\mW}-cn|\geq 2\sqrt{c}n^{3/4}+2n^{1/2}}\leq 2\exp(-n^{1/2}).
\end{equation}
In other words,
\begin{equation}
\lim_{n\to\infty}\pr{|\fns{\mW}-c|\geq\eps} = \lim_{n\to\infty}\pr{|n\fns{\mW}-cn|\geq n\eps} = 0.
\end{equation}
Similarly, for any $i\in[c]$, $n\fns{\mW_i}$ follows a $\chi_n^2$-distribution, so for large enough $n$,
\begin{equation}
\pr{|n\fns{\mW_i}-n|\geq n\eps}\leq\pr{|n\fns{\mW}-n|\geq 2n^{3/4}+2n^{1/2}}\leq 2\exp(-n^{1/2}),
\end{equation}
which indicates that
\begin{equation}
\lim_{n\to\infty}\pr{|\ns{\mW_i}-1|\geq\eps} = \lim_{n\to\infty}\pr{|n\ns{\mW_i}-n|\geq n\eps} = 0.
\end{equation}
\end{proofof}

\begin{lemma}
\label{lemma:w1-col-norm}
Let $\vw_i$ denote the $i$-th column vector of $\mW^{(1)}$.
With probability 1 over $\mW^{(1)}$, \begin{equation}
    \max_{i=1}^d\norm{\vw_i}<5n^{-\frac\alpha2}.
\end{equation}

\end{lemma}
\begin{proofof}{\lemmaref{lemma:w1-col-norm}}
Since entries of $\mW^{(1)}$ are i.i.d. sampled from $\cN(0, \frac1n)$, each $\ns{\vw_i}$ obeys a $\chi_n^2$ scaled by $\frac1d = n^{-(1+\alpha)}$. Thus by the tail bound of \lemmaref{lemma:chi2-tail}, setting $t=n$ we have\begin{equation}
\pr{\ns{\vw_i}\geq 5n^{-\alpha}} =  \pr{d\ns{\vw_i}\geq n+2\sqrt{n^2}+2n}\leq e^{-n}.
\end{equation}
By a Union bound we have \begin{equation}
    \pr{ \max_{i=1}^d\norm{\vw_i}^2\geq 5n^{-\alpha}}\leq \sum_{i=1}^d\pr{ \norm{\vw_i}^2\geq 5n^{-\alpha}} = de^{-n} = n^{1+\alpha}e^{-n}.
\end{equation}
Since $\alpha$ is a constant, RHS converges to 0. Thus with probability 1 over $\mW^{(1)}$, we have \begin{equation}
    \max_{i=1}^d\norm{\vw_i}^2<5n^{-\alpha}.
\end{equation}
Taking square root on both sides completes the proof.
\end{proofof}

\begin{lemma}
\label{lemma:WW-identity}
For any random matrix $\mW$
For all $\eps>0$,
\begin{equation}
    \lim_{n\to\infty}\pr{\norm{\mW^{(1)}\mW^{(1)\T}-\mI_c}\geq\eps}=0.
\end{equation}Besides, for all $i,j\in[c]$, 
\begin{equation}
    \lim_{n\to\infty}\pr{|(\mW^{(1)}\mW^{(1)\T})_{i,j}-\delta_{i,j}|\geq\eps}=0
\end{equation}
Here $\delta$ is the Kronecker delta function, i.e., $\delta_{i,j} = \ind[i=j]$.
\end{lemma}
\begin{proofof}{\lemmaref{lemma:WW-identity}}
To prove this lemma we need the following tail bound:
\begin{lemma}\citep{zhu2012short} If $S$ follows a Wishart distribution $\cW_d(n,C)$, with $r = \emph{tr}(C)/\norm{C}$, for $\theta\geq 0$ the following inequality holds that \begin{equation}
    \pr{\lnorm{\frac1n S-C}\geq \rbr{\sqrt{\frac{2\theta(r+1)}{n}}+\frac{2\theta r}{n}}\norm{C}}\leq 2d\exp(-\theta).
\end{equation}
\label{lemma:W-wishart}
\end{lemma}

Since each entry of $\mW^{(1)}$ is initialized independently from $\gN(0,\frac1d)$, we know that $\mW^{(1)}\mW^{(1)\T}$ follows Wishart distribution $\cW_d(d, \frac1d \mI_n)$. With $r = \tr(\frac1d\mI_n)/\norm{\frac1d\mI_n} = n$ and set $\theta = n^{\frac{\alpha}{2}}$, from \lemmaref{lemma:W-wishart}, for $n\geq 1$ we get
\begin{equation}
\begin{split}
2d\exp(-n^{\frac{\alpha}{2}})&\geq \pr{\lnorm{\frac1d\mW^{(1)}\mW^{(1)\T}-\frac1d\mI_n}\geq \rbr{\sqrt{\frac{2\theta(n+1)}{d}}+\frac{2\theta n}{d}}\lnorm{\frac1d\mI_n}}\\
&=\pr{\lnorm{\frac1d\mW^{(1)}\mW^{(1)\T}-\frac1d\mI_n}\geq \rbr{\sqrt{\frac{2n^{\frac{\alpha}{2}}(2n)}{n^{1+\alpha}}}+\frac{2n^{\frac{\alpha}{2}} n}{n^{1+\alpha}}}\lnorm{\frac1d\mI_n}}\\
&=\pr{\lnorm{\mW^{(1)}\mW^{(1)\T}-\mI_n}\geq 2(n^{-\frac\alpha4}+n^{-\frac\alpha2})}.\\
\end{split}
\end{equation}
Fix any $\eps>0$, we may find $N\in \NN$ such that for all $n>N$, $2(n^{-\frac\alpha4}+n^{-\frac\alpha2})<\eps$. For any $\eps'>0$, we may find $N'$ such that $2d\exp(-n^{\frac\alpha2}) = 2n^{1+\alpha}\exp(-n^{\frac\alpha2}) < \eps'$. Passing $n$ to infinity we get
\begin{equation}
    \lim_{n\to\infty}\pr{\lnorm{\mW^{(1)}\mW^{(1)\T}-\mI_n}>\eps} = 0.
\end{equation}
Then we proceed to analyze the entries. For all $i,j\in[n]$, we have
\begin{equation}
\begin{split}
\pr{|(\mW^{(1)}\mW^{(1)\T})_{i,j}-\delta_{i,j}|\geq\eps} &\leq\pr{\sum_{i,j=1}^n\pr{(\mW^{(1)}\mW^{(1)\T})_{i,j}-\delta_{i,j}}^2\geq\eps^2}\\
&=\pr{\fns{\mW^{(1)}\mW^{(1)\T}-\mI_n}\geq\eps^2}\\
&\leq\pr{\norm{\mW^{(1)}\mW^{(1)\T}-\mI_n}\geq\frac{\eps}{\sqrt{n}}},
\end{split}
\end{equation}
which implies that for all $i,j\in[n]$,
\begin{equation}
\lim_{n\to\infty}\pr{|(\mW^{(1)}\mW^{(1)\T})_{i,j}-\delta_{i,j}|\geq\eps}=0.
\end{equation}
\end{proofof}
For the second weight matrix $\mW^{(2)}$, where $\mW^{(2)}\mW^{(2)\T}\sim \cW_c(n, \frac1n\mI_c)$, we may prove an identical statement as shown in the corollary below. The proof proceeds identical as above since we only need the ratio between the width and the height of $\mW$, which is $n/c$ in this case, to go to infinity.

\begin{corollary}
\label{cor:WW-Identity}
For all $\eps>0$,
\begin{equation}
    \lim_{n\to\infty}\pr{\norm{\mW^{(2)}\mW^{(2)\T}-\mI_n}\geq\eps}=0.
\end{equation}
\end{corollary}

Next we establish the approximate equivalence between the scatter matrix $\mW^{(2)\T}\mW^{(2)}$ and the projection matrix $P_{\mW^{(2)}}$.
\begin{lemma}
\label{lemma:W-projection}
Let $P_{\mW^{(2)}}$ be the projection matrix onto the row space of $\mW^{(2)}$, then for all $\eps>0$,
\begin{equation}
    \lim_{n\to\infty}\Pr\left[\fns{\mW^{(2)\T}\mW^{(2)}-P_{\mW^{(2)}}}>\eps\right]=0.
\end{equation}
\end{lemma}
\begin{proofof}{\lemmaref{lemma:W-projection}}
For simplicity of notations, in this proof we will neglect the layer index superscript and use $\mW$ to denote $\mW^{(2)}$. Recall that $\mW\in\R^{n\times c}$.

Fix $\eps\in(0,1)$ without loss of generality. Let $\mW_i(i\in[c])$ be the $i$-th row of $\mW$, and we will do the Gram–Schmidt process for the rows of $\mW$. Specifically, the Gram–Schmidt process is as following: Assume that the basis $\{\overline{\mW}_i\}_{i=1}^k$ are already normalized, we set $\mW_{k+1}'\triangleq \mW_{k+1} - \sum_{i=1}^k\langle \mW_{k+1}, \overline{\mW}_i\rangle$ and $\overline{\mW}_{k+1}\triangleq\mW_{k+1}'/\norm{\mW_{k+1}'}$. Finally, from the definition of projection matrix, we know that $P_\mW=\overline{\mW}^\T\overline{\mW}$.

From \lemmaref{lemma:W-norm} we have for all $i\in[c]$,
\begin{equation}
    \lim_{n\to\infty}\pr{|\ns{\mW_i}-1|\geq\eps}=0.
\end{equation}
Let $\eps'\triangleq \eps^2/\rbr{c^3\cdot 16^{2c+1}}$, from \lemmaref{lemma:WW-identity} we know that for all $i,j\in[c]$, 
\begin{equation}
    \lim_{n\to\infty}\pr{|\mW_i\mW_j^\T-\delta_{i,j}|\geq\eps'}=0.
\end{equation}

Then we use induction to bound the difference between $\mW$ and $\overline{\mW}$. Specifically, we will show that for all $i\in[c], \norm{\overline{\mW}_i-\mW_i}\leq 8^{i}\eps'$. For simplicity of notations, in the following proof we will not repeat the probability argument and assume that for all $i,j\in[c]$, $|\mW_i\mW_j^\T-\delta_{i,j}|\leq\eps'$ and for all $i\in[c]$, $|\ns{\mW_i}-1|\leq\eps'$. We will only use these inequalities finite times so applying a union bound will give the probability result.

For $i=1$, we know that $\overline{\mW}_1 = \mW_1/\norm{\mW_1}$ and $|\norm{\mW_1}-1|\leq\eps'$, so $\norm{\overline{\mW}_i-\mW_i}\leq \eps'$.

If our inductive hypothesis holds for $i\leq k$, then for $i=k+1$, we have for all $j\leq k$,
\begin{equation}\begin{split}
    |\langle \mW_i, \overline{\mW}_j\rangle|&\leq |\langle \mW_i, \mW_j\rangle| + |\langle \mW_i, \overline{\mW}_j-\mW_j\rangle|\\
    &\leq \eps'+ \norm{\mW_i}\cdot\norm{\overline{\mW}_j-\mW_j}
    \\
    &\leq \eps'+ (1+\eps')8^j\eps'\\
    &\leq (2^{3j+1}+1)\eps'.
\end{split}\end{equation}
Therefore,
\begin{equation}
    \norm{\mW_i'-\mW_i}\leq \sum_{j\in[k]} |\langle \mW_i, \overline{\mW}_j\rangle|\leq \eps'+\sum_{j\in[k]}(2^{3j+1}+1)\eps'\leq (2^{3k+2} - 1)\eps',
\end{equation}
and
\begin{equation}
    |\norm{\mW_i'}-1|\leq |\norm{\mW_i}-1| + \norm{\mW_i'-\mW_i}\leq 2^{3k+2}\eps'.
\end{equation}
Thus,
\begin{equation}\begin{split}
    \norm{\overline{\mW}_i-\mW_i}&\leq \norm{\overline{\mW}_i-\mW_i'} + \norm{\mW_i'-\mW_i}\\
                          &\leq |\norm{\mW_i'}-1| + \norm{\mW_i'-\mW_i}\\
                          &\leq 8^{k+1}\eps',
\end{split}\end{equation}
which finishes the induction and implies that for all $\eps>0$, for all $i\in[c], \norm{\overline{\mW}_i-\mW_i}\leq 8^{i}\eps'$. Thus,
\begin{equation}
\label{eqn:W-equal-W-bar}
    \fns{\overline{\mW}-\mW} = \sum_{i\in[c]} \ns{\overline{\mW}_i-\mW_i}\leq c\cdot 16^c\eps'.
\end{equation}
This means that
\begin{equation}\begin{split}
    \fn{\mW^\T\mW-P_\mW} &= \fn{\mW^\T\mW-\overline{\mW}^\T\overline{\mW}}\\
                  &\leq 2\fn{\mW-\overline{\mW}}\fn{\overline{\mW}} + \fns{\mW-\overline{\mW}}\\
                  &\leq 2c\cdot\sqrt{c}\cdot 8^c\sqrt{\eps'} + c\cdot 16^c\eps'\leq \eps.
\end{split}
\end{equation}
\end{proofof}

For the final property of the weight matrices, we show that the maximum among all entry of the weight matrices are reasonably small with high probability.
\begin{lemma}
\label{lemma:w-not-too-large}Fix any $\alpha>0$, consider $\mW\in\R^{a\times b}$ for some $b>a^{1+\alpha}$ such that each entry is sampled from a zero mean Gaussian $\cN(0, \frac1b)$. The largest entry of $\mW$ is reasonably small with high probability as $b$ goes to infinity, namely,
\begin{equation}
\lim_{b\to\infty}\Pr\left[\max_{(i,j)\in[a]\times[b]}|\mW^{(2)}_{ij}| > 2b^{-\frac13}\right] = 0
\end{equation}
\end{lemma}

\begin{proofof}{\lemmaref{lemma:w-not-too-large}}
For i.i.d. random variables $\rvx_1,\dots, \rvx_b\sim \gN(0,1)$,
by concentration inequality on maximum of Gaussian random variables, for any $t>0$, we have
\begin{equation}
    \Pr\left[\max_{i\in[b]}\rvx_i > \sqrt{2\log (2b)} + t\right] < 2e^{-\frac{t^2}{2}}.
\end{equation}
For any $i,j\in[a]\times [b]$, since $\mW_{ij}$ are i.i.d. sampled from $\gN(0,\frac1b)$, with rescaling of $1/\sqrt{b}$ we may substitute $\rvx_j$ with $\mW_{ij}$. It follows that
\begin{equation}
    \Pr\left[\max_{(i,j)\in[a]\times[b]}\mW^{(2)}_{ij} > \frac{\sqrt{2\log (2ab)} + t}{\sqrt{b}}\right] < 2e^{-\frac{t^2}{2}}.
\end{equation}
Taking $t=b^{\frac16}$, since $a<b$, for large $b$ we have $\sqrt{2\log (2ab)} < \sqrt{2\log (2b^2)} < b^{\frac16}$. Thus for large $b$,
\begin{equation}
\begin{split}
    \Pr\left[\max_{(i,j)\in[a]\times[b]}\mW_{ij} > 2b^{-\frac13}\right] &= \Pr\left[\max_{(i,j)\in[a]\times[b]}\mW_{ij} > \frac{b^{\frac16} + b^{\frac16}}{\sqrt{n}}\right]\\
    & < \Pr\left[\max_{(i,j)\in[a]\times[b]}\mW_{ij} > \frac{\sqrt{2\log (2b)} + b^{\frac16}}{\sqrt{b}}\right] < 2e^{-\frac{b^{\frac13}}{2}}.
\end{split}
\end{equation}
With the same argument, we have
\begin{equation}
    \Pr\left[\min_{(i,j)\in[a]\times[b]}\mW_{ij} < -2b^{-\frac13}\right] < 2e^{-\frac{b^{\frac13}}{2}}.
\end{equation}
Passing $b$ to infinity completes the proof.
\end{proofof}
From the above lemma, we can bound the maximum entry of $\mW^{(1)}$ and $\mW^{(2)}$ as follows:

\begin{corollary}
\label{cor:w-not-too-large}
With probability 1 over $\mW^{(1)}$ and $\mW^{(2)}$,
\begin{equation}
\begin{split}
    \lim_{n\to\infty}\Pr\left[\max_{(i,j)\in[n]\times[d]}|\mW^{(1)}_{ij}| > 2d^{-\frac13}\right] &= 0,\\
    \lim_{n\to\infty}\Pr\left[\max_{(i,j)\in[c]\times[n]}|\mW^{(2)}_{ij}| > 2n^{-\frac13}\right] &= 0.\\
\end{split}
\end{equation}
\end{corollary}

\subsubsection{Approximate Independence Between Layer Inputs and Outputs}
Let us first recall some definitions and notations of the inputs and outputs of layers. The input $\rvx$ follows the $d$-dimensional multivariate rectified Gaussian distribution with identity covariance for the pre-rectified Gaussian, namely $\rvx\sim\rectNormal(0, \mI_d)$. The input propagates through the first layer to $\rvu\triangleq\mW^{(1)}\rvx$, and is multiplied element-wise by the ReLU activation to the input of the second layer $\rvy\triangleq \sigma(\rvu)$. Here we denote that activation of ReLU function by the random matrix $\rmD\triangleq \diag(\ind[\rvu\geq 0])\in\R^{n\times n}$. Finally we get the logit output of the network $\rvz\triangleq \mW^{(2)}\rvy$. The output Hessian of the last layer is $\rmA=\diag(\rvp)-\rvp\rvp^\T\in\R^{c\times c}$.

In this section we will show that when $n$ goes to infinity, both $\rvy$ and $\rvz$ will converge in distribution to rectified Gaussian. Moreover, when we condition on two entries of $\rvx$ and two entries of $\rvy$, the output Hessian $\rmA$ will be invariant in the limiting case.

\begin{lemma}
\label{lemma:y-gaussian}
When $d\to \infty$, with probability 1 over $\mW^{(1)}$,
\begin{equation*}
    \lim_{d\to\infty}\rvy\xrightarrow{d}\cN^{\emph{R}}\left(0,\frac{\pi-1}{2\pi}\mI_n\right).
\end{equation*}
\end{lemma}

\begin{proofof}{\lemmaref{lemma:y-gaussian}}
We will prove this lemma using the multivariate Lindeberg-Feller CLT.
Given that $\ervx_i$'s are i.i.d. sampled from $\rectNormal(0,1)$ with bounded moments:
\begin{equation}
     \E[\ervx_i] = \frac{1}{\sqrt{2\pi}},\qquad \E[(\ervx_i - \E[\ervx_i])^2] = \frac{\pi-1}{2\pi},\qquad \E[(\ervx_i - \E[\ervx_i])^4] = \frac{6\pi^2-10\pi-3}{4\pi^2} < 1.
\end{equation}
For each $i\in[d]$, let $\vw_i^{(1)}\in\R^d$ denote the $i$-th column vector of $\mW^{(1)}$. Let 
$\rvs_i = \vw_i^{(1)}(\ervx_i - \E[\ervx_i])$,
then we have \begin{equation}
\label{eqn:z-expression-w1}
    \rvy = \sum_{i=1}^d\vw_i^{(1)}\ervx_i = \sum_{i=1}^d\rvs_i + \sum_{i=1}^d\E[\ervx_i]\vw_i^{(1)} = \sum_{i=1}^d\rvs_i + \frac{1}{\sqrt{2\pi}}\sum_{i=1}^d\vw_i^{(1)}.
\end{equation}
It follows that 
\begin{equation}
   Var[\rvs_i] = Var[\vw_i^{(1)}\ervx_i] = \frac{\pi-1}{2\pi}\vw_i^{(1)}\vw_i^{(1)^\T}.
\end{equation}
Let $\mS = \sum_{i=1}^d Var[\rvs_i]$,
\begin{equation}
    \mS = \frac{\pi-1}{2\pi}\sum_{i=1}^d\vw_i^{(1)}\vw_i^{(1)^\T} = \frac{\pi-1}{2\pi}\mW^{(1)}\mW^{(1)\T}.
\end{equation}
As $d\to\infty$, from \corollaryref{cor:WW-Identity} we have $\mW^{(1)}\mW^{(1)\T}\to \mI_n$ in probability, therefore
\begin{equation}
\lim_{d\to\infty}\mS = \frac{\pi-1}{2\pi}\mI_n.
\end{equation}

We now verify the Lindeberg condition of independent random vectors $\{\rvs_1,\ldots, \rvs_n\}$.
First observe that the fourth moments of the $\rvs_i$'s are sufficiently small.
\begin{equation}
\begin{split}
    \lim_{d\to\infty}\sum_{i=1}^d\E\left[\left\Vert\rvs_i\right\Vert^4\right] &= \lim_{d\to\infty}\sum_{i=1}^d\E\left[\left(\sum_{j=1}^n \left(\mW^{(1)}_{ji}\left(\ervx_i - \E[\ervx_i]\right)\right)^2\right)^2\right]\\
    & \leq\lim_{d\to\infty} \sum_{i=1}^d\E\left[c^2\left(\left(\max_{j\in[n]}\mW^{(1)}_{ji}\right)^2\left(\ervx_i - \E[\ervx_i]\right)^2\right)^2\right]\\
    & \leq\lim_{d\to\infty} c^2\left(\max_{i\in[d],j\in[n]}\mW^{(1)}_{ji}\right)^4\sum_{i=1}^d\E\left[\left(\ervx_i - \E[\ervx_i]\right)^4\right].
\end{split}
\end{equation}
Since $\E[(\ervx_i - \E[\ervx_i])^4] < 1$ and $\max_{i\in[d],j\in[n]}|\mW^{(1)}_{ji}| < 2d^{-\frac13}$, with probability 1 over $\mW^{(1)}$ from \lemmaref{lemma:w-not-too-large}, it follows that
\begin{equation}
\begin{split}
    \lim_{d\to\infty}\sum_{i=1}^d\E\left[\left\Vert\rvs_i\right\Vert^4\right] \leq c^2\lim_{d\to\infty}\left(2d^{-\frac13}\right)^4\sum_{i=1}^d 1 = c^2 \lim_{d\to\infty}16d^{-\frac43}n =  16c^2 \lim_{d\to\infty}d^{-\frac13} =0.
\end{split}
\end{equation}
For any $\epsilon > 0$, since $\left\Vert\rvs_{i}\right\Vert > \epsilon$ in the domain of integration (when $\ind[\norm{\rvs_i} > \eps]$),
\begin{equation}
\begin{split}
\lim_{d\to\infty}\sum_{i=1}^d\E\left[\left\Vert\rvs_i\right\Vert^2\ind\left[\left\Vert\rvs_{i}\right\Vert > \epsilon\right]\right] & < \lim_{d\to\infty}\sum_{i=1}^d\E\left[\frac{\left\Vert\rvs_i\right\Vert^2}{\epsilon^2}\left\Vert\rvs_i\right\Vert^2\ind\left[\left\Vert\rvs_{i}\right\Vert > \epsilon\right]\right]\\
&\leq \frac{1}{\epsilon^2}\lim_{d\to\infty}\sum_{i=1}^d\E\left[\left\Vert\rvs_i\right\Vert^4\right] = 0.
\end{split}
\end{equation}
As the Lindeberg Condition is satisfied, with $\lim_{d\to\infty}\mS = \frac{\pi-1}{2\pi}\mI_n$ we have
\begin{equation}
\label{eqn:lemma:sumv-convergence-w1}
    \lim_{d\to\infty}\sum_{i=1}^d\rvs_i\xrightarrow{d}\gN\left(0,\frac{\pi-1}{2\pi}\mI_n\right).
\end{equation}

By \lemmaref{lemma:W-expectation}, we have $\lim_{d\to\infty}\vw_i^{(1)} = \overrightarrow{0}$ with probability 1 over $\mW^{(1)}$, therefore plugging \equationref{eqn:lemma:sumv-convergence-w1} into \equationref{eqn:z-expression-w1} we have\begin{equation}
    \lim_{d\to\infty} \rvy\xrightarrow{d}\gN\left(0,\frac{\pi-1}{2\pi}\mI_n\right).
\end{equation}
Which completes the proof.
\end{proofof}

\begin{lemma}
\label{lemma:z-gaussian}
$\lim_{n\to\infty}\rvz\xrightarrow{d} \gN(0, \frac{(\pi-1)^2}{4\pi^2}\mI_c)$ with probability 1 over $\mW^{(2)}$.
\end{lemma}

\begin{proofof}{\lemmaref{lemma:z-gaussian}}
The proof technique for $\rvz$ is identical to that of $\rvy$. For completeness we will redo it for $\mW^{(2)}$.
From \lemmaref{lemma:y-gaussian}, $\ervy_i$'s are i.i.d. from $\rectNormal(0,\frac{\pi-1}{2\pi})$ with bounded moments:
\begin{equation}
    \E[\ervy_i] = \frac{\sqrt{\pi-1}}{2\pi},\ 
    \E[(\ervy_i - \E[\ervy_i])^2] = \frac{(\pi-1)^2}{4\pi^2},\ \E[(\ervy_i - \E[\ervy_i])^4] = \frac{(6\pi^2-10\pi-3)(\pi-1)}{8\pi^3} < 1.
\end{equation}
For each $i\in[n]$, let $\vw_i^{(2)}\in\R^c$ denote the $i$-th column vector of $\mW^{(2)}$. Let 
$\rvv_i = \vw_i^{(2)}(\ervy_i - \E[\ervy_i])$,
then we have \begin{equation}
\label{eqn:z-expression}
    \rvz = \sum_{i=1}^n\vw_i^{(2)}\ervy_i = \sum_{i=1}^n\rvv_i + \sum_{i=1}^n\E[\ervy_i]\vw_i^{(2)} = \sum_{i=1}^n\rvv_i + \frac{\sqrt{\pi-1}}{2\pi}\sum_{i=1}^n\vw_i^{(2)}.
\end{equation}
It follows that 
\begin{equation}
   Var[\rvv_i] = Var[\vw_i^{(2)}\ervy_i] = \frac{(\pi-1)^2}{4\pi^2}\vw_i^{(2)}\vw_i^{(2)\T}.
\end{equation}
Let $\mV = \sum_{i=1}^n Var[\rvv_i]$,
\begin{equation}
    \mV = \frac{(\pi-1)^2}{4\pi^2}\sum_{i=1}^n\vw_i^{(2)}\vw_i^{(2)\T} = \frac{(\pi-1)^2}{4\pi^2}\mW^{(2)}\mW^{(2)\T}.
\end{equation}
As $n\to\infty$, from \corollaryref{cor:WW-Identity} we have $\mW^{(2)}\mW^{(2)\T}\to \mI_c$ in probability, therefore
\begin{equation}
\lim_{n\to\infty}\mV = \frac{(\pi-1)^2}{4\pi^2}\mI_c.
\end{equation}

We now verify the Lindeberg condition of independent random vectors $\{\rvv_1,\ldots, \rvv_n\}$.
First observe that the fourth moments of the $\rvv_i$'s are sufficiently small.
\begin{equation}
\begin{split}
    \lim_{n\to\infty}\sum_{i=1}^n\E\left[\left\Vert\rvv_i\right\Vert^4\right] &= \lim_{n\to\infty}\sum_{i=1}^n\E\left[\left(\sum_{j=1}^c \left(\mW^{(2)}_{ji}\left(\ervy_i - \E[\ervy_i]\right)\right)^2\right)^2\right]\\
    & \leq\lim_{n\to\infty} \sum_{i=1}^n\E\left[c^2\left(\left(\max_{j\in[c]}\mW^{(2)}_{ji}\right)^2\left(\ervy_i - \E[\ervy_i]\right)^2\right)^2\right]\\
    & \leq\lim_{n\to\infty} c^2\left(\max_{i\in[n],j\in[c]}\mW^{(2)}_{ji}\right)^4\sum_{i=1}^n\E\left[\left(\ervy_i - \E[\ervy_i]\right)^4\right].
\end{split}
\end{equation}
Since $\E[(\ervy_i - \E[\ervy_i])^4] < 1$ and $\max_{i\in[n],j\in[c]}|\mW^{(2)}_{ji}| < 2n^{-\frac13}$ with probability 1 from \corollaryref{cor:w-not-too-large}, it follows that
\begin{equation}
\begin{split}
    \lim_{n\to\infty}\sum_{i=1}^n\E\left[\left\Vert\rvv_i\right\Vert^4\right] \leq c^2\lim_{n\to\infty}\left(2n^{-\frac13}\right)^4\sum_{i=1}^n 1 = c^2 \lim_{n\to\infty}16n^{-\frac43}n =  16c^2 \lim_{n\to\infty}n^{-\frac13} =0.
\end{split}
\end{equation}
For any $\epsilon > 0$, since $\left\Vert\rvv_{i}\right\Vert > \epsilon$ in the domain of integration (when $\ind[\norm{\rvv_i} > \eps]$),
\begin{equation}
\begin{split}
\lim_{n\to\infty}\sum_{i=1}^n\E\left[\left\Vert\rvv_i\right\Vert^2\ind\left[\left\Vert\rvv_{i}\right\Vert > \epsilon\right]\right] & < \lim_{n\to\infty}\sum_{i=1}^n\E\left[\frac{\left\Vert\rvv_i\right\Vert^2}{\epsilon^2}\left\Vert\rvv_i\right\Vert^2\ind\left[\left\Vert\rvv_{i}\right\Vert > \epsilon\right]\right]\\
&\leq \frac{1}{\epsilon^2}\lim_{n\to\infty}\sum_{i=1}^n\E\left[\left\Vert\rvv_i\right\Vert^4\right] = 0.
\end{split}
\end{equation}
As the Lindeberg Condition is satisfied, with $\lim_{n\to\infty}\mV = \frac{(\pi-1)^2}{4\pi^2}\mI_c$ we have
\begin{equation}
\label{eqn:lemma:sumv-convergence}
    \lim_{n\to\infty}\sum_{i=1}^n\rvv_i\xrightarrow{d}\gN\left(0,\frac{(\pi-1)^2}{4\pi^2}\mI_c\right).
\end{equation}

By \lemmaref{lemma:W-expectation}, we have $\lim_{n\to\infty}\vw_i^{(2)} = \overrightarrow{0}$ with probability 1 over $\mW^{(2)}$, therefore plugging \equationref{eqn:lemma:sumv-convergence} into \equationref{eqn:z-expression} we have\begin{equation}
    \lim_{n\to\infty} \rvz\xrightarrow{d}\gN\left(0,\frac{(\pi-1)^2}{4\pi^2}\mI_c\right).
\end{equation}
Which completes the proof.
\end{proofof}

Now we will show a key lemma for proving the main theorem, which suggests that when reasonably conditioning on two entries of the input $\rvx$ and two entries of the activation $\rmD$, the distribution of $\rvz$ converges in distribution to $\rvz$ without conditioning as $n\to\infty$.

\begin{lemma}
\label{lemma:z-invariant}
With probability 1 over $\mW^{(1)}$ and $\mW^{(2)}$, fix any $\beta < \frac\alpha2$ (recall that $d=n^{1+\alpha}$), fix any $a,b\in (-n^{\beta}, n^{\beta})$, for any $p,q\in[n]$ and $k,l\in[d]$, we have the following convergence in distribution\begin{equation}
    \rvz\vert(\rmD_{pp}=1, \rmD_{qq}=1, \rvx_k=a, \rvx_l=b)\xrightarrow{d} \rvz.
\end{equation}
\end{lemma}

\begin{proofof}{\lemmaref{lemma:z-invariant}} For simplicity of notation, we will use subscript $|_\rvx$ and $|_\rmD$ to denote the conditions we impose. For example, we will denote $\rvz\vert(\rmD_{pp}=1, \rmD_{qq}=1, \rvx_k=a, \rvx_l=b)$ by $\rvz|_{\rmD,\rvx}$, and denote $\rvx|(\rvx_k=a, \rvx_l=b)$ by $\rvx|_\rvx$ etc.

First claim that with probability 1 over $\mW^{(1)}$, $\rvu$ is invariant upon the conditioning on $\rvx$. Let $\ve^{(i)}\in\R^d$ be the standard basis vector such that $\ve^{(i)}_j=\ind[i=j]$. Then \begin{equation}
\begin{split}
    \norm{\rvu-\rvu|_\rvx} &= \norm{\mW^{(1)}\rvx - \mW^{(1)}\rvx|_\rvx}\\
    &= \norm{\mW^{(1)}(\rvx -\rvx|_\rvx)}\\
    &= \norm{\mW^{(1)}((\rvx_k-a)\ve^{(k)} + (\rvx_l-b)\ve^{(l)})}\\
    &= \norm{\vw^{(1)}_k}|\rvx_k-a| + \norm{\vw^{(1)}_l}|\rvx_l-b|\\
    &\leq 5n^{-\frac\alpha2}(|\rvx_k|+|\rvx_l|+|a|+|b|)\\
    &\leq 5n^{-\frac\alpha2}(\rvx_k+\rvx_l) + 10n^{-\frac\alpha2}n^{\frac\beta2}.
\end{split}
\end{equation}
The norms of $\vw_k^{(1)}$ and $\vw_l^{(1)}$ are bounded from \lemmaref{lemma:w1-col-norm}.
Note that as $n\to\infty$ we have $n^{-\frac\alpha2}$ and $n^{-\frac{\alpha-\beta}{2}}$ converging to 0 as we set $\beta < \alpha$.
Since $\rvx$ is of bounded expectation and variance, $5n^{-\frac\alpha2}(\rvx_k+\rvx_l)$ converges in distribution to $0$. Therefore $\norm{\rvu-\rvu|_\rvx}\converged 0$ and hence $\rvy|_\rvx\converged\rvy$. Since $\rvz$ is determined by $\rvy$, to prove $\rvz|_{\rmD,\rvx}\converged \rvz$, we now only ne
ed to show $\rvz|_{\rmD}\converged \rvz$.

Note that conditioning on $\rmD_{pp}=\rmD_{qq}=1$ is equivalent to conditioning on $\rvu_p>0$ and $\rvu_q>0$. Which is again equivalent to conditioning on $\rvy_p$ and $\rvy_q$ to be a half Gaussian distribution truncated at 0 instead of the rectified Gaussian. Recall that $\rvz=\mW^{(2)}\rvy = \sum_{i=1}^n\vw^{(2)}_i\rvy_i.$ Since only $\rvy_p$ and $\rvy_q$ are affected by conditioning on $\rmD$, we have\begin{equation}
\begin{split}
\norm{\rvz - \rvz|_\rmD} &=\lnorm{ \sum_{i=1}^n\vw^{(2)}_i\rvy_i - \sum_{i=1}^n\vw^{(2)}_i(\rvy|_\rmD)_i}\\
&= \lnorm{\vw^{(2)}_p(\rvy_p-(\rvy|_\rmD)_p) + \vw^{(2)}_q(\rvy_q-(\rvy|_\rmD)_q)}\\
&\leq \norm{\vw^{(2)}_p}|\rvy_p-(\rvy|_\rmD)_p| + \norm{\vw^{(2)}_q}|\rvy_q-(\rvy|_\rmD)_q|.
\end{split}
\end{equation}
Note that $\rvy_p-(\rvy|_\rmD)_p$ and $\rvy_q-(\rvy|_\rmD)_q$ are difference between a rectified Gaussian with finite variance and its corresponding truncated Gaussian, both are of bounded expectation and variance. Meanwhile, by \lemmaref{cor:w-not-too-large}, for all $i\in[n]$ we have that with probability 1 over $\mW^{(2)}$,
\begin{equation}
    \left\Vert\vw^{(2)}_i\right\Vert \leq \sqrt{c\left(\max_{i\in[c],j\in[n]}\mW^{(2)}_{ij}\right)^2} < \sqrt{4cn^{-\frac{2}{3}}}.
\end{equation}
Since $\lim_{n\to\infty}\sqrt{4cn^{-\frac{2}{3}}}=0$, as $n$ goes to infinity we have \begin{equation}
\norm{\vw^{(2)}_p}|\rvy_p-(\rvy|_\rmD)_p| + \norm{\vw^{(2)}_q}|\rvy_q-(\rvy|_\rmD)_q|\converged \overrightarrow{0}.
\end{equation}
Therefore $\rvz|_\rmD\converged\rvz$, and hence
\begin{equation}
    \rvz\vert(\rmD_{pp}=1, \rmD_{qq}=1, \rvx_k=a, \rvx_l=b)\xrightarrow{d} \rvz.
\end{equation}
\end{proofof}

Given that $\rvp=\softmax(\rvz)$ and $\rmA=\diag(\rvp)-\rvp\rvp^\T$, the mapping from $\rvz$ to $\rmA$ is bounded and continuous. Thus by the Portmanteau Theorem, we have the following corollary,
\begin{corollary}
\label{cor:A-invariant}
For any $\epsilon>0$, with probability 1 over $\mW^{(1)}$ and $\mW^{(2)}$, fix any $\beta < \frac\alpha2$ (recall that $d=n^{1+\alpha}$), fix any $a,b\in (-n^{\beta}, n^{\beta})$, for any $p,q\in[n]$, $k,l\in[d]$, and $i,j\in[c]$,  we have \begin{equation}
    |\ex{\rmA_{ij}\vert(\rmD_{pp}=1, \rmD_{qq}=1, \rvx_k=a, \rvx_l=b)} - \ex{\rmA_{ij}}| < \eps.
\end{equation}
By the proof of \lemmaref{lemma:z-invariant}, this property holds when dropping the conditioning on $\rmD$ or $\rvx$. 
\end{corollary}
\subsubsection{Structure of $\rmA$}
\label{sec:proof-A-structure}
In this section we will analyze properties of the second output Hessian $\rmA$, which, despite being a $\R^{c\times c}$ ``small'' matrix, provides many important properties to the first output Hessian and the full layer-wise Hessians.

\begin{lemma}
\label{lemma:A-rank-c-1}
With probability 1 over $\mW^{(1)}$ and $\rm^{(2)}$, $\tilde{\rmA}\triangleq\lim_{n\to\infty}\E[\rmA]$ exist and is rank-$(c-1)$ .
\end{lemma}

\begin{proofof}{\lemmaref{lemma:A-rank-c-1}}
Note that each entry of $\rmA$ is a quadratic function of $\rvp$, and $\rvp$ is a continuous function of $\rvz$. Therefore, we consider $\rmA$ as a function of $\rvz$ and write $\rmA(\rvz)$ when necessary. From \lemmaref{lemma:z-gaussian} we know that $\lim_{n\to\infty}\rvz$ follows a standard normal distribution $\mathcal{N}(0,\gamma \mI_c)$ with probability 1 over $\mW^{(1)}$ and $\mW^{(2)}$, where $\gamma$ is some absolute constant. Therefore, $\tilde{\rmA}\triangleq\lim_{n\to\infty}\E[\rmA]$ exist and it equals $\E[\rmA(\lim_{n\to\infty} \rvz)]=\E_{\rvz\sim\mathcal{N}(0,\gamma \mI_c)}[\rmA(\rvz)]$. For simplicity of notations, we will omit the statement ``with probability 1 over $\mW^{(1)}$ and $\mW^{(2)}$'' when there is no confusion.

From the definition of $\rmA$ we know that $\rmA\triangleq \text{diag}(\rvp)-\rvp\rvp^\T$ where $\rvp$ is the vector obtained by applying softmax to $\rvz$, so $\sum_{i=1}^c \rvp_i=1$ and for all $i\in[c], \rvp_i\in(0,1)$. Therefore, for any vector $\rvp$ satisfying the previous conditions, we have
\begin{equation}
\textbf{1}^\T\rmA\textbf{1} = \sum_{i=1}^c\rbr{\rvp_i-\sum_{j=1}^c\rvp_i\rvp_j} = \sum_{i=1}^c(\rvp_i-\rvp_i) =0,
\end{equation}
where $\textbf{1}$ is the all-one vector. Therefore, we know that $\rmA$ has an eigenvalue 0 with eigenvector $c^{-\frac12}\textbf{1}$. This means that $\E[\rmA]$ also has an eigenvalue 0 with eigenvector $c^{-\frac12}\textbf{1}$. Thus, $\E[\rmA]$ is at most of rank $(c-1)$.

Then we analyze the other $(c-1)$ eigenvalues of $\tilde{\rmA}$. Since $\rmA=\rmQ\rmQ^\T$ where $\rmQ=\text{diag}(\sqrt{\rvp})(\mI_c-\textbf{1}\rvp^\T)$, we know that $\rmA$ is always a positive semi-definite (PSD) matrix, which indicates that $E[\rmA]$ must also be PSD. Assume the $c$ eigenvalues of $\tilde{\rmA}$ are $\lambda_1\geq\lambda_2\geq\cdots\geq\lambda_{c-1}\geq\lambda_c=0$. Therefore, by definition, we have
\begin{equation}
\lambda_{c-1} = \min_{\vv\in S, \norm{\vv}=1}\vv^\T\tilde{\rmA}\vv = \exop{\rvz\sim\mathcal{N}(0,\gamma \mI_c)}{\min_{\vv\in S, \norm{\vv}=1}\vv^\T\rmA\vv},
\end{equation}
where $S\triangleq\R^c\backslash\gR\{\textbf{1}^\T\}$ is the orthogonal subspace of the span of $\textbf{1}$. $\vv\in S$ implies that $\vv\perp\textbf{1}$, i.e., $\sum_{i=1}^c \vv_i = 0$.

Direct computation gives us
\begin{equation}
\vv^\T\rmA\vv = \sum_{i=1}^c\vv_i^2\rvp_i-\rbr{\sum_{i=1}^c\vv_i\rvp_i}^2.
\end{equation}
Define two vectors $\va,\vb\in\R^c$ as for all $i\in[c]$, with $\va_i\triangleq \vv_i\sqrt{\rvp_i}, \vb_i\triangleq\sqrt{\rvp_i}$, then $\ns{\vb}=\sum_{i=1}^c\rvp_i=1$ and
\begin{equation}
\vv^\T\rmA\vv = \ns{\va}-\langle\va,\vb\rangle^2 = \ns{\va}\cdot\ns{\vb}-\langle \va,\vb\rangle^2.
\end{equation}
Therefore,
\begin{equation}
\vv^\T\rmA\vv \geq \ns{\va}\ns{\vb}\sin^2\theta(\va,\vb),
\end{equation}
where $\theta(\va,\vb)$ is the angle between $\va$ and $\vb$, i.e., $\theta(\va,\vb)\triangleq\arccos\frac{\langle \va,\vb\rangle}{\norm{\va}\norm{\vb}}$.
Define $\rvp_0\triangleq\min_{i\in[c]}\rvp_i$, then
\begin{equation}
\ns{\va} = \sum_{i=1}^c\vv_i^2\rvp_i \geq \sum_{i=1}^c\vv_i^2\rvp_0 = \rvp_0\ns{\vv} = \rvp_0.
\end{equation}
Since $\norm{\vb}=1$, we have
\begin{equation}
\sin^2\theta(\va,\vb) = \frac{\ns{\va-\langle \va,\vb\rangle\cdot \vb}}{\ns{\va}}.
\end{equation}
Besides,
\begin{equation}\begin{split}
\ns{\va-\langle \va,\vb\rangle\cdot \vb} &= \sum_{i=1}^c\rbr{\vv_i\sqrt{\rvp_i}-\rbr{\sum_{j=1}^c\vv_j\rvp_j}\sqrt{\rvp_i}}^2\\
                                 &= \sum_{i=1}^c\rvp_i\rbr{\vv_i-\sum_{j=1}^c\vv_j\rvp_j}^2\\
                                 &\geq \rvp_0\sum_{i=1}^c\rbr{\vv_i-\sum_{j=1}^c\vv_j\rvp_j}^2.
\end{split}\end{equation}
Define $s\triangleq\arg\max_{i\in[c]}{\vv_i}$ and $t\triangleq\arg\min_{i\in[c]}{\vv_i}$, then
\begin{equation}
\sum_{i=1}^c\rbr{\vv_i-\sum_{j=1}^c\vv_j\rvp_j}^2\geq \rbr{\vv_s-\sum_{j=1}^c\vv_j\rvp_j}^2 + \rbr{\vv_t-\sum_{j=1}^c\vv_j\rvp_j}^2 \geq \frac{(\vv_s-\vv_t)^2}{2}.
\end{equation}
From $\norm{\vv}=1$ we know that $\max_{i\in[c]}|\vv_i|\geq c^{-\frac12}$. Besides, since $\sum_{i=1}^c\vv_i=0$, we have $\vv_s>0>\vv_t$. Therefore, $\vv_s-\vv_t>\max_{i\in[c]}|\vv_i|\geq c^{-\frac12}$. As a result,
\begin{equation}
\ns{\va-\langle \va,\vb\rangle\cdot \vb} \geq \rvp_0\cdot \frac{(\vv_s-\vv_t)^2}{2} > \frac{\rvp_0}{2c}.
\end{equation}
Moreover,
\begin{equation}
\ns{\va} = \sum_{i=1}^c \vv_i^2\rvp_i \leq \sum_{i=1}^c \rvp_i = 1.
\end{equation}
Thus,
\begin{equation}
\sin^2\theta(\va,\vb) \geq \frac{\frac{\rvp_0}{2c}}{1} = \frac{\rvp_0}{2c},
\end{equation}
which means that
\begin{equation}
\vv^\T\rmA\vv \geq \rvp_0\cdot 1\cdot\frac{\rvp_0}{2c} = \frac{\rvp_0^2}{2c}.
\end{equation}
Now we analyze the distribution of $\rvp_0$. Since $\rvz$ follows a spherical Gaussian distribution $\mathcal{N}(0,\gamma \mI_c)$, we know that the entries of $\rvz$ are totally independent. Besides, for each entry $\rvz_i(i\in[c])$, we have $|\rvz_i|<\gamma$ with probability $\xi$, where $\xi\approx 0.68$ is an absolute constant. Therefore, with probability $\xi^c$, forall entries $\rvz_i(i\in[c])$, we have $|\rvz_i|<\gamma$. In this case,
\begin{equation}
\rvp_0 = \frac{\exp(\min_{i\in[c]}\rvz_i)}{\sum_{i=1}^c\exp(\rvz_i)}\geq\frac{\exp(-\gamma)}{c\exp(\gamma)}.
\end{equation}
In other cases, we know that $\rvp_0>0$. Thus,
\begin{equation}
\label{lower-bound-c-1-eigenvalue-of-A}
\lambda_{c-1} = \E_{\rvz\sim\mathcal{N}(0,\gamma \mI_c)}\left[\min_{\vv\in S, \norm{v}=1}\vv^\T\rmA\vv\right]\geq \xi^c\cdot\frac{\rbr{\frac{\exp(-\gamma)}{c\exp(\gamma)}}^2}{2c}.
\end{equation}
The right hand side is independent of $n$. Therefore, $\lambda_{c-1}>0$, which means that $\tilde{\rmA}$ has exactly $(c-1)$ positive eigenvalues and a $0$ eigenvalue, and the eigenvalue gap between the smallest positive eigenvalue and 0 is independent of $n$.

Hence we complete the proof.

\end{proofof}
\subsubsection{Projecting Hessians onto Finite Dimensions}
\label{sec:proof-project}
In this section we will develop some technical tools for analyzing the eigenvalues and eigenvectors of the output Hessians and the full layer-wise Hessians. In particular, we will project both infinite dimensional matrices to $c\times c$ matrices.

First, we prove a technical lemma that will be very useful when we bound the Frobenius norm of the difference between infinite size matrices.
\begin{lemma}
\label{lemma:polynomial}
Let $p(\rmA,\rmD,\rvx)$ be a homogeneous polynomial of $\rmA$, $\rmD$, and $\rvx$ and is degree 1 in $\rmA$, degree 2 in $\rmD$, and degree 2 in $\rvx$. Suppose the coefficients in $p$ are upper bounded in $\ell_1$-norm by an absolute constant $\mu$. Also let $\rmD'$ be an independent copy of $\rmD$ and $\rvx''$ be an independent copy of $\rvx$ independent to $\rmD$ and $\rmA$. Then with probability 1 over $\mW^{(1)}$ and $\mW^{(2)}$, we have
\begin{equation}
    \lim_{n\to\infty}\ex{p(\rmA, \rmD, \rvx)} = \ex{p(\rmA, \rmD', \rvx'')}
\end{equation}
\end{lemma}
\begin{proofof}{\lemmaref{lemma:polynomial}}
Fix any $\eps>0$.
Assume that the homogeneous polynomial is of the form
\begin{equation}
p(\rmA,\rmD,\rvx)=\sum_{i=1}^m c_i\rmA_{s(i),t(i)}\rmD_{u(i),u(i)}\rmD_{v(i),v(i)}\rvx_{p(i)}\rvx_{q(i)},
\end{equation}
for coefficients $c_i$, then from linearity of expectation we know
    
\begin{equation}
    \E[p(\rmA,\rmD,\rvx)] = \sum_{i=1}^m c_i\E[\rmA_{s(i),t(i)}\rmD_{u(i),u(i)}\rmD_{v(i),v(i)}\rvx_{p(i)}\rvx_{q(i)}].
\end{equation}
Hence \begin{equation}
\begin{split}
&|\ex{p(\rmA, \rmD, \rvx)} - \ex{p(\rmA, \rmD', \rvx'')}|\\
\leq&\sum_{i=1}^m c_i |\E[\rmA_{s(i),t(i)}\rmD_{u(i),u(i)}\rmD_{v(i),v(i)}\rvx_{p(i)}\rvx_{q(i)}] - \E[\rmA_{s(i),t(i)}\rmD_{u(i),u(i)}\rmD_{v(i),v(i)}\rvx_{p(i)}\rvx_{q(i)}]|
\end{split}
\end{equation}

Since the entries of $\rmD$ can only be $0$ or $1$, we have
\begin{equation}
\begin{split}
&\E[\rmA_{s(i),t(i)}\rmD_{u(i),u(i)}\rmD_{v(i),v(i)}\rvx_{p(i)}\rvx_{q(i)}]\\
=& \pr{\rmD_{u(i),u(i)}=\rmD_{v(i),v(i)}=1} \E[\rmA_{s(i),t(i)}\rvx_{p(i)}\rvx_{q(i)}|\rmD_{u(i),u(i)}=\rmD_{v(i),v(i)}=1]\\
=&\ \frac14\E[\rmA_{s(i),t(i)}\rvx_{p(i)}\rvx_{q(i)}|\rmD_{u(i),u(i)}=\rmD_{v(i),v(i)}=1].
\end{split}
\end{equation}
The last equality holds since $\rvu$ converges in distribution to a spherical Gaussian, and its entry-wise activations $\rmD$ follows a $p=\frac12$ Bernoulli distribution.
Assume $\sum_{i=1}^m|c_i|\geq \mu$, that the $\ell_1$ norm of the coefficients is upper bounded by some constant $\mu$. Set $\eps'=\frac{\eps}{\mu}$. To prove this lemma it is sufficient to prove that each term of the polynomial are sufficiently small, namely, for any index,
\begin{equation}
\begin{split}
&|\E[\rmA_{s(i),t(i)}\rvx_{p(i)}\rvx_{q(i)}|\rmD_{u(i),u(i)}=\rmD_{v(i),v(i)}=1]\\&\quad-\E[\rmA_{s(i),t(i)}\rvx''_{p(i)}\rvx''_{q(i)}|\rmD'_{u(i),u(i)}=\rmD'_{v(i),v(i)}=1]| 
\\
&|\E[\rmA_{s(i),t(i)}\rvx_{p(i)}\rvx_{q(i)}|\rmD_{u(i),u(i)}=\rmD_{v(i),v(i)}=1]-\E[\rmA_{s(i),t(i)}\rvx''_{p(i)}\rvx''_{q(i)}]|<\eps'.
\end{split}
\end{equation}

Fix a set of index $s,t,p,q,u,v$, for simplicity of notation, we use the abbreviation $\ex{\rmA_{st}\rvx_p\rvx_q|_\rmD}$ to denote $\E[\rmA_{s(i),t(i)}\rvx_{p(i)}\rvx_{q(i)}|\rmD_{u(i),u(i)}=\rmD_{v(i),v(i)}=1]$. Since $\rvx$ is of rectified Gaussian with the covariance of the initial Gaussian distribution being the identity, $\rvx_p$ and $\rvx_q$ shares the same density function when $x>0$, namely $f(x)=\frac{1}{\sqrt{2\pi}}\exp(-x^2/2)$.
Note that \begin{equation}
\label{eqn:proof-xxexp}
    \iint\limits_{\R^+\times\R^+}xy\ f(x)f(y)\ dx\ dy = \ex{\rvx_i\rvx_j}=\ex{\rvx_i}\ex{\rvx_j} = \frac{1}{2\pi}.
\end{equation}

Fix some $\beta < \frac\alpha2$, we have
\begin{equation}
\label{eqn:poly-bound}
\begin{split}
    &|\ex{\rmA_{st}\rvx_p\rvx_q|_\rmD}-\ex{\rmA_{st}\rvx''_p\rvx''_q}|\\
    =&\labs{\quad\iint\limits_{\R^+\times\R^+} \ex{\rmA_{st}|_{\rmD, \rvx_p=x,\rvx_q=y}}xy\ f(x)f(y)\ dx\ dy - \iint\limits_{\R^+\times\R^+} \ex{\rmA_{st}}xy\ f(x)f(y)\ dx\ dy}\\
    \leq& \iint\limits_{\R^+\times\R^+} |\ex{\rmA_{st}|_{\rmD, \rvx_p=x,\rvx_q=y}}-\ex{\rmA_{st}}|xy\ f(x)f(y)\ dx\ dy\\
    =&\iint\limits_{[0,n^\beta]\times [0,n^\beta]} |\ex{\rmA_{st}|_{\rmD, \rvx_p=x,\rvx_q=y}}-\ex{\rmA_{st}}|xy\ f(x)f(y)\ dx\ dy\\
    &+ \iint\limits_{\R^+\times\R^+\backslash\rbr{[0,n^\beta]\times [0,n^\beta]}} |\ex{\rmA_{st}|_{\rmD, \rvx_p=x,\rvx_q=y}}-\ex{\rmA_{st}}|xy\ f(x)f(y)\ dx\ dy.
\end{split}
\end{equation}
From \corollaryref{cor:A-invariant} we have, for any indices $s,t$, for sufficiently large $n$, for any $(x,y)\in[0,n^\beta]\times [0,n^\beta]$,
\begin{equation}
    |\ex{\rmA_{st}|_{\rmD, \rvx_p=x,\rvx_q=y}}-\ex{\rmA_{st}}| < \eps'
\end{equation}
Thus
\begin{equation}
\begin{split}
    &\iint\limits_{[0,n^\beta]\times [0,n^\beta]} |\ex{\rmA_{st}|_{\rmD, \rvx_p=x,\rvx_q=y}}-\ex{\rmA_{st}}|xy\ f(x)f(y)\ dx\ dy\\
    \leq &\ \eps'\iint\limits_{[0,n^\beta]\times [0,n^\beta]} xy\ f(x)f(y)\ dx\ dy = \frac{\eps'}{2\pi}.
\end{split}
\end{equation}
Now we consider the other integral. First note that since $\rmA_{st}$ is either $\rvp_i-\rvp_i^2$ or $-\rvp_i\rvp_j$ for some $i,j$, and $\rvp_i,\rvp_j, \rvp_i+\rvp_j\in(0,1)$ as it is the output of the softmax function, we have $\rmA_{st}\in(-\frac14, \frac14)$. It follows that $|\ex{\rmA_{st}|_{\rmD, \rvx_p=x,\rvx_q=y}}-\ex{\rmA_{st}}|\leq\frac12$. Therefore
\begin{equation}
\begin{split}
&\iint\limits_{\R^+\times\R^+\backslash\rbr{[0,n^\beta]\times [0,n^\beta]}} |\ex{\rmA_{st}|_{\rmD, \rvx_p=x,\rvx_q=y}}-\ex{\rmA_{st}}|xy\ f(x)f(y)\ dx\ dy\\
\leq &\ \frac12\iint\limits_{\R^+\times\R^+\backslash\rbr{[0,n^\beta]\times [0,n^\beta]}}xy\ \frac{e^{-x^2/2}}{\sqrt{2\pi}}\frac{e^{-y^2/2}}{\sqrt{2\pi}}\ dx\ dy\\
\leq&\ \frac12\cdot\frac{1}{2\pi}\int_{n^\beta}^\infty e^{-x^2/2}x\ dx\int_{\R^+}e^{-y^2/2}y\ dy + \frac12\cdot\frac{1}{2\pi}\int_{\R^+} e^{-x^2/2}x\ dx\int_{n^\beta}^{\infty}e^{-y^2/2}y\ dy\\
=&\ \frac{1}{2\pi}e^{-n^{2\beta}},
\end{split}
\end{equation}
which decreases below $\eps'/2$ for sufficiently large $n$. As both terms in \equationref{eqn:poly-bound} are less than $\eps'/2$ as $n\to\infty$, we have $|\ex{\rmA_{st}\rvx_p\rvx_q|_\rmD}-\ex{\rmA_{st}\rvx''_p\rvx''_q}|<\eps'$. Which completes the proof of this lemma.
\end{proofof}

We then generalize this lemma for a degree ten homogeneous polynomial, in which the monomials are roughly multiplied with an independent copy of itself (except for $\rmA$).
\begin{corollary}
\label{cor:polynomial}
Let $p(\rmA,\rmD,\rvx,\bar\rmA,\bar\rmD,\bar\rvx)$ be a homogeneous polynomial of $\rmA,\rmD,\rvx,\bar\rmA,\bar\rmD$, and $\bar\rvx$. Let it be degree 1 in $\rmA$, $\bar\rmA$, degree 2 in $\rmD$, $\bar\rmD$, and degree 2 in $\rvx$,$\bar\rvx$. Suppose the coefficients in $p$ are upper bounded in $\ell_1$-norm by an absolute constant $\mu$. Also let $\rmD'$ be an independent copy of $\rmD$ and $\rvx''$ be an independent copy of $\rvx$ independent to $\rmD$ and $\rmA$. Morever let $(\bar\rmA,\bar\rmD,\bar\rvx,\bar\rmD',\bar\rvx'')$ be an independent copy of $(\rmA,\rmD,\rvx,\rmD',\rvx'')$. Then with probability 1 over $\mW^{(1)}$ and $\mW^{(2)}$, we have
\begin{equation}
    \lim_{n\to\infty}\ex{p(\rmA, \rmD, \rvx, \bar\rmA, \bar\rmD, \bar\rvx)} = \ex{p(\rmA, \rmD', \rvx'', \bar\rmA, \bar\rmD', \bar\rvx'')}.
\end{equation}
\end{corollary}

\begin{proofof}{\corollaryref{cor:polynomial}}
For simplicity of notations, denote $\ervs_{ijuvrs} = \rmA_{ij}\rmD_{vv}\rmD_{ww}\rvx_r\rvx_s$, $\ervs'_{ijuvrs} = \rmA_{ij}\rmD'_{vv}\rmD'_{ww}\rvx''_r\rvx''_s$. Similarly, denote $\ervt_{klpqtu}=\bar\rmA_{kl}\bar\rmD_{pp}\bar\rmD_{qq}\bar\rvx_t\bar\rvx_u$ and $\ervt'_{klpqtu}=\bar\rmA_{kl}\bar\rmD'_{pp}\bar\rmD'_{qq}\bar\rvx''_t\bar\rvx''_u$. As there is no confusion on indexing, we will also omit the subscripts and use $\ervs, \ervt$.

Fix any $\eps>0$, Following the argument of the proof of \lemmaref{lemma:polynomial}, it is sufficient to prove this corollary by showing for any indexing, \begin{equation}
\begin{split}
&|\E[\rmA_{ij}\bar\rmA_{kl}\rmD_{vv}\rmD_{ww}\bar\rmD_{pp}\bar\rmD_{qq}\rvx_r\rvx_s\bar\rvx_t\bar\rvx_u] - \E[\rmA_{ij}\bar\rmA_{kl}\rmD_{vv}'\rmD_{ww}'\bar\rmD_{pp}'\bar\rmD_{qq}'\rvx_r''\rvx_s''\bar\rvx_t''\bar\rvx_u'']|\\
=&\ |\E[\ervs\ervt] - \E[\ervs'\ervt']|< \frac\eps\mu.
\end{split}
\end{equation}
First note that since $|\rmA_{ij}| < \frac14$ and $|\rmD_{ii}| \leq 1$ for all $i,j$, we have \begin{equation}
\begin{split}
|\E[\ervs]| &= |\E[\rmA_{ij}\rmD_{vv}\rmD_{ww}\rvx_r\rvx_s]|
\leq \frac14|\E[\rvx_r\rvx_s]| = \frac{1}{8\pi}.
\end{split}
\end{equation} The same argument also applies to $\ervs', \ervt$, and $\ervt'$.
Also, by \lemmaref{lemma:polynomial}, for sufficiently large $n$ we have $|\E[\ervs] - \E[\ervs']|<\eps'$ and $|\E[\ervt] - \E[\ervt']|<\eps'$. Since by construction $\ervs$ and $\ervt$ are independent, we have
\begin{equation}
\begin{split}
|\E[\ervs\ervt] - \E[\ervs'\ervt']|&=|\E[\ervs]\E[\ervt] - \E[\ervs']\E[\ervt']|\\ &=|\E[\ervs]\E[\ervt] - \E[\ervs]\E[\ervt'] + \E[\ervs]\E[\ervt'] - \E[\ervs']\E[\ervt']|\\
&\leq |\E[\ervs]||\E[\ervt] - \E[\ervt']| + |\E[\ervt']||\E[\ervs] - \E[\ervs']|\\
&\leq \frac{1}{8\pi}\eps' + \frac{1}{8\pi}\eps' < \eps',
\end{split}
\end{equation}
which completes the proof of \corollaryref{cor:polynomial}.
\end{proofof}

Now we formally begin our analysis. We will start from $\mM^{(1)}=\ex{\rmD\mW^{(2)\T}\rmA\mW^{(2)}\rmD}$, the output Hessian of the first layer. The output Hessian of the second layer is just $\ex{\rmA}$, which had been analyzed in \sectionref{sec:proof-A-structure}. In this section we will neglect the superscript for $\mM^{(1)}$ and use $\mM$ as there is no confusion. Also, we use $\mW$ to denote $\mW^{(2)}$ unless specified otherwise. We first state our main lemma of projecting $\mM$. 

\begin{lemma}
\label{lemma:M-proj-preserve-f-norm}
With probability 1 over $\mW^{(1)}$ and $\mW^{(2)}$, \begin{equation}
    \lim_{n\to\infty}\frac{\fns{\mW\mM\mW^\T}}{\fns{\mM}}=1.
\end{equation}
\end{lemma}
\begin{proofof}{\lemmaref{lemma:M-proj-preserve-f-norm}}
To prove the equivalence between $\fns{\mW\mM\mW^\T}$ and $\fns{\mM}$, we need to introduce a bridging term 
\begin{equation}
    \mM^*\triangleq\E[\rmD'\mW^{(2)\T}\rmA\mW^{(2)}\rmD']
\end{equation}
where $\rmD'$ is an independent copy of $\rmD$ and also independent of $\rmA$. Essentially $\mM^*$ is the matrix which has the same expression as $\mM$ except that we assume $\rmD$ is independent of $\rmA$ in $\mM^*$. Informally, the proof strategy of \lemmaref{lemma:M-proj-preserve-f-norm} is\begin{equation}
    \fns{\mW\mM\mW^\T}\approx \fns{\mW\mM^*\mW^\T}\approx \fns{\mM^*}\approx \fns{\mM}.
\end{equation}
We now formally establish this equivalence.

Then we look into the structures of the bridging matrix $\mM^*$. It is simple to analyze as we assumed the independence between $\rmA$ and $\rmD'$. Formally,
\begin{lemma}
\label{lemma:M-star}
With probability 1 over $\mW^{(1)}$ and $\mW^{(2)}$,
\begin{equation}
\label{eqn:proof-M-star}
\mM^*=\frac14\rbr{\mW^\T\E[\rmA]\mW + \emph{diag}(\mW^\T\E[\rmA]\mW)}.
\end{equation}
Moreover, $\norm{\mM^*}$ and $\fns{\mM^*}$ are bounded below by some nonzero constant and bounded above by some constant.
\end{lemma}
\begin{proofof}{\lemmaref{lemma:M-star}}
First note that since $\rmD'$ is the activation of $\rvu'$, which converges to a spherical Gaussian with probability 1 over $\mW^{(1)}$ and is independent with $\rmA$, each diagonal entry of $\rmD$ is a Bernoulli random variable with $p=\frac12$.
For $i,j\in [n]$, when $i\neq j$, we have\begin{equation}
\begin{split}
\mM^*_{ij} &= \E[\rmD'_{ii}(\mW^\T\rmA\mW)_{ij}\rmD'_{jj}]\\
&= \E[\rmD'_{ii}]\E[\rmD'_{jj}]\E[(\mW^\T\rmA\mW)_{ij}]\\
&= \frac14(\mW^\T\E[\rmA]\mW)_{ij}.
\end{split}
\end{equation}
When $i=j$,
\begin{equation}
\begin{split}
\mM^*_{i,i} &= \E[\rmD'_{ii}(\mW^\T\rmA\mW)_{ii}\rmD'_{ii}]\\
&= \E[\rmD'_{ii}]\E[(\mW^\T\rmA\mW)_{ii}]\\
&= \frac12(\mW^\T\E[\rmA]\mW)_{i,j}.
\end{split}
\end{equation}
Thus\begin{equation}
    \mM^*=\frac14\rbr{\mW^\T\E[\rmA]\mW + \diag(\mW^\T\E[\rmA]\mW)}.
\end{equation}
Now we show the lower bound and upper bound on norms of $\mM^*$.

Since $\langle\E[\mW^\T\rmA\mW],\text{diag}(\E[\mW^\T\rmA\mW])\rangle\geq0$, we have
\begin{equation}
    \fn{\mM^*}\geq\fn{\E[\mW^\T\rmA\mW]} = \fn{\mW^\T\tilde{\rmA}\mW}.
\end{equation}
Since $\mW\mW^\T$ converges to $\mI_c$ in spectral norm from \lemmaref{lemma:WW-identity}, we have for sufficiently large $n$, the smallest singular value of $\mW$ is larger than $\frac12$. Moreover, since $\E[\rmA]$ admits an eigenvalue that is bounded below by some constants $\eta\triangleq \xi^c\cdot\rbr{\frac{\exp(-\gamma)}{c\exp(\gamma)}}^2/2c$ where $\xi\approx 0.68$ is an absolute constant and $\gamma=\frac{(\pi-1)^2}{4\pi^2}$ as shown in \lemmaref{lemma:A-rank-c-1}, there exists an eigenvalue of $\mM^* = \mW^\T\E[\rmA]\mW$ that is larger than $\frac{\eta}{4}$. Hence for large $n$, $\norm{\mM^*}$ is bounded from below by $\frac{\eta}{4}$, and hence $\fns{\mM^*}$.

Besides, since $\rmD$ is a diagonal matrix with 0/1 entries, and the absolute value of each entry of $\rmA$ is bounded by 1, we have
\begin{equation}
    \fn{\mM}=\fn{\E[\rmD\mW^\T\rmA\mW \rmD]}\leq\fn{\E[\mW^\T\rmA\mW]}\leq\fns{\mW}\fn{\rmA}\leq c\fns{\mW}.
\end{equation}
From \lemmaref{lemma:W-norm}, we know that with probability 1, $\fns{\mW}\leq 2c$, therefore, $\norm{\mM}_F$ is upper bounded by $2c^2$, which is independent of $n$.
\end{proofof}

\begin{lemma}
\label{lemma:M-equivalence}
With probability 1 over $\mW^{(1)}$ and $\mW^{(2)}$,
\[
\lim_{n\to\infty}\frac{\fns{\mM}}{\fns{\mM^*}}=1.
\]\end{lemma}
\begin{proofof}{\lemmaref{lemma:M-equivalence}}

Recall that $\mM^*\triangleq\E[\rmD'\mW^{(2)\T}\rmA\mW^{(2)}\rmD']$ where $\rmD'$ is an independent copy of $\rmD$ and also independent of $\rmA$.
Since we will only explicitly use $\mW^{(2)}$ in this proof, for simplicity of notation, we will omit its superscript and use $\mW$.
Let $(\bar{\rmD},\bar{\rmA})$ be an independent copy of $(\rmD,\rmA)$, then
\begin{equation}
\begin{split}
    \fns{\mM} &=\fns{\E[\rmD\mW^\T\rmA\mW\rmD]}\\
    &=\E\left[\langle \rmD\mW^\T\rmA\mW\rmD,\bar{\rmD}\mW^\T\bar{\rmA}\mW\bar{\rmD}\rangle\right]\\
    &=\E\left[\tr\left(\rmD\mW^\T\rmA\mW\rmD\bar{\rmD}\mW^\T\bar{\rmA}\mW\bar{\rmD}\right)\right]\\
    &=\E\left[\tr\left(\mW\bar{\rmD}\rmD\mW^\T\rmA\mW\rmD\bar{\rmD}\mW^\T\bar{\rmA}\right)\right].
\end{split}
\end{equation}
Expressing the term inside the expectation as a polynomial of entries of $\rmA$, $\rmD$, $\bar{\rmA}$ and $\bar{\rmD}$, we get
\begin{equation}
\label{eqn:proof-Mfnorm-polyexpression}
\begin{split}
     &\tr\left(\mW\bar{\rmD}\rmD\mW^\T\rmA\mW\rmD\bar{\rmD}\mW^\T\bar{\rmA}\right) \\
    =&\sum_{i=1}^c\left(\mW\bar{\rmD}\rmD\mW^\T\rmA\mW\rmD\bar{\rmD}\mW^\T\bar{\rmA}\right)_{i,i}\\
    =&\sum_{i,j=1}^c\rbr{\mW\bar{\rmD}\rmD\mW^\T\rmA}_{i,j}\rbr{\mW\rmD\bar{\rmD}\mW^\T\bar{\rmA}}_{j,i}\\
    =&\sum_{i,j=1}^c\rbr{\sum_{k=1}^c\sum_{l=1}^n\mW_{i,l}\mW_{k,l}\rmD_{l,l}\rmD_{l,l}\rmA_{k,j}}\rbr{\sum_{s=1}^c\sum_{t=1}^n\mW_{j,t}\mW_{s,t}\bar{\rmD}_{t,t}\bar{\rmD}_{t,t}\rmA_{s,i}}\\
    =&\sum_{i,j,k,s=1}^c\sum_{l,t=1}^n \mW_{i,l}\mW_{k,l}\mW_{j,t}\mW_{s,t}\bar{\rmA}_{k,j}\rmA_{s,i}\bar{\rmD}_{l,l}\rmD_{l,l}\bar{\rmD}_{t,t}\rmD_{t,t}.
\end{split}
\end{equation}
The monomials are $\bar{\rmA}_{k,j}\rmA_{s,i}\bar{\rmD}_{l,l}\rmD_{l,l}\bar{\rmD}_{t,t}\rmD_{t,t}$, and the corresponding coefficients are $\mW_{i,l}\mW_{k,l}\mW_{j,t}\mW_{s,t}$.
Now we can bound the $\ell_1$ norm of the coefficient of this polynomial as follows:

\begin{equation}
\label{eqn:proof-Mfnorm-poly-l1bound}
\begin{split}
     &\lnorm{\sum_{i,j,k,s=1}^c\sum_{l,t=1}^n \mW_{i,l}\mW_{k,l}\mW_{j,t}\mW_{s,t}}_1\\
    \leq &\sum_{i,j,k,s=1}^c\sum_{l,t=1}^n |\mW_{i,l}|\cdot|\mW_{k,l}|\cdot|\mW_{j,t}|\cdot|\mW_{s,t}|\\
    =&\rbr{\sum_{i,k=1}^c\sum_{l=1}^n|\mW_{i,l}|\cdot|\mW_{k,l}|}\rbr{\sum_{j,s=1}^c\sum_{t=1}^n|\mW_{j,t}|\cdot|\mW_{s,t}|}\\
    \leq &\rbr{\sum_{i,k=1}^c\sum_{l=1}^n\frac{\mW_{i,l}^2+\mW_{k,l}^2}{2}}\rbr{\sum_{j,s=1}^c\sum_{t=1}^n\frac{\mW_{j,t}^2+\mW_{s,t}^2}{2}}\\
    =&\rbr{\sum_{i,k=1}^c\frac{\ns{\mW_i}+\ns{\mW_k}}{2}}\rbr{\sum_{j,s=1}^c\frac{\ns{\mW_j}+\ns{\mW_s}}{2}}\\
    =&(c\fns{\mW})^2=c^2\fn{\mW}^4.
\end{split}
\end{equation}
From \lemmaref{lemma:W-norm} we know that $\fn{\mW}^2=O(c)$ with probability 1 over $\mW$, so the coefficient of this polynomial is $\ell_1$-norm bounded.

For any $\eps>0$, fix $\eps$. Note that $\fns{\mM^*}$ is just substituting $\rmD,\bar\rmD$ by $\rmD',\bar\rmD'$ in the polynomial characterized by \equationref{eqn:proof-Mfnorm-polyexpression}. From \corollaryref{cor:polynomial} we have the convergence of the difference of the expectation of the two polynomials, namely $|\fns{\mM} - \fns{\mM^*}| < \eps$ for sufficiently large $n$.
Since the spectral norm of $\mM^*$ is on the order of constant from \lemmaref{lemma:M-star}, we have $\lim_{n\to\infty} \fns{\mM}/\fns{\mM^*}=1.$
\end{proofof}

\begin{lemma}
\label{lemma:WMW-equivalence}
For all $i,j\in[c], \lim_{n\to\infty}((\mW\mM\mW^\T)_{i,j}-(\mW\mM^*\mW^\T)_{i,j})=0$. Thus, \[\lim_{n\to\infty}\frac{\fns{\mW\mM\mW^\T}}{\fns{\mW\mM^*\mW^\T}}=1.\]
\end{lemma}
\begin{proofof}{\lemmaref{lemma:WMW-equivalence}}
This proof is very similar to that of \lemmaref{lemma:M-equivalence}. First, we focus on a single entry of the matrix $\mW\mM\mW^\T$ and express it as a polynomial of entries of $\rmA$ and $\rmD$:
\begin{equation}
\label{eqn:proof-WMW-equiv-poly}
\begin{split}
(\mW\mM\mW^\T)_{i,j} &= \E[(\mW\rmD\mW^\T\rmA\mW\rmD\mW^\T)_{i,j}]\\
              &= \E\left[\sum_{k=1}^c(\mW\rmD\mW^\T\rmA)_{i,k}(\mW\rmD\mW^\T)_{k,j}\right]\\
              &= \E\left[\sum_{k=1}^c\rbr{\sum_{s=1}^c\sum_{l=1}^n\mW_{i,l}\mW_{s,l}\rmD_{l,l}\rmA_{s,k}}\rbr{\sum_{t=1}^n\mW_{k,j}\mW_{j,t}\rmD_{t,t}}\right]\\
              &= \E\left[\sum_{k,s=1}^c\sum_{l,t=1}^n\mW_{i,l}\mW_{s,l}\mW_{k,t}\mW_{j,t}\rmA_{s,k}\rmD_{l,l}\rmD_{t,t}\right].
\end{split}
\end{equation}
Then we bound the $\ell_1$ norm of the coefficients of this polynomial as follows:
\begin{equation}
\label{eqn:proof-WMW-equiv-l1bound}
\begin{split}
    &\lnorm{\sum_{k,s=1}^c\sum_{l,t=1}^n\mW_{i,l}\mW_{s,l}\mW_{k,t}\mW_{j,t}}_1\\
    \leq &\sum_{k,s=1}^c\sum_{l,t=1}^n|\mW_{i,l}|\cdot|\mW_{s,l}|\cdot|\mW_{k,t}|\cdot|\mW_{j,t}|\\
    =    &\rbr{\sum_{s=1}^c\sum_{l=1}^n|\mW_{i,l}|\cdot|\mW_{s,l}|}\rbr{\sum_{k=1}^c\sum_{t=1}^n|\mW_{k,t}|\cdot|\mW_{j,t}|}\\
    \leq &\rbr{\sum_{s=1}^c\sum_{l=1}^n\frac{\mW_{i,l}^2+\mW_{s,l}^2}{2}}\rbr{\sum_{k=1}^c\sum_{t=1}^n\frac{\mW_{k,t}^2+\mW_{j,t}^2}{2}}\\
    =    &\rbr{c\ns{\mW_i}+\fns{\mW}}\rbr{c\ns{\mW_j}+\fns{\mW}}\\
    \leq &(2c\fns{\mW})^2=4c^2\fn{\mW}^4.
\end{split}
\end{equation}
Similar to \lemmaref{lemma:M-equivalence}, this coefficient is $\ell_1$-norm bounded. 
The expression of each entry of $\mW\mM^*\mW^\T$ is just substituting $\rmD,\bar\rmD$ by $\rmD',\bar\rmD'$ in the polynomial characterized by \equationref{eqn:proof-WMW-equiv-poly}.
Therefore, using \lemmaref{lemma:polynomial}, we have with probability 1 over $\mW$, for all $i,j\in[c]$, 
\begin{equation}
    \lim_{n\to\infty}((\mW\mM\mW^\T)_{i,j}-(\mW\mM^*\mW^\T)_{i,j})=0.
\end{equation}
This completes the proof of the lemma as $\mW\mM\mW^\T$ is of constant size.
\end{proofof}

\begin{lemma}
\label{lemma:F-norm-equal}
With probability 1 over $\rmW^{(1)}$ and $\rmW^{(2)}$,
\begin{equation}
\lim_{n\to\infty}\frac{\fns{\mW\mM^*\mW^\T}}{\fns{\mM^*}}=1.
\end{equation}

\end{lemma}
\begin{proofof}{\lemmaref{lemma:F-norm-equal}}
The proof of this lemma will be divided into two parts. In the first part, we will estimate the Frobenius norm of $\mM^*$, and in the second part we do the same thing for $\mW\mM^*\mW^\T$.

\textbf{Part 1:} From \lemmaref{lemma:M-star} we know that
\begin{equation}
\mM^*=\frac14\rbr{\mW^\T\E[\rmA]\mW + \emph{diag}(\mW^\T\E[\rmA]\mW)}.
\end{equation}
Denote $\tilde{\rmA}\triangleq\E[\rmA]$, then
\begin{equation}
\E[\mW^\T\rmA\mW] = \mW^\T\E[\rmA]\mW = \mW^\T\tilde{\rmA}\mW.
\end{equation}
From \lemmaref{lemma:WW-identity}, for all $\eps'>0$, with probability 1 over $\rmW$ we have $\norm{\mW\mW^\T-\mI_c}\leq\eps'$. Besides, from \cite{kleinman1968design} we know that for positive semi-definite matrices $\mA$ and $\mB$ we have $\lambda_{\min}(\mA)\tr(\mB)\leq \tr(\mA\mB)\leq \lambda_{\max}(\mA)\tr(\mB)$, so
\begin{equation}
\begin{split}
    \bigg|\fns{\mW^\T\tilde{\rmA}\mW} - \fns{\tilde{\rmA}}\bigg|
    &=\Big|\tr(\mW^\T\tilde{\rmA}\mW\mW^\T\tilde{\rmA}\mW)-\tr(\tilde{\rmA}\tilde{\rmA})\Big|\\
    &=\Big|\tr(\mW\mW^\T\tilde{\rmA}\mW\mW^\T\tilde{\rmA})-\tr(\tilde{\rmA}\tilde{\rmA})\Big|\\
    &\leq\Big|\rbr{\norml{\mW\mW^\T-\mI_c}+1}\tr(\tilde{\rmA}\mW\mW^\T\tilde{\rmA})-\tr(\tilde{\rmA}\tilde{\rmA})\Big|\\
    &=\Big|\rbr{\norml{\mW\mW^\T-\mI_c}+1}\tr(\mW\mW^\T\tilde{\rmA}\tilde{\rmA})-\tr(\tilde{\rmA}\tilde{\rmA})\Big|\\
    &\leq\Big|\rbr{\norml{\mW\mW^\T-\mI_c}+1}^2\tr(\tilde{\rmA}\tilde{\rmA})-\tr(\tilde{\rmA}\tilde{\rmA})\Big|\\
    &\leq\norml{\mW\mW^\T-\mI_c}^2\fns{\tilde{A}} + 2\norml{\mW\mW^\T-\mI_c}\fns{\tilde{A}}.
\end{split}
\end{equation}
For any $\eps>0$, set $\eps'=\min\{\frac{\eps}{4},\frac{\sqrt{\eps}}{2}\}$ gives us with probability 1,
\begin{equation}
    \lim_{n\to\infty}\frac{\bigg|\fns{\mW^\T\tilde{\rmA}\mW} - \fns{\tilde{\rmA}}\bigg|}{\fns{\tilde{\rmA}}}=0,
\end{equation}
i.e.,
\begin{equation}
    \lim_{n\to\infty}\frac{\fns{\mW^\T\tilde{\rmA}\mW}}{\fns{\tilde{\rmA}}}=1.
\end{equation}
Besides, if we denote the $i$-th column of $\mW$ by $\vw_i$, then
\begin{equation}
\begin{split}
\fns{\diag(\E[\mW^\T\rmA\mW])} &= \sum_{i=1}^n (\vw_i^\T\tilde{\rmA}\vw_i)^2\\
&\leq \sum_{i=1}^n \rbr{\ns{\vw_i}\cdot\norml{\tilde{\rmA}}}^2\\
&= \ns{\tilde{\rmA}}\sum_{i=1}^n \norml{\vw_i}^4.
\end{split}
\end{equation}
Since $\E[n^2\norml{\vw_i}^4]=c^2+2c$, by the additive form of Chernoff bound we get
\begin{equation}
\pr{\sum_{i=1}^n\norml{\vw_i}^4\geq \frac{c^2+3c}{n}}=\pr{\frac{\sum_{i=1}^nn^2\norml{\vw_i}^4}{n}-(c^2+2c)\geq c}\leq e^{-2nc^2}.
\end{equation}
Therefore, when $n\to\infty$, with probability 1 over $\rmW$ we have
\begin{equation}
\fns{\diag(\E[\mW^\T\rmA\mW])}\leq \ns{\tilde{\rmA}}\sum_{i=1}^n \norml{\vw_i}^4\leq \ns{\tilde{\rmA}}\cdot\frac{c^2+3c}{n}.
\end{equation}
Thus, with probability 1 over $\rmW$,
\begin{equation}
\label{lemma:W-diag-neg}
\lim_{n\to\infty}\frac{\fns{\diag\left(\E\left[\mW^\T\rmA\mW\right]\right)}}{\fns{\mW^\T\tilde{\rmA}\mW}} = 0,
\end{equation}
i.e.,
\begin{equation}
\lim_{n\to\infty}\frac{\frac{1}{16}\fns{\tilde{\rmA}}}{\fns{\mM^*}} = 1.
\end{equation}

\textbf{Part 2:} We now estimate the norm of $\mW\mM^*\mW$. Plug equation \equationref{eqn:proof-M-star} into $\mW\mM^*\mW$ and we get
\begin{equation}
\mW\mM^*\mW = \frac14\rbr{\E[\mW\mW^\T\rmA\mW\mW^\T]+\E[\mW \diag(\mW^\T\rmA\mW)\mW^\T]}.
\end{equation}
Similar to \textbf{Part 1}, when $n\to\infty$, with probability 1, we have
\begin{equation}
\lim_{n\to\infty}\frac{\fns{\E[\mW\mW^\T\rmA\mW\mW^\T]}}{\fns{\tilde{\rmA}}} = 1.
\end{equation}
Besides, when $n\to\infty$, with probability 1 we have
\begin{equation}
\fns{\mW\diag(\E[\mW^\T\rmA\mW])\mW^\T}\leq \fns{\mW}\ns{\tilde{\rmA}}\sum_{i=1}^n \norml{\vw_i}^4\leq \ns{\tilde{\rmA}}\cdot\frac{c^2+3c}{n}\fns{\mW}.
\end{equation}
As a result, with probability 1,
\begin{equation}
\lim_{n\to\infty}\frac{\fns{\mW\diag(\E[\mW^\T\rmA\mW])\mW^\T}}{\fns{\mW\mW^\T\tilde{\rmA}\mW\mW^\T}} = 0,
\end{equation}
i.e.,
\begin{equation}
\lim_{n\to\infty}\frac{\frac{1}{16}\fns{\tilde{\rmA}}}{\fns{\mW\mM^*\mW^\T}} = 1.
\end{equation}
Combining the results of \textbf{Part 1} and \textbf{Part 2} proves this lemma.

\end{proofof}
Combining \lemmaref{lemma:M-equivalence}, \lemmaref{lemma:WMW-equivalence}, and \lemmaref{lemma:F-norm-equal} directly finishes the proof of \lemmaref{lemma:M-proj-preserve-f-norm}.
\end{proofof}

After establishing the projection of $\mM$ onto a $c\times c$ matrix, we may project the full layer-wise Hessian of the first layer, namely $\mH^{(1)}=\exs{\rmD\mW^{(2)\T}\rmA\mW^{(2)}\rmD\otimes \rvx\rvx^\T}$ onto a $c\times c$ matrix using very similar techniques. For simplicity of notation, we will denote $\mW^{(2)}$ by $\mW$ and $\mH^{(1)}$ by $\mH$ unless explicitly stated otherwise.

Since the autocorrelation matrix $\rvx\rvx$
has unbounded Frobenious norm, we will consider a re-scaled version $\tmH\triangleq \mH/d^2$ for our analysis. Let $\mU\triangleq\frac{1}{\sqrt{d}}\1_d^\T\in\R^{1\times d}$ be an all-1 matrix scaled by $\frac{1}{\sqrt{d}}$, we have $\mU\mU^\T=1$. Let $\mV\triangleq \mW\otimes\mU$ be our projection matrix for $\tmH$, we may then state our main lemma for full layer-wise Hessian.
\begin{lemma}
\label{lemma:H-proj-preserve-f-norm}
With probability 1 over $\mW^{(1)}$ and $\mW^{(2)}$, \begin{equation}
    \lim_{n\to\infty}\frac{\fns{\mV\tmH\mV^\T}}{\fns{\tmH}}=1.
\end{equation}
\end{lemma}

\begin{proofof}{\lemmaref{lemma:H-proj-preserve-f-norm}}
Similar to the proof for the output Hessian, we will introduce a ``bridging term''\begin{equation}
    \tmH^*\triangleq\frac{1}{d}\E[\rmD'\mW^{(2)\T}\rmA\mW^{(2)}\rmD'\otimes\rvx''\rvx''^\T]
\end{equation}
where $\rmD'$ is an independent copy of $\rmD$ and also independent of $\rmA$, and $\rvx''$ is an independent copy of $\rvx$ which is independent to both $\rmD'$ and $\rmA$.
Informally, we will show \begin{equation}
    \fns{\mV\tmH\mV^\T}\approx \fns{\mV\tmH^*\mV^\T}\approx \fns{\tmH^*}\approx \fns{\tmH}.
\end{equation}
We first look into the structures of $\tmH^*$.
\begin{lemma}
\label{lemma:H-star}
With probability 1 over $\mW^{(1)}$ and $\mW^{(2)}$,
\begin{equation}
\label{eqn:proof-H-star}
\tmH^*=\frac{1}{4d}\rbr{\mW^\T\E[\rmA]\mW + \emph{diag}(\mW^\T\E[\rmA]\mW)}\otimes \rbr{\frac{1}{2\pi}\1_d\1_d^\T+\frac{\pi - 1}{2\pi}\mI_d}.
\end{equation}
Moreover, for large $n$, $\eta/32<\fn{\tmH^*}<2c^2$. 
\end{lemma}
\begin{proofof}{\lemmaref{lemma:H-star}}
By independence in construction, we have $\tmH^* = \mM^*\otimes(\frac1d\E[\rvx''\rvx''^\T])$. Thus we only need to look into $\E[\rvx''\rvx''^\T]$.
For $i=j$, we have $\E[\rvx''\rvx''^\T]_{ii}=\E[\rvx_i\rvx_i]=\frac{1}{2}$ while for $i\neq j$, $\E[\rvx''\rvx''^\T]_{ij}=\E[\rvx_i\rvx_j]=\frac{1}{2\pi}$. Thus
\begin{equation}
\label{eqn:xxT-structure}
\E[\rvx''\rvx''^\T]=\frac{1}{2\pi}\1_d\1_d^\T+\frac{\pi - 1}{2\pi}\mI_d.
\end{equation}
It follows that\begin{equation}
\begin{split}
    \lim_{d\to\infty}\frac1d\fn{\E[\rvx\rvx^\T]} &= \lim_{d\to\infty}\frac1d\sqrt{d^2\frac{1}{4\pi^2} + d\frac{(\pi-1)^2}{4\pi^2}} = \frac{1}{2\pi} > \frac{1}{8}.
\end{split}
\end{equation}
Thus for large $n$ we have $\frac18<\frac1d\fn{\E[\rvx\rvx^\T]}<1$.
Since $\fn{\tmH^*} = \frac1d\fn{\mM^*\otimes \E[\rvx\rvx^\T]} = \fn{\mM^*}\cdot\frac1d\fn{\E[\rvx\rvx^\T]}$ and we know that $\frac{\eta}{4}<\fn{\tmH^*}<2c^2$ from \lemmaref{lemma:M-star}. We can conclude that for large $n$, $\eta/32<\fn{\tmH^*}<2c^2$.
\end{proofof}

\begin{lemma}
\label{lemma:H-equivalence} With probability 1 over $\mW^{(1)}$ and $\mW^{(2)}$,
\[
\lim_{n\to\infty}\frac{\fns{\tmH}}{\fns{\tmH^*}}=1.
\]\end{lemma}
\begin{proofof}{\lemmaref{lemma:H-equivalence}}
Unsurprisingly, this proof will be very similar to the proof of \lemmaref{lemma:M-equivalence}.
Recall that $\tmH^*\triangleq\frac1d\E[\rmD'\mW^\T\rmA\mW\rmD'\otimes\rvx''\rvx''^\T]$. Let $(\bar{\rmD},\bar{\rmA},\bar{\rvx})$ be an independent copy of $(\rmD,\rmA,\rvx)$,
\begin{equation}
\begin{split}
    \fns{\tmH} &=\lnorm{\frac1d\E[\rmD\mW^\T\rmA\mW\rmD\otimes \rvx\rvx^\T]}_F^2\\
    &=\E\left[\frac{1}{d^2}\langle \rmD\mW^\T\rmA\mW\rmD\otimes \rvx\rvx^\T,\bar{\rmD}\mW^\T\bar{\rmA}\mW\bar{\rmD}\otimes \bar\rvx\bar\rvx^\T\rangle\right]\\
    &=\ex{\frac{1}{d^2}\tr\left(\rbr{\rmD\mW^\T\rmA\mW\rmD\otimes\rvx\rvx^\T}\rbr{\bar{\rmD}\mW^\T\bar{\rmA}\mW\bar{\rmD}\otimes\bar\rvx\bar\rvx^\T}\right)}\\
    &=\ex{\frac{1}{d^2}\tr\rbr{\rmD\mW^\T\rmA\mW\rmD\bar{\rmD}\mW^\T\bar{\rmA}\mW\bar{\rmD}}\tr\rbr{\rvx\rvx^\T\bar\rvx\bar\rvx^\T}}\\
    &=\ex{\frac{1}{d^2}(\rvx^\T\bar\rvx\bar\rvx^\T\rvx)\tr\left(\mW\bar{\rmD}\rmD\mW^\T\rmA\mW\rmD\bar{\rmD}\mW^\T\bar{\rmA}\right)}.
\end{split}
\end{equation}
Expressing the term inside the expectation as a polynomial of entries of $\rmA$, $\rmD$, $\bar{\rmA}$ and $\bar{\rmD}$, we get
\begin{equation}
\label{eqn:proof-Mfnorm-polyexpression}
\begin{split}
     &\frac{1}{d^2}(\rvx^\T\bar\rvx\bar\rvx^\T\rvx)\tr\left(\mW\bar{\rmD}\rmD\mW^\T\rmA\mW\rmD\bar{\rmD}\mW^\T\bar{\rmA}\right) \\
    =&\frac{1}{d^2}\sum_{p,q=1}^d\rvx_p\bar\rvx_p\rvx_q\bar\rvx_q\rbr{\sum_{i=1}^c\left(\mW\bar{\rmD}\rmD\mW^\T\rmA\mW\rmD\bar{\rmD}\mW^\T\bar{\rmA}\right)_{i,i}}\\
    =&\frac{1}{d^2}\sum_{p,q=1}^d\sum_{i,j,k,s=1}^c\sum_{l,t=1}^n \mW_{i,l}\mW_{k,l}\mW_{j,t}\mW_{s,t}\bar{\rmA}_{k,j}\rmA_{s,i}\bar{\rmD}_{l,l}\rmD_{l,l}\bar{\rmD}_{t,t}\rmD_{t,t}\rvx_p\bar\rvx_p\rvx_q\bar\rvx_q.
\end{split}
\end{equation}
We skipped some derivations as they are identical to \equationref{eqn:proof-Mfnorm-polyexpression}.
The monomials are\\ $\bar{\rmA}_{k,j}\rmA_{s,i}\bar{\rmD}_{l,l}\rmD_{l,l}\bar{\rmD}_{t,t}\rmD_{t,t}\rvx_p\bar\rvx_p\rvx_q\bar\rvx_q$, and the corresponding coefficients are $\mW_{i,l}\mW_{k,l}\mW_{j,t}\mW_{s,t}$.
The $\ell_1$ norm of the coefficients is
\begin{equation}
\begin{split}
    \lnorm{\frac{1}{d^2}\sum_{p,q=1}^d\sum_{i,j,k,s=1}^c\sum_{l,t=1}^n \mW_{i,l}\mW_{k,l}\mW_{j,t}\mW_{s,t}}_1 = \lnorm{\sum_{i,j,k,s=1}^c\sum_{l,t=1}^n \mW_{i,l}\mW_{k,l}\mW_{j,t}\mW_{s,t}}_1,
\end{split}
\end{equation}
which we know is upper bounded by some constant with probability 1 over $\mW$ from \equationref{eqn:proof-Mfnorm-poly-l1bound}.

For any $\eps>0$, fix $\eps$. Note that $\fns{\tmH^*}$ is just substituting $(\rmD,\bar\rmD,\rvx,\bar\rvx)$ by $(\rmD',\bar\rmD',\rvx'',\bar\rvx'')$ in the polynomial characterized by \equationref{eqn:proof-Mfnorm-polyexpression}. From \corollaryref{cor:polynomial} we have the convergence of the difference of the expectation of the two polynomials, namely $|\fns{\tmH} - \fns{\tmH^*}| < \eps$ for sufficiently large $n$.
Since the spectral norm of $\tmH^*$ is bounded below from 0 by \lemmaref{lemma:M-star}, we have $\lim_{n\to\infty} \fns{\tmH}/\fns{\tmH^*}=1.$
\end{proofof}

\begin{lemma}
\label{lemma:VHV-equivalence}
For all $i,j\in[c], \lim_{n\to\infty}((\mV\tmH\mV^\T)_{i,j}-(\mV\tmH^*\mV^\T)_{i,j})=0$. Thus,
\begin{equation}
\lim_{n\to\infty}\frac{\fns{\mV\tmH\mV^\T}}{\fns{\mV\tmH^*\mV^\T}}=1.
\end{equation}
\end{lemma}
\begin{proofof}{\lemmaref{lemma:VHV-equivalence}}
This proof is very similar to that of \lemmaref{lemma:H-equivalence}. First, we focus on a single entry of the matrix $\mV\tmH\mV^\T$ and express it as a polynomial of entries of $\rmA$ and $\rmD$:
\begin{equation}
\label{eqn:proof-VHV-equiv-poly}
\begin{split}
(\mV\tmH\mV^\T)_{i,j} &= \ex{\rbr{(\mW\otimes \mU)\frac1d(\rmD\mW^\T\rmA\mW\rmD\otimes \rvx\rvx^\T)(\mW\otimes \mU)^\T}_{i,j}}\\
&= \ex{\frac1d\rbr{(\mW\rmD\mW^\T\rmA\mW\rmD\mW^\T)\otimes(\mU\rvx\rvx^\T \mU^\T)}_{i,j}}\\
&= \ex{\frac1d\cdot \frac{1}{d}(\1_d^\T\rvx\rvx^\T \1_d)\rbr{\mW\rmD\mW^\T\rmA\mW\rmD\mW^\T}_{i,j}}\\
&= \ex{\frac{1}{d^2}\rbr{\sum_{p,q=1}^d\rvx_p\rvx_q}\rbr{\sum_{k=1}^c(\mW\rmD\mW^\T\rmA)_{i,k}(\mW\rmD\mW^\T)_{k,j}}}\\
&= \ex{\frac{1}{d^2}\sum_{p,q=1}^d\sum_{k,s=1}^c\sum_{l,t=1}^n\mW_{i,l}\mW_{s,l}\mW_{k,t}\mW_{j,t}\rmA_{s,k}\rmD_{l,l}\rmD_{t,t}\rvx_p\rvx_q}.
\end{split}
\end{equation}
We skipped some derivations as they are identical to \equationref{eqn:proof-WMW-equiv-poly}.
The monomials are $\rmA_{s,k}\rmD_{l,l}\rmD_{t,t}\rvx_p\rvx_q$, and the corresponding coefficients are $\mW_{i,l}\mW_{s,l}\mW_{k,t}\mW_{j,t}$. Observe that the $\ell_1$ norm of the coefficients satisfies
\begin{equation}
    \lnorm{\frac{1}{d^2}\sum_{p,q=1}^d\sum_{k,s=1}^c\sum_{l,t=1}^n\mW_{i,l}\mW_{s,l}\mW_{k,t}\mW_{j,t}}_1 = \lnorm{\sum_{k,s=1}^c\sum_{l,t=1}^n\mW_{i,l}\mW_{s,l}\mW_{k,t}\mW_{j,t}}_1,
\end{equation}
which we know is bounded above by some constant from \equationref{eqn:proof-WMW-equiv-l1bound}.
Note that the expression of each entry of $\mV\tmH^*\mW^\T$ is just substituting $(\rmD,\bar\rmD,\rvx,\bar\rvx)$ by $(\rmD',\bar\rmD',\rvx'',\bar\rvx'')$ in the polynomial characterized by \equationref{eqn:proof-VHV-equiv-poly}.
Therefore, using \lemmaref{lemma:polynomial}, we have with probability 1 over $\mW$, for all $i,j\in[c]$, 
\begin{equation}
    \lim_{n\to\infty}((\mV\tmH\mV^\T)_{i,j}-(\mV\tmH^*\mV^\T)_{i,j})=0.
\end{equation}
This completes the proof of the lemma as $\mV\tmH\mV^\T$ is of constant size.
\end{proofof}

\begin{lemma}
\label{lemma:F-norm-equal-H}
With probability 1 over $\rmW^{(1)}$ and $\rmW^{(2)}$,
\begin{equation}
\lim_{n\to\infty}\frac{\fns{\mV\tmH^*\mV^\T}}{\fns{\tmH^*}}=1.
\end{equation}
\end{lemma}

\begin{proofof}{\lemmaref{lemma:F-norm-equal-H}}
This lemma is a direct corollary of \lemmaref{lemma:F-norm-equal} for the output Hessian. Note that by the independence in construction,\begin{equation}
\begin{split}
\mV\tmH^*\mV^\T &= \frac{1}{d}\rbr{\mW\otimes\mU}\ex{\rmD'\mW^\T\rmA\mW\rmD\otimes\rvx''\rvx''^\T}(\mW^\T\otimes\mU^\T)\\
&= \frac{1}{d}\rbr{\mW\otimes\mU}\rbr{\mM^*\otimes\ex{\rvx''\rvx''^\T}}(\mW^\T\otimes\mU^\T)\\
&= \frac1d \rbr{\mW\mM^*\mW^\T}\otimes\rbr{\mU\ex{\rvx''\rvx''^\T}\mU^\T}\\
&= \rbr{\mW\mM^*\mW^\T}\otimes\rbr{\frac{1}{d^2}\1_d^\T\ex{\rvx''\rvx''^\T}\1_d}\\
&= \frac{1}{d^2}\1_d^\T\ex{\rvx''\rvx''^\T}\1_d\rbr{\mW\mM^*\mW^\T}.
\end{split}
\end{equation}
From \equationref{eqn:xxT-structure} we have
\begin{equation}
    \1_d^\T\ex{\rvx''\rvx''^\T}\1_d = \sum_{i,j=1}^d\ex{\rvx\rvx^\T}_{ij} = \frac{1}{2\pi}d^2 + \frac{\pi-1}{2\pi}d.
\end{equation}
Thus \begin{equation}
\begin{split}
    \lnorm{\mV\tmH^*\mV^\T}_F^2 &= \lnorm{\rbr{\frac{1}{2\pi} + \frac{\pi-1}{2\pi d}}\mW\mM^*\mW}_F^2\\
    &= \rbr{\frac{1}{4\pi^2} + \frac{\pi-1}{2\pi^2 d} + \frac{(\pi-1)^2}{4\pi^2d^2}}\fns{\mW\mM^*\mW}.
\end{split}
\end{equation}
Meanwhile note that \begin{equation}
    \fns{\tmH^*} = \frac{1}{d^2}\fns{\tmM^*\otimes\ex{\rvx''\rvx''^\T}} = \frac{1}{d^2}\fns{\tmM^*}\otimes\fns{\ex{\rvx''\rvx''^\T}},
\end{equation}
where \begin{equation}
    \fns{\ex{\rvx''\rvx''^\T}} = \sum_{i,j=1}^d\ex{\rvx\rvx^\T}_{ij}^2 = \frac{1}{4\pi^2}d^2 + \frac{\pi-1}{2\pi}d.
\end{equation}
Thus\begin{equation}
    \fns{\tmH^*} = \rbr{\frac{1}{4\pi^2} + \frac{\pi-1}{2\pi d}}\fns{\tmM^*}.
\end{equation}
Since $d=n^{1+\alpha}$ for some constant $\alpha>0$, we have 
\begin{equation}
    \lim_{n\to\infty}\frac{\frac{1}{4\pi^2} + \frac{\pi-1}{2\pi^2 d} + \frac{(\pi-1)^2}{4\pi^2d^2}}{\frac{1}{4\pi^2} + \frac{\pi-1}{2\pi d}} = 1.
\end{equation}
Thus combined with the result from \lemmaref{lemma:F-norm-equal}, we have\begin{equation}
    \lim_{n\to\infty}\frac{\fns{\mV\tmH^*\mV^\T}}{\fns{\tmH^*}} = \rbr{\lim_{n\to\infty}\frac{\frac{1}{4\pi^2} + \frac{\pi-1}{2\pi^2 d} + \frac{(\pi-1)^2}{4\pi^2d^2}}{\frac{1}{4\pi^2} + \frac{\pi-1}{2\pi d}}}\rbr{\lim_{n\to\infty}\frac{\fns{\mW\mM^*\mW^\T}}{\fns{\mM^*}}} = 1.
\end{equation}
\end{proofof}

Combining \lemmaref{lemma:H-equivalence}, \lemmaref{lemma:VHV-equivalence}, and \lemmaref{lemma:F-norm-equal-H} completes the proof of \lemmaref{lemma:H-proj-preserve-f-norm}.
\end{proofof}

Now we are done with the lemmas and will proceed to the proof of the main theorems.
\subsubsection{Structure of Output Hessian of the First Layer}
\label{sec:proof-out-hessian}

We first restate \theoremref{thm:main-out} here:

\noindent\textbf{\theoremref{thm:main-out}} \emph{Let $\mM^*\triangleq \ex{\rmD'\mW^{(2)\T}\rmA\mW^{(2)}\rmD'}$ where $\rmD'$ is an independent copy of $\rmD$ and is also independent of $\rmA$. Let $S_1$ and $S_2$ be the top $c-1$ eigenspaces of $\mM^{(1)}$ and $\mM^*$ respectively, for all $\eps>0$,
\begin{equation}
    \lim_{n\to\infty}\mathop{\Pr}_{\mW^{(1)}\sim\gN(0,\frac{1}{d}\mI_{nd}), \mW^{(2)}\sim\gN(0,\frac{1}{n}\mI_{cn})}\left[\Overlap\left(S_1,S_2\right)>1-\eps\right] = 1.
\end{equation}
Moreover, as\begin{equation}
    \lim_{n\to\infty}\mathop{\Pr}_{\mW^{(1)}\sim\gN(0,\frac{1}{d}\mI_{nd}), \mW^{(2)}\sim\gN(0,\frac{1}{n}\mI_{cn})}\left[\left(\left.\frac{\lambda_c(\mM)}{\lambda_{c-1}(\mM)}\right|_{\mW^{(1)}, \mW^{(2)}}\right) < \eps\right] = 1.
\end{equation}}

\begin{proofof}{\theoremref{thm:main-out}}
\end{proofof}

From \lemmaref{lemma:M-proj-preserve-f-norm} we have 
\begin{equation}
    \lim_{n\to\infty}\frac{\fns{\mW\mM\mW^\T}}{\fns{\mM}}=1.
\end{equation}
Then we consider $\fns{\mW^\T\mW\mM \mW^\T\mW}$. Note that
\begin{equation}
\begin{split}
    \fns{\mW^\T\mW\mM\mW^\T\mW}&=\tr(\mW^\T\mW\mM\mW^\T\mW\mW^\T\mW\mM\mW^\T\mW)\\
    &=\tr(\mW\mW^\T\mW\mM\mW^\T\mW\mW^\T\mW\mM\mW^\T).
\end{split}
\end{equation}
From \lemmaref{lemma:WW-identity} we know that for all $\eps'>0$, $\lim_{n\to\infty}\Pr(\norm{\mW\mW^\T-\mI_c}\geq\eps')=0$. For notation simplicity, in this proof we will omit the limit and probability arguments which can be dealt with using union bound. Therefore, we will directly state $\norm{\mW\mW^\T-\mI_c}\leq\eps'$. From \cite{kleinman1968design} we know that for positive semi-definite matrices $\mA$ and $\mB$ we have $\lambda_{\min}(\mA)\tr(\mB)\leq \tr(\mA\mB)\leq \lambda_{\max}(\mA)\tr(\mB)$, so
\begin{equation}\begin{split}
    &|\tr(\mW\mW^\T\cdot \mW\mM \mW^\T\mW\mW^\T\mW\mM \mW^\T) - \tr(\mW\mM \mW^\T\mW\mW^\T\mW\mM \mW^\T)| \\
    \leq& \max\{1-\lambda_{\min}(\mW\mW^\T), \lambda_{\max}(\mW\mW^\T)-1\}\tr(\mW\mM \mW^\T\mW\mW^\T\mW\mM \mW^\T)\\
    \leq& \norm{\mW\mW^\T-\mI_c}\tr(\mW\mM \mW^\T\mW\mW^\T\mW\mM \mW^\T)\leq\eps'\tr(\mW\mM \mW^\T\mW\mW^\T\mW\mM \mW^\T).
\end{split}
\end{equation}
Similarly,
\begin{equation}\begin{split}
    &|\tr(\mW\mM \mW^\T\mW\mW^\T\mW\mM \mW^\T) - \tr(\mW\mM \mW^\T\mW\mM \mW^\T)|\\
    =&|\tr(\mW\mW^\T\cdot \mW\mM \mW^\T\mW\mM \mW^\T) - \tr(\mW\mM \mW^\T\mW\mM \mW^\T)|\\
    \leq&\norm{\mW\mW^\T-\mI_c}\tr(\mW\mM \mW^\T\mW\mM \mW^\T)\leq\eps'\tr(\mW\mM \mW^\T\mW\mM \mW^\T).
\end{split}\end{equation}
Therefore,
\begin{equation}\begin{split}
    &|\fns{\mW^\T\mW\mM \mW^\T\mW}-\fns{\mW\mM \mW^\T}|\\
    =&|\tr(\mW\mW^\T\cdot \mW\mM \mW^\T\mW\mW^\T\mW\mM \mW^\T) - \tr(\mW\mM \mW^\T\mW\mM \mW^\T)| \\
    \leq&|\tr(\mW\mW^\T\cdot \mW\mM \mW^\T\mW\mW^\T\mW\mM \mW^\T) - \tr(\mW\mM \mW^\T\mW\mW^\T\mW\mM \mW^\T)|\\
    &+|\tr(\mW\mM \mW^\T\mW\mW^\T\mW\mM \mW^\T) - \tr(\mW\mM \mW^\T\mW\mM \mW^\T)|\\
    \leq&\eps'\tr(\mW\mM \mW^\T\mW\mW^\T\mW\mM \mW^\T)+\eps'\tr(\mW\mM \mW^\T\mW\mM \mW^\T)\\
    \leq&\eps'(1+\eps')\tr(\mW\mM \mW^\T\mW\mM \mW^\T)+\eps'\tr(\mW\mM \mW^\T\mW\mM \mW^\T)\\
    \leq&(2\eps'+(\eps')^2)\tr(\mW\mM \mW^\T\mW\mM \mW^\T) = (2\eps'+(\eps')^2)\fns{\mW\mM \mW^\T}.
\end{split}\end{equation}
For all $\eps>0$, select $\eps'<\min\{\frac{\sqrt{\eps}}{2},\frac{\eps}{4}\}$, we have
\begin{equation}
    |\fns{\mW^\T\mW\mM \mW^\T\mW}-\fns{\mW\mM \mW^\T}|<\eps \fns{\mW\mM \mW^\T}.
\end{equation}
In other words,
\begin{equation}
    \lim_{n\to\infty}\frac{\fns{\mW^\T\mW\mM \mW^\T\mW}}{\fns{\mW\mM \mW^\T}}=1.
\end{equation}
Hence we get
\begin{equation}
    \lim_{n\to\infty}\frac{\fns{\mW^\T\mW\mM \mW^\T\mW}}{\fns{\mM }}=1.
\end{equation}
Next, consider the orthogonal projection matrix $P_\mW \triangleq \bmW^\T\bmW$ that projects vectors in $\R^n$ into the subspace spanned by all rows of $\mW$. Here $\bmW$ is the orthogonolized $\mW$, which is explicitly defined in \lemmaref{lemma:W-projection}.
We will consider the matrix $P_\mW\mM P_\mW$. Define $\delta\triangleq \mW^\T\mW-P_\mW$, then from \lemmaref{lemma:W-projection} we get $\fns{\delta}\leq\eps'$. Therefore,
\begin{equation}\begin{split}
    &|\fn{\mW^\T\mW\mM \mW^\T\mW} - \fn{P_\mW\mM P_\mW}|\\
    \leq\ &\fn{P_\mW\mM \delta} + \fn{\delta \mM P_\mW} + \fn{\delta \mM \delta}\\
    \leq\ &\fn{\mM }\pr{2\fn{P_\mW}\fn{\delta}+\fns{\delta}}\\
    \leq\ &\fn{\mM }\pr{2\cdot 4c^2\eps'+(\eps')^2}.
\end{split}\end{equation}
For all $\eps>0$, we choose $\eps'<\min\{\frac{\sqrt{\eps}}{2},\frac{\eps}{16c^2}\}$ and have
\begin{equation}
    \frac{|\fn{\mW^\T\mW\mM \mW^\T\mW} - \fn{P_\mW\mM P_\mW}|}{\fn{\mM }} < \eps,
\end{equation}
which means that
\begin{equation}
\begin{split}
    &\lim_{n\to\infty}\frac{|\fn{\mW^\T\mW\mM \mW^\T\mW} - \fn{P_\mW\mM P_\mW}|}{\fn{\mW^\T\mW\mM \mW^\T\mW}}\\
    =\ &\lim_{n\to\infty}\frac{|\fn{\mW^\T\mW\mM \mW^\T\mW} - \fn{P_\mW\mM P_\mW}|}{\fn{\mM }}=0.
\end{split}
\end{equation}
Thus,
\begin{equation}
    \lim_{n\to\infty}\frac{\fn{P_\mW\mM P_\mW}}{\fn{\mM }}=\lim_{n\to\infty}\frac{\fn{P_\mW\mM P_\mW}}{\fn{\mW^\T\mW\mM \mW^\T\mW}}=1.
\end{equation}
Note that $\fns{\mM }=\fns{P_\mW\mM P_\mW}+\fns{P_\mW\mM P_\mW^\perp}+\fns{P_\mW^\perp \mM P_\mW}+\fns{P_\mW^\perp \mM P_\mW^\perp}$. It follows that, 
\begin{equation}
\begin{split}
    &\lim_{n\to\infty}\frac{\fns{P_\mW\mM P_\mW^\perp}+\fns{P_\mW^\perp \mM P_\mW}+\fns{P_\mW^\perp \mM P_\mW^\perp}}{\fns{\mM }}\\
    =\ &\lim_{n\to\infty}\frac{\fns{\mM }-\fns{P_\mW\mM P_\mW}}{\fns{\mM }}=0.
\end{split}
\end{equation}
In other words,
\begin{equation}
    \lim_{n\to\infty}\frac{\fn{P_\mW\mM P_\mW^\perp}}{\fn{\mM }}=\lim_{n\to\infty}\frac{\fn{P_\mW^\perp \mM P_\mW}}{\fn{\mM }}=\lim_{n\to\infty}\frac{\fn{P_\mW^\perp \mM P_\mW^\perp}}{\fn{\mM }}=0.
\end{equation}
From \lemmaref{lemma:M-star} we know that for large $n$, $\lim_{n\to\infty}\fn{\mM }$ is lower bounded by some constant that is independent of $n$, so
\begin{equation}
    \lim_{n\to\infty}\fn{P_\mW\mM P_\mW^\perp} = \lim_{n\to\infty}\fn{P_\mW^\perp \mM P_\mW} = \lim_{n\to\infty}\fn{P_\mW^\perp \mM P_\mW^\perp} = 0.
\end{equation}
Note that
\begin{equation}
    \mM  = P_\mW\mM P_\mW + P_\mW\mM P_\mW^\perp + P_\mW^\perp \mM P_\mW + P_\mW^\perp \mM P_\mW^\perp.
\end{equation}
Thus,
\begin{equation}
    \lim_{n\to\infty}\fn{\mM -P_\mW\mM P_\mW}=0.
\end{equation}
For any $\eps>0$, set $\delta<\min\{\frac{\eps\eta}{8c^2},\frac{\sqrt{\eps\eta}}{2c}\}$, from \lemmaref{lemma:W-projection}, we know that with probability 1, $\fn{P_\mW-\mW^\T\mW}\leq\delta$. Therefore,
\begin{equation}\begin{split}
    &\fn{P_\mW\mM P_\mW-\mW^\T\mW\mM \mW^\T\mW}\\ \leq& \fns{P_\mW-\mW^\T\mW}\fn{\mM } + 2\fn{P_\mW-\mW^\T\mW}\fn{\mM }\fn{P_\mW}\\
                           \leq& \delta^2\cdot 2c^2 + 2\delta\cdot 2c^2\\
                           <& \eps.
\end{split}\end{equation}
In other words, 
\begin{equation}
    \lim_{n\to\infty}\fn{P_\mW\mM P_\mW-\mW^\T\mW\mM \mW^\T\mW}=0.
\end{equation}
Now we conclude that
\begin{equation}
    \lim_{n\to\infty}\fn{\mM -\mW^\T\mW\mM \mW^\T\mW} = 0.
\end{equation}
From \lemmaref{lemma:WMW-equivalence} we know that 
\begin{equation}
    \lim_{n\to\infty}\fn{\mW\mM \mW^\T-\mW\mM^*\mW^\T}=0.
\end{equation}
Since
\begin{equation}
    \fn{\mW^\T\mW\mM \mW^\T\mW-\mW^\T\mW\mM^*\mW^\T\mW}\leq \fns{\mW}\fn{\mW\mM \mW^\T-\mW\mM^*\mW^\T},
\end{equation}
from \lemmaref{lemma:W-norm} which bounds the Frobenius norm of $\mW$ we know that
\begin{equation}
    \lim_{n\to\infty}\fn{\mW^\T\mW\mM \mW^\T\mW-\mW^\T\mW\mM^*\mW^\T\mW} = 0.
\end{equation}
Thus,
\begin{equation}
\label{eqn:M-equal-WTWMWTW}
    \lim_{n\to\infty}\fn{\mM-\mW^\T\mW\mM^*\mW^\T\mW} = 0.
\end{equation}
Note that $\mM^*=\frac14\left(\E[\mW^\T\rmA\mW]+\text{diag}(\E[\mW^\T\rmA\mW])\right)$, so
\begin{equation}
\label{eqn:M-approx-complex}
    4\mW^\T\mW\mM^*\mW^\T\mW = \mW^\T\mW\mW^\T\tilde{\rmA}\mW\mW^\T\mW + \mW^\T\mW\text{diag}(\E[\mW^\T\rmA\mW])\mW^\T\mW.
\end{equation}
We will first analyze the second term on the RHS of equation \equationref{eqn:M-approx-complex}. For all $\eps>0$, set $\eps'=\frac{\eps}{\sqrt{c}}$, and from \lemmaref{lemma:WW-identity} we know that $\norm{\mW\mW^\T-\mI_c}<\eps'$ with probability 1, which means that $|\fn{\mW\mW^\T}-c|<\eps$ with probability 1. Set $\eps=c$, we know that $\fn{\mW\mW^\T}<2c$ with probability 1. Note that
\begin{equation}\begin{split}
    \fn{\mW^\T\mW\text{diag}(\E[\mW^\T\rmA\mW])\mW^\T\mW} &\leq\fns{\mW^\T\mW}\fn{\text{diag}(\E[\mW^\T\rmA\mW])}\\
                                        &=\fns{\mW\mW^\T}\fn{\text{diag}(\E[\mW^\T\rmA\mW])}\\
                                        &\leq 4c^2\fn{\text{diag}(\E[\mW^\T\rmA\mW])}.
\end{split}\end{equation}
Combine this with equation \equationref{lemma:W-diag-neg} and we have
\begin{equation}
    \lim_{n\to\infty}\frac{\fn{\mW^\T\mW\text{diag}(\E[\mW^\T\rmA\mW])\mW^\T\mW}}{\fn{\mW^\T\tilde{\rmA}\mW}} = 0.
\end{equation}
From \lemmaref{lemma:M-star} we know that $\fn{\mW^\T\tilde{\rmA}\mW}\geq\frac{\eta}{4}$ with probability 1, so
\begin{equation}
\label{eqn:WTWMWTW-equal-WTWWTAWWTW}
    \lim_{n\to\infty}\fn{4\mW^\T\mW\mM^*\mW^\T\mW - \mW^\T\mW\mW^\T\tilde{\rmA}\mW\mW^\T\mW} = 0.
\end{equation}
Similarly, define $\delta\triangleq \mW\mW^\T-\mI_c$, then
\begin{equation}\begin{split}
    &\fn{\mW^\T\mW\mW^\T\tilde{\rmA}\mW\mW^\T\mW - \mW^\T\tilde{\rmA}\mW}\\
\leq&\fn{\mW^\T\delta\tilde{\rmA}\delta \mW}+2\fn{\mW^\T\tilde{\rmA}\delta}\\
\leq&\fns{\mW}\fns{\delta}\fn{\tilde{\rmA}} + 2\fn{\mW}\fn{\delta}\fn{\tilde{\rmA}}.
\end{split}\end{equation}
Set $\eps'<\min\{\frac{\eps}{8c^2},\sqrt{\frac{\eps}{8c^3}}\}$, then from \lemmaref{lemma:WW-identity} we know that $\fn{\delta}<\eps'$ with probability 1, and from \lemmaref{lemma:W-norm} we have $\fn{\mW}\leq 2c$ with probability 1. We also have $\fn{\tilde{\rmA}}\leq c$ since each entry of $\rmA$ is bounded by 1 in absolute value.
Therefore,
\begin{equation}
\fn{\mW^\T\mW\mW^\T\tilde{\rmA}\mW\mW^\T\mW - \mW^\T\tilde{\rmA}\mW} \leq 4c^2(\eps')^2\cdot c + 2\cdot 2c\eps'\cdot c < \frac{\eps}{2} + \frac{\eps}{2} = \eps,
\end{equation}
which means that
\begin{equation}
\label{eqn:WTWWTAWWTW-equal-WTAW}
    \lim_{n\to\infty}\fn{\mW^\T\mW\mW^\T\tilde{\rmA}\mW\mW^\T\mW - \mW^\T\tilde{\rmA}\mW} = 0.
\end{equation}
From \equationref{eqn:WTWMWTW-equal-WTWWTAWWTW} and \equationref{eqn:WTWWTAWWTW-equal-WTAW} we get
\begin{equation}
\label{eqn:WTWMWTW-equal-WTAW}
    \lim_{n\to\infty}\fn{\frac14\mW^\T\tilde{\rmA}\mW - \mW^\T\mW\mM^*\mW^\T\mW} = 0.
\end{equation}
Combining with \equationref{eqn:M-equal-WTWMWTW} we have
\begin{equation}
    \lim_{n\to\infty}\fn{\mM - \frac14\mW^\T\tilde{\rmA}\mW} = 0.
\end{equation}
Besides, from equation \equationref{eqn:W-equal-W-bar} in \lemmaref{lemma:W-projection} we know that for any $\eps'>0$,
\begin{equation}
    \fns{\bmW-\mW} = \sum_{i\in[c]} \ns{\bmW_i-\mW_i}<\eps',
\end{equation}
where $\bmW$ is the orthogonal version of $\mW$, i.e., we run the Gram-Schmidt process for the rows of $\mW$. Define $\delta\triangleq \bmW-\mW$, for any $\eps>0$, set $\eps'=\min\{\frac{\eps}{8c^2},\sqrt{\frac{\eps}{2c}}\}$, we have with probability 1,
\begin{equation}\begin{split}
    \fn{\mW^\T\tilde{\rmA}\mW - \bmW^\T\tilde{\rmA}\bmW} &\leq 2\fn{\delta}\fn{\tilde{\rmA}}\fn{\mW} + \fns{\delta}\fn{\tilde{\rmA}}\\&\leq 4c^2\eps'+c(\eps')^2 < \eps.
\end{split}\end{equation}
Therefore,
\begin{equation}
    \lim_{n\to\infty}\fn{\mW^\T\tilde{\rmA}\mW - \bmW^\T\tilde{\rmA}\bmW} = 0,
\end{equation}
which implies
\begin{equation}
    \lim_{n\to\infty}\fn{\mM - \frac14\bmW^\T\tilde{\rmA}\bmW} = 0.
\end{equation}
From \lemmaref{lemma:A-rank-c-1} we know that with probability 1, $\tilde{\rmA}$ is of rank $(c-1)$. Since $\rmA\cdot\textbf{1}=0$ is always true, the top $(c-1)$ eigenspace of $\tilde{\rmA}$ is $\R^c\backslash\{\textbf{1}\}$. Note that the rows in $\bmW$ are of unit norm and orthogonal to each other, we conclude that $\bmW^\T\tilde{\rmA}\bmW$ is of rank $(c-1)$ and the corresponding eigenspace is $\gR\{\bmW_i\}_{i=1}^c\backslash\{\textbf{1}^\T\bmW\}$. Moreover, the minimum positive eigenvalue of $\bmW^\T\tilde{\rmA}\bmW$ is lower bounded by $\frac{\eta}{4}$.

As for the top $c-1$ eigenvectors of $\mM$, define $\delta\triangleq\mM-\frac14\bmW^\T\tilde{\rmA}\bmW$, then $\mM = \frac14\bmW^\T\tilde{\rmA}\bmW + \delta$. Define $S_1$ as the top $c-1$ eigenspaces for $\mM$, and $S_2$ to be the top $c-1$ eigenspaces for $\frac14\bmW^\T\tilde{\rmA}\bmW$. Then from Davis-Kahan Theorem we know that
\begin{equation}
    \fn{\sin\Theta(S_1,S_2)}\leq\frac{\fn{\delta}}{\lambda_{c-1}(\frac14\bmW^\T\tilde{\rmA}\bmW)}.
\end{equation}
Here $\Theta(S_1,S_2)$ is a $(c-1)\times(c-1)$ diagonal matrix whose $i$-th diagonal entry is the $i$-th canonical angle between $S_1$ and $S_2$. Since $\lim_{n\to\infty}\fn{\delta}=0$, and with probability 1, $\lambda_{c-1}(\frac14\bmW^\T\tilde{\rmA}\bmW)\geq\eta$ which is independent of $n$, we have with probability 1,
\begin{equation}
    \lim_{n\to\infty}\fn{\sin\Theta(S_1,S_2)} = 0,
\end{equation}
which indicates that the top $c-1$ eigenspaces for $\mM$ and $\frac14\bmW^\T\tilde{\rmA}\bmW$ are the same when $n\to\infty$.

Here we note that the top $c-1$ eigenspace of $\bmW^\T\tilde{\rmA}\bmW$ is $\gR\{\bmW_i\}_{i=1}^c\backslash\{\textbf{1}^\T\bmW\}$ since $\rmA$ has its null space spanned by the all-one vector, so $\mM$ will also have the same top $c-1$ eigenspaces. Besides, from equation \lemmaref{eqn:W-equal-W-bar} we know that $\lim_{n\to\infty}\fn{\mW-\bmW}=0$, so $\gR\{\bmW_i\}_{i=1}^c\backslash\{\textbf{1}^\T\bmW\}$ are the same as $\gR\{\mW_i\}_{i=1}^c\backslash\{\textbf{1}^\T\mW\}$. This completes the proof of this theorem.

\subsubsection{Structure of Full Hessian of the First Layer}
\label{sec:proof-full-hessian}
We first restate \theoremref{thm:main-full} here:

\noindent\textbf{\theoremref{thm:main-full}:}\emph{
Let $V_1$ and $V_2$ be the top $c-1$ eigenspaces of $\mH$ and $\hat\mH$ respectively, for all $\eps>0$, 
\begin{equation}
    \lim_{n\to\infty}\mathop{\Pr}_{\mW^{(1)}\sim\gN(0,\frac{1}{d}\mI_{nd}), \mW^{(2)}\sim\gN(0,\frac{1}{n}\mI_{cn})}\left[\Overlap\left(V_1,V_2\right)>1-\eps\right] = 1.
\end{equation}
Moreover, \begin{equation}
    \lim_{n\to\infty}\mathop{\Pr}_{\mW^{(1)}\sim\gN(0,\frac{1}{d}\mI_{nd}), \mW^{(2)}\sim\gN(0,\frac{1}{n}\mI_{cn})}\left[\left(\left.\frac{\lambda_c(\mH)}{\lambda_{c-1}(\mH)}\right|_{\mW^{(1)}, \mW^{(2)}}\right) < \eps\right] = 1.
\end{equation}
}

Before proceeding to the main theorem, we will first look into the eigenspectrum of the scaled auto-correlation matrix $\trmX\triangleq \frac1d\ex{\rvx\rvx^\T}$ and the top eigenspace of $\hat\mH$. Also recall some useful notations including $\mU=\frac{1}{\sqrt{d}}\1_d^\T$ and $\mV\triangleq \mW\otimes\mU$.

\begin{lemma}
\label{lemma:X-structure}
$\lambda_1(\trmX)=\frac{1}{2\pi}+\frac{\pi - 1}{2\pi d}$ with eigenvector $\frac{1}{\sqrt{d}}\1_d$. $\lambda_2(\trmX)=\dots=\lambda_d(\trmX)=\frac{\pi - 1}{2\pi d}.$
\end{lemma}

\begin{proofof}{\lemmaref{lemma:X-structure}}
From \equationref{eqn:xxT-structure} we know that \begin{equation}
    \trmX=\frac1d\ex{\rvx\rvx^\T}=\frac{1}{2\pi d}\1_d\1_d^\T+\frac{\pi - 1}{2\pi d}\mI_d.
\end{equation}
For unit vector $\vv=\frac{1}{\sqrt{d}}\1_d$, it satisfies
\begin{equation}
\begin{split}
    \trmX\vv = \frac{1}{2\pi d}\1_d\1_d^\T\frac{1}{\sqrt{d}}\1_d +\frac{\pi - 1}{2\pi d}\mI_d \frac{1}{\sqrt{d}}\1_d = \rbr{\frac{1}{2\pi}+\frac{\pi - 1}{2\pi d}} \frac{1}{\sqrt{d}}\1_d.
\end{split}
\end{equation}
Hence the all one vector has eigenvalue $\frac{1}{2\pi}+\frac{\pi - 1}{2\pi d}$.
For any unit vector $\vv\perp \1_d$, it satisfies
\begin{equation}
\begin{split}
    \trmX\vv = \frac{1}{2\pi d}\1_d\1_d^\T \vv +\frac{\pi - 1}{2\pi d}\mI_d\vv = \frac{\pi - 1}{2\pi d}\vv.
\end{split}
\end{equation}
Which means $\lambda_2=\lambda_3=\dots=\lambda_d=\frac{\pi - 1}{2\pi d}$.
\end{proofof}

\begin{corollary}
\label{cor:H-hat-eigenspace}
With probability 1 over $\mW^{(1)}$ and $\mW^{(2)}$, the overlap between the top $c-1$ eigenspace of $\hat\mH$ and $\gR\{\mV_i\}_{i=1}^c\backslash\{\mV\cdot\emph{\textbf{1}}\}$ converges to 1 as $n\to\infty$.
\end{corollary}
\begin{proofof}{\corollaryref{cor:H-hat-eigenspace}}
First note that by simple linear algebra,
\begin{equation}
\gR\{\mV_i\}_{i=1}^c\backslash\{\textbf{1}^\T\mV\}=(\gR\{\mW_i\}_{i=1}^c\backslash\{\mW\cdot\1\})\otimes \mU.
\end{equation}
While from \theoremref{thm:main-out} we know the overlap between the top $c-1$ eigenspace of $\mM$ and $\gR\{\mW_i\}_{i=1}^c\backslash\{\textbf{1}^\T\mW\}$ converges to 1. Thus for proving this corollary it is sufficient to show that the top $c-1$ eigenspace of $\frac1d\hat\mH = \mM\otimes \trmX$ is the Kronecker product of the top $c-1$ eigenspace of $\mM$ and $\mU$.

From \theoremref{thm:main-out} we know, with probability 1 over $\mW^{(1)}$ and $\mW^{(2)}$, for large $n$, $\lambda_{c-1}(\mM)>\eta/4$ and $\lambda_1(\mM)<2c^2$ where $\eta$ and $c$ are absolute constants. Thus for large $n$ we have \begin{equation}
\begin{split}
\lim_{n\to\infty}\lambda_1(\trmX)\lambda_{c-1}(\mM) &= \lim_{d\to\infty}\rbr{\frac{1}{2\pi}+\frac{\pi-1}{2\pi d}}\lambda_{c-1}(\mM)\geq \frac{1}{2\pi}\frac{\eta}{4}.
\end{split}
\end{equation}
while
\begin{equation}
\begin{split}
\lim_{n\to\infty}\lambda_2(\trmX)\lambda_1(\mM) &= \lim_{d\to\infty}\frac{\pi-1}{2\pi d}\lambda_{c-1}(\mM)\leq \lim_{d\to\infty}\frac{\pi-1}{2\pi d}2c^2=0.
\end{split}
\end{equation}
Since for large $n$, $\lambda_1(\trmX)\lambda_{c-1}(\mM) > \lambda_2(\trmX)\lambda_1(\mM)$, the top $c-1$ eigenspace of $\frac1d\hat\mH$ is the top $c-1$ eigenspace of $\mM$ Kronecker with the first eigenvector of $\trmX$, which is exactly $\mU$ from \lemmaref{lemma:X-structure}. This completes the proof of this corollary.

\end{proofof}

Now we proceed to prove the main theorem

\begin{proofof}{\theoremref{thm:main-full}}
We will conduct the proof on $\tmH=\frac{1}{d}\mH$ as the properties to be proved are invariant to scalar multiplication. From \corollaryref{cor:H-hat-eigenspace} we know the overlap between the top $c-1$ eigenspace of $\hat\mH$ and $\gR\{\mV_i\}_{i=1}^c\backslash\{\textbf{1}^\T\mV\}$ converges to 1. Thus we only need to show the overlap between the top $c-1$ eigenspace of $\tmH$ and $\gR\{\mV_i\}_{i=1}^c\backslash\{\textbf{1}^\T\mV\}$ converges to 1.

The proof strategy for the full layerwise Hessian is exactly the same as the proof for the output Hessian in \sectionref{sec:proof-A-structure}. In particular, the proof is nearly identical when we change the projection matrix from $P_\mW$ to $P_\mV$ where $\mV\triangleq \mW\otimes\mU$.

Therefore, instead of rewriting the entire proof, we may neglect some repeating arguments by verifying the equivalent lemmas for the full layer-wise Hessian.
With $\mV$ as defined, we have $\fn{\mV} = \fn{\mW}$, $\mV\mV^\T = \mW\mW^\T$, and $\fn{\mV^\T\mV} = \fn{\mW^\T\mW}$, so we can directly apply the exact same result of the two norm bounds (\lemmaref{lemma:W-norm}, \lemmaref{cor:WW-Identity}) on $\mV$. Now we prove \lemmaref{lemma:V-projection} as the equivalent of \lemmaref{lemma:W-projection}.

\begin{lemma}
\label{lemma:V-projection}
Let $\bmV\triangleq \bmW\otimes \mU$, then $P_\mV\triangleq \bmV^\T\bmV$ is the projection matrix from $\R^{nd}$ onto the subspace spanned by all rows of $\mV=\mW\otimes \mU$. Moreover, for all $\eps>0$, \begin{equation}
    \lim_{n\to\infty}\pr{\fns{\mV^\T\mV-P_\mV}>\eps} = 0.
\end{equation}
\end{lemma}

\begin{proofof}{\lemmaref{lemma:V-projection}}
Since Kronecker product with the constant $1\times d$ matrix $\mU$ preserves the orthogonality of vectors, doing Gram-Schmit on $\mV$ is equivalent to doing Gram-Schmit on $\mW$ then Kronecker with $\mU$, which results in $\bmV$ by construction. Therefore $P_\mV$ is a valid projection matrix.

Also note that for any $\mW$, \begin{equation}
\begin{split}
\fns{\mV^\T\mV-P_\mV}&=\fns{(\mW\otimes\mU)^\T(\mW\otimes\mU)-\bmV^\T\bmV}\\
&=\fns{(\mW^\T\mW)\otimes(\mU^\T\mU)-(\bmW^\T\bmW)\otimes(\mU^\T\mU)}\\
&=\fns{\mW^\T\mW-\bmW^\T\bmW}\fns{\mU^\T\mU}\\
&=\fns{\mW^\T\mW-\bmW^\T\bmW}\fns{\frac1d\1_d\1_d^\T}\\
&=\fns{\mW^\T\mW-P_\mW}.
\end{split}
\end{equation}
From \lemmaref{lemma:W-projection} we have
\begin{equation}
    \lim_{n\to\infty}\pr{\fns{\mV^\T\mV-P_\mV}>\eps} = \lim_{n\to\infty}\pr{\fns{\mW^\T\mW-P_\mW}>\eps}= 0.
\end{equation}
\end{proofof}

Note that the equivalent lemmas of \lemmaref{lemma:M-proj-preserve-f-norm}- \lemmaref{lemma:F-norm-equal} for $\tmH$ are also established in \lemmaref{lemma:H-proj-preserve-f-norm} - \lemmaref{lemma:F-norm-equal-H} in \sectionref{sec:proof-project}, substituting $(P_\mW, \mM, \mM^*)$ by $(P_\mV, \tmH, \tmH^*)$, we may follow the argument in \sectionref{sec:proof-out-hessian} up to \equationref{eqn:M-equal-WTWMWTW} and conclude that
\begin{equation}
\label{eqn:H-equal-VTVHVTV}
    \lim_{n\to\infty}\fn{\tmH-\mV^\T\mV\tmH^*\mV^\T\mV} = 0.
\end{equation}
Now claim an equivalent argument of \equationref{eqn:WTWMWTW-equal-WTAW}, that
\begin{equation}
\label{eqn:VTVHVTV-equal-VTAV}
    \lim_{n\to\infty}\fn{\frac{1}{8\pi}\mV^\T\tilde{\rmA}\mV - \mV^\T\mV\tmH^*\mV^\T\mV} = 0.
\end{equation}
Observe that 
\begin{equation}
\begin{split}
\mV^\T\trmA\mV
=(\mW\otimes\mU)^\T\trmA(\mW\otimes\mU) =  \mW^\T\trmA\mW\otimes\mU^\T\mU
\end{split}
\end{equation}
and
\begin{equation}
\begin{split}
\mV^\T\mV\tmH^*\mV^\T\mV &= (\mW\otimes\mU)^\T(\mW\otimes\mU)(\mM^*\otimes\trmX)(\mW\otimes\mU)^\T(\mW\otimes\mU)\\
&=\mW^\T\mW\mM^*\mW^\T\mW\otimes\mU^\T\mU\trmX\mU^\T\mU.
\end{split}
\end{equation}
We have\begin{equation}
\label{eqn:proof-VTVHVTV-equal-VTAV}
\begin{split}
    & \fn{\frac{1}{8\pi}\mV^\T\tilde{\rmA}\mV - \mV^\T\mV\tmH^*\mV^\T\mV}\\
    =\ &\fn{\frac{1}{8\pi}\mW^\T\trmA\mW\otimes\mU^\T\mU - \mW^\T\mW\mM^*\mW^\T\mW\otimes\mU^\T\mU\trmX\mU^\T\mU}\\
    \leq\ &\fn{\frac{1}{8\pi}\mW^\T\trmA\mW\otimes\mU^\T\mU - \frac{1}{2\pi}\mW^\T\mW\mM^*\mW^\T\mW\otimes\mU^\T\mU}\\
    &\ +\fn{\frac{1}{2\pi}\mW^\T\mW\mM^*\mW^\T\mW\otimes\mU^\T\mU - \mW^\T\mW\mM^*\mW^\T\mW\otimes\mU^\T\mU\trmX\mU^\T\mU}\\
    =\ &\frac{1}{2\pi}\fn{\frac14\mW^\T\trmA\mW-\mW^\T\mW\mM^*\mW^\T\mW}\fn{\mU^\T\mU\trmX\mU^\T\mU}\\
    &\ +\fn{\mW^\T\mW\mM^*\mW^\T\mW}\fn{\frac{1}{2\pi}\mU^\T\mU - \mU^\T\mU\trmX\mU^\T\mU}.
\end{split}
\end{equation}
Let's first consider the second term. Note that from \lemmaref{lemma:X-structure}, \begin{equation}
\begin{split}
    \fn{\frac{1}{2\pi}\mU^\T\mU - \mU^\T\mU\trmX\mU^\T\mU} &= \lfn{\frac{1}{2\pi d}\1_d\1_d^\T - \frac{1}{d^2}\rbr{\sum_{i,j=1}^d\frac{1}{d}\ex{\rvx\rvx^\T}_{ij}}\1_d\1_d^\T}\\
    &= \labs{\frac{1}{2\pi d} - \frac{1}{d^3}\rbr{\frac{2\pi d^2}{2\pi}+\frac{(\pi-1)d}{2\pi}}}\lfn{\1_d\1_d^\T}\\
    &= \frac{\pi-1}{2\pi d^2}d = \frac{\pi-1}{2\pi d}.
\end{split}
\end{equation}
Which converges to $0$ as $n\to\infty$ (since $d = n^{1+\alpha}$).
Since $\fn{\mW^\T\mW\mM^*\mW^\T\mW}$ is bounded above from \lemmaref{lemma:W-norm} and \lemmaref{lemma:M-star}. We have 
\begin{equation}
\label{eqn:VTVHVTV-equal-VTAV-sub1}
    \lim_{n\to\infty} \fn{\mW^\T\mW\mM^*\mW^\T\mW}\fn{\frac{1}{2\pi}\mU^\T\mU - \mU^\T\mU\trmX\mU^\T\mU} = 0.
\end{equation}
For the first term, since for all $d$,
\begin{equation}
\begin{split}
\fn{\mU^\T\mU\trmX\mU^\T\mU} &= \lfn{\frac{1}{d^2}\rbr{\sum_{i,j=1}^d\frac{1}{d}\ex{\rvx\rvx^\T}_{ij}}\1_d\1_d^\T}\\
&= \frac{1}{d^2}\rbr{\frac{2\pi d^2}{2\pi}+\frac{(\pi-1)d}{2\pi}} < \frac{1}{2},
\end{split}
\end{equation}
Combined with \equationref{eqn:WTWMWTW-equal-WTAW} we have\begin{equation}
\label{eqn:VTVHVTV-equal-VTAV-sub2}
    \lim_{n\to\infty}\frac{1}{2\pi}\fn{\frac14\mW^\T\trmA\mW-\mW^\T\mW\mM^*\mW^\T\mW}\fn{\mU^\T\mU\trmX\mU^\T\mU} = 0.
\end{equation}
Plug \equationref{eqn:VTVHVTV-equal-VTAV-sub1} and \equationref{eqn:VTVHVTV-equal-VTAV-sub2} into \equationref{eqn:proof-VTVHVTV-equal-VTAV} gives us \equationref{eqn:VTVHVTV-equal-VTAV}.

Now substitute $\frac{1}{4}\mW^\T\mA\mW$ in \sectionref{sec:proof-out-hessian} to $\frac{1}{8\pi}\mV^\T\mA\mV$, following the arguments after \equationref{eqn:WTWMWTW-equal-WTAW} completes the remaining proof for this theorem.
\end{proofof}
\newpage
\section{Structure of Dominating Eigenvectors of the Full Hessian.}

\label{sec:appendix_full_hessian}
Although it is not possible to apply Kronecker factorization to the full Hessian directly, we can construct an approximation of the top eigenvectors and eigenspace using similar ideas and our findings.In this section, we will always have superscript $(p)$ for all layer-wise matrices and vectors in order to distinguish them from the full versions.
As shown in \equationref{eqn:app_full_hessian} of \sectionref{sec:appendix_derivation}, we have the full Hessian of fully connected networks as 
\begin{equation}
    \HessL(\vtheta) = \E \left[\mF^\T_{\vx}\mA_\vx\mF_{\vx}\right] + \E\left[\sum_{i=1}^c \frac{\partial \ell(\vz,\vy)}{\evz_i} \nabla^2_\vtheta \evz_i \right],
\label{eqn:app_full_hessian_2}
\end{equation}
where
\begin{align}
    \mF^\T_\vx = \begin{pmatrix}
    \Gx^{(1)\T} \otimes \vx^{(1)}\\
    \Gx^{(1)\T}\\
    \Gx^{(2)\T} \otimes \vx^{(2)}\\
    \Gx^{(2)\T}\\
    \vdots\\
    \Gx^{(L)\T} \otimes \vx^{(n)}\\
    \Gx^{(L)\T}
    \end{pmatrix}.
\end{align}
In order to simplify the formula, we define \begin{equation}
    \tilde{\vx}^{(p)} = \begin{pmatrix}
    \vx^{(p)}\\
    1
    \end{pmatrix}
\end{equation} to be the extended input of the $p$-th layer. Thus, the terms in the Hessian attributed to the bias can be included in the Kronecker product with the extended input, and $\mF_\vx^\T$ can be simplified as
\begin{align}
\label{eqn:app_f_extened}
    \mF^\T_\vx = \begin{pmatrix}
    \Gx^{(1)\T} \otimes \tilde{\vx}^{(1)}\\
    \Gx^{(2)\T} \otimes \tilde{\vx}^{(2)}\\
    \vdots\\
    \Gx^{(L)\T} \otimes \tilde{\vx}^{(n)}\\
    \end{pmatrix}.
\end{align}

As discussed in several previous works \citep{sagun2016eigenvalues, papyan2018full, papyan2019measurements, fort2019emergent}, the full Hessian can be decomposed in to the G-term and the H-term. Specifically, the G-term is $\E \left[\mF^\T_{\vx}\mA_\vx\mF_{\vx}\right]$, and the H-term is $\E\left[\sum_{i=1}^c \frac{\partial \ell(\vz,\vy)}{\evz_i} \nabla^2_\vtheta \evz_i \right]$ in \equationref{eqn:app_full_hessian_2}.

Empirically, the G-term usually dominates the H-term, and the top eigenvalues and eigenspace of the Hessian are mainly attributed to the G-term. Since we focus on the top eigenspace, we can approximate our full Hessian using the G-term, as
\begin{equation}
     \HessL(\vtheta) \approx  \E\left[\mF^\T_{\vx}\mA_\vx\mF_{\vx}\right].
\end{equation}

In our approximation of the layer-wise Hessian $\HessL(\vw^{(p)})$ \equationref{eqn:decomp}, the two parts of the Kronecker factorization are the layer-wise output Hessian $\E[\mM^{(p)}_\vx]$ and the auto-correlation matrix of the input $\E[\vx^{(p)}\vx^{(p)\T}]$. Although we cannot apply Kronecker factorization to $\E\left[\mF^\T_{\vx}\mA_\vx\mF_{\vx}\right]$, we can still approximate its eigenspace using the eigenspace of the full output Hessian.

Note here that the full output Hessian is not a common definition. Let $\hat{m} = \sum_{p=1}^Lm^{(p)}$ be the sum of output dimension of each layer. We define a full output vector $\tilde{\vz}\in\R^{\hat{m}}$ by concatenating all the layerwise outputs together,
\begin{equation}
    \tilde{\vz} := \begin{pmatrix}
    \vz^{(1)}\\
    \vz^{(2)}\\
    \vdots\\
    \vz^{(L)}
    \end{pmatrix}.
\end{equation} 
We then define the full output Hessian is the Hessian w.r.t. $\tilde{\vz}$. Let the full output Hessian for a single input $\vx$ be $\mM_\vx\in\R^{\hat{m}\times \hat{m}}$. Similar to \equationref{eqn:app_layerwise_approx}, it can be expressed as
\begin{equation}
    \mM_\vx := \mH_\ell(\tilde{\vz}, \vx) = \mG_\vx^{\T}\mA_\vx\mG_\vx,
\end{equation}
where
\begin{equation}
    \mG^\T_\vx = \begin{pmatrix}
    \mG_\vx^{(1)\T}\\
    \mG_\vx^{(2)\T}\\
    \vdots\\
    \mG_\vx^{(L)\T}
    \end{pmatrix}
\end{equation}
similar to \equationref{eqn:app_f_extened}.
The full output Hessian for the entire training sample is thus
\begin{equation}
    \HessL(\tilde{\vz}) = \E[\mM_\vx] = \E[\mG_\vx^{\T}\mA_\vx\mG_\vx].
\end{equation}

We can then approximate the eigenvectors of the full Hessian $\HessL(\vtheta)$ using the eigenvectors of $\E[\mM_\vx]$. Let the $i$-th eigenvector of $\HessL(\vtheta)$ be $\vv_i$ and that of $\E[\mM_\vx]$ be $\vu_i$. We may then break up $\vu_i$ into segments corresponding to different layers as in
\begin{equation}
    \vu_i = \begin{pmatrix}
    \vu_i^{(1)}\\
    \vu_i^{(2)}\\
    \vdots\\
    \vu_i^{(L)}
    \end{pmatrix},
\end{equation}
where for all layer $p$, $\vu_i^{(p)}\in\R^{m^{(p)}}$.
Motivated by the relation between $\mG_\vx$ and $\mF_\vx$, the $i$-th eigenvector of $\HessL(\vtheta)$ can be approximated as the following. Let
\begin{equation}
    \vw_i = \begin{pmatrix}
    \vu_i^{(1)} \otimes \E[\tilde{\vx^{(1)}}]\\
    \vu_i^{(2)}\otimes \E[\tilde{\vx^{(2)}}]\\
    \vdots\\
    \vu_i^{(L)}\otimes \E[\tilde{\vx^{(L)}}]
    \end{pmatrix}.
\end{equation}
We then have
\begin{equation}
    \vv_i \approx \frac{\vw_i}{\|\vw_i\|}
\end{equation}
We can then use the Gram–Schmidt process to get the basis vectors of the approximated eigenspace.

Another reason for this approximation is that the expectation is the input of each layer $\E[\vx^{(p)}]$ dominates its covariance as shown in \sectionref{sec:appendix_xxT}. Thus, the approximate is accurate for top eigenvectors and also top eigenspace. For latter eigenvectors, the approximation would not be as accurate since this approximate loses all information in the covariance of the inputs.

We also approximated the eigenvalues using this approximation. Let the $i$-th eigenvalue of $\HessL(\vtheta)$ be $\lambda_i$ and that of $\E[\mM_\vx]$ be $\sigma_i$. We have
\begin{equation}
    \lambda_i \approx \sigma_i\|\vw_i\|^2.
\end{equation}

Below we show the approximation of the eigenvalues top eigenspace using this method. The eigenspace overlap is defined as in \definitionref{def:overlap}. We experimented on several fully connected networks, the results shown below are for F-$200^2$ (same as  \figureref{fig:eigeninfo_approx}(c)(d) in the main text), F-$200^4$, F-$600^4$, and F-$600^8$, all with dimension 50. The approximations are reasonably accurate.

\begin{figure}[H]
    \centering
    \begin{subfigure}[t]{0.5\textwidth}
        \centering
        \captionsetup{justification=centering}
        \includegraphics[width=\textwidth]{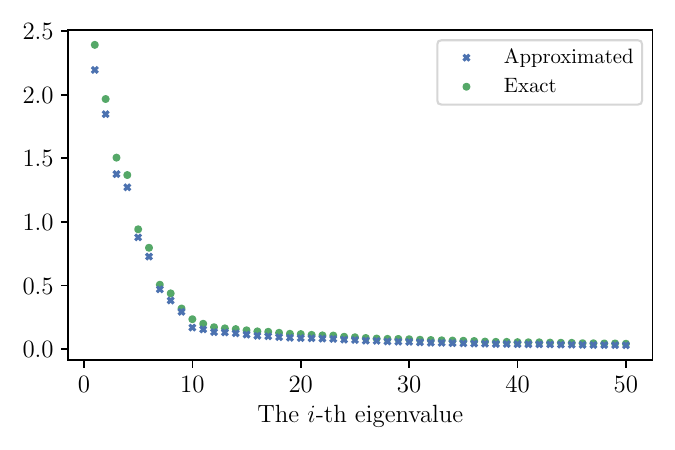}
        \caption{Eigenvalues for F-$200^2$}
    \end{subfigure}%
    \begin{subfigure}[t]{0.5\textwidth}
        \centering
        \captionsetup{justification=centering}
        \includegraphics[width=\textwidth]{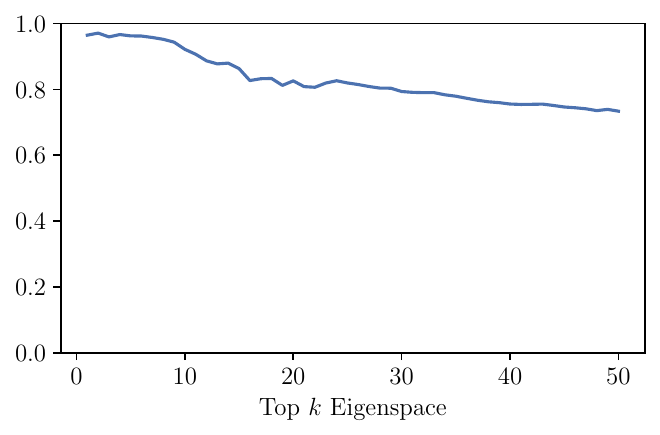}
        \caption{Eigenspace overlap for F-$200^2$}
    \end{subfigure}\\
    \begin{subfigure}[t]{0.5\textwidth}
        \centering
        \captionsetup{justification=centering}
        \includegraphics[width=\textwidth]{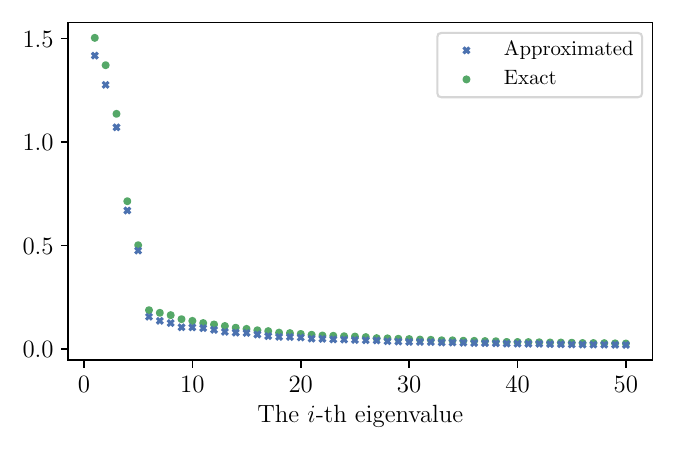}
        \caption{Eigenvalues for F-$200^4$}
    \end{subfigure}%
    \begin{subfigure}[t]{0.5\textwidth}
        \centering
        \captionsetup{justification=centering}
        \includegraphics[width=\textwidth]{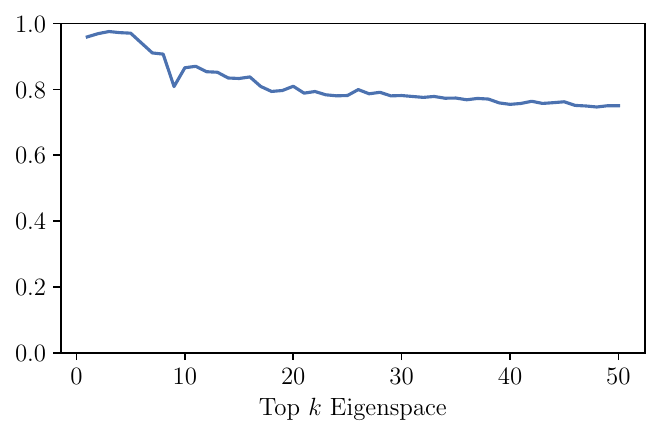}
        \caption{Eigenspace overlap for F-$200^4$}
    \end{subfigure}\\
    \begin{subfigure}[t]{0.5\textwidth}
        \centering
        \captionsetup{justification=centering}
        \includegraphics[width=\textwidth]{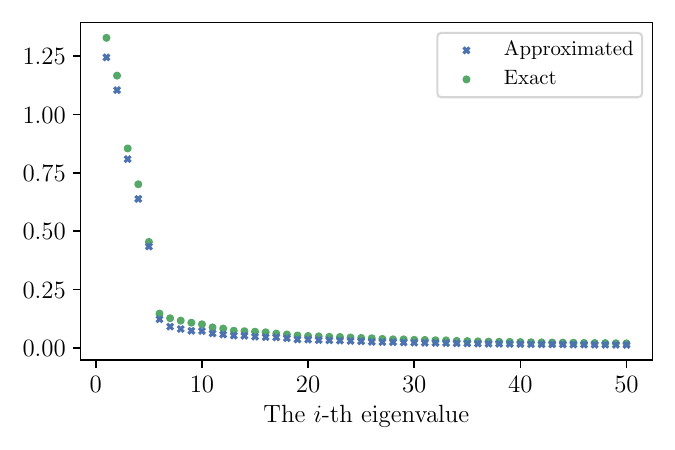}
        \caption{Eigenvalues for F-$600^4$}
    \end{subfigure}%
    \begin{subfigure}[t]{0.5\textwidth}
        \centering
        \captionsetup{justification=centering}
        \includegraphics[width=\textwidth]{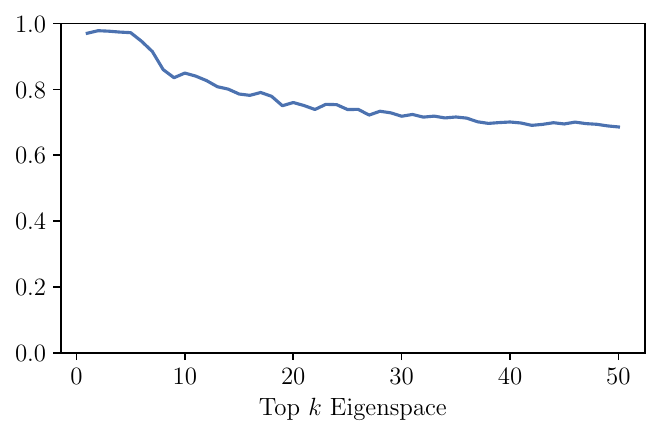}
        \caption{Eigenspace overlap for F-$600^4$}
    \end{subfigure}\\
    \begin{subfigure}[t]{0.5\textwidth}
        \centering
        \captionsetup{justification=centering}
        \includegraphics[width=\textwidth]{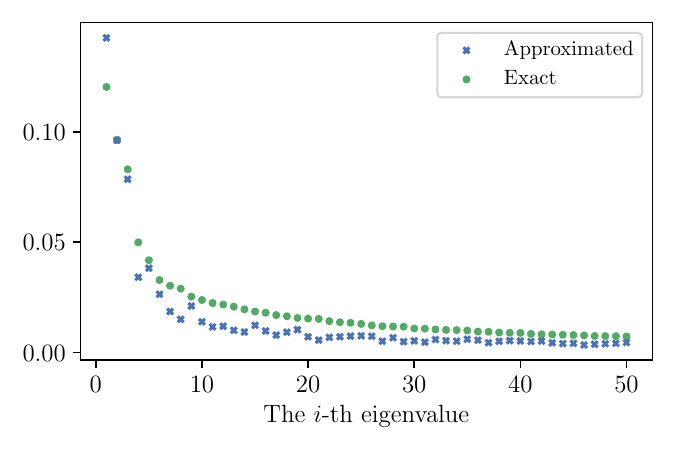}
        \caption{Eigenvalues for F-$600^8$}
    \end{subfigure}%
    \begin{subfigure}[t]{0.5\textwidth}
        \centering
        \captionsetup{justification=centering}
        \includegraphics[width=\textwidth]{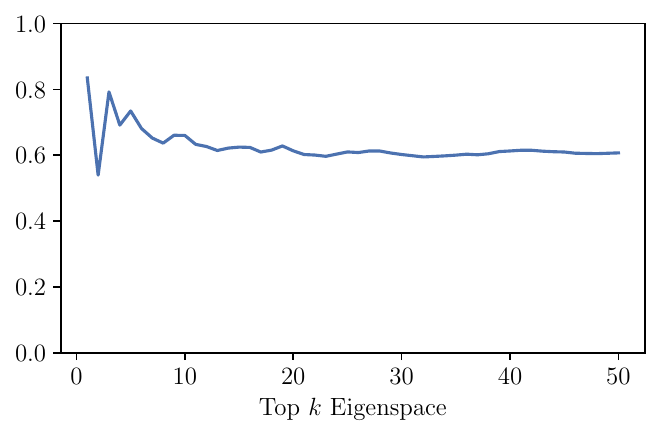}
        \caption{Eigenspace overlap for F-$600^8$}
    \end{subfigure}\\
    \caption{Top 50 Eigenvalues and Eigenspace approximation for full Hessian}
    \label{fig:Corrfc11}
\end{figure}
\newpage
\section{Computation of Hessian Eigenvalues and Eigenvectors}
\label{sec:appendix_eigencomp}
For Hessian approximated using Kronecker factorization, we compute $\E[\mM]$ and $\E[\vx\vx^T]$ explicitly. Let $\vm$ and $\vv$ be an eigenvector of $\E[\mM]$ and $\E[\vx\vx^T]$ respectively, with corresponding eigenvalues $\lambda_\vm$ and $\lambda_\vv$. Since both matrices are positive semi-definite, $\vm \otimes \vv$ is an eigenvector of $\E[\mM] \otimes \E[\vx\vx^T]$ with eigenvalue $\lambda_\vm\lambda_\vv$. In this way, since $\E[\mM]$ has $m$ eigenvectors and $\E[\vx\vx^T]$ has $n$ eigenvectors, we can approximate all $mn$ eigenvectors for the layer-wise Hessian. All these calculation can be done directly.

However, it is almost prohibitive to calculate the true Hessian explicitly. Thus, we use numerical methods with automatic differentiation \citep{paszke2017automatic} to calculate them. The packages we use is \citet{hessian-eigenthings} and we use the Lanczos method in most of the calculations. We also use package in \citet{yao2019pyhessian} as a reference.

For layer-wise Hessian, we modified the \citet{hessian-eigenthings} package. In particular, the package relies on the calculation of Hessian-vector product $\mH\vv$, where $\vv$ is a vector with the same size as parameter $\theta$. To calculate eigenvalues and eigenvectors for layer-wise Hessian at the $p$-th layer, we cut the $\vv$ into different layers. Then, we only leave the part corresponding to weights of the $p$-th layer and set all other entries to 0. Note that the dimension does not change. We let the new vector be $\vv^{(p)}$ and get the value of $\vu = \mH\vv^{(p)}$ using auto differentiation. Then, we do the same operation to $\vu$ and get $\vu^{(p)}$.
\newpage
\section{Detailed Experiment Setup}
\label{sec:appendix_exp_setup}
\subsection{Datasets}
\label{sec:appendix_exp_dataset}
We conduct experiment on CIFAR-10, CIFAR-100 (MIT) \citep{Krizhevsky09learningmultiple} (\url{https://www.cs.toronto.edu/~kriz/cifar.html}), and MNIST (CC BY-SA 3.0) \citep{lecun1998gradient} (\url{http://yann.lecun.com/exdb/mnist/}). The datasets are downloaded through torchvision \citep{NEURIPS2019_9015} (\url{https://pytorch.org/vision/stable/index.html}). We used their default splitting of training and testing set.

To compare our work on PAC-Bayes bound with the work of \citet{dziugaite2017computing}, we created a custom dataset MNIST-2 by setting the label of images 0-4 to 0 and 5-9 to 1.
We also created random-labeled datasets MNIST-R and CIFAR10-R by randomly labeling the images from the training set of MNIST and CIFAR10.
The dataset information is summarized in \tableref{tab:appendix_dataset}
\begin{table}[h]
\small
  \centering
  \caption{Datasets}
  \vskip 0.1in
    \begin{center}

    \begin{tabular}{lccccc}
    \toprule
    &   \multicolumn{2}{c}{\# Data Points}    &    & &     \\
    Dataset & Train & Test & Input Size & \# Classes & Label \\
    \midrule
    CIFAR10 & 50000 & 10000 & $3\times32\times32$ & 10 & True \\
    CIFAR10-R & 50000 & 10000 & $3\times32\times32$ & 10 & Random \\
    CIFAR100 & 50000 & 10000 & $3\times32\times32$ & 100 & True \\
    MNIST & 60000 & 10000 & $28\times28$ & 10 & True \\
    MNIST-2 & 60000 & 10000 & $28\times28$ & 2 & True \\
    MNIST-R & 60000 & 10000 & $28\times28$ & 10 & Random \\\bottomrule
    \end{tabular}
\end{center}
  \label{tab:appendix_dataset}%
\end{table}%

All the datasets (MNIST, CIFAR-10, and CIFAR-100) we used are publicly available. According to their descriptions on the contents and collection methods, they should not contain any personal information or offensive content. MNIST is a remix of datasets from the National Institute of Standards and Technology (NIST), which obtained consent for collecting the data. However, we also note that CIFAR-10 and CIFAR-100 are subsets of the dataset 80 Million Tiny Image \citep{torralba2007tiny} (\url{http://groups.csail.mit.edu/vision/TinyImages/}), which used automatic collection and includes some offensive images.

\subsection{Network Structures}
\label{appendix_exp_nn}
\paragraph{Fully Connected Network:}
We used several different fully connected networks varying in the number of hidden layers and the number of neurons for each hidden layer. The output of all layers except the last layer are passed into ReLU before feeding into the subsequent layer.  As described in \sectionref{subsec:approx}, we denote a fully connected network with $m$ hidden layers and $n$ neurons each hidden layer by F-$n^m$. For networks without uniform layer width, we denote them by a sequence of numbers (e.g. for a network with three hidden layers, where the first two layers has 200 neurons each and the third has 100 neurons, we denote it as F-$200^2$-$100$). For example, the structure of F-$200^2$ is shown in \tableref{tab:appendix_fc_struct}.

\begin{table}[h]
\small
  \centering
  \caption{Structure of F-$200^2$ on MNIST}
  \vskip 0.1in
    \begin{center}
    \begin{tabular}{rllcc}
    \toprule
    \# & Name & Module & In Shape & Out Shape\\
    \midrule
    1 & & Flatten & (28,28) & 784\\
    2 & fc1 & Linear(784, 200) & 784 & 200\\
    3 & & ReLU & 200 & 200\\
    4 & fc2 & Linear(200, 200) & 200 & 200\\
    5 & & ReLU & 200 & 200\\
    6 & fc3 & Linear(200, 10) & 200 & 10\\
    \multicolumn{5}{c}{\emph{output}}\\\bottomrule
    \end{tabular}%
\end{center}

  \label{tab:appendix_fc_struct}%
\end{table}%

\paragraph{LeNet5:} We adopted the LeNet5 structure proposed by \citet{lecun1998gradient} for MNIST, and slightly modified the input convolutional layers to adapt the input of CIFAR-10 dataset. The standard LeNet5 structure we used in the experiments is shown in \tableref{tab:appendix_lenet_struct}. We further modified the dimension of fc1 and conv2 to create several variants for the experiment in \sectionref{sec:models}. Take the model whose first fully connected layer is adjusted to have 80 neurons as an example, we denote it as LeNet5-(fc1-80).

\begin{table}[h]
\small
  \centering
  \caption{Structure of LeNet5 on CIFAR-10}
  \vskip 0.1in
    \begin{center}
    \begin{tabular}{rllcc}
    \toprule
    \# & Name & Module & In Shape & Out Shape\\\midrule
    1 & conv1 & Conv2D(3, 6, 5, 5) & (3, 32, 32) & (6, 28, 28)\\
    2 & & ReLU & (6, 28, 28) & (6, 28, 28)\\
    3 & maxpool1 & MaxPooling2D(2,2) & (6, 28, 28) & (6, 14, 14)\\
    4 & conv2 & Conv2D(6, 16, 5, 5) & (6, 14, 14) & (16, 10, 10)\\
    5 & & ReLU & (16, 10, 10) & (16, 10, 10)\\
    6 & maxpool2 & MaxPooling2D(2,2) & (16, 10, 10) & (16, 5, 5)\\
    7 & & Flatten & (16, 5, 5) & 400\\
    8 & fc1 & Linear(400, 120) & 400 & 120\\
    9 & & ReLU & 120 & 120\\
    10 & fc2 & Linear(120, 84) & 120 & 84\\
    11 & & ReLU & 84 & 84\\
    12 & fc3 & Linear(84, 10) & 84 & 10\\
    \multicolumn{5}{c}{\emph{output}} \\\bottomrule
    \end{tabular}%
\end{center}
  \label{tab:appendix_lenet_struct}%
\end{table}%

\paragraph{Networks with Batch Normalization:} In \sectionref{sec:appendix_batchnorm} we conducted several experiments regarding the effect of batch normalization on our results. For those experiments, we use the existing structures and add batch normalization layer for each intermediate output after it passes the ReLU module. In order for the Hessian to be well-defined, we fix the running statistics of batch normalization and treat it as a linear layer during inference. We also turn off the learnable parameters $\theta$ and $\beta$ \citep{ioffe2015batch} for simplicity. For network structure X, we denote the variant with batch normalization after all hidden layers X-BN.
For example, the detailed structure LeNet5-BN is shown in \tableref{tab:appendix_lenetBN_struct}.

\begin{table}[h]
\small
  \centering
  \caption{Structure of LeNet5-BN on CIFAR-10}
  \vskip 0.1in
  \begin{center}
    \begin{tabular}{rllcc}
    \toprule
    \# & Name & Module & In Shape & Out Shape\\\midrule
    1 & conv1 & Conv2D(3, 6, 5, 5) & (3, 32, 32) & (6, 28, 28)\\
    2 & & ReLU & (6, 28, 28) & (6, 28, 28)\\
    3 & & BatchNorm2D & (6, 28, 28) & (6, 28, 28)\\
    4 & maxpool1 & MaxPooling2D(2,2) & (6, 28, 28) & (6, 14, 14)\\
    5 & conv2 & Conv2D(6, 16, 5, 5) & (6, 14, 14) & (16, 10, 10)\\
    6 & & ReLU & (16, 10, 10) & (16, 10, 10)\\
    7 & & BatchNorm2D & (16, 10, 10) & (16, 10, 10)\\
    8 & maxpool2 & MaxPooling2D(2,2) & (16, 10, 10) & (16, 5, 5)\\
    9 & & Flatten & (16, 5, 5) & 400\\
    10 & fc1 & Linear(400, 120) & 400 & 120\\
    11 & & ReLU & 120 & 120\\
    12 & & BatchNorm1D & 120 & 120\\
    13 & fc2 & Linear(120, 84) & 120 & 84\\
    14 & & ReLU & 84 & 84\\
    15 & & BatchNorm1D & 84 & 84\\
    16 & fc3 & Linear(84, 10) & 84 & 10\\
    \multicolumn{5}{c}{\emph{output}} \\\bottomrule
    \end{tabular}%
\end{center}
  \label{tab:appendix_lenetBN_struct}%
\end{table}%

\paragraph{Variants of VGG11:} To verify that our results apply to larger networks, we trained a number of variant of VGG11 (originally named VGG-A in the paper, but commonly refered as VGG11) proposed by \citet{simonyan2014very}. For simplicity, we removed the dropout regularization in the original network. To adapt the structure, which is originally designed for the $3\times224\times224$ input of ImageNet, to $3\times32\times32$ input of CIFAR-10.

Since the original VGG11 network is too large for computing the top eigenspace up to hundreds of dimensions, we reduce the number of output channels of each convolution layer in the network to 32, 48, 64, 80, and  200. We denote the small size variants as VGG11-W32, VGG11-W48, VGG11-W64, VGG11-W80, and VGG11-W200 respectively. We use conv1 - conv8 and fc1 to denote the layers of VGG11 where conv1 is closest to the input feature and fc1 is the classification layer.

\paragraph{Variants of ResNet18:} We also trained a number of variant of ResNet18 proposed by \citet{kaiming2015}. As batch normalization will change the low rank structure of the auto correlation matrix and reduce the overlap, we removed all batch normalization operations.
Following the adaptation of ResNet to CIFAR dataset as in \url{https://github.com/kuangliu/pytorch-cifar}, we changed the input size to $3\times32\times32$ and added a 1x1 convolutional layer for each shortcut after the first block.

Similar to VGG11, we reduce the number of output channels of each convolution layer in the network to 48, 64, 80. We denote the small size variants as ResNet18-W48, ResNet18-W64, and ResNet18-W80 respectively.
We use conv1 - conv17 and fc1 to denote the layers of the ResNet18 backbone where conv1 is closest to the input feature and fc1 is the classification layer. For the 1x1 convolutional layers in the shortcut, we denote them by sc-conv1 - sc-conv3. where sc-conv1 is the convolutional layer on the shortcut of the second ResNet block and  sc-conv3 is the convolutional layer on the shortcut of the fourth ResNet block.

\subsection{Training Process and Hyperparameter Configuration}
\label{sec:appendix_exp_train}
For all datasets, we used the default splitting of training and testing set. All models (except explicitly stated otherwise) are trained using batched stochastic gradient descent (SGD) with batch-size 128 and fixed learning rate 0.01 for 1000 epochs. No momentum and weight decay regularization were used. The loss objective converges by the end of training, so we may assume that the final models are at local minima. For generality we also used a training scheme with fixed learning rate at 0.001, and a training scheme with fixed learning rate at 0.01 with momentum of 0.9 and weight-decay factor of 0.0005. Models trained with these settings will be explicitly stated. Otherwise we assume they were trained with the default scheme mentioned above.

Follow the default initialization scheme of PyTorch\citep{NEURIPS2019_9015}, the  weights of linear layers and convolutional layers are initialized using the Xavier method  \citep{glorot2010understanding}, and bias of each layer are initialized to be zero.
\newpage
\section{Additional Empirical Results}
\label{sec:appendix_exp_res}

\subsection{Low Rank Structure of Auto-correlation Matrix \texorpdfstring{$\E[\vx\vx^\T]$}{ExxT}}
\label{sec:appendix_xxT}

We have briefly discussed about the autocorrelation matrix $\E[\vx\vx^\T]$ being approximately rank 1 in \sectionref{sec:xxT} in the main text. In particular, we claimed that the mean of layer input dominate the covariance, that $\E[\vx\vx^\T]\approx \E[\vx]\E[\vx^\T]$. In this section we provide some additional empirical results supporting that claim.

We use two metrics to quantify the quality of this approximation: the squared dot product between normalized $\E[\vx]$ and the first eigenvector of $\E[\vx\vx^\T]$ and the ratio between the first and second eigenvalue of $\E[\vx\vx^\T]$. Intuitively if the first quantity is close to 1 and the second quantity is large, then the approximation is accurate.
Formally, for fully connected layers, define $\hE[\vx]$ as the normalized expectation of the layer input $\vx$, namely ${\E[\vx]}/{\|\E[\vx]\|}$.
For convolutional layers, following the notations in \sectionref{sec:appendix_conv}, define $\hE[\vx]$ as the first left singular vector of $\E[\mX]$ where $\hE[\vx]\in \R^{nK_1K_2}$. Abusing notations for simplicity, we use $\E[\vx\vx^\T]$ to denote the $nK_1K_2\times nK_1K_2$ matrix $\E[\mX\mX^\T]$. In this section we consider the squared dot product between $\hE[\vx]$ and the first eigenvector $\vv_1$ of $\E[\vx\vx^\T]$, namely $(\vv_1^\T\hE[\vx])^2$.

For the spectral ratio, let $\lambda_1$ be the first eigenvalue of $\E[\vx\vx^\T]$ and $\lambda_2$ be the second. We have
\begin{equation}
    \frac{\lambda_1}{\lambda_2} \geq \frac{\|\E[\vx]\E[\vx]^\T\|-\|\mSigma_\vx\| }{\|\mSigma_\vx\|} = \frac{\|\E[\vx]\E[\vx]^\T\|}{\|\mSigma_\vx\|} -1,
\end{equation}
where $\mSigma_\vx$ is the covariance of $\vx$. Thus, the spectral norm of $\E[\vx]\E[\vx]^\T$ divided by that of $\mSigma_\vx$ gives a lower bound to ${\lambda_1}/{\lambda_2}$. In our experiments, we usually have
${\lambda_1}/{\lambda_2} \geq {\|\E[\vx]\E[\vx]^\T\|}/{\|\mSigma_\vx\|}$.

As we can see from \tableref{tab:appendix_xxT_spec_fc} and \tableref{tab:appendix_xxT_spec_conv}, in a variety of settings, $\E[\vx]\E[\vx]^\T$ indeed dominated the autocorrelation matrix $\E[\vx\vx^\T]$ for fully connected layers. Similar phenomenon also holds for convolutional layers in the modern architectures, but the spectral gap are generally smaller compared to that of the fully connected layers.
 
\newpage

\begin{table}[h]
\small
  \centering
  \caption{Squared dot product $(\vv_1^\T\hE[\vx])^2$ and spectral ratio $\lambda_1/\lambda_2$ for fully connected layers in a selection of network structures and datasets. We independently trained 5 runs for each instance and compute the mean, minimum, and maximum of the two quantities over all layers (except the first layer which takes the input with mean-zero) in all runs.}
  \vskip 0.1in
    \begin{center}
    \begin{tabular}{llccccccc}
    \toprule
         &                  &              & \multicolumn{3}{c}{$(\vv_1^\T\hE[\vx])^2$} & \multicolumn{3}{c}{$\lambda_1/\lambda_2$} \\
Dataset  & Network          & \# fc & mean        & min         & max         & mean         & min          & max         \\
         \midrule
MNIST    & F-$200^2$        & 2            & 1.000       & 1.000       & 1.000       & 12.29        & 9.65         & 16.16       \\
         & F-$600^2$        & 2            & 0.999       & 0.999       & 0.999       & 12.00        & 11.42        & 13.00       \\
         & F-$600^4$        & 4            & 1.000       & 0.999       & 1.000       & 17.81        & 7.33         & 28.00       \\
         & F-$600^8$        & 8            & 0.991       & 0.965       & 1.000       & 6.63         & 2.28         & 11.15       \\
\midrule
CIFAR10  & F-$600^2$        & 2            & 0.999       & 0.998       & 1.000       & 9.24         & 4.74         & 13.74       \\
         & F-$1500^3$       & 3            & 0.999       & 0.997       & 1.000       & 13.27        & 6.10         & 18.41       \\
         & LeNet5           & 3            & 0.998       & 0.997       & 0.999       & 7.21         & 5.88         & 9.02        \\
         & LeNet5-(fc1-80)  & 3            & 0.998       & 0.996       & 0.999       & 7.80         & 6.77         & 11.01       \\
         & LeNet5-(fc1-100) & 3            & 0.997       & 0.995       & 0.999       & 7.42         & 6.20         & 9.10        \\
         & LeNet5-(fc1-150) & 3            & 0.998       & 0.992       & 0.999       & 7.35         & 5.34         & 9.62        \\
         & VGG11-W32        & 1            & 0.990       & 0.988       & 0.993       & 6.02         & 5.57         & 6.51        \\
         & VGG11-W64        & 1            & 0.996       & 0.993       & 0.999       & 5.87         & 5.32         & 6.26        \\
         & VGG11-W64        & 1            & 0.995       & 0.993       & 0.996       & 6.24         & 5.97         & 6.70        \\
\midrule
CIFAR100 & VGG11-W48        & 1            & 0.999       & 0.999       & 0.999       & 17.861       & 15.456       & 20.491      \\
         & VGG11-W64        & 1            & 0.999       & 0.999       & 1.000       & 19.185       & 18.358       & 20.410      \\
         & VGG11-W80        & 1            & 0.999       & 0.999       & 1.000       & 19.455       & 18.120       & 21.450      \\
         & ResNet18-W48     & 1            & 1.000       & 1.000       & 1.000       & 28.23        & 27.37        & 29.27       \\
         & ResNet18-W64     & 1            & 1.000       & 1.000       & 1.000       & 27.07        & 25.72        & 29.50       \\
         & ResNet18-W80     & 1            & 1.000       & 1.000       & 1.000       & 28.23        & 25.98        & 30.03      \\\bottomrule
\end{tabular}
\end{center}
  \label{tab:appendix_xxT_spec_fc}%
\end{table}
\newpage
\begin{table}[h]
\small
  \centering
  \caption{Squared dot product $(\vv_1^\T\hE[\vx])^2$ and spectral ratio $\lambda_1/\lambda_2$ for convolutional layers in the selection of network structures and datasets in \tableref{tab:appendix_xxT_spec_fc}.}
  \vskip 0.1in
    \begin{center}
    \begin{tabular}{llccccccc}
\toprule
         &                  &         & \multicolumn{3}{c}{$(\vv_1^\T\hE[\vx])^2$} & \multicolumn{3}{c}{$\lambda_1/\lambda_2$} \\
Dataset  & Network          & \# conv & mean        & min         & max         & mean         & min          & max         \\
\midrule
CIFAR10  & LeNet5           & 1       & 0.999       & 0.998       & 0.999       & 15.87        & 11.15        & 27.20       \\
         & LeNet5-(fc1-80)  & 1       & 0.998       & 0.998       & 0.999       & 12.36        & 9.53         & 13.36       \\
         & LeNet5-(fc1-100) & 1       & 0.999       & 0.999       & 0.999       & 19.49        & 16.69        & 21.92       \\
         & LeNet5-(fc1-150) & 1       & 0.999       & 0.998       & 0.999       & 12.86        & 7.65         & 16.34       \\
         & VGG11-W32        & 7       & 0.995       & 0.991       & 0.999       & 5.31         & 2.39         & 9.09        \\
         & VGG11-W64        & 7       & 0.997       & 0.993       & 1.000       & 5.76         & 2.50         & 9.98        \\
         & VGG11-W64        & 7       & 0.998       & 0.995       & 1.000       & 5.81         & 2.53         & 10.62       \\
\midrule
CIFAR100 & VGG11-W48        & 7       & 0.996       & 0.991       & 0.999       & 5.72         & 2.46         & 9.90        \\
         & VGG11-W64        & 7       & 0.995       & 0.991       & 0.999       & 5.66         & 2.50         & 10.79       \\
         & VGG11-W80        & 7       & 0.994       & 0.988       & 0.998       & 5.18         & 2.50         & 8.45        \\
         & ResNet18-W48     & 19      & 0.981       & 0.917       & 0.998       & 3.79         & 1.89         & 7.56        \\
         & ResNet18-W64     & 19      & 0.985       & 0.910       & 0.998       & 3.96         & 1.81         & 7.53        \\
         & ResNet18-W80     & 19      & 0.987       & 0.954       & 0.997       & 4.16         & 2.11         & 7.04        \\ \bottomrule
\end{tabular}
\vspace{-0.15in}
\end{center}
  \label{tab:appendix_xxT_spec_conv}%
\end{table}





\newpage

\subsection{Eigenspace Overlap Between Different Models}
\label{sec:appendix_expres_ovlp}
The non trivial overlap between top eigenspaces of layer-wise Hessians is one of our interesting observations that had been discusses in \sectionref{sec:models}. Here we provide more related empirical results. Some will further verify our claim in \sectionref{sec:models} and some will appear to be challenge that. Both results will be explained discussed more extensively in \sectionref{sec:appendix_explanation}.

\subsubsection{Overlap preserved when varying hyper-parameters:}
We first verify that the overlap also exists for a set of models trained with the different hyper-parameters. Using the LeNet5 (defined in \tableref{tab:appendix_lenet_struct}) as the network structure. We train 6 models using the default training scheme (SGD, lr=0.01, momentum=0), 5 models using a smaller learning rate (SGD, lr=0.001, momentum=0), and 5 models using a combination of optimization tricks (SGD, lr=0.01, momentum=0.9, weight decay=0.0005).
With these 16 models, we compute the pairwise eigenspace overlap of their layer-wise Hessians (120 pairs in total) and plot their average in \figureref{fig:app_overlap_different_hyperparam}. The shade areas in the figure represents the standard deviation. The pattern of overlap is clearly preserved, and the position of the peak roughly agrees with the output dimension $m$, demonstrating that the phenomenon is caused by a common structure instead of similarities in training process.

\begin{figure}[H]
    \centering
    \includegraphics[width=\textwidth]{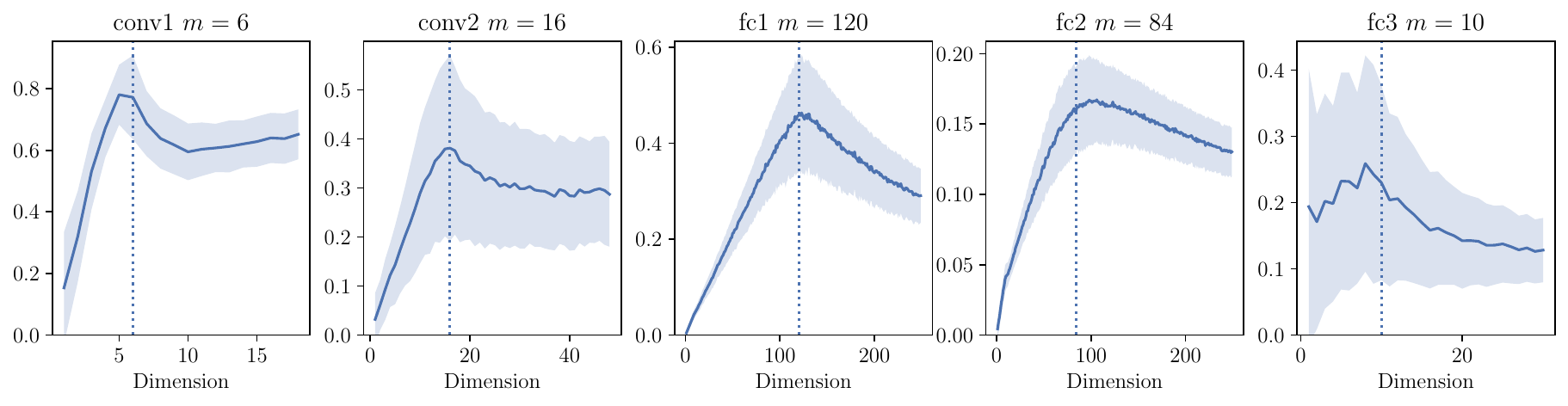}
    \vspace{-1em}
    \caption{Eigenspace overlap of different models of LeNet5 trained with different hyperparameters.}
    \vspace{-1em}
    \label{fig:app_overlap_different_hyperparam}
\end{figure}

Note that for fc3 (the final output layer), we are not observing a linear growth starting from 0 like other layers. This can be explained by the lack of neuron permutation. Related details will be discussed along with the reason for the linear growth pattern for other layers in \sectionref{sec:appendix_model_overlap}.

\subsubsection{Eigenspace overlap for convolutional layers in large models:}
Even though the exact Kroneckor Factorization for layer-wise Hessians is only well-defined for fully connected layers, we also observe similar nontrivial eigenspace overlap for convolutional layers in larger and deeper networks including variants of VGG11 and ResNet18 on datasets CIFAR10 and CIFAR100. Some representative results are shown in \figureref{fig:app_adexp_vgg} and \figureref{fig:app_adexp_resnet}. For each model on each dataset, we independently train 5 models and compute the average pairwise eigenspace overlap. The shade areas represents the standard deviation.
 
For most of the convolutional layers, the eigenspace overlap peaks around the dimension which is equal to the number of output channels of that layer, which is similar to the layers in LeNet5 as in \figureref{fig:app_overlap_different_hyperparam}. The eigenspace overlap of the final fully connected-layer also behaves similar to fc3:LeNet5, which remains around a constant then drops after exceeding the dimension of final output.
However, there are also layers whose overlap does not peak around the output dimensions, (e.g. conv2 of \figureref{fig:app_adexp_vgg}(a) and conv7 of \figureref{fig:app_adexp_resnet}(a)). We will discuss these special cases in the following paragraph.


\begin{figure}[H]
    \centering
    \begin{subfigure}[b]{0.48\textwidth}
        \centering
        \captionsetup{justification=centering}
        \includegraphics[width=\textwidth]{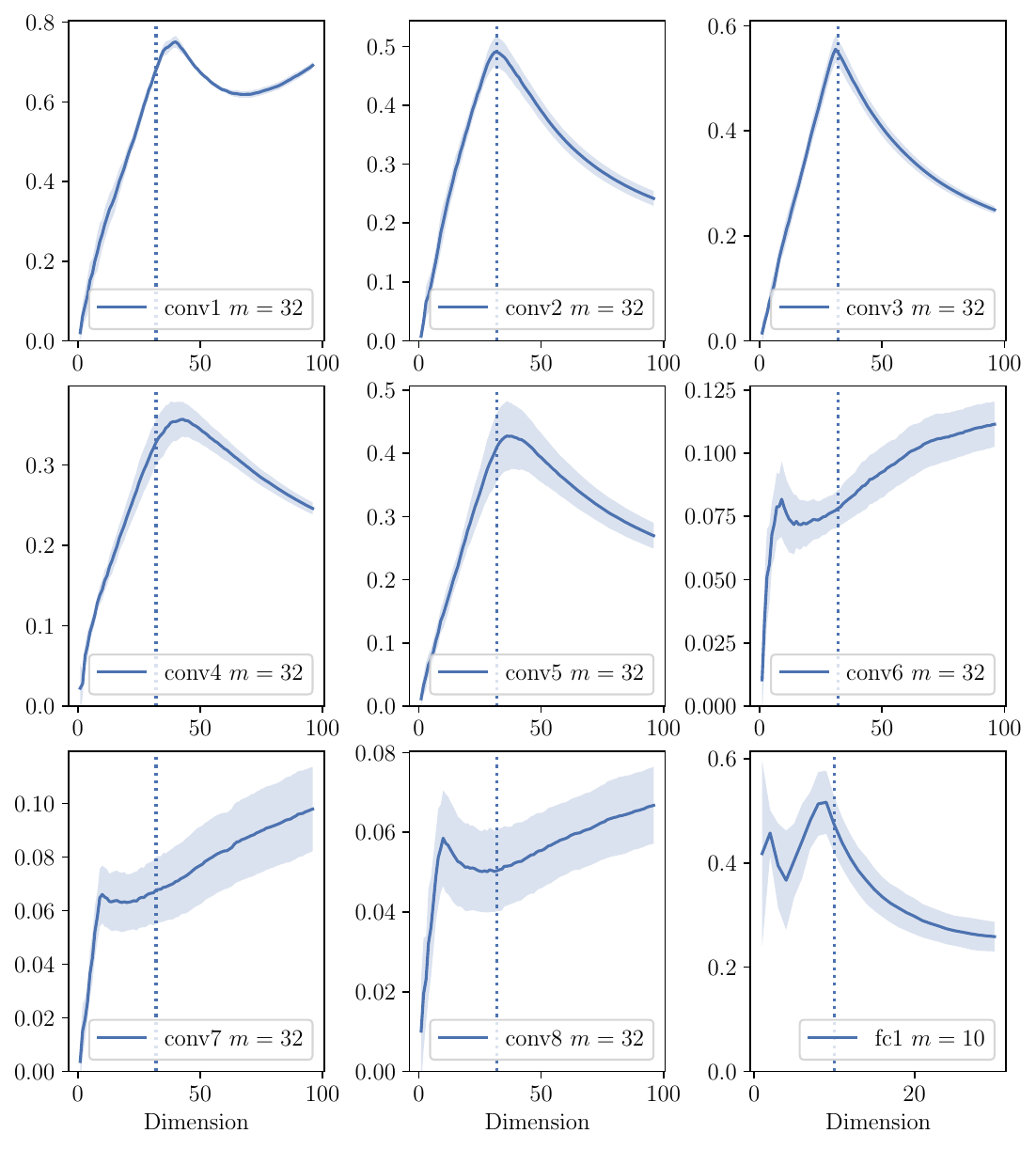}
        \caption{VGG11-W32 (CIFAR10)}
        \label{fig:app_adexp_cifar10_vgg32}
    \end{subfigure}%
    \ \ \ \ \ \
    \begin{subfigure}[b]{0.48\textwidth}
        \centering
        \captionsetup{justification=centering}
        \includegraphics[width=\textwidth]{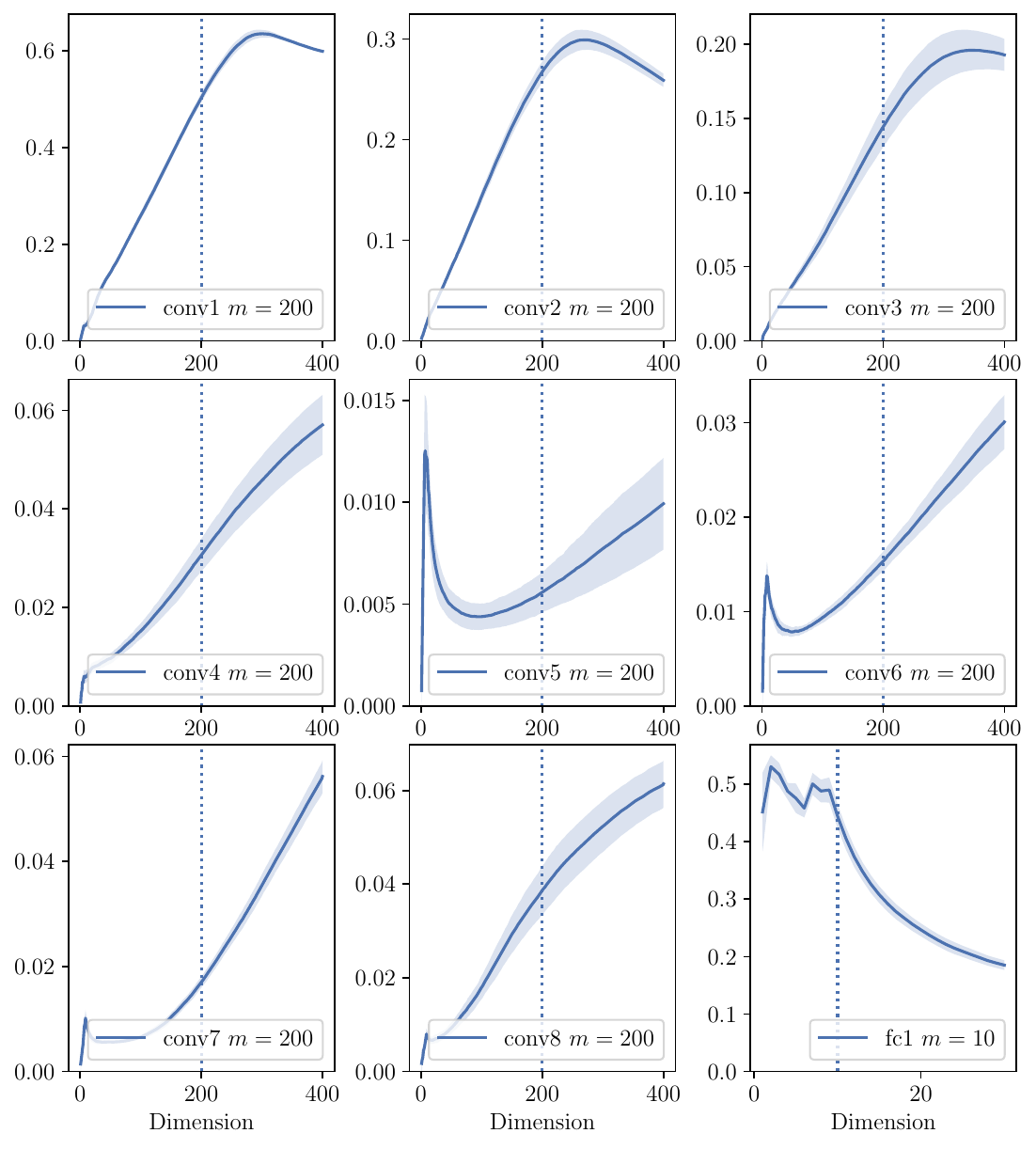}
        \caption{VGG11-W200 (CIFAR10)}
        \label{fig:app_adexp_cifar10_vgg200}
    \end{subfigure}
    \\\vspace{0.1in}
    \begin{subfigure}[b]{0.48\textwidth}
        \centering
        \captionsetup{justification=centering}
        \includegraphics[width=\textwidth]{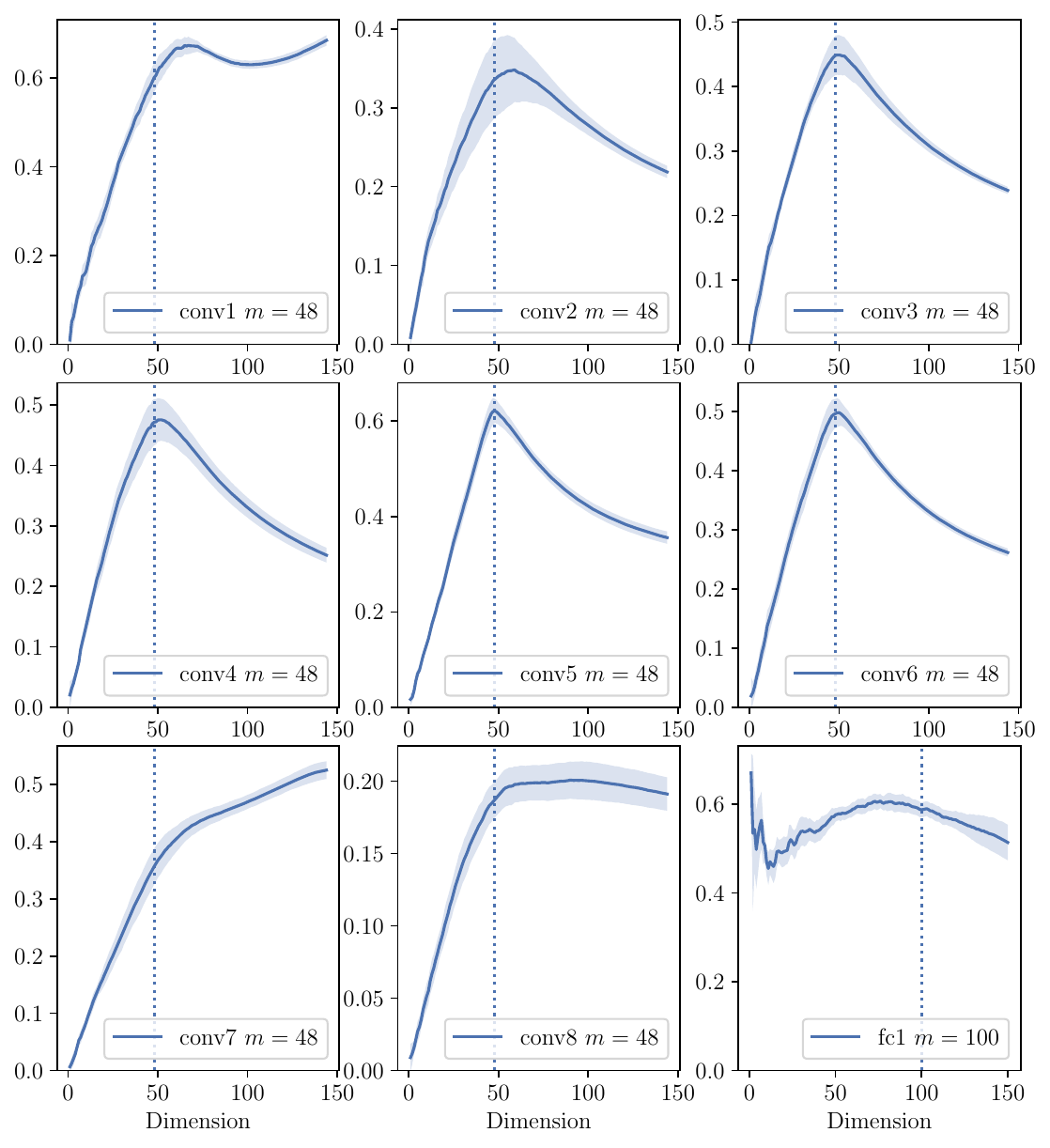}
        \caption{VGG11-W48 (CIFAR100)}
        \label{fig:app_adexp_cifar100_vgg48}
    \end{subfigure}%
    \ \ \ \ \ \
    \begin{subfigure}[b]{0.48\textwidth}
        \centering
        \captionsetup{justification=centering}
        \includegraphics[width=\textwidth]{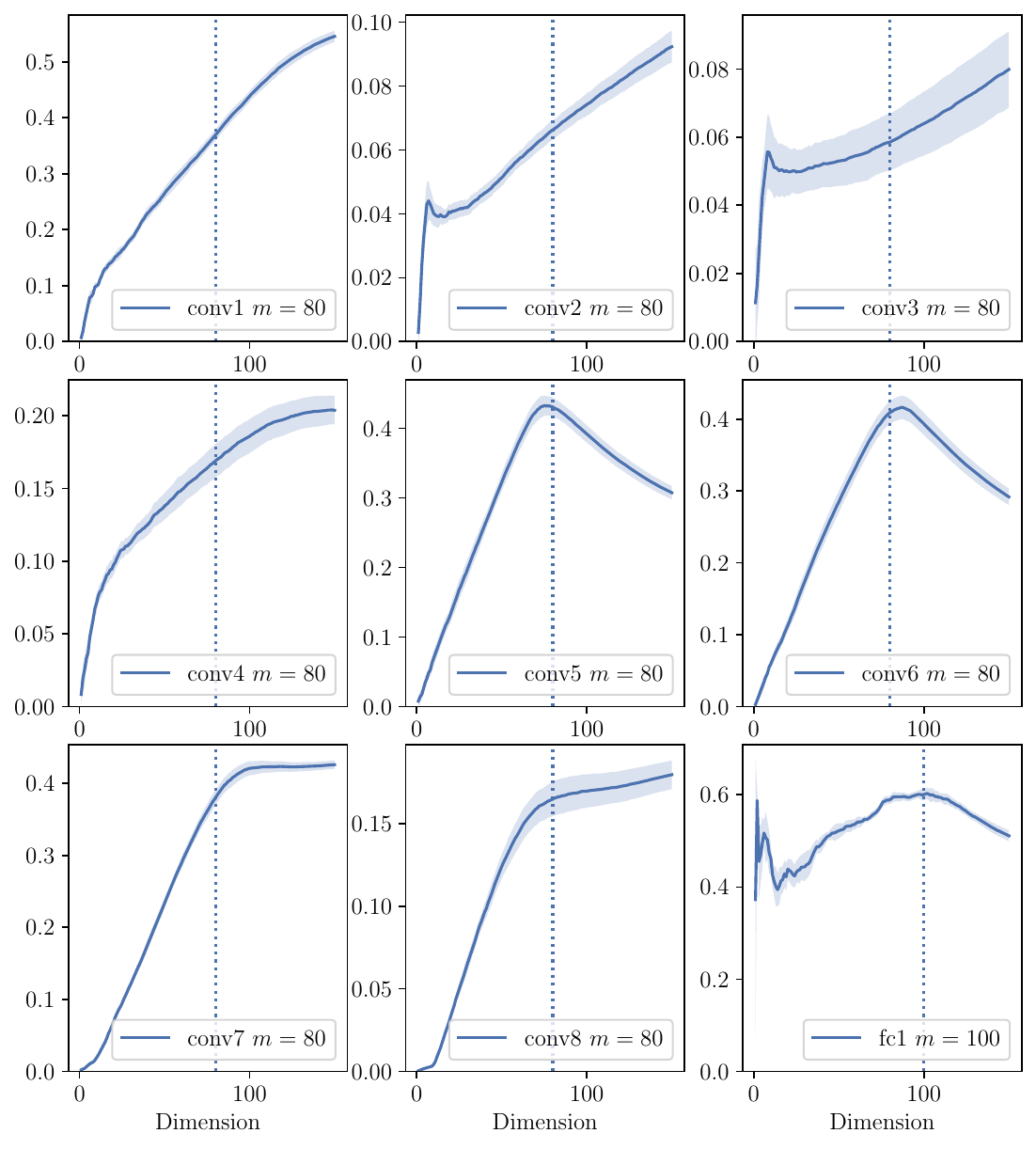}
        \caption{VGG11-W80 (CIFAR100)}
        \label{fig:app_adexp_cifar100_vgg80}
    \end{subfigure}
    \captionsetup{justification=centering}
    \caption{Top Eigenspace overlap for varients of VGG11 on CIFAR10 and CIFAR100}
    \label{fig:app_adexp_vgg}
\end{figure}


\begin{figure}[H]
    \centering
    \begin{subfigure}[b]{\textwidth}
        \centering
        \captionsetup{justification=centering}
        \includegraphics[width=\textwidth]{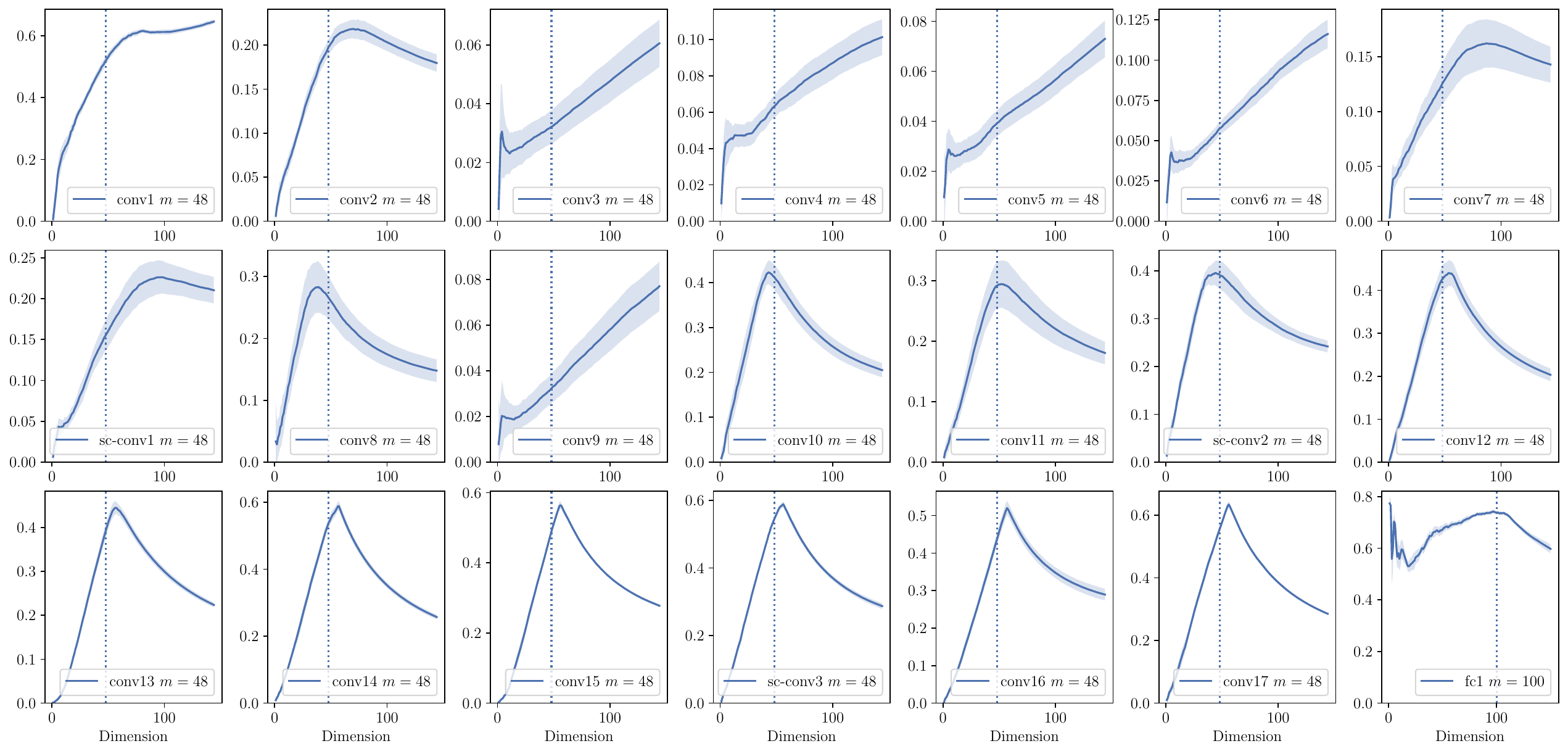}
        \caption{ResNet18-W48 (CIFAR100)}
        \label{fig:app_adexp_cifar100_resnet48}
    \end{subfigure}%
    \\
    \begin{subfigure}[b]{\textwidth}
        \centering
        \captionsetup{justification=centering}
        \includegraphics[width=\textwidth]{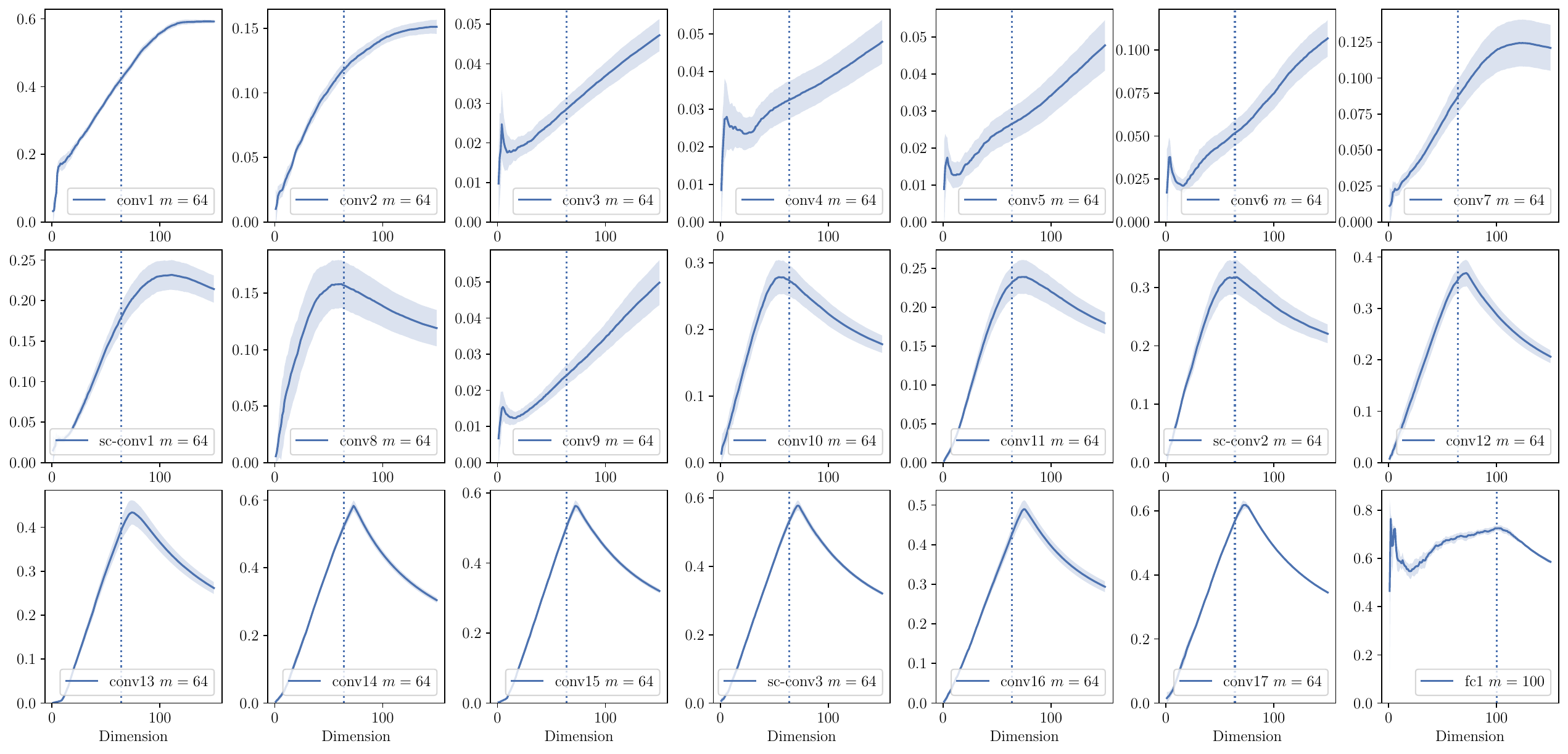}
        \caption{ResNet18-W64 (CIFAR100)}
        \label{fig:app_adexp_cifar100_resnet64}
    \end{subfigure}
    \\
    \begin{subfigure}[b]{\textwidth}
        \centering
        \captionsetup{justification=centering}
        \includegraphics[width=\textwidth]{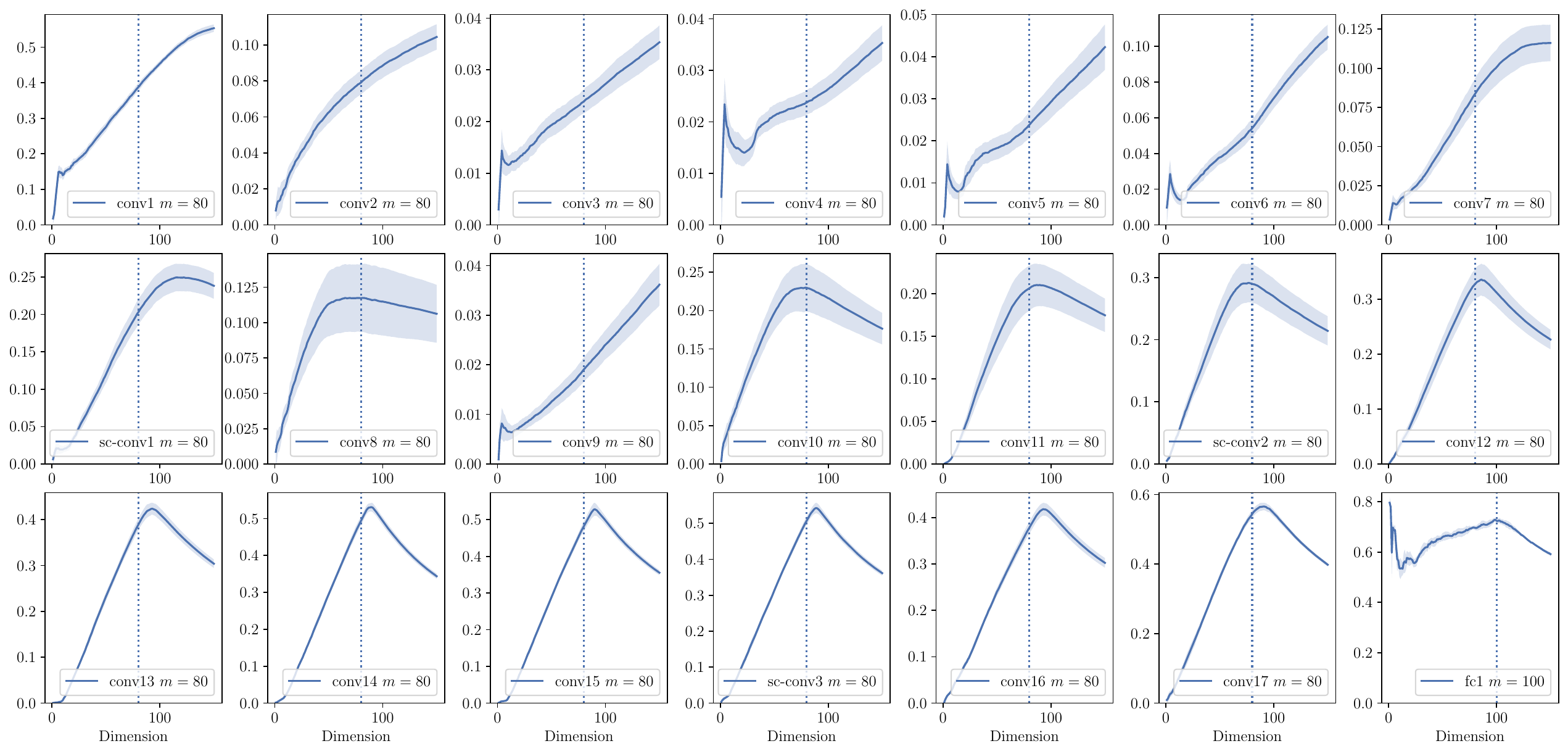}
        \caption{ResNet18-W80 (CIFAR100)}
        \label{fig:app_adexp_cifar100_resnet80}
    \end{subfigure}
    \captionsetup{justification=centering}
    \caption{Top Eigenspace overlap for variants of ResNet18 on CIFAR100}
    \label{fig:app_adexp_resnet}
\end{figure}

\subsubsection{Failed cases for eigenspace overlap}
\label{sec:appendix-failed-exp}
As seen in \figureref{fig:app_adexp_vgg} and \figureref{fig:app_adexp_resnet}, there is a small portion of layers, usually closer to the input, whose eigenspace overlap does peak around the output dimensions. These layers can be clustered into the following two general cases.

\paragraph{Early Peak of Low Overlap}
For layers shown in \figureref{fig:app_adexp_failure_early}. The overlap of dominating eigenspaces are significantly lower than the other layers. Also there exists a small peak at very small dimensions.


\begin{figure}[h]
    \centering
    \begin{subfigure}[b]{0.23\textwidth}
        \centering
        \captionsetup{justification=centering}
        \includegraphics[width=\textwidth]{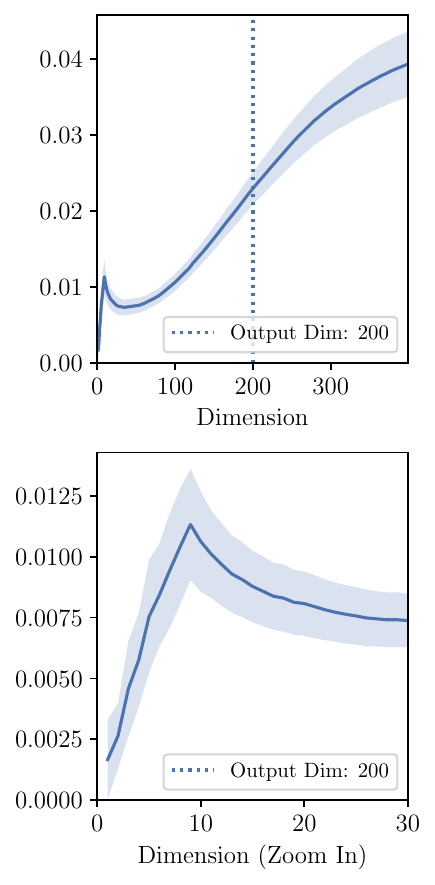}
        \caption{fc2:F-$200^2$\\(MNIST)}
        \label{fig:app_adexp_overlap_early_mnistfc2}
    \end{subfigure}
    \begin{subfigure}[b]{0.23\textwidth}
        \centering
        \captionsetup{justification=centering}
        \includegraphics[width=\textwidth]{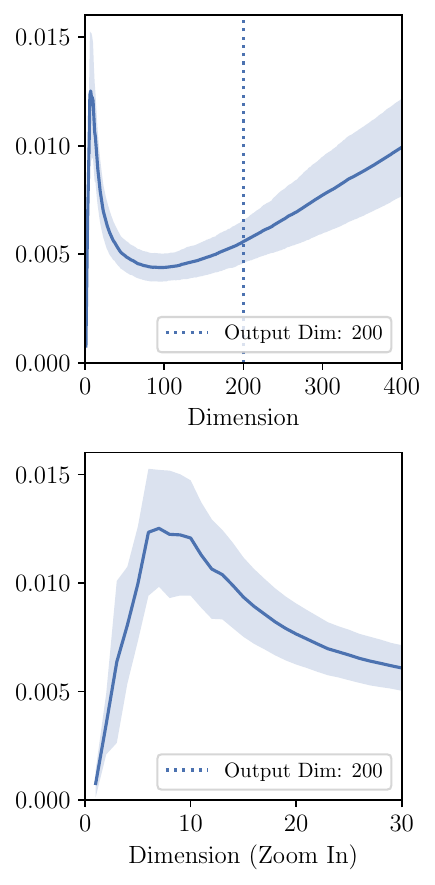}
        \caption{conv5:VGG11-W200\\(CIFAR10)}
        \label{fig:app_adexp_overlap_early_vgg_cifar10}
    \end{subfigure}
    \begin{subfigure}[b]{0.23\textwidth}
        \centering
        \captionsetup{justification=centering}
        \includegraphics[width=\textwidth]{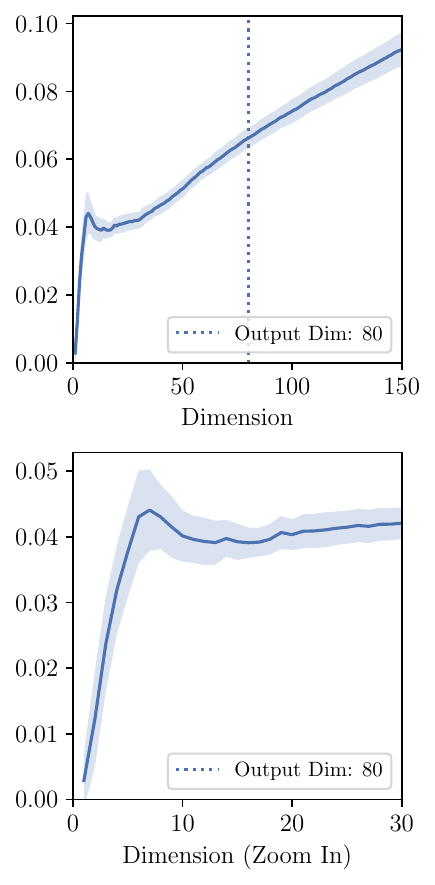}
        \caption{conv2:VGG11-W80\\(CIFAR100)}
        \label{fig:app_adexp_overlap_early_vgg_cifar100}
    \end{subfigure}
    \begin{subfigure}[b]{0.23\textwidth}
        \centering
        \captionsetup{justification=centering}
        \includegraphics[width=\textwidth]{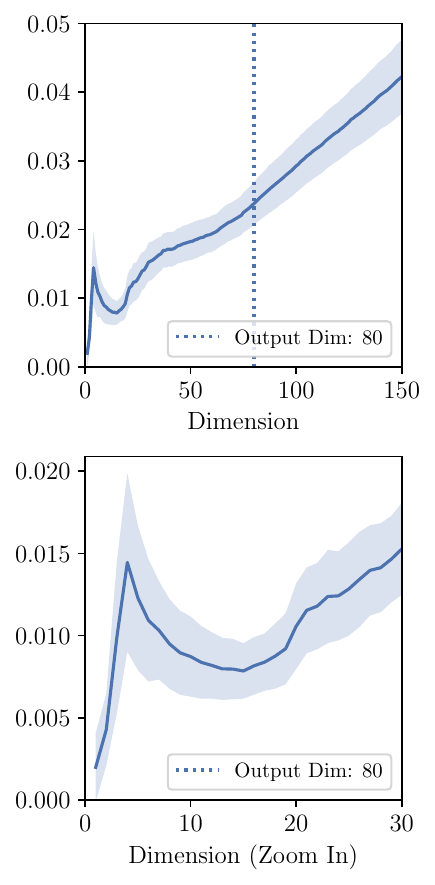}
        \caption{conv5:ResNet18-W64\\(CIFAR100)}
        \label{fig:app_adexp_overlap_early_resnet}
    \end{subfigure}
    \captionsetup{justification=centering}
    \caption{Top eigenspace overlap for layers with an early low peak.\\Figures in the second row are the zoomed in versions of the figures in the first row.}
    \label{fig:app_adexp_failure_early}
\end{figure}

\paragraph{Delayed Peak / Peak Doesn't Decline} 
For layers shown in \figureref{fig:app_adexp_failure_late}, the top eigenspaces has a nontrivial overlap, but the peak dimension is larger than predicted output dimension.


\begin{figure}[h]
    \centering
    \begin{subfigure}[b]{0.23\textwidth}
        \centering
        \captionsetup{justification=centering}
        \includegraphics[width=\textwidth]{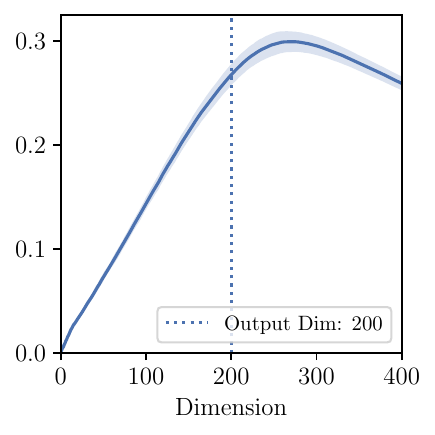}
        \caption{conv2:VGG11-W200\\(CIFAR10)}
        \label{fig:app_adexp_overlap_late_vgg_cifar10}
    \end{subfigure}
    \begin{subfigure}[b]{0.23\textwidth}
        \centering
        \captionsetup{justification=centering}
        \includegraphics[width=\textwidth]{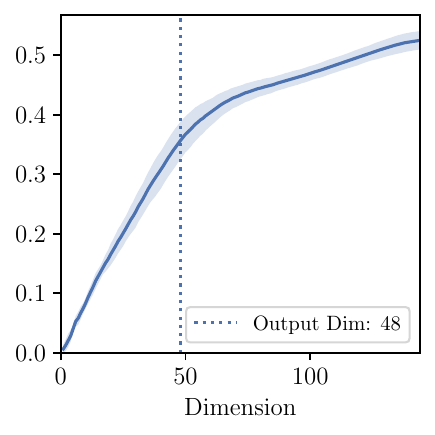}
        \caption{conv7:VGG11-W48\\(CIFAR100)}
        \label{fig:app_adexp_overlap_late_vgg_cifar100}
    \end{subfigure}
    \begin{subfigure}[b]{0.23\textwidth}
        \centering
        \captionsetup{justification=centering}
        \includegraphics[width=\textwidth]{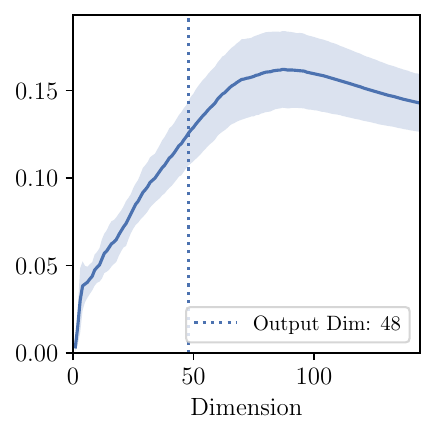}
        \caption{conv7:VGG11-W48\\(CIFAR100)}
        \label{fig:app_adexp_overlap_late_resnet48}
    \end{subfigure}
    \begin{subfigure}[b]{0.23\textwidth}
        \centering
        \captionsetup{justification=centering}
        \includegraphics[width=\textwidth]{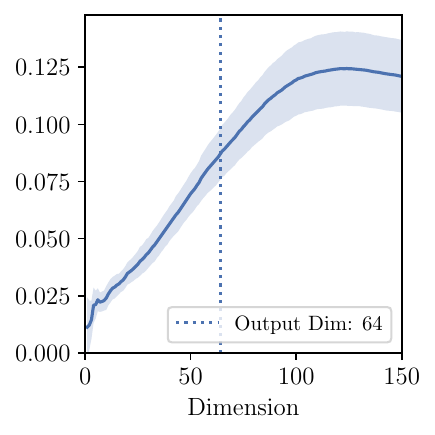}
        \caption{conv7:ResNet18-W64\\(CIFAR100)}
        \label{fig:app_adexp_overlap_late_resnet}
    \end{subfigure}
    \captionsetup{justification=centering}
    \caption{Top eigenspace overlap for layers with a delayed peak.}
    \label{fig:app_adexp_failure_late}
\end{figure}

However, the existence of such failure cases \emph{does not} undermine the theory of Kronecker factorization approximation. In fact, both appear because the top hessian eigenspace is not completely spanned by $\E[\vx]$,  and can be predicted by computing the auto correlation matrices and the output Hessians. The details will also be elaborated in \sectionref{sec:appendix_model_overlap} with the help of correspondence matrices.



\subsection{Eigenvector Correspondence}
\label{sec:app_exp_corr}

In this section, we leverage the idea of eigenvector matricization (\definitionref{def:matricization}) and analyze the validity of the decoupling conjecture using a matrix which we defined as the eigenvector corresponding matrix. First let us recall the definition of eigenvector matricization

\paragraph{\definitionref{def:matricization}} \emph{Consider a layer with input dimension $n$ and output dimension $m$. For an eigenvector $\vh\in\R^{mn}$ of its layer-wise Hessian, the matricized form of $\vh$ is $\Mat(\vh)\in\R^{m\times n}$ where $\Mat(\vh)_{i,j} = \vh_{(i-1)m+j}$.}
\vspace{1em}

Suppose the $i$-th eigenvector for $\E[\vx\vx^\T]$ is $\vv_i$ and the $j$-th eigenvector for $\E[\mM]$ is $\vu_j$. Then the Kronecker product $\E[\mM]\otimes \E[\vx\vx^\T]$ has an eigenvector $\vu_j\otimes \vv_i$. Therefore if the decoupling conjecture is true, one would expect that the top eigenvector of the layer-wise Hessian have a clear correspondence with the top eigenvectors of its two components. Note that $\vu\otimes\vv$ is just the flattened matrix $\vu\vv^\T$.

More concretely, to demonstrate the correspondence between the eigenvectors of the layerwise hessian and the eigenvectors of matrix $\E[\mM]$ and $\E[\vx\vx^\T]$, we introduce ``eigenvector correspondence matrices'' as shown in \figureref{fig:Corr_fc}.
\begin{definition}[Eigenvector Correspondence Matrices]
For layer-wise Hessian matrix $\mH\in\R^{mn\times mn}$ with eigenvectors $\vh_1, \cdots, \vh_{mn}$, and its corresponding auto-correlation matrix $\E[\vx\vx^\T]\in\R^{n\times n}$ with eigenvectors $\vv_1, \cdots, \vv_n$. The correspondence between $\vv_i$ and $\vh_j$ can be defined as \begin{equation}
    \Corr(\vv_i,\vh_j) := \|\Mat(\vh_j)\vv_i\|^2.
\end{equation}
For the output Hessian matrix $\E[\mM]\in\R^{m\times m}$ with eigenvectors $\vu_1, \cdots, \vu_m$, we can likewise define correspondence between $\vv_i$ and $\vh_j$ as
\begin{equation}
    \Corr(\vu_i,\vh_j) := \|\Mat(\vh_j)^\T\vu_i\|^2
\end{equation}
We may then define the eigenvector correspondence matrix between $\mH$ and $\E[\vx\vx^\T]$ as a $n\times mn$ matrix whose $i,j$-th entry is $\Corr(\vv_i,\vh_j)$, and the eigenvector correspondence matrix between $\mH$ and $\E[\mM]$ as a $m\times mn$ matrix whose $i,j$-th entry is $\Corr(\vu_i,\vh_j)$.
\label{def:corr_mat}
\end{definition}

Intuitively, if the $i,j$-th entry of the corresponding matrix is close to 1, then the eigenvector $\vh_j$ is likely to be the Kronecker product of $\vv_i$ (or $\vu_i$) with some vector. Note that if the decoupling conjecture holds absolutely, every eigenvector of the layer-wise Hessian (column of the correspondence matrices) should have a perfect correlation of 1 with exactly one of $\vv_i$ and one of $\vu_i$.
In \figureref{fig:Corr_fc} we can see that the correspondence matrices for the true layer-wise Hessian approximately satisfies this property for top eigenvectors. The similarity between the correspondence patterns for the true and approximated Hessian also verifies the validity of the Kronecker approximation for dominating eigenspace.

In \figureref{fig:Corr_fc}, we show the heatmap of Eigenvector Correspondence Matrices for fc1:LeNet5, which has 120 output neurons. Here we take the top left corner of the eigenvector correspondence matrices. We can see that the top 120 eigenvectors of $\E[\mH]$, roughly corresponds to the top 120 eigenvectors of $\E[\mM]$ (as shown by the diagonal patter of (\emph{b})) and the first eigenvector of $\E[\vx\vx^\T]$ (as shown by the horizontal pattern of (\emph{a})). The similarity between the first row and the second row also shows the validity of the Kronecker approximation.
\label{sec:eigen_corr}
\begin{figure}[H]
    \centering
    \begin{subfigure}[t]{0.46\textwidth}
        \centering
        \captionsetup{justification=centering}
        \includegraphics[width=0.9\textwidth]{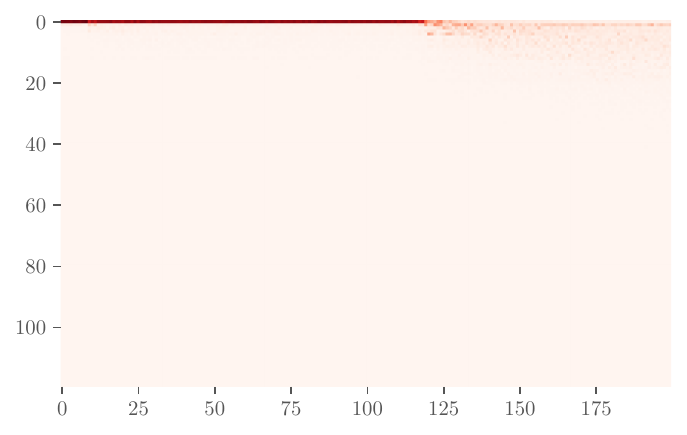}
        \caption{True Hessian with $\E[\vx\vx^\T].$}
        \label{fig:Corr_xxT_True_fc}
    \end{subfigure}%
    \begin{subfigure}[t]{0.46\textwidth}
        \centering
        \captionsetup{justification=centering}
        \includegraphics[width=0.9\textwidth]{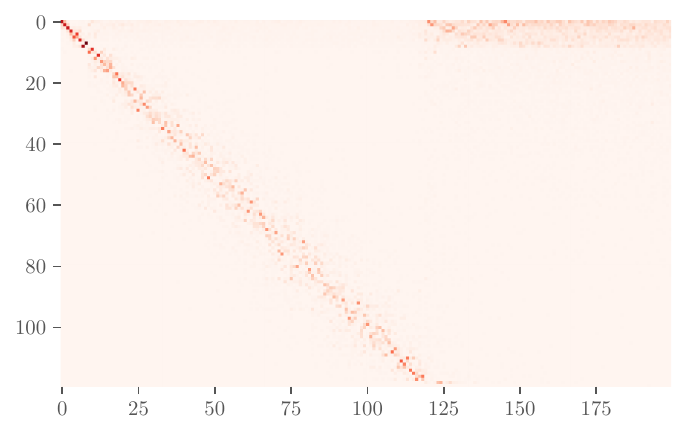}
        \caption{True Hessian with $\E[\mM].$}
        \label{fig:Corr_UTAU_True_fc}
    \end{subfigure}%
    \begin{subfigure}[t]{0.065\textwidth}
        \centering
        \includegraphics[width=\textwidth]{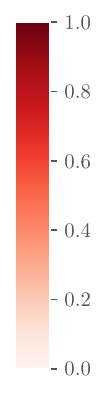}
    \end{subfigure}
    \bigskip
\begin{subfigure}[t]{0.46\textwidth}
        \centering
        \captionsetup{justification=centering}
        \includegraphics[width=0.9\textwidth]{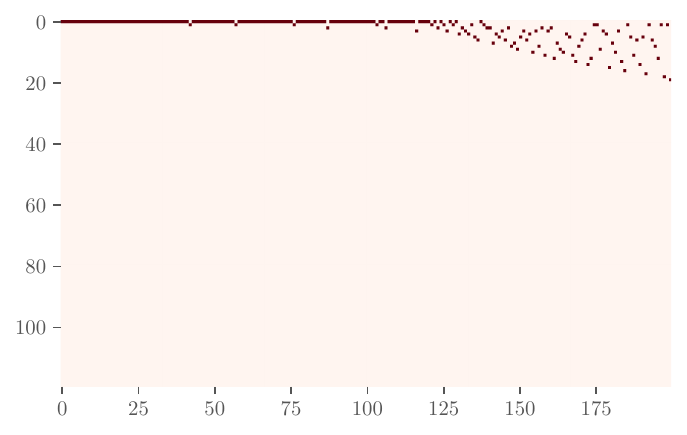}
        \caption{Approximated Hessian with $\E[\vx\vx^\T].$}
        \label{fig:Corr_xxT_Approx_fc}
    \end{subfigure}%
    \begin{subfigure}[t]{0.46\textwidth}
        \centering
        \captionsetup{justification=centering}
        \includegraphics[width=0.9\textwidth]{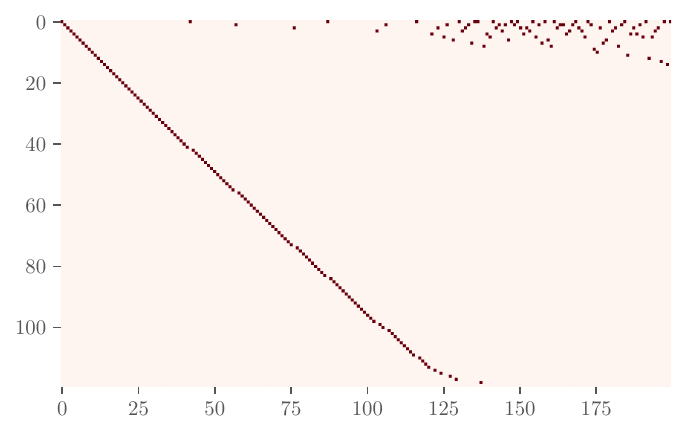}
        \caption{Approximated Hessian with $\E[\vx\vx^\T].$}
        \label{fig:Corr_UTAU_Approx_fc}
    \end{subfigure}%
    \begin{subfigure}[t]{0.065\textwidth}
        \centering
        \includegraphics[width=\textwidth]{Figures/Misc/colorbar.pdf}
    \end{subfigure}
    \captionsetup{justification=centering}
    \caption{Heatmap of Eigenvector Correspondence Matrices for fc1:LeNet5.}
    \label{fig:Corr_fc}
\end{figure}

Here we present the correspondence matrix for fc2, conv1, and conv2 layer of LeNet5. The top eigenvectors for all layers shows a strong correlation with the first eigenvector of $\E[\vx\vx^\T]$ (which is approximately $\hE[\vx]$). For convolutional layers, since the computation of $\mM$ is not exact, the correspondence matrices with $\E[\mM]$ does not exhibit the diagonal pattern. For fc2:LeNet5 as in \figureref{fig:Corrfc22}, the diagonal pattern in (\emph{b}) and the strong correlation with $\E[\vx]$ stops at dimension 9. This fells into one of the ``failed cases'' as described in \sectionref{sec:appendix-failed-exp} case that the small eigenvalues of $\E[\rmM]$ are approaching $0$ faster than $\E[\vx\vx^\T]$. We will discuss this case in more detail in \sectionref{sec:appendix-failure}.

\begin{figure}[H]
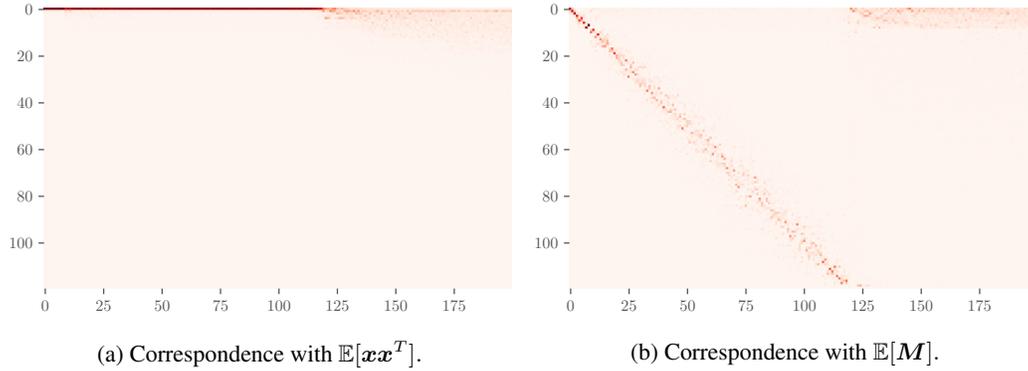

    \centering
    \begin{subfigure}[t]{0.5\textwidth}
        \centering
        \captionsetup{justification=centering}
        \includegraphics[width=\textwidth]{Figures/Correspondence/LeNet5_fixlr0.01/xxT_Trueest_real_corr_expand_t200_CIFAR10_Exp1_LeNet5_fixlr0.01R2_E-1_fc1.pdf}
        \caption{Correspondence with $\E[\vx\vx^T].$}
        \label{fig:Corr_xxT_True_fc1}
    \end{subfigure}%
    \begin{subfigure}[t]{0.5\textwidth}
        \centering
        \captionsetup{justification=centering}
        \includegraphics[width=\textwidth]{Figures/Correspondence/LeNet5_fixlr0.01/UTAU_Trueest_real_corr_expand_t200_CIFAR10_Exp1_LeNet5_fixlr0.01R2_E-1_fc1.pdf}
        \caption{Correspondence with $\E[\mM].$}
        \label{fig:Corr_UTAU_True_fc1}
    \end{subfigure}
    \caption{Eigenvector Correspondence for fc1:LeNet5. ($m$=120)}
    \label{fig:Corrfc11}
\end{figure}

\begin{figure}[H]
    \centering
    \begin{subfigure}[t]{0.5\textwidth}
        \centering
        \captionsetup{justification=centering}
        \includegraphics[width=\textwidth]{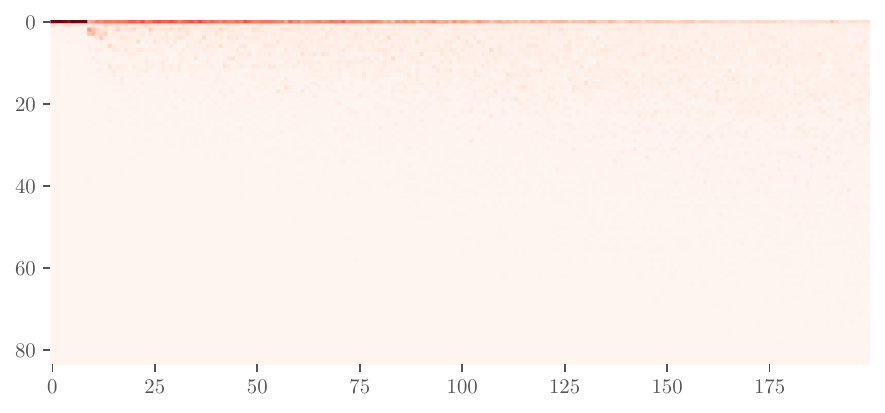}
        \caption{Correspondence with $\E[\vx\vx^T].$}
        \label{fig:Corr_xxT_True_fc2}
    \end{subfigure}%
    \begin{subfigure}[t]{0.5\textwidth}
        \centering
        \captionsetup{justification=centering}
        \includegraphics[width=\textwidth]{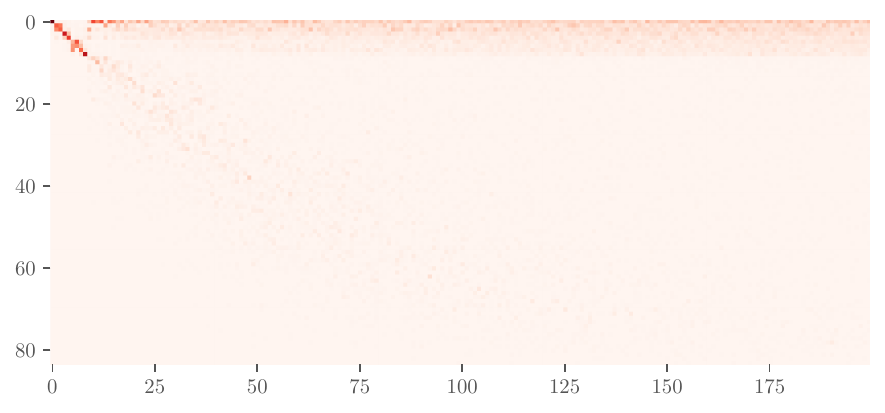}
        \caption{Correspondence with $\E[\mM].$}
        \label{fig:Corr_UTAU_True_fc2}
    \end{subfigure}
    \caption{Eigenvector Correspondence for fc2:LeNet5. ($m$=84)}
    \label{fig:Corrfc22}
\end{figure}

\begin{figure}[H]
    \centering
    \begin{subfigure}[t]{0.5\textwidth}
        \centering
        \captionsetup{justification=centering}
        \includegraphics[width=\textwidth]{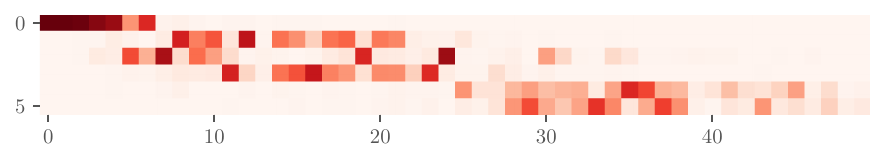}
        \caption{Correspondence with $\E[\vx\vx^T].$}
        \label{fig:Corr_xxT_True_conv1}
    \end{subfigure}%
    \begin{subfigure}[t]{0.5\textwidth}
        \centering
        \captionsetup{justification=centering}
        \includegraphics[width=\textwidth]{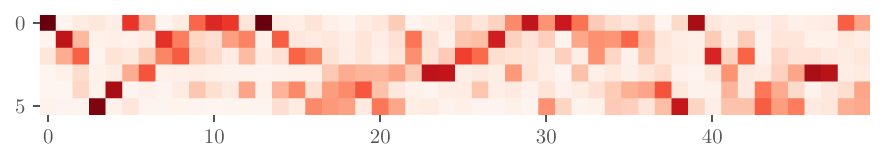}
        \caption{Correspondence with $\E[\mM].$}
        \label{fig:Corr_UTAU_True_conv1}
    \end{subfigure}
    \caption{Eigenvector Correspondence for conv1:LeNet5. ($m$=6)}
    \label{fig:Corr_conv1}
\end{figure}

\begin{figure}[H]
    \centering
    \begin{subfigure}[t]{0.5\textwidth}
        \centering
        \captionsetup{justification=centering}
        \includegraphics[width=\textwidth]{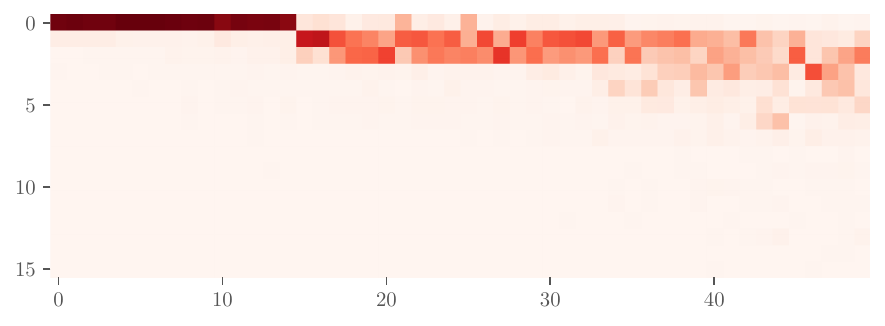}
        \caption{Correspondence with $\E[\vx\vx^T].$}
        \label{fig:Corr_xxT_True_conv2}
    \end{subfigure}%
    \begin{subfigure}[t]{0.5\textwidth}
        \centering
        \captionsetup{justification=centering}
        \includegraphics[width=\textwidth]{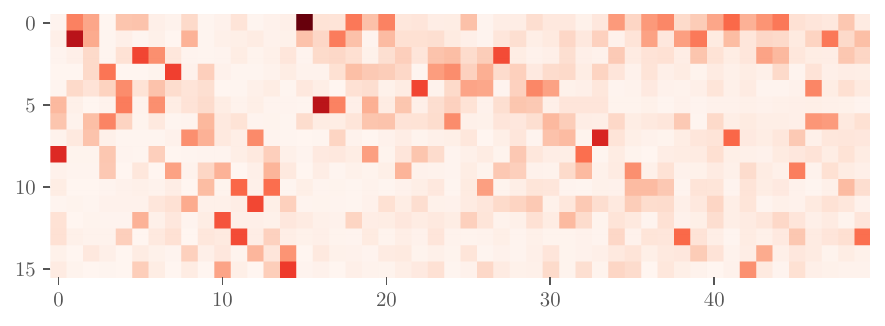}
        \caption{Correspondence with $\E[\mM].$}
        \label{fig:Corr_UTAU_True_conv2}
    \end{subfigure}
    \caption{Eigenvector Correspondence for conv2:LeNet5. ($m$=16)}
    \label{fig:Corr_conv2}
\end{figure}

For VGG11 we also observe a strong correlation with the first eigenvector of $\E[\vx\vx^\T]$.

\begin{figure}[H]
    \centering
    \captionsetup{justification=centering}
    \includegraphics[width=0.8\textwidth]{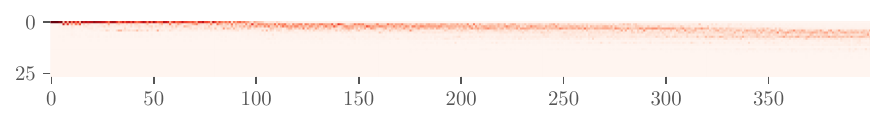}

    \caption{Eigenvector Correspondence with $\E[\vx\vx^\T]$ for conv1:VGG11. ($m$=64)}
    \label{fig:Corr_VGG_conv1}
\end{figure}

\begin{figure}[H]
    \centering
    \captionsetup{justification=centering}
    \includegraphics[width=0.8\textwidth]{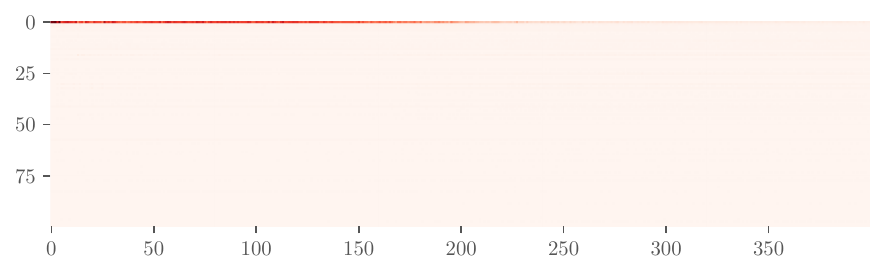}

    \caption{Eigenvector Correspondence with $\E[\vx\vx^\T]$ for conv2:VGG11. ($m$=128)}
    \label{fig:Corr_VGG_conv2}
\end{figure}

\begin{figure}[H]
    \centering
    \captionsetup{justification=centering}
    \includegraphics[width=0.8\textwidth]{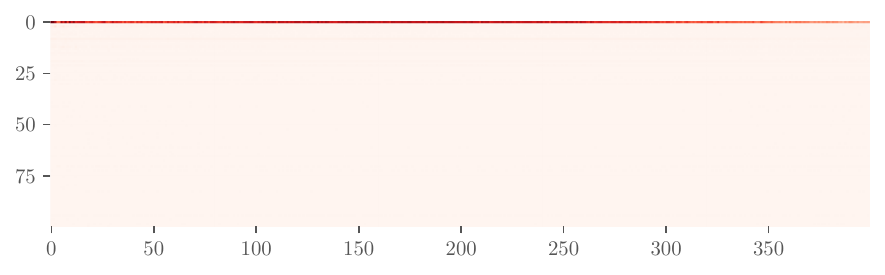}

    \caption{Eigenvector Correspondence with $\E[\vx\vx^\T]$ for conv3:VGG11. ($m$=256)}
    \label{fig:Corr_VGG_conv2}
\end{figure}

\subsection{Structure of \texorpdfstring{$\E[\vx\vx^\T]$}{ExxT} and \texorpdfstring{$\E[\mM]$}{EM} During Training}
\label{sec:appendix_training_traj}
We observed the pattern of $\E[\vx\vx^\T]$ matrix and $\E[\mM]$ matrix along the training trajectory (\figureref{fig:traj_xxT_lenet5_fc1}, \figureref{fig:traj_UTAU_lenet5_fc1}). It shows that $\E[\vx\vx^\T]$ is always approximately rank 1, and $\E[\mM]$ always have around $c$ large eigenvalues. According to our analysis, since the nontrivial eigenspace overlap is likely to be a consequence of a approximately rank 1 $\E[\vx\vx^\T]$, we would conjecture that the overlap phenomenon is likely to happen on the training trajectory as well.

\begin{figure}[H]
    \centering
    \includegraphics[height=0.2\textheight]{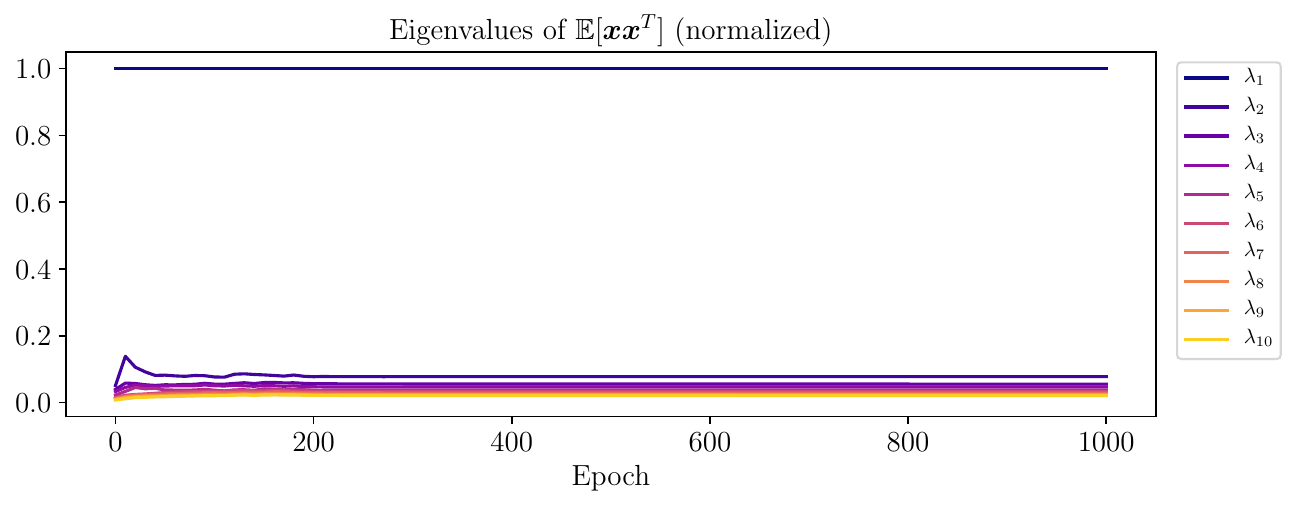}
    \caption{Top eigenvalues of $\E[\vx\vx^\T]$ along training trajectory. (fc1:LeNet5)}
    \label{fig:traj_xxT_lenet5_fc1}
\end{figure}

\begin{figure}[H]
    \centering
    \includegraphics[height=0.2\textheight]{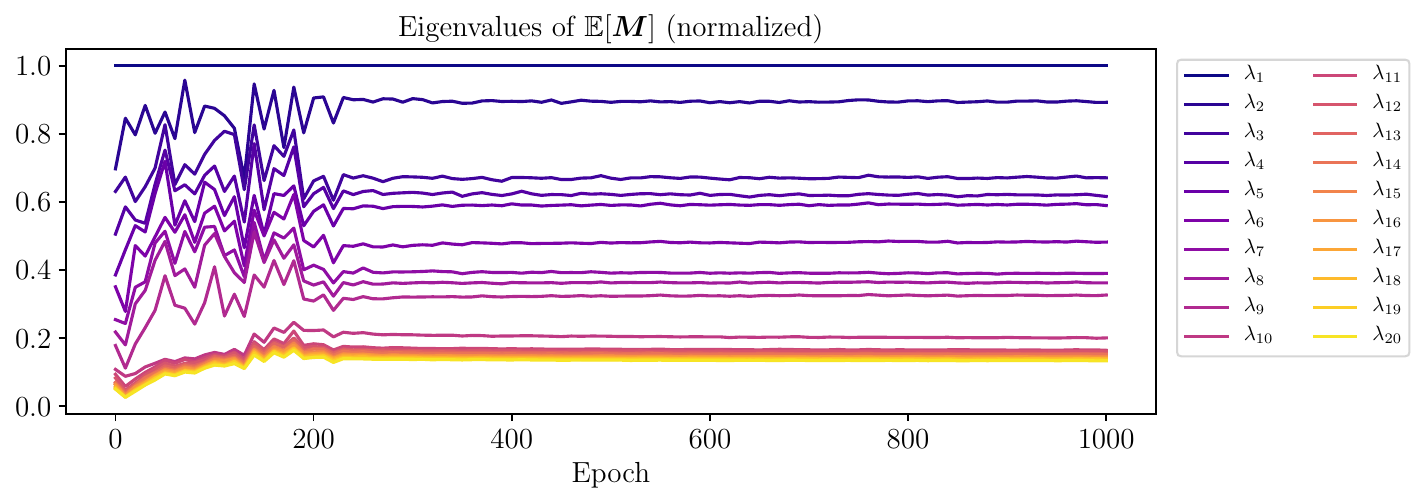}
    \caption{Top eigenvalues of $\E[\mM]$ along training trajectory. (fc1:LeNet5)}
    \label{fig:traj_UTAU_lenet5_fc1}
\end{figure}
\newpage
\newpage

\section{Additional Explanations}
\label{sec:appendix_explanation}

\subsection{Heuristic Approximating of the Top Eigenspace of Output Hessians}
\label{sec:app_outhessian_exp}

As briefly mentioned in \cref{sec:theoretical}, the closed form approximating of $S_1$ in \cref{thm:main-out} can be heuristically extended to the case with multiple layers, that the top eigenspace of the output Hessian of the $k$-layer would be approximately $\gR(\mS^{(k)})\setminus\{\textbf{1}^\T\mS^{(k)}\}$
where $\mS^{(k)} = \mW^{(n)}\mW^{(n-1)}\cdots\mW^{(k+1)}$ and $\gR(\mS^{(k)})$ is the row space of $\mS^{(k)}$.

Though our result was only proven for random initialization and random data, we observe that this subspace also has high overlap with the top eigenspace of output Hessian at the minima of models trained with real datasets. In \cref{tab:approx-m}  we show the overlap of $ \gR(\mS^{(k)})\setminus\{\textbf{1}^\T \mS^{(k)}\}$ and the top $c-1$ dimension eigenspace of $\E[\mM^{(k)}]$ of different layers at minima.

\begin{table}[H]
\vskip -0.15in
\caption{Overlap of $ \gR(\mS^{(k)})\setminus\{\textbf{1}^\T \mS^{(k)}\}$ and the top $c-1$ dimension eigenspace of $\E[\mM^{(k)}]$ of different layers.}
\vskip 0.1in
\begin{center}
\begin{small}
\begin{tabular}{ccccccccc}
\toprule
Dataset & \multicolumn{2}{c}{MNIST} & \multicolumn{2}{c}{MNIST-R} & \multicolumn{2}{c}{CIFAR10} & \multicolumn{2}{c}{CIFAR10-R} \\
Network & F-$1500^3$    & LeNet5    & F-$1500^3$     & LeNet5     & F-$1500^3$     & LeNet5     & F-$1500^3$      & LeNet5     \\ \midrule
fc1     & 0.602         & 0.890     & 0.235          & 0.518      & 0.880          & 0.951      & 0.903           & 0.213       \\
fc2     & 0.967         & 0.931     & 0.801          & 0.912      & 0.943          & 0.972      & 0.931           & 0.701       \\
fc3     & 0.982         & 0.999     & 0.998          & 0.999      & 0.993          & 0.999      & 0.996           & 0.999     \\ \bottomrule
\end{tabular}
\end{small}
\end{center}
\label{tab:approx-m}
\vskip -0.15in
\end{table}
Note that the overlap can be low for random-label datasets which do not have a clear eigengap (as in \cref{fig:UTAU_H_spec}). Understanding how the data could change the behavior of the Hessian is an interesting open problem. Other papers including \citet{papyan2019measurements} have given alternative explanations which are not directly comparable to ours, however ours is the only one that gives a closed-form formula for top eigenspace. In \cref{sec:appendix_M_struct} we will discuss the other explanations in more details.

\subsection{Dominating Eigenvectors of Layer-wise Hessian are Low Rank}
\label{sec:appendix_low_rank_eigenvector}

A natural corollary for the Kronecker factorization approximation of layer-wise Hessians is that the eigenvectors of the layer-wise Hessians are low rank. Let $\vh_i$ be the $i$-th eigenvector of a layer-wise Hessian.
The rank of $\Mat(\vh_i)$ can be considered as an indicator of the complexity of the eigenvector. Consider the case that $\vh_i$ is one of the top eigenvectors. From \sectionref{sec:models}, we have $\vh_i \approx \vu_i \otimes \hE[\vx]$. 
Thus, $\Mat(\vh_i) \approx \vu_i\hE[\vx]^\T$, which is approximately rank 1. Experiments shows that first singular values of $\Mat(\vh_i)$ divided by its Frobenius Norm are usually much larger than 0.5, indicating the top eigenvectors of the layer-wise Hessians are very close to rank 1.
\figureref{fig:eigen_lowrank} shows first singular values of $\Mat(\vh_i)$ divided by its Frobenius Norm for $i$ from 1 to 200. We can see that the top eigenvectors of the layer-wise Hessians are very close to rank 1.
 \begin{figure}[h]
     \includegraphics[width=\columnwidth]{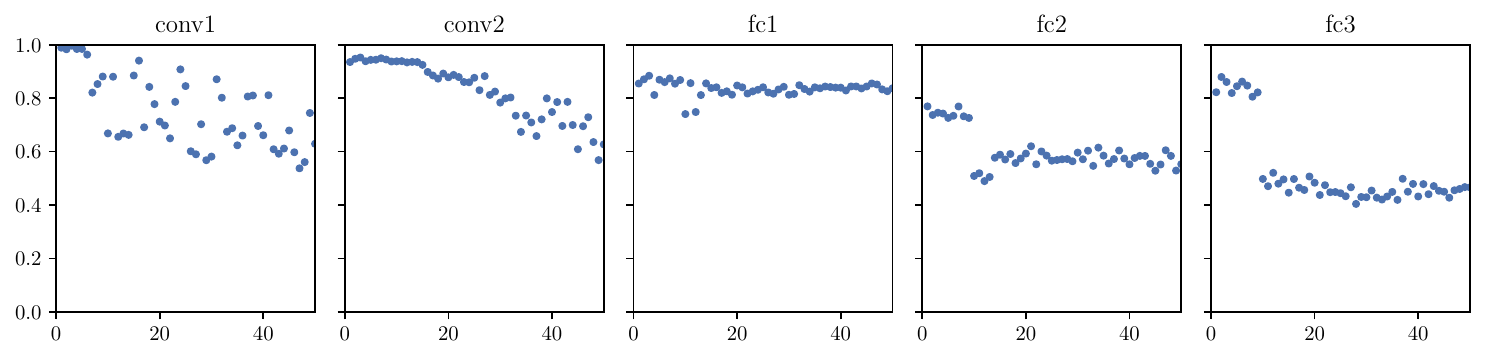}
      \vspace{-0.2in}
     \caption{Ratio between top singular value and Frobenius norm of matricized dominating eigenvectors. (LeNet5 on CIFAR10). The horizontal axes denote the index $i$ of eigenvector $\vh_i$, and the vertical axes denote $\|\Mat(\vh_i)\|/\|\Mat(\vh_i)\|_F$.}
    \label{fig:eigen_lowrank}
     \vspace{-2em}
 \end{figure}

\subsection{Eigenspace Overlap of Different Models}
\label{sec:appendix_model_overlap}
From the experiment results in \sectionref{sec:appendix_exp_res} together with \figureref{fig:overlap}, we can see that our approximation and explanation stated in \sectionref{sec:models} of the main text is approximately correct but may not be so accurate for some layers.
We now present a more general explanation which addresses why the overlap before rank-$m$ grows linearly. We will also explain some exceptional cases as shown in \sectionref{sec:appendix_expres_ovlp} and possible discrepancies of our approximation.

Let $\vh_i$ be the $i$-th eigenvector of the layer-wise Hessian $\mH_\Ls(\vw^{(p)})$, under the assumption that the autocorrelation matrix $\E[\vx\vx^\T]$ is approximately rank 1 that $\E[\vx\vx^\T]\approx \E[\vx]\E[\vx]^\T$, for all $i \leq m$, we can approximate the $\vh_i$ as $\vu_i\otimes (\E[\vx]/\|\E[\vx]\|)$ where $\vu_i$ is the $i$-th eigenvector of $\E[\mM]$. Formally, the trend of top eigenspace can be characterized by the following theorem. For simplicity of notations, we abuse the superscript within parentheses to refer the two models instead of layer number in this section.

\begin{theorem}
Consider 2 different models with the same network structure trained on the same dataset. Fix the $p$-th hidden layer with input dimension $n$ and output dimension $m$. For the first model, denote its output Hessian as $\E[\mM]^{(1)}$ with eigenvalues $\tau^{(1)}_1 \geq \tau^{(1)}_2 \geq \cdots \geq \tau^{(1)}_m \geq 0$ and eigenvectors $\vr^{(1)}_1, \cdots, \vr^{(1)}_m\in\R^m$; denote its autocorrelation matrix as $\E[\vx\vx^\T]^{(1)}$, with eigenvalues $\gamma^{(1)}_1 \geq \gamma^{(1)}_2 \geq \cdots \geq \gamma^{(1)}_m \geq 0$ and eigenvectors $\vt^{(1)}_1, \cdots, \vt^{(1)}_n\in\R^n$. The variables for the second matrices are defined identically by changing 1 in the superscript parenthesis to 2.

Assume the Kronecker factorization approximation is accurate that $\mH_\Ls(\vw^{(p)})^{(1)}\approx \E[\mM]^{(1)}\otimes \E[\vx\vx^\T]^{(1)}$ and $\mH_\Ls(\vw^{(p)})^{(2)}\approx \E[\mM]^{(2)}\otimes \E[\vx\vx^\T]^{(2)}$.
Also assume the autocorrelation matrices of two models are sufficiently close to rank 1 in the sense that $\tau^{(1)}_m\gamma^{(1)}_1 > \tau^{(1)}_1\gamma^{(1)}_2$ and $\tau^{(2)}_m\gamma^{(2)}_1 > \tau^{(2)}_1\gamma^{(2)}_2$.
Then for all $k\leq m$, the overlap of top $k$ eigenspace between their layerwise Hessians $\mH_\Ls(\vw^{(p)})^{(1)}$ and $\mH_\Ls(\vw^{(p)})^{(2)}$ will be approximately 
$\frac{k}{m}(\vt^{(1)}_1 \cdot \vt^{(2)}_1)^2.$
Consequently, the top eigenspace overlap will show a linear growth before it reaches dimension $m$. The peak at $m$ is approximately $(\vt_1 \cdot \vt_2)^2$.
\label{thm:model_overlap}
\end{theorem}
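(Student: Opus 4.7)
The plan is to use the assumed Kronecker factorization to identify the top-$k$ eigenspace of each $\mH_\Ls(\vw^{(p)})^{(i)}$ exactly in terms of the spectral data of its two factors, and then reduce the overlap computation to a bilinear expression in the $\vt_1$'s and the $\vr_i$'s. The linear growth in $k$ will then fall out from averaging over the ``random'' alignment of the $\E[\mM]$-eigenbases of the two independently trained models.

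First I would show that under the spectral gap assumption $\tau^{(i)}_m \gamma^{(i)}_1 > \tau^{(i)}_1 \gamma^{(i)}_2$, the eigenvalues of $\E[\mM]^{(i)} \otimes \E[\vx\vx^T]^{(i)}$ of the form $\tau^{(i)}_j \gamma^{(i)}_1$ (for $j=1,\dots,m$) are strictly larger than every eigenvalue of the form $\tau^{(i)}_j \gamma^{(i)}_\ell$ with $\ell \geq 2$. Since the eigenvalues of a Kronecker product are exactly the pairwise products of the factor eigenvalues, this pins down the top-$m$ eigenspace of the approximated Hessian as $\mathrm{span}\{\vr^{(i)}_j \otimes \vt^{(i)}_1 : j=1,\dots,m\}$, and more generally the top-$k$ eigenspace (for $k \leq m$) as $U^{(i)}_k := \mathrm{span}\{\vr^{(i)}_j \otimes \vt^{(i)}_1 : j=1,\dots,k\}$.

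Next, using Definition~\ref{def:overlap}, I would write
\begin{equation}
    \Overlap(U^{(1)}_k, U^{(2)}_k) = \frac{1}{k}\sum_{i,j=1}^k \left\langle \vr^{(1)}_i \otimes \vt^{(1)}_1,\; \vr^{(2)}_j \otimes \vt^{(2)}_1 \right\rangle^2 = \frac{(\vt^{(1)}_1 \cdot \vt^{(2)}_1)^2}{k} \sum_{i,j=1}^k (\vr^{(1)}_i \cdot \vr^{(2)}_j)^2,
\end{equation}
where I used the mixed-product property $(\va \otimes \vb) \cdot (\vc \otimes \vd) = (\va \cdot \vc)(\vb \cdot \vd)$. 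Thus the claim reduces to showing $\sum_{i,j=1}^k (\vr^{(1)}_i \cdot \vr^{(2)}_j)^2 \approx k^2/m$. Since $\{\vr^{(2)}_j\}_{j=1}^m$ is an orthonormal basis of $\R^m$, for each fixed $i$ we have $\sum_{j=1}^m (\vr^{(1)}_i \cdot \vr^{(2)}_j)^2 = 1$, so the average of $(\vr^{(1)}_i \cdot \vr^{(2)}_j)^2$ over $j$ is $1/m$; if the top-$k$ subset of $\vr^{(2)}_j$'s is ``generic'' with respect to $\vr^{(1)}_i$, then restricting to $j \leq k$ gives $\sum_{j=1}^k (\vr^{(1)}_i \cdot \vr^{(2)}_j)^2 \approx k/m$. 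Summing over $i \leq k$ then yields the desired $k^2/m$, and multiplying by $(\vt^{(1)}_1 \cdot \vt^{(2)}_1)^2/k$ produces the claimed linear growth $\frac{k}{m}(\vt^{(1)}_1 \cdot \vt^{(2)}_1)^2$ with peak value $(\vt^{(1)}_1 \cdot \vt^{(2)}_1)^2$ attained at $k=m$.

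The main obstacle is justifying the approximation $\sum_{i,j=1}^k (\vr^{(1)}_i \cdot \vr^{(2)}_j)^2 \approx k^2/m$, since this is an empirical genericity statement about the two output-Hessian eigenbases and not a consequence of the stated hypotheses. One natural way to formalize it is to assume that at least one of the two ordered bases is a Haar-random rotation of the other, in which case $\E\bigl[(\vr^{(1)}_i \cdot \vr^{(2)}_j)^2\bigr] = 1/m$ and concentration gives the stated estimate up to $O(k/\sqrt{m})$ error; alternatively one can note that both bases approximately span the same ``structural'' subspace determined by the weight matrices (cf.\ Theorem~\ref{thm:gaussian_low_rank}), and argue that residual rotations within that subspace average out. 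Either route is a heuristic assumption rather than a theorem, which is why the statement is phrased with ``approximately.''
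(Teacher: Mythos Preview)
Your proposal is structurally identical to the paper's proof: identify the top-$k$ eigenspace of each approximated Hessian via the spectral-gap assumption as $\{\vr^{(i)}_j\otimes\vt^{(i)}_1\}_{j\le k}$, factor the overlap via the mixed-product identity into $(\vt_1^{(1)}\cdot\vt_1^{(2)})^2 \cdot \Overlap(\mV_k^{(1)},\mV_k^{(2)})$, and then argue that the latter overlap is approximately $k/m$. Your double-sum bookkeeping is in fact tidier than the paper's (which writes only a diagonal sum but lands at the same $k/m$).

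The one substantive difference is in how the genericity of the two $\E[\mM]$-eigenbases is justified. You appeal to a Haar-randomness model or to the shared structural subspace from Theorem~\ref{thm:gaussian_low_rank}; the paper instead invokes the \emph{neuron permutation symmetry} of hidden layers. Since the output neurons of layer $p$ can be arbitrarily permuted without changing the network's function, and such a permutation acts by permuting the coordinates of each $\vr_i$, two independently trained models have their $\vr^{(1)}_i$ and $\vr^{(2)}_j$ related by an unknown (effectively random) coordinate permutation, giving $\E[(\vr_i^{(1)}\cdot\vr_j^{(2)})^2]\approx 1/m$. This architecture-specific mechanism also explains a feature your argument does not: why the linear-growth pattern \emph{fails} for the final output layer, where neurons correspond to fixed class labels and cannot be permuted---a case the paper treats separately with overlap staying near $(\vt_1\cdot\vt_2)^2$ for all $k\le m$. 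Your Parseval-plus-generic-subset route reaches the same $k^2/m$ cleanly, but the permutation symmetry is the concrete reason the paper offers.
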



\begin{proofof}{\theoremref{thm:model_overlap}}
Let $\vh^{(2)}_i$ be the $i$-th eigenvector of the layer-wise Hessian for the first model $\mH_\Ls(\vw^{(p)})^{(1)}$, and $\vg_i$ be that of the second model $\mH_\Ls(\vw^{(p)})^{(2)}$. Consider the first model. By the Kronecker factorization approximation, since $\tau^{(1)}_m\gamma^{(1)}_1 > \tau^{(1)}_1\gamma^{(1)}_2$, the top $m$ eigenvalues of the layer-wise Hessian are $\gamma^{(1)}_1\tau^{(1)}_1,\cdots, \gamma^{(1)}_1\tau^{(1)}_m$. Consequently, for all $i \leq m$ we have $\vh_i \approx \vr^{(1)\T}_i\otimes \vt^{(1)}_1$.
Thus, for any $k \leq m$, we have its top $k$ eigenspace as $\mV_k^{(1)} \otimes \vt_1^{(1)}$, where $\mV^{(1)}_k\in\R^{m\times k}$ has column vectors $\vr_1^{(1)}, \ldots, \vr_k^{(1)}$.
Similarly, for the second model we have $\vh^{(2)}_i \approx \vr^{(2)}_i\otimes \vt^{(2)}_1$ and the top $k$ eigenspace as $\mV^{(2)}_k \otimes \vt^{(2)}_1$, where $\mV^{(2)}_k$ has column vectors $\vr^{(2)}_1, \ldots, \vr^{(2)}_k$.
The eigenspace overlap of the 2 models at dimension $k$ is thus

\begin{align}
    \begin{split}
        \Overlap\left(\mV^{(1)}_k \otimes \vt^{(1)}_1, \mV^{(2)}_k \otimes \vt^{(2)}_1\right) &=\frac{1}{k}{\left\|\mV_k^{(1)\T}\mV^{(2)}_k \otimes \vt^{(1)\T}_1\vt^{(2)}_1\right\|}^2_F\\
    &= {\left(\vt^{(1)}_1 \cdot \vt^{(2)}_1\right)}^2\Overlap\left(\mV^{(1)}_k, \mV^{(2)}_k\right).
    \end{split}
\label{eqn:appendix_model_overlap}
\end{align}

Note that for all $i\leq m$, $\vr^{(1)}_i, \vr^{(2)}_i \in \R^n$, which is the space corresponding to the neurons. Since for hidden layers, the output neurons (channels for convolutional layers) can be arbitrarily permuted to give equivalent models while changing eigenvectors. For $\vh_i \approx \vr_i\otimes \vt_1$, permuting neurons will permute entries in $\vr_i$. Thus, we can assume that for two models, $\vr^{(1)}_i$ and $\vr^{(2)}_i$ are not correlated and thus have an expected inner product of $\sqrt{1/m}$.

It follows from \definitionref{def:overlap} that 
\begin{equation}
    \E[\Overlap(\mV^{(1)}_k, \mV^{(2)}_k)] = \sum_{i=1}^k\E[{(\vr_i^{(1)}\cdot\vr_i^{(2)})}^2] = k(\frac{1}{m}) = \frac{k}{m}
\end{equation}
and thus the eigenspace overlap of at dimension $k$ would be approximately $\frac{k}{m}(\vt^{(1)}_1 \cdot \vt^{(2)}_1)^2$. This explains the peak at dimension $m$ and the linear growth before it.
\end{proofof}

From our results on autocorrelation matrices in \sectionref{sec:xxT} and \sectionref{sec:appendix_xxT}, we have $\hE[\vx]^{(1)}\approx \vt^{(1)}_1$ and $\hE[\vx]^{(2)}\approx \vt^{(2)}_1$ where $\hE$ is the normalized expectation. Hence when $k=m$, the overlap is approximately $(\hE[\vx]^{(1)} \cdot \hE[\vx]^{(2)})^2$. Since $\hE[\vx]^{(1)}$ and $\hE[\vx]^{(2)}$ are the identical for the input layers, the overlap is expected to be very high at dimension $m$ for input layers. For other hidden layers in a ReLU network, $\vx$ are output of ReLU and thus non-negative. Two non-negative vectors $\hE[\vx]^{(1)}$ and $\hE[\vx]^{(2)}$ still have relatively large dot product, which contributes to the high overlap peak.

\subsubsection{The Decreasing Overlap After Output Dimension}
\label{sec:app_ovlp_dec}
Consider the $(m+1)$-th eigenvector $\vh^{(1)}_{m+1}$ of the first model. Following the Kronecker factorization approximation and assumptions in \theoremref{thm:model_overlap}, we have $\vh^{(1)}_{m+1}\approx \vr^{(1)}_1\otimes \vt^{(1)}_2$. Since top $m$ eigenspace of the first model is approximately $\mI_m \otimes \vt^{(1)}_1$ and $\vt^{(1)}_2$ is orthogonal to $\vt^{(1)}_1$, the $\vh^{(1)}_{m+1}$ eigenvector will be orthogonal to the top $m$ eigenspace of the first model. It will also have low overlap with $\mI_m \otimes \vt^{(2)}_1$ since $(\hE[\vx]^{(1)} \cdot \hE[\vx]^{(2)})^2$ is large.

Moreover, since the remaining eigenvectors of the autocorrelation matrix no longer has the all positive property as the first eigenvector and structure of the convariance $\Sigma_\vx$ is directly associated with the ordering of the input neurons which are randomly permuted across different models, the overlap between other eigenvectors of the autocorrelation matrix across different models will be close to random, hence the overlap after the top $m$ dimension will decrease until the eigenspaces has sufficiently many basis vectors to make the random overlap large.

\subsubsection{The Output Layer}
Note that for the last layer satisfying the assumptions in \theoremref{thm:model_overlap}, the overlap will stay high before dimension $m$ and be approximately $(\vt_1 \cdot \vt_2)^2$ since the output neurons directly correspondence to classes, and hence neurons cannot be permuted.
In this case, the overlap will be approximately $(\vt_1 \cdot \vt_2)^2$ for all dimension $k\leq m$. This is consistent with our observations.


\begin{figure}[H]
    \centering
    \begin{subfigure}[b]{0.24\textwidth}
        \centering
        \captionsetup{justification=centering}
        \includegraphics[width=\textwidth]{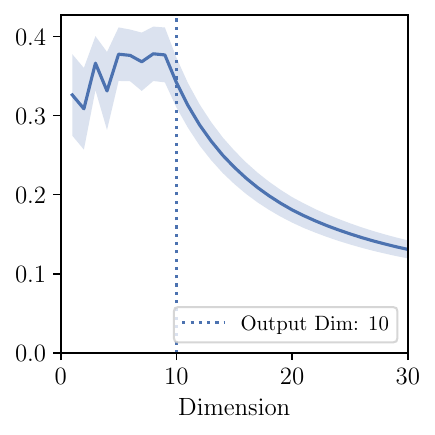}
        \caption{fc3:F-$200^2$\\(MNIST)}
        \label{fig:app_adexp_last_fc2}
    \end{subfigure}
    \begin{subfigure}[b]{0.24\textwidth}
        \centering
        \captionsetup{justification=centering}
        \includegraphics[width=\textwidth]{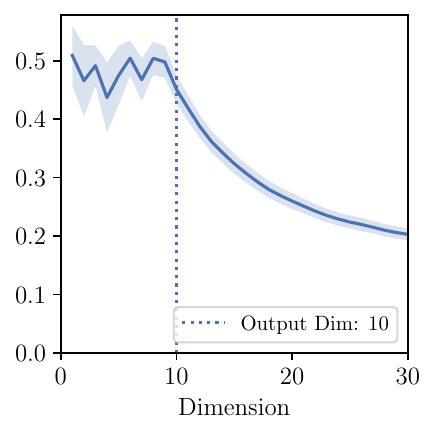}
        \caption{fc3:LeNet5\\(CIFAR10)}
        \label{fig:app_adexp_last_LeNet}
    \end{subfigure}
    \begin{subfigure}[b]{0.24\textwidth}
        \centering
        \captionsetup{justification=centering}
        \includegraphics[width=\textwidth]{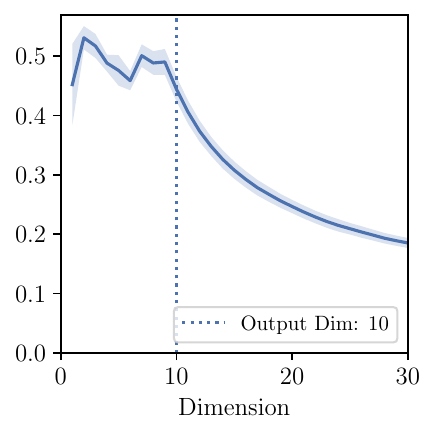}
        \caption{fc1:VGG11-W200\\(CIFAR10)}
        \label{fig:app_adexp_last_vgg}
    \end{subfigure}
    \begin{subfigure}[b]{0.24\textwidth}
        \centering
        \captionsetup{justification=centering}
        \includegraphics[width=\textwidth]{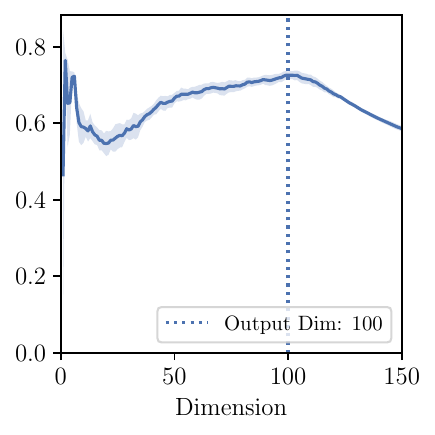}
        \caption{fc1:ResNet18-W64\\(CIFAR100)}
        \label{fig:app_adexp_last_resnet}
    \end{subfigure}
    \captionsetup{justification=centering}
    \caption{Top eigenspace overlap for the final fully connected layer.}
    \label{fig:app_adexp_last}
\end{figure}


\subsubsection{Explaining ``Failed Cases'' of Eigenspace Overlap}
\label{sec:appendix-failure}
As shown in \figureref{fig:app_adexp_failure_early} and \figureref{fig:app_adexp_failure_late}, the nontrivial top eigenspace overlap does not necessarily peak at the output dimension for all layers. Some layers has a low peak at very small dimensions and others has a peak at a larger dimension. With the more complete analysis provided above, we now proceed to explain these two phenomenons.
The major reason for such phenomenons is that the assumption of autocorrelation matrix being sufficiently close to rank 1 is not always satisfied. In particular, following the notations in \theoremref{thm:model_overlap}, for these exceptional layers we have $\tau_m\gamma_1 < \tau_1\gamma_2$.
We first consider the first phenomenon (early peak of low overlap) and take fc2:F-$200^2$ (MNIST) in as an example. Here \figureref{fig:app_adexp_fc2}(a) is identical to \figureref{fig:app_adexp_failure_early}(a), which displays the early peak around $m=10$.

\begin{figure}[H]
    \centering
    \begin{subfigure}[b]{0.24\textwidth}
        \centering
        \captionsetup{justification=centering}
        \includegraphics[width=\textwidth]{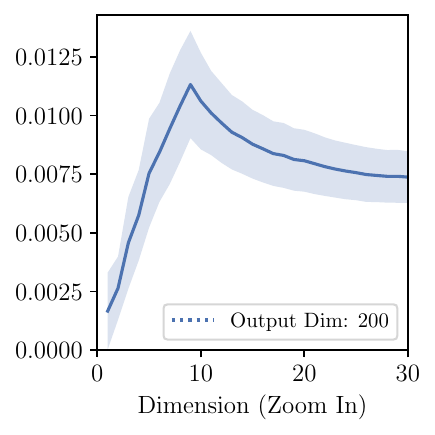}
        \caption{Eigenspace overlap (zoomed in)}
        \label{fig:app_adexp_fc2_ovlp}
    \end{subfigure}%
    \begin{subfigure}[b]{0.24\textwidth}
        \centering
        \captionsetup{justification=centering}
        \includegraphics[width=\textwidth]{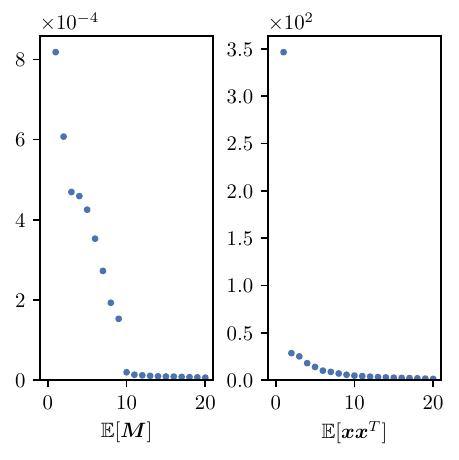}
        \caption{Eigenspectrum of $\E[\mM]$ and $\E[\vx\vx^\T]$}
        \label{fig:app_adexp_fc2_sig}
    \end{subfigure}%
    \begin{subfigure}[b]{0.24\textwidth}
        \centering
        \captionsetup{justification=centering}
        \includegraphics[width=\textwidth]{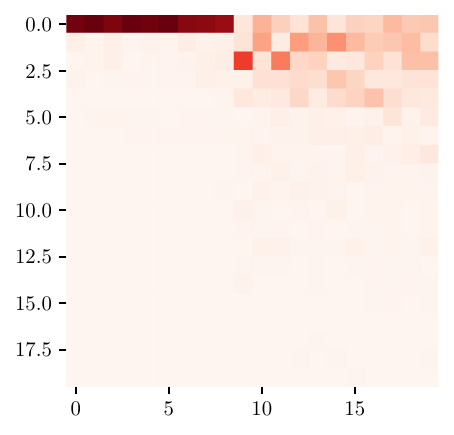}
        \caption{True Hessian with $\E[\vx\vx^\T]$}
        \label{fig:app_adexp_fc2_corr_real}
    \end{subfigure}
    \begin{subfigure}[b]{0.24\textwidth}
        \centering
        \captionsetup{justification=centering}
        \includegraphics[width=\textwidth]{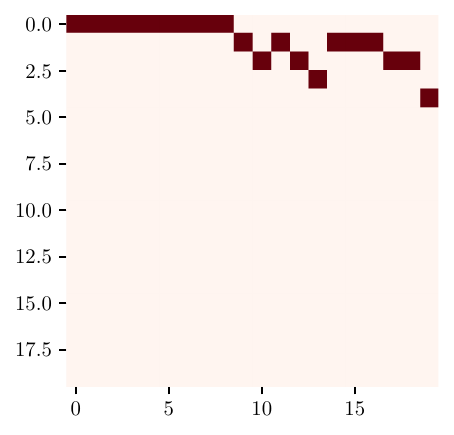}
        \caption{Approximated Hessian with $\E[\vx\vx^\T]$}
        \label{fig:app_adexp_fc2_corr_est}
    \end{subfigure}
    \captionsetup{justification=centering}
    \caption{Eigenspace overlap, eigenspectrum, and cropped (upper $20\times 20$ block)\\eigenvector correspondence matrices for fc2:F-$200^2$ (MNIST)}
    \vspace{-0.1in}
    \label{fig:app_adexp_fc2}
\end{figure}

\begin{figure}[H]
    \centering
    \begin{subfigure}[b]{0.24\textwidth}
        \centering
        \captionsetup{justification=centering}
        \includegraphics[width=\textwidth]{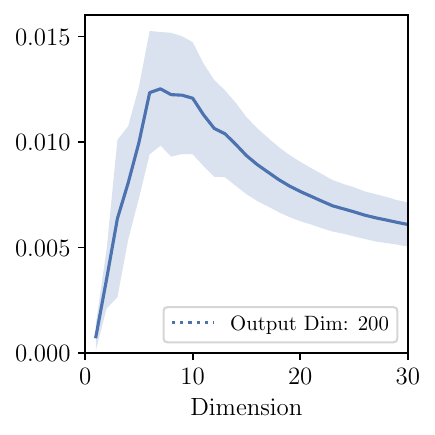}
        \caption{Eigenspace overlap (zoomed in)}
        \label{fig:app_adexp_vgg_ovlp}
    \end{subfigure}%
    \begin{subfigure}[b]{0.24\textwidth}
        \centering
        \captionsetup{justification=centering}
        \includegraphics[width=\textwidth]{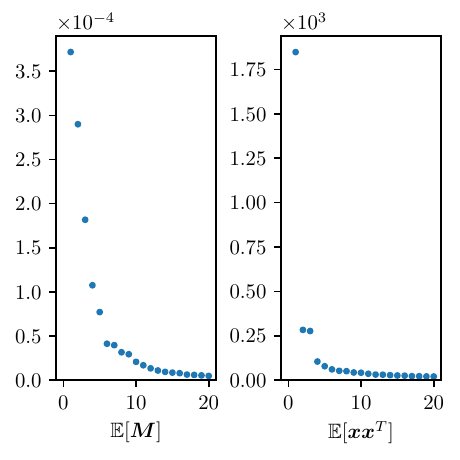}
        \caption{Eigenspectrum of $\E[\mM]$ and $\E[\vx\vx^\T]$}
        \label{fig:app_adexp_vgg_sig}
    \end{subfigure}%
    \begin{subfigure}[b]{0.24\textwidth}
        \centering
        \captionsetup{justification=centering}
        \includegraphics[width=\textwidth]{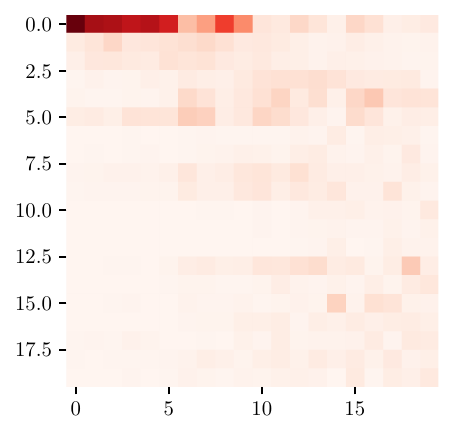}
        \caption{True Hessian with $\E[\vx\vx^\T]$}
        \label{fig:app_adexp_vgg_corr_real}
    \end{subfigure}
    \begin{subfigure}[b]{0.24\textwidth}
        \centering
        \captionsetup{justification=centering}
        \includegraphics[width=\textwidth]{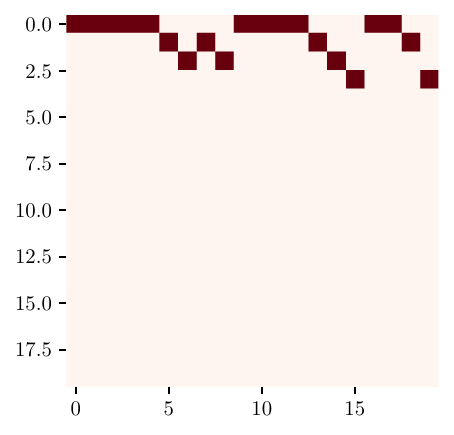}
        \caption{Approximated Hessian with $\E[\vx\vx^\T]$}
        \label{fig:app_adexp_vgg_corr_est}
    \end{subfigure}
    \captionsetup{justification=centering}
    \caption{Eigenspace overlap, eigenspectrum, and cropped (upper $50\times 50$ block)\\eigenvector correspondence matrices for conv2:VGG11-W200 (CIFAR10)}
    \vspace{-0.1in}
    \label{fig:app_adexp_vgg_fail}
\end{figure}

As shown in \figureref{fig:app_adexp_fc2}(b), the second eigenvalue of the auto correlation $\E[\vx\vx^\T]$ is as large as approximately 1/10 of the first eigenvalue.
With the output Hessian have $c-1=9$ significant large eigenvalues as described in \label{sec:emp_outlier}, it has $\tau_{10}\gamma_1 < \tau_1\gamma_2$.
Thus through the Kronecker factorization approximation, the top $m$ dimensional eigenspace is no longer simply $\mI_m\otimes \hE[\vx]$, but a subset of top eigenvectors of the output Hessian Kroneckered with a subset of top eigenvectors of $\E[\vx\vx^\T]$ as reflected in \figureref{fig:app_adexp_fc2}(d). This ``mixture'' of Kronecker product is moreover verified in \figureref{fig:app_adexp_fc2}(c).

As reflected by the first row of \figureref{fig:app_adexp_fc2}(c) and \figureref{fig:app_adexp_fc2}(d), for $i\leq 9$ we have $\vh_i\approx \vr_i\otimes\hE[\vx]$, which falls in the regime of \theoremref{thm:model_overlap}. Hence we are seeing an linearly growing pattern of the overlap for dimension less than 10 and reaches a mean overlap of around 0.012 by dimension 9. If following this linear trend, the overlap would be close to 0.25 by the output dimension of 200. However, since the 10-th eigenvalue of the output Hessian is significantly smaller, little of the 10-19 dimensional eigenspace were contributed by $\hE[\vx]$, hence the overlap of dimension larger than 10 falls into the regime discussed in \sectionref{sec:app_ovlp_dec}, for which we see a sharp decrease of overlap after dimension 9.
Note that this example shows that Kronecker factorization can be used to predict when our conditions in \theoremref{thm:model_overlap} fails and also predict the condition can be satisfied up to which dimension. As shown in \figureref{fig:app_adexp_vgg_fail}, similar explanation also applies to convolutional layers in larger networks.

We then consider the second phenomenon (delayed peak) and take conv2:VGG11-W200 (CIFAR10) in as an example. Here \figureref{fig:app_adexp_vgg2}(a) is identical to \figureref{fig:app_adexp_failure_late}(d), which has the overlap peak later than the output dimension 200. In this case, the second eigenvalue of the auto correlation matrix is still not negligible compared to the top eigenvalue. What differentiate this case from the first phenomenon is that the eigenvalues of the output Hessian no longer has a significant peak \--- instead it has a heavy tail which is necessary for high overlap.

Towards dimension $m$ there gradually exhibits higher correspondence to later eigenvectors of the input autocorrelation matrix and hence less correspondence to $\hE[\vx]$. This eventually results in the delayed and flattened peak.

\begin{figure}[H]
    \centering
    \begin{subfigure}[b]{0.24\textwidth}
        \centering
        \captionsetup{justification=centering}
        \includegraphics[width=\textwidth]{Appendix_Figures/Overlap_large_model/FailCases/late/CIFAR10_VGG11W200_fxlr0.01_conv2.pdf}
        \caption{Eigenspace overlap (zoomed in)}
        \label{fig:app_adexp_vgg2_ovlp}
    \end{subfigure}%
    \begin{subfigure}[b]{0.24\textwidth}
        \centering
        \captionsetup{justification=centering}
        \includegraphics[width=\textwidth]{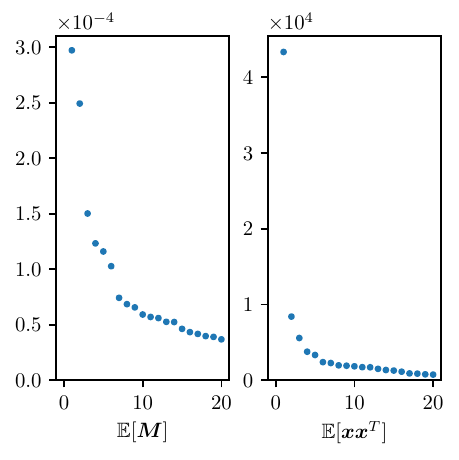}
        \caption{Eigenspectrum of $\E[\mM]$ and $\E[\vx\vx^\T]$}
        \label{fig:app_adexp_vgg2_sig}
    \end{subfigure}%
    \begin{subfigure}[b]{0.24\textwidth}
        \centering
        \captionsetup{justification=centering}
        \includegraphics[width=\textwidth]{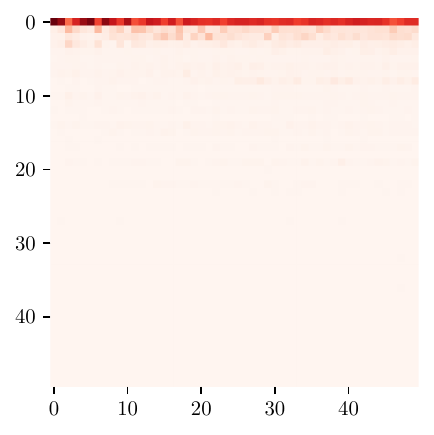}
        \caption{True Hessian with $\E[\vx\vx^\T]$}
        \label{fig:app_adexp_vgg2_corr_real}
    \end{subfigure}
    \begin{subfigure}[b]{0.24\textwidth}
        \centering
        \captionsetup{justification=centering}
        \includegraphics[width=\textwidth]{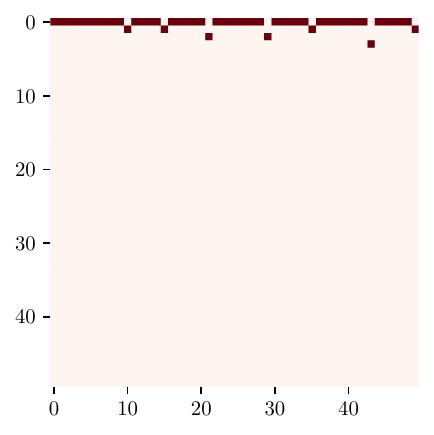}
        \caption{Approximated Hessian with $\E[\vx\vx^\T]$}
        \label{fig:app_adexp_vgg2_corr_est}
    \end{subfigure}\\
    \vspace{0.15in}
    \begin{subfigure}[b]{0.49\textwidth}
        \centering
        \captionsetup{justification=centering}
        \includegraphics[width=\textwidth]{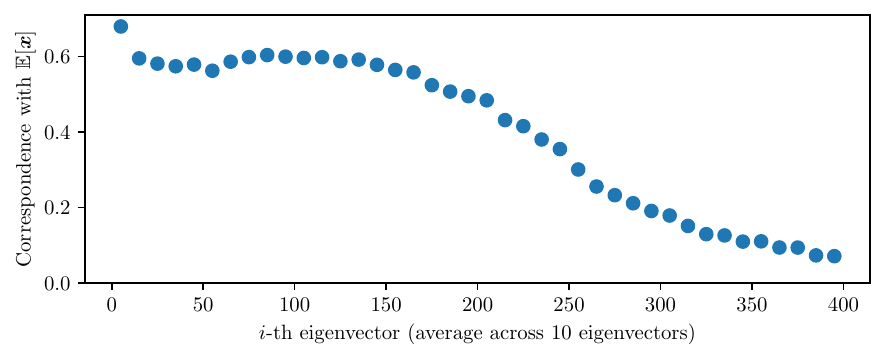}
        \caption{First Row of Correspondence Matrix of True Hessian with $\E[\vx\vx^\T]$}
        \label{fig:app_adexp_vgg2_corr_true_firstline}
    \end{subfigure}
    \begin{subfigure}[b]{0.49\textwidth}
        \centering
        \captionsetup{justification=centering}
        \includegraphics[width=\textwidth]{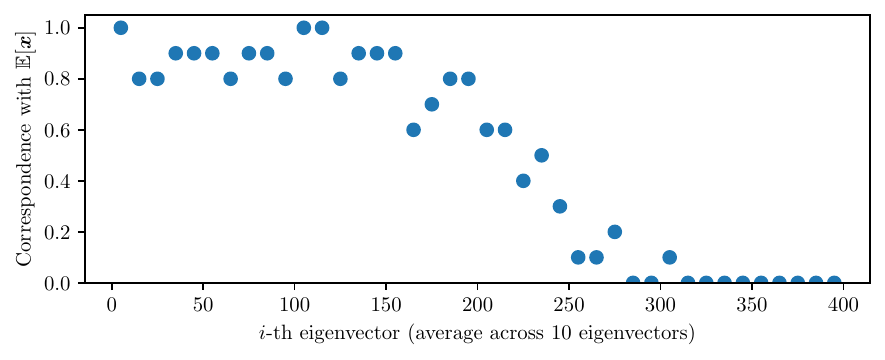}
        \caption{First Row of Correspondence Matrix of Approximated Hessian with $\E[\vx\vx^\T]$}
        \label{fig:app_adexp_vgg2_corr_est_firstline}
    \end{subfigure}
    \captionsetup{justification=centering}
    \caption{Eigenspace overlap, eigenspectrum, and cropped (upper $50\times 50$ block)\\eigenvector correspondence matrices for conv2:VGG11-W200 (CIFAR10)}
    \vspace{-0.1in}
    \label{fig:app_adexp_vgg2}
\end{figure}

Since the full correspondence matrices are too large to be visualized, we plotted their first rows up to 400 dimensions in \figureref{fig:app_adexp_vgg2}(e) and \figureref{fig:app_adexp_vgg2}(f), in which each dot represents the average of correlation with $\hE[\vx]$ for the 10 eigenvector nearby. From these figures it is straightforward to see the gradual decreasing correlation with $\hE[\vx].$

\subsection{Batch Normalization and Zero-mean Input}
\label{sec:appendix_batchnorm}
In this section, we show the results on networks with using Batch normalization (BN) \citep{ioffe2015batch}. For layers after BN, we have $\E[\vx]\approx 0$ so that $\E[\vx]\E[\vx]^\T$ no longer dominates $\mSigma_\vx$ and the low rank structure of $\E[\vx\vx^\T]$ should disappear. Thus, we can further expect that the overlap between top eigenspace of layer-wise Hessian among different models will not have a peak.

\tableref{tab:appendix_bn_xxT} shows the same experiments done in \tableref{tab:appendix_xxT_spec_fc}. The values for each network are the average of 3 different models. It is clear that the high inner product and large spectral ratio both do not hold here, except for the first layer where there is no normalization applied. Note that we had channel-wise normalization (zero-mean for each channel but not zero-mean for $\vx$) for conv1 in LeNet5 so that the spectral ratio is also small.
\begin{table}[ht]
\small
    \centering
    \caption{Structure of $\E[\vx\vx^\T]$ for BN networks}
    \vskip 0.1in
    \begin{center}
    \begin{small}
\begin{tabular}{llccccccc}
\toprule
        &              &       & \multicolumn{3}{c}{$(v_1^\T\hE[\vx])^2$} & \multicolumn{3}{c}{$\lambda_1/\lambda_2$} \\
Dataset & Network      & \# fc & mean        & min         & max         & mean         & min          & max         \\
\midrule
MNIST   & F-$200^2$-BN & 2     & 0.062       & 0.001       & 0.260       & 1.16         & 1.04         & 1.30        \\
        & F-$600^2$-BN & 2     & 0.026       & 0.000       & 0.063       & 1.13         & 1.02         & 1.26        \\
        & F-$600^4$-BN & 4     & 0.027       & 0.000       & 0.146       & 1.11         & 1.03         & 1.19        \\
        \midrule
CIFAR10 & LeNet5-BN & 3     & 0.210       & 0.001       & 0.803       & 1.54         & 1.20         & 1.89        \\
        \bottomrule
\end{tabular}
    \end{small}
    \end{center}
  \label{tab:appendix_bn_xxT}%
\end{table}

\figureref{fig:app_exp_bn_overlap}(a) shows that $\E[\vx\vx^\T]$ is no longer close to rank 1 when having BN. This is as expected. However, $\E[\vx\vx^\T]$ still has a few large eigenvalues.

\figureref{fig:app_exp_bn_overlap}(b) shows the eigenvector correspondence matrix of True Hessian with $\E[\vx\vx^\T]$ for fc1:LeNet5. Because $\E[\vx\vx^\T]$ is no longer close to rank 1, only very few eigenvectors of the layer-wise Hessian will have high correspondence with the top eigenvector of $\E[\vx\vx^\T]$, as expected. This directly leads to the disappearance of peak in top eigenspace overlap of different models, as shown in \figureref{fig:app_exp_bn_overlap}. The peak still exists in conv1 because BN is not applied to the input.


\begin{figure}[ht]
    \centering
\begin{subfigure}[b]{0.5\textwidth}
    \includegraphics[width=\textwidth]{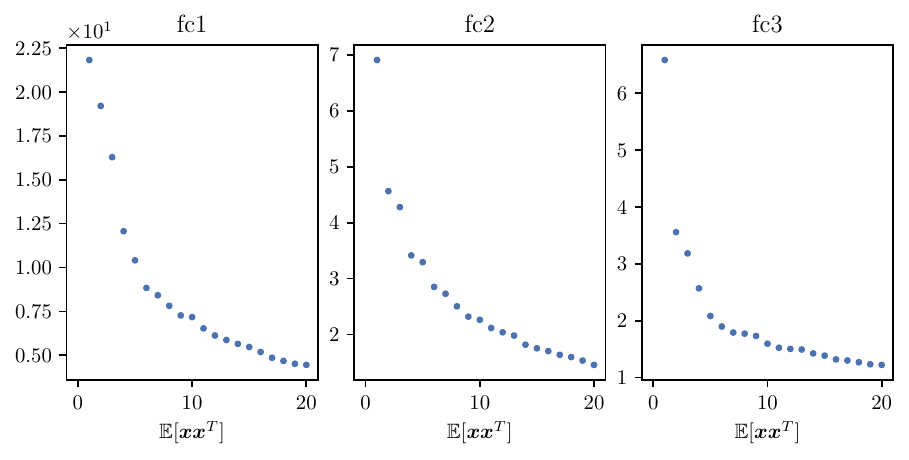}
    \caption{Eigenspectrum for $\E[\vx\vx^\T]$}
    \label{fig:app_exp_bn_xxT_fc_eigenspec}
\end{subfigure}%
\begin{subfigure}[b]{0.25\textwidth}
    \includegraphics[width=\textwidth]{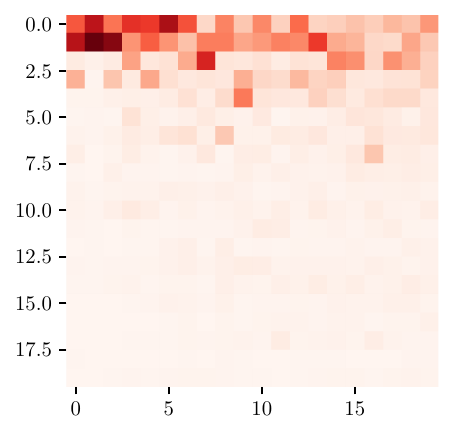}
    \caption{True Hessian with\\ $\E[\vx\vx^\T]$ (fc1:LeNet5-BN)}
    \label{fig:app_exp_bn_corr_true}
\end{subfigure}%
\begin{subfigure}[b]{0.25\textwidth}
    \includegraphics[width=\textwidth]{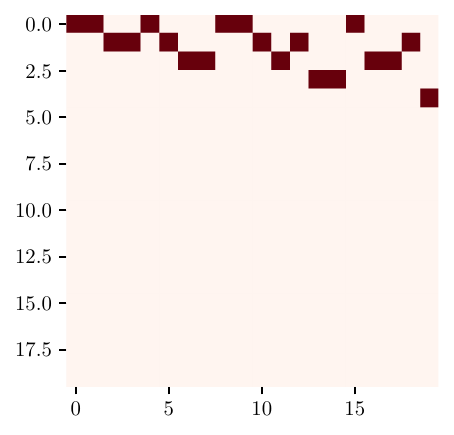}
    \caption{Approx Hessian with\\ $\E[\vx\vx^\T]$ (fc1:LeNet5-BN)}
    \label{fig:app_exp_bn_corr_est}
\end{subfigure}
\label{fig:app_exp_bn_corr}
\caption{Eigenspectrum and Eigenvector correspondence matrices with $\E[\vx\vx^\T]$ for LeNet5-BN.}
\end{figure}
\begin{figure}[ht]
    \centering
    \includegraphics[width=\textwidth]{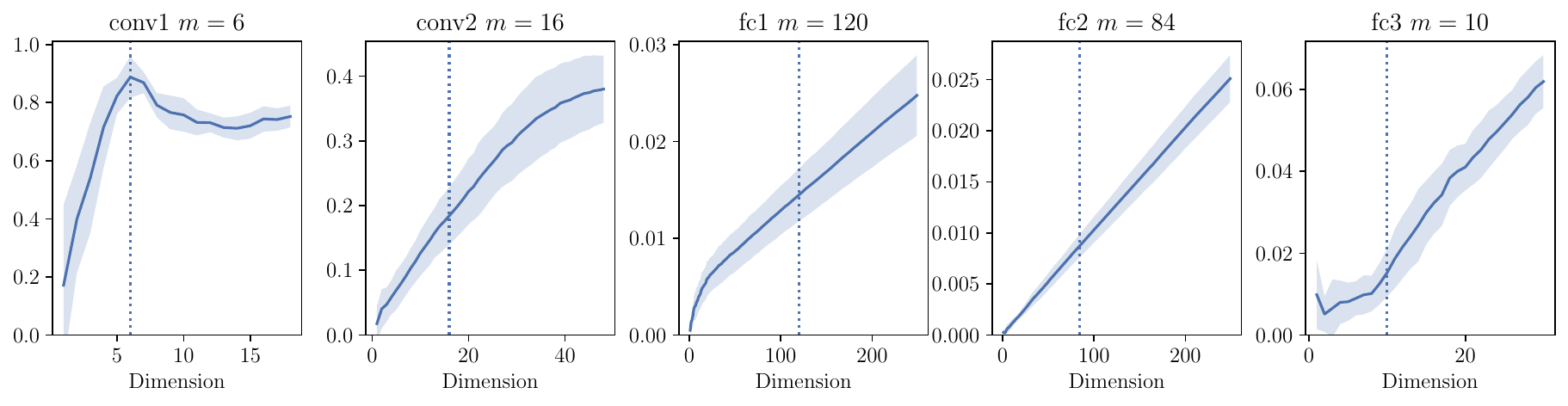}
    \caption{Eigenspace overlap of different models of LeNet5-BN.}
    \label{fig:app_exp_bn_overlap}
\end{figure}

Comparing \figureref{fig:app_exp_bn_overlap}(b) and \figureref{fig:app_exp_bn_overlap}(c), we can see that the Kronecker factorization still gives a reasonable approximation for the eigenvector correspondence matrix with $\E[\vx\vx^\T]$, although worse than the cases without BN (\figureref{fig:Corr_fc}).

\begin{figure}[ht]
\centering
\begin{subfigure}[b]{0.35\textwidth}
        \captionsetup{justification=centering}
    \includegraphics[width=\textwidth]{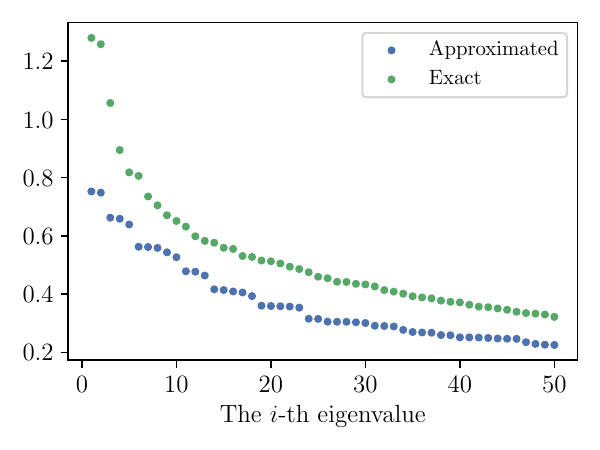}
    \caption{Top eigenvalues of approximated \\and exact layer-wise Hessian for fc2.}
    \label{fig:app_exp_bn_approx_eigenvalues}
\end{subfigure}%
\begin{subfigure}[b]{0.35\textwidth}
        \captionsetup{justification=centering}
    \includegraphics[width=\textwidth]{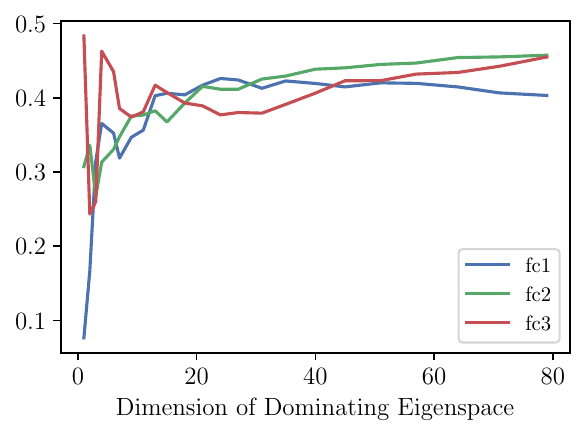}
    \caption{Top eigenspace overlap between \\approximated and true layer-wise Hessian.}
    \label{fig:app_exp_bn_approx_overlap}
\end{subfigure}
\caption{Comparison between the true and approximated layer-wise Hessians for LeNet5-BN.}
\label{fig:app_exp_bn_approx}
\end{figure}
\figureref{fig:app_exp_bn_approx} compare the eigenvalues and top eigenspaces of the approximated Hessian and the true Hessian for LeNet5 with BN. The approximation using Kronecker factorization is also worse than the case without BN (\figureref{fig:eigeninfo_approx}). However, the approximation still gives meaningful information as the overlap of top eigenspace is still highly nontrivial.


\newpage
\subsection{Outliers in Hessian Eigenspectrum}
\label{sec:appendix_M_struct}
One characteristic of Hessian that has been mentioned by many is the outliers in the spectrum of eigenvalues. \citet{sagun2017empirical} suggests that there is a gap in Hessian eigenvalue distribution around the number of classes $c$ in most cases, where $c=10$ in our case. A popular theory to explain the gap is the class / logit clustering of the logit gradients \citep{fort2019emergent, papyan2019measurements, papyan2020traces}. 
Note that these explanations can be consistent with our heuristic formula for the top eigenspace of output Hessian at initialization\--- in the two-layer setting we considered the logit gradients are indeed clustered.

In the layer-wise setting, the clustering claim can be formalized as follows: For each class $k\in[c]$ and logit entry $l\in[c]$, with $\mQ$ be defined as in  \equationref{eqn:app_qx}, and $(\rvx, \ry)$ as the input, label pair, let
\begin{equation}
    \mDelta_{i,j} = \E\left[\left.\mQ_{\rvx}\frac{\partial \vz_{\rvx}}{\partial \vw^{(p)}_j}\right\vert\ervy=i\right].
\end{equation}
Then at the initialization, for each logit entry $j$, $\{\mDelta_{i,j}\}_{i\in[c]}$ is clustered around the ``logit center'' $\hat{\mDelta}_j \triangleq \E_{i\in[c]}[\mDelta_{i,j}]$; at the minima, for each class $i$, $\{\mDelta_{i,j}\}_{j\in[c]}$ is clustered around the ``class center''  $\hat{\mDelta}_i \triangleq \E_{j\in[c]}[\mDelta_{i,j}]$. With the decoupling conjectures, we may also consider similar claims for output Hessians, where
\begin{equation}
    \mGamma_{i,j} = \E\left[\left.\mQ_{\rvx}\frac{\partial \vz_{\rvx}}{\partial {\vz^{(p)}_{\rvx}}_j}\right\vert\ervy=i\right].
\end{equation}
A natural extension of the clustering phenomenon on output Hessians is then as follows: At the initialization, for each logit entry $j$, $ \{\mGamma_{i,j}\}_{i\in[c]}$ is clustered around $\hat{\mGamma}_j \triangleq\E_{i\in[c]}[\mGamma_{i,j}]$; at the minima, for each class $i$, $\{\mGamma_{i,j}\}_{j\in[c]}$ is clustered around $\hat{\mGamma}_i \triangleq \E_{j\in[c]}[\mGamma_{i,j}]$.
Note that we have the layer-wise Hessian and layer-wise output Hessian satisfying
\begin{equation}
    \mH_\Ls(\vw^{(p)}) = \mathop{\E}_{i,j\in[c]}[\mDelta_{i,j}^\T\mDelta_{i,j}],\ \  \mM^{(p)} = \mathop{\E}_{i,j\in[c]}[\mGamma_{i,j}^\T\mGamma_{i,j}].
\end{equation}

\paragraph{Low-rank Hessian at Random Initialization and Logit Gradient Clustering}\text{} \\
We first briefly recapture our explanation on the low-rankness of Hessian at random initialization. In \sectionref{sec:theoretical} and \sectionref{sec:detailed-proof}, we have shown that for a two layer ReLU network with Gaussian random initialization and Gaussian random input, the output hessian of the first layer $\mM^{(1)}$ is approximately $\frac{1}{4}\mW^{(2)T}\mA\mW^{(2)}$. We then heuristically extend this approximation to a randomly initialized $L$-layer network, that with $\mS^{(p)} = \mW^{(L)}\mW^{(L-1)}\cdots\mW^{(p+1)}$, the output Hessian of the $p$-th layer $\mH^{(p)}$ can be approximated by $\Tilde{\mM}^{(p)}$ where
\begin{equation}
    \label{eqn:M_SAS_Approx}
    \Tilde{\mM}^{(p)} \triangleq \frac{1}{4^{L-p}}\mS^{(p)T}\mA\mS^{(P)}.
\end{equation}
Since $\mA$ is strictly rank $c-1$ with null space of the all-one vector, $\mH^{(p)}$ is strictly rank $c-1$. Thus $\mH^{(p)}$ is approximately rank $c-1$, and so is the corresponding layerwise Hessian according to the decoupling conjecture.

Now we discuss the connection between our analysis with the theory of logit gradient clustering. As previously observed by \citet{papyan2019measurements}, for each logit entry $l$, $\{\mDelta_{i,j}\}_{l\in[c]}$ are clustered around the logit gradients $\E_{l\in[c]}[\mDelta_{i,j}]$. Similar clustering effects for $\{\mGamma_{i,j}\}_{l\in[c]}$ were also empirically observed by our experiments. Moreover, through the approximation above and the decoupling conjecture, for each logit entry $j$, the cluster centers $\hat{\mGamma}_j$ and $\hat{\mDelta}_j$ can be approximated by
\begin{equation}
\begin{split}
    \hat{\mGamma}_j \approx \Breve{\mGamma}_j &\triangleq (\mS^\T\mQ)_j\\ \hat{\mDelta}_j \approx \Breve{\mDelta}_j &\triangleq ((\E[\vx]\otimes\mS^\T)\E[\mQ])_j.
\end{split}
\end{equation}

Following \citet{papyan2019measurements}, we used t-SNE \citep{vanDerMaaten2008} to visualize the logit gradients. As we see in \figureref{fig:tsne1}, the ``logit centers'' of the clustering directly corresponds to the approximated dominating eigenvectors of the Hessian, which is consistent with our analysis.

\paragraph{Gradient Clustering at Minima}

Currently our theory does not provide an explanation to the low rank structure of Hessian at the minima. However we have observed that the class clustering of logit gradients does not universally apply to all models at the minima, even when the models have around $c$ significant large eigenvalues. As shown in \figureref{fig:tsne2}, the class clustering is very weak but there are still around $c$ significant large eigenvalues. We conjecture that the class clustering of logit gradients may be a sufficient but not necessary condition for the Hessian to be low rank at minima.

\begin{figure}[h]
    \centering
    \begin{subfigure}[b]{0.2\textwidth}
        \centering
        \captionsetup{justification=centering}
        \includegraphics[width=\textwidth]{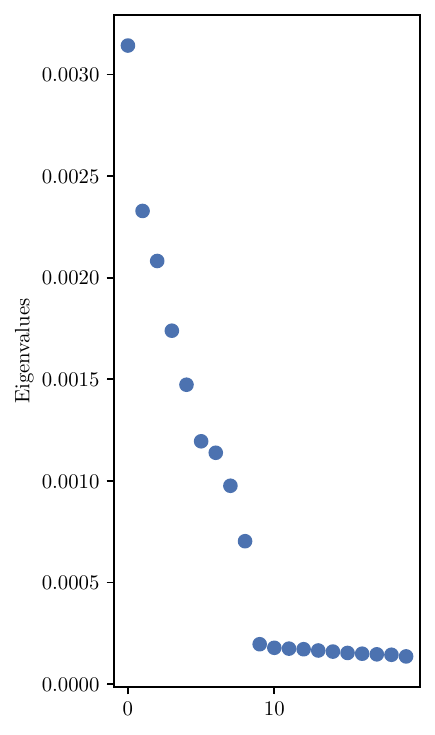}
        \caption{Eigenspectrum of $\E[\mM]$ at initialization.}
        \label{fig:app_tsne_h_init}
    \end{subfigure}%
    \begin{subfigure}[b]{0.4\textwidth}
        \centering
        \captionsetup{justification=centering}
        \includegraphics[width=\textwidth]{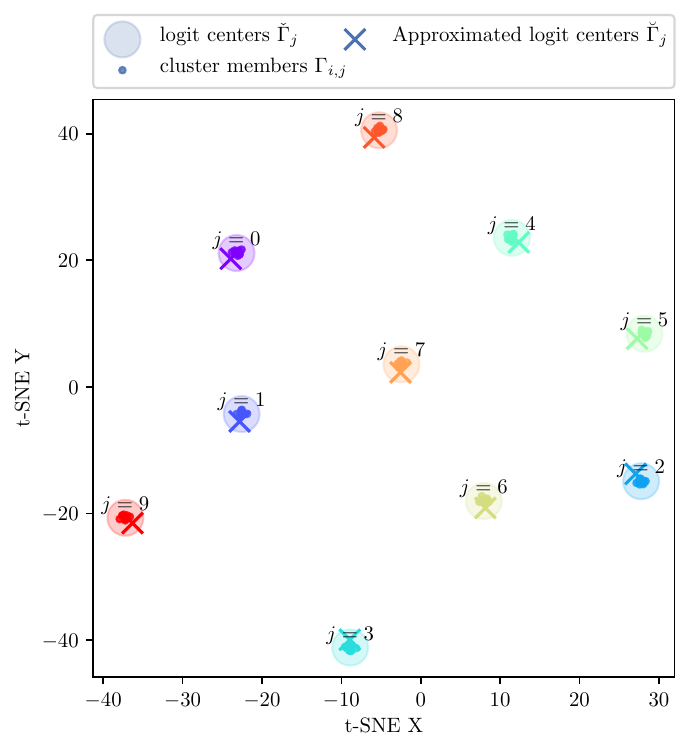}
        \caption{Clustering of $\mGamma$ with logits at initialization.}
        \label{fig:app_tsne_h_min}
    \end{subfigure}%
    \begin{subfigure}[b]{0.4\textwidth}
        \centering
        \captionsetup{justification=centering}
        \includegraphics[width=\textwidth]{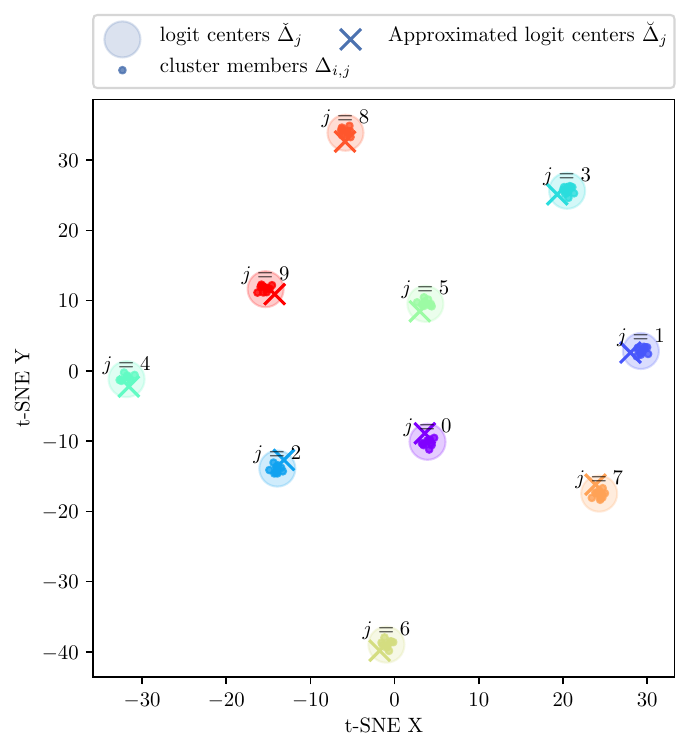}
        \caption{Clustering of $\mDelta$ with logits at initialization.}
        \label{fig:app_tsne_m_init}
    \end{subfigure}%
    \captionsetup{justification=centering}
    \caption{Logit clustering behavior of $\mDelta$ and $\mGamma$ at initialization (fc1:T-$200^2$)}
    \label{fig:tsne1}
\end{figure}


\begin{figure}[H]
    \centering
    \begin{subfigure}[b]{0.2\textwidth}
        \centering
        \captionsetup{justification=centering}
        \includegraphics[width=\textwidth]{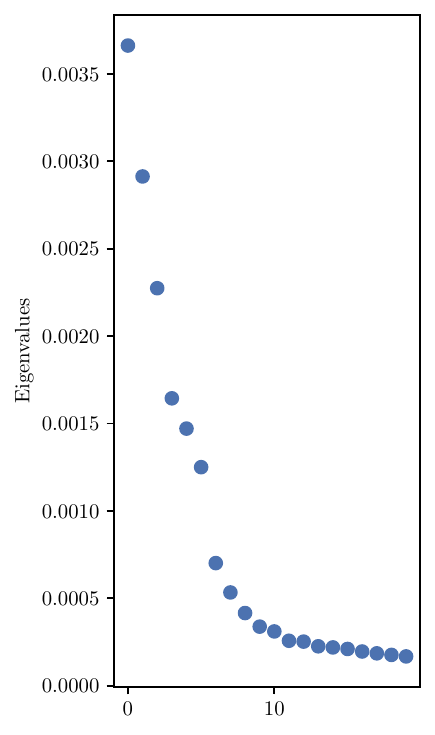}
        \caption{Eigenspectrum of $\E[\mM]$ at minimum.}
        \label{fig:app_tsne_h_init}
    \end{subfigure}%
    \begin{subfigure}[b]{0.4\textwidth}
        \centering
        \captionsetup{justification=centering}
        \includegraphics[width=\textwidth]{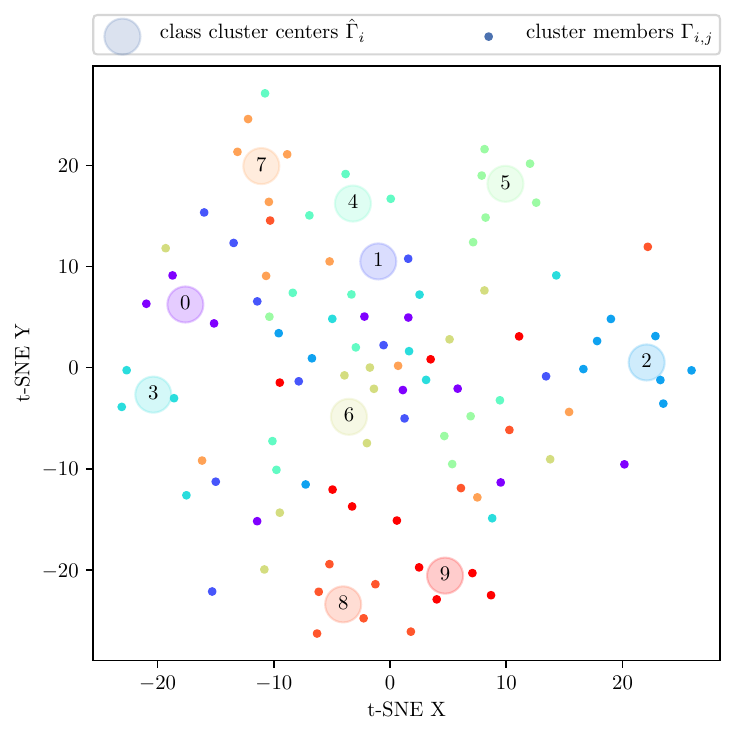}
        \caption{Clustering of $\mGamma$ with class at minimum.}
        \label{fig:app_tsne_h_min}
    \end{subfigure}%
    \begin{subfigure}[b]{0.4\textwidth}
        \centering
        \captionsetup{justification=centering}
        \includegraphics[width=\textwidth]{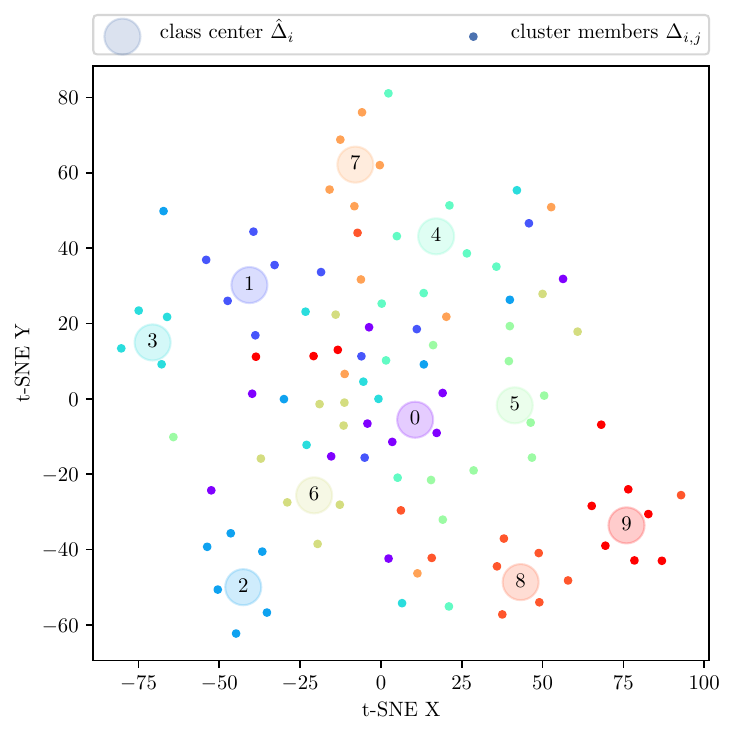}
        \caption{Clustering of $\mDelta$ with class at minimum.}
        \label{fig:app_tsne_m_init}
    \end{subfigure}%
    \captionsetup{justification=centering}
    \caption{Class clustering behavior of $\mDelta$ and $\mGamma$ at minimum. (fc1:T-$200^2$)}
    \label{fig:tsne2}
\end{figure}

\newpage
\section{Computing PAC-Bayes Bounds with Hessian Approximation}
\label{sec:appendix_pac}
Given a model parameterized with $\theta$ and an input-label pair $(\vx,\vy)\in\R^d\times \R^c$, the classification error of $\theta$ over the input sample $\vx$ is $\breve{l}(\theta, \vx) := \1[\arg\max f_\theta(\vx) = \arg\max\vy].$ With the underlying data distribution $D$ and training set $S$ i.i.d. sampled from $D$, we define
\begin{equation}
e(\theta):=\E_{(\vx, \vy)\sim D}[\breve{l}(\theta,\vx)],\qquad \hat{e}(\theta):=\frac{1}{N}\sum_{i=1}^N[\breve{l}(\theta,\vx_i)]
\end{equation}
as the expected and empirical classification error of $\theta$, respectively.
We define the measurable hypothesis space of parameters $\gH:= \R^P$.
For any probabilistic measure $P$ in $\gH$, let $e(P) = \E_{\theta\sim P}e(\theta)$, $\hat{e}(P) = \E_{\theta\sim P}\hat{e}(\theta)$, and $\breve{e}(P) = \E_{\theta\sim P}\mathcal{L}(\theta)$. Here $\breve{e}(P)$ serves as a differentiable convex surrogate of $\hat{e}(P).$

\begin{theorem}[Pac-Bayes Bound]
\citep{mcallester1999some}\citep{langford2001bounds}
For any prior distribution $P$ in $\gH$ that is chosen independently from the training set $S$, and any posterior distribution $Q$ in $\gH$ whose choice may inference $S$, with probability $1-\delta$,
\begin{equation}
    \label{eqn:appendix_pac_bayes_inf}
    \KL\left(\hat{e}(Q)\Vert e(Q)\right)\leq \frac{\KL(Q\Vert P) + \log\frac{|S|}{\delta}}{|S|-1}.
\end{equation}
\end{theorem}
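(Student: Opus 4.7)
\begin{proofsketch}
The plan is to follow the standard McAllester/Langford/Maurer recipe, combining a moment generating function bound for Bernoulli KL divergences with the Donsker--Varadhan change of measure inequality, and finishing with Jensen's inequality.

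First I would observe that for each fixed $\theta$, the quantity $N\hat{e}(\theta)$ is a sum of $N$ i.i.d. Bernoulli random variables with mean $e(\theta)$, since $\breve{l}(\theta,\vx)\in\{0,1\}$. A classical inequality of Maurer then gives
\begin{equation*}
    \mathbb{E}_S\left[\exp\bigl((N-1)\,\KL(\hat{e}(\theta)\Vert e(\theta))\bigr)\right]\;\leq\;N,
\end{equation*}
uniformly in $\theta$. This is the main technical ingredient: a sharp MGF bound on the binary KL between the empirical and true mean of a Bernoulli. Since $P$ is chosen independently of $S$, Fubini lets us swap expectations and obtain $\mathbb{E}_{\theta\sim P}\mathbb{E}_S[\exp((N-1)\KL(\hat{e}(\theta)\Vert e(\theta)))]\leq N$. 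A standard Markov step then yields, with probability $1-\delta$ over $S$, the deterministic statement
\begin{equation*}
    \mathbb{E}_{\theta\sim P}\left[\exp\bigl((N-1)\KL(\hat{e}(\theta)\Vert e(\theta))\bigr)\right]\;\leq\;\frac{N}{\delta}.
\end{equation*}

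Next, I would apply the Donsker--Varadhan variational identity: for any measurable $\phi$ and any posterior $Q$ absolutely continuous with respect to $P$,
\begin{equation*}
    \mathbb{E}_{\theta\sim Q}[\phi(\theta)]\;\leq\;\KL(Q\Vert P)+\log\mathbb{E}_{\theta\sim P}[e^{\phi(\theta)}].
\end{equation*}
Plugging in $\phi(\theta)=(N-1)\KL(\hat{e}(\theta)\Vert e(\theta))$ and combining with the previous display gives, with probability $1-\delta$,
\begin{equation*}
    (N-1)\,\mathbb{E}_{\theta\sim Q}\KL(\hat{e}(\theta)\Vert e(\theta))\;\leq\;\KL(Q\Vert P)+\log\frac{N}{\delta}.
\end{equation*}

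Finally, I would invoke the joint convexity of $\KL(\cdot\Vert\cdot)$ on pairs of Bernoulli distributions (equivalently, Jensen applied to the binary KL seen as a function of two scalar means in $[0,1]$) to pass the expectation inside:
\begin{equation*}
    \KL(\hat{e}(Q)\Vert e(Q))\;=\;\KL\bigl(\mathbb{E}_{\theta\sim Q}\hat{e}(\theta)\,\Vert\,\mathbb{E}_{\theta\sim Q}e(\theta)\bigr)\;\leq\;\mathbb{E}_{\theta\sim Q}\KL(\hat{e}(\theta)\Vert e(\theta)).
\end{equation*}
Dividing the previous inequality by $N-1=|S|-1$ then produces \cref{eqn:appendix_pac_bayes_inf}. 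I expect the main obstacle to be citing (and if desired, reproving) Maurer's MGF bound cleanly, since everything after that is a mechanical application of Donsker--Varadhan, Markov, and Jensen; a secondary subtlety is verifying that $Q\ll P$ (or handling the case $\KL(Q\Vert P)=\infty$ trivially) so that the change of measure step is valid.
\end{proofsketch}
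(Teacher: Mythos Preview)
Your proof sketch is correct and follows the standard Maurer--Langford argument for the PAC-Bayes bound. However, the paper does not prove this theorem at all: it is stated purely as a cited result from \citet{mcallester1999some} and \citet{langford2001bounds}, and is used as a black box to motivate the optimization objective in \cref{sec:appendix_pac}. So there is no ``paper's own proof'' to compare against; your sketch would constitute a self-contained proof that the paper chose to omit.
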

Fix some constant $b, c \geq 0$ and $\theta_0\in\gH$ as a random initialization, \citet{dziugaite2017computing} shows that when setting $Q = \gN(\vw, \diag(\vs))$, $P = \gN(\theta_0, \lambda\mI_P)$, where $\vw, \vs\in\gH$ and $\lambda = c\exp{(-j/ b)}$ for some $j\in\mathbb{N}$, and solve the optimization problem
\begin{equation}
    \min_{\vw,\vs,\lambda}\breve{e}(Q) + \sqrt{\frac{\KL(Q\Vert P) + \log\frac{|S|}{\delta}}{2(|S|-1)},}
\end{equation} with initialization $\vw = \theta$, $\vs = \theta^2$,
one can achieved a nonvacous PAC-Bayes bound by \equationref{eqn:appendix_pac_bayes_inf}.

In order to avoid discrete optimization for $j\in \N$, \citet{dziugaite2017computing} uses the $\BRE$ term to replace the bound in \tableref{eqn:appendix_pac_bayes_inf}. The $\BRE$ term is defined as
\begin{equation}
    \BRE(\vw,\vs,\lambda; \delta) = \frac{\KL(P\Vert Q)+2\log(b\log \frac{c}{\lambda})+\log \frac{\pi^2 |S|}{6\delta}} {|S|-1},
\end{equation}
where $Q = \gN(\vw, \diag(\vs))$, $P = \gN(\theta_0, \lambda\mI_P)$.
The optimization goal actually used in the implementation is thus
\begin{equation}
    \min_{\vw \in \R^P,\vs\in \R^P_+,\lambda\in (0,c)}\breve{e}(Q) + \sqrt{\frac{1}{2}\BRE(\vw,\vs,\lambda; \delta)}.
\end{equation}

\algorithmref{alg:pac} shows the algorithm for \emph{Iterative Hessian} (\textsc{Iter}) PAC-Bayes Optimization. If we set $\eta = T$, the algorithm will be come \emph{Approximate Hessian} (\textsc{Appr}) PAC-Bayes Optimization. It is based on Algorithm 1 in \citet{dziugaite2017computing}. The initialization of $\vw$ is different from \citet{dziugaite2017computing} because we believe what they wrote, $\abso(\vw)$ is a typo and $\log[\abso(\vw)]$ is what they actually means. It is more reasonable to initialize the variance $\vs$ as $\vw^2$ instead of $\exp[2\abso(\vw)]$.

\begin{algorithm}[ht]
\caption{PAC-Bayes bound optimization using layer-wise Hessian eigenbasis}
\textbf{Input:}\\
\algind$\vw_0\in\R^P$\Comment{Network parameters (Initialization)}\\
\algind$\vw\in \R^P$\Comment{Network parameters (SGD solution)}\\
\algind$S$ \Comment{Training examples}\\
\algind$\delta \in (0,1)$ \Comment{Confidence parameter}\\
\algind$b \in \mathbb{N}, c \in (0,1)$ \Comment{Precision and bound for $\lambda$}\\
\algind$\tau\in(0,1), T \in\mathbb{N}$ \Comment{Learning rate; No. of iterations}\\
\algind$\eta \in \mathbb{N}$ \Comment{Epoch interval for Hessian calculation}\\
\textbf{Output}\\
\algind$\vw$\Comment{Optimized network parameters}\\
\algind$\vs$\Comment{Optimized posterior variances in Hessian eigenbasis}\\
\algind$\lambda$\Comment{Optimized prior variancce}
\begin{algorithmic}[1]
\Procedure{Iterative-Hessian-PAC-Bayes}{}
    \State $\vvarsigma \gets \log[\abso(\vw)]$\Comment{where $\vs(\vvarsigma)= \exp(2\vvarsigma)$}
    \State $\varrho \gets -3$\Comment{where $\lambda(\varrho) = \exp(2\varrho)$}
    \State $R(\vw, \vs, \lambda) = \sqrt{\frac{1}{2}\BRE(\vw,\vs,\lambda; \delta)}$
    \Comment{BRE term}
    \State $B(\vw,\vs,\lambda,\vw') = \Ls(\vw')+R(\vw,\vs,\lambda)$\Comment{Optimization goal}
    \For{$t = 0 \to T-1$}\Comment{Run SGD for T iterations}
        \If{$t\mod \eta == 0$}
            \State $\textsc{HessianCalc}(w)$
        \EndIf
        \State \text{Sample} $\vxi \sim \N(0,1)^P$ 
        \State $\vw'(\vw,\vvarsigma)= \vw +\textsc{ToStandard}\left(\vxi \odot \exp(\vvarsigma)\right)$ \Comment{Generate noisy parameter for SNN}
        \State $\vw \gets \vw- \tau\left[\nabla_\vw R(\vw, \vs, \lambda)+\nabla_{\vw'}\Ls(\vw')\right]$
        \State $\vvarsigma \gets \vvarsigma - \tau\left[\nabla_\vvarsigma R(\vw, \vs(\vvarsigma), \lambda)+\textsc{ToHessian}\left(\nabla_{\vw'}\Ls(\vw')\right)\odot \vxi \odot \exp(\vvarsigma)\right]$
        \State $\varrho \gets \varrho - \tau\nabla_{\varrho}R(\vw,\vs,\lambda(\varrho))$ \Comment{Gradient descent}
    \EndFor
    \State \Return $w, s(\vvarsigma), \lambda(\varrho)$
\EndProcedure
\end{algorithmic}
\label{alg:pac}
\end{algorithm}

In the algorithm, \textsc{HessianCalc}$(\vw)$ is the process to calculate Hessian information with respect to the posterior mean $\vw$ in order to produce the Hessian eigenbasis to perform the change of basis. For very small networks, we can calculate Hessian explicitly but it is prohibitive for most common networks. However, efficient approximate change of basis can be performed using our approximated layer-wise Hessians. In this case, we would just need to calculate the full eigenspace of $\E[\mM]$ and that of $\E[\vx\vx^\T]$ for each layer. For $p$th layer, we denote them as $\mU^{(p)}$ and $\mV^{(p)}$ respectively with eigenvectors as columns. We can also store the corresponding eigenvalues by doing pairwise multiplications between eigenvalues of $\E[\mM]$ and $\E[\vx\vx^\T]$.

After getting the eigenspaces, we can perform the change of basis. Note that we perform change of basis on vectors with the same dimensionality as the parameter vector (or the posterior mean). $\textsc{ToHessian}(\vu)$ is the process to put a vector $\vu$ in the standard basis to the Hessian eigenbasis. We first break $\vu$ into different layers and let $\vu^{(p)}$ be the vector for the $p$th layer. We then define $\Mat^{(p)}$ as the reshape of a vector to the shape of the parameter matrix $\mW^{(p)}$ of that layer. We have the new vector $\vv^{(p)}$ in Hessian basis as
\begin{equation}
\label{eqn:appendix_pacbayes_changebasis}
    \vv^{(p)} = \vect\left[\mU^{(p)T}\Mat^{(p)}(\vu^{(p)})\mV^{(p)}\right].
\end{equation}
The new vector $\vv = \textsc{ToHessian}(\vu)$ is thus the concatenation of all the $\vv^{(p)}$.

$\textsc{ToStandard}(\vv)$ is the process to put a vector $\vv$ in the Hessian eigenbasis to the standard basis. It is the reverse process to $\textsc{ToHessian}$. We also break $\vv$ into layers and let the vector for the $p$th layer be $\vv^{(p)}$. Then, the new vector $\vu^{(p)}$ is
\begin{equation}
    \vu^{(p)} = \vect\left[\mU^{(p)}\Mat^{(p)}(\vv^{(p)})\mV^{(p)T}\right],
\end{equation}
The new vector $\vu = \textsc{ToStandard}(\vv)$ is thus the concatenation of all $\vu^{(p)}$.

After getting optimized $\vw, \vs, \lambda$, we compute the final bound using Monte Carlo methods same as in \citet{dziugaite2017computing}.

Note that the prior $P$ is invariant with respect to the change of basis, since its covariance matrix is a multiple of identity $\lambda\mI_P$. Thus, the KL divergence can be calculate in the Hessian eigenbasis without changing the value of $\lambda$. In the \emph{Iterative Hessian with approximated output Hessian} (\textsc{Iter.M}), we use $\Tilde{M}$ to approximate $\E[\mM]$, as in \equationref{eqn:M_SAS_Approx}.

We followed the experiment setting proposed by \citet{dziugaite2017computing} in general. In all the results we present, we first trained the models from Gaussian random initialization $w_0$ to the initial posterior mean estimate $w$ using SGD (lr=0.01) with batch-size 128 and epoch number 1000.

We then optimize the posterior mean and variance with layer-wise Hessian information using \algorithmref{alg:pac},
where $\delta = 0.025$, $b=100$, and $c=0.1$.
We train for 2000 epochs, with learning rate $\tau$ initialized at 0.001 and decays with ratio 0.1 every 400 epochs. For \emph{Approximated Hessian} algorithm, we set $\eta=1$. For \emph{Iterative Hessian} algorithm, we set $\eta=10$. We also tried $\eta$ with the same decay schedule as learning rate (multiply $\eta$ by 10 every time the learning rate is multiplied by 0.1) and the results are similar to those without decay.
We also used the same Monte Carlo method as in \citet{dziugaite2017computing} to calculate the final PAC-Bayes bound. Except that we used 50000 iterations instead of 150000 iterations because extra iterations do not further tighten the bound significantly. We use sample frequency 100 and $\delta'=0.01$ as in that paper.

The complete experiment results are listed in \tableref{tab:app_pac}. We follow the same naming convention as in \citet{dziugaite2017computing} except adding T-$200^2$ we introduced in \sectionref{sec:hessian}. T-$600_{10}$, T-$600^2_{10}$, and T-$200^2_{10}$ are trained on standard MNIST with 10 classes, and others are trained on MNIST-2 (see \sectionref{sec:appendix_exp_dataset}), in which we combined class 0-4 and class 5-9.

In \tableref{tab:app_pac}, Prev means the previous results in \citet{dziugaite2017computing}, \textsc{Appr} means \emph{Approximated Hessian}, \textsc{Iter} means \emph{Iterative Hessian}, \textsc{Iter} (D) means \emph{Iterative Hessian} with decaying $\eta$, \textsc{Iter.M} means \emph{Iterative Hessian with approximated output Hessian}. \textsc{Base} are Base PAC-Bayes optimization as in the previous paper.

We also plotted the final posterior variance, $\vs$. \figureref{fig:app_PAC} shown below is for T-$200^2_{10}$. For posterior variance optimized with our algorithms (\textsc{Appr}, \textsc{Iter}, and \textsc{Iter.M}) we can see that direction associated with larger eigenvalue has a smaller variance. This agrees with our presumption that top eigenvectors are aligned with sharper directions and should have smaller variance after optimization. The effect is more significant and consistent for Iterative Hessian, where the PAC-Bayes bound is also tighter.
\begin{figure}[H]
    \centering
    \includegraphics[width=\textwidth]{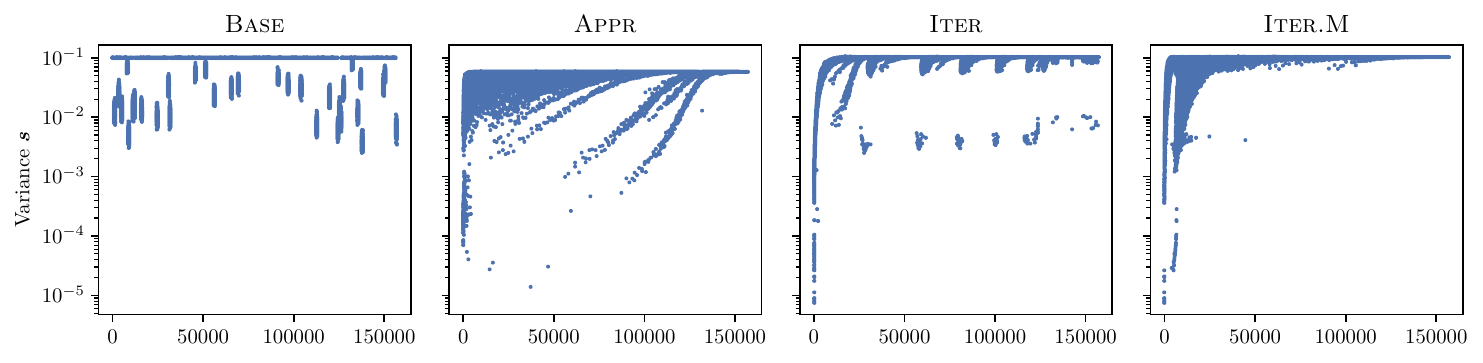}
    \caption{Optimized posterior variance, $\vs$. (fc1:T-$200^2$, trained on MNIST), the horizontal axis is ordered with decreasing eigenvalues.}
    \label{fig:app_PAC}
\end{figure}

\newpage

\begin{table}[H]
  \centering
  \caption{Full PAC-Bayes bound optimization results}
  \vskip 0.1in
    \begin{center}
    \begin{tabular}{llccccc}
    \toprule
    \textbf{Network} & \textbf{Method} &  \shortstack{PAC-Bayes\\Bound} & \shortstack{KL\\Divergence} & \shortstack{SNN\\loss} & \shortstack{$\lambda$ (prior)} &\shortstack{Test\\Error} 
    \\ \midrule
    T-600          & \textsc{Prev}         & 0.161  & 5144   & 0.028   &    -    & 0.017 \\
                   & \textsc{Base}      & 0.154  & 4612.6 & 0.03373 & -1.3313 & 0.0153 \\
                   & \textsc{Appr}     & 0.1432 & 3980.6 & 0.03417 & -1.6063 & 0.0153 \\
                   & \textsc{Iter}       & \textbf{0.1198} & 3766.1 & 0.02347 & -1.2913 & 0.0153 \\
                   & \textsc{Iter(D)} & 0.1199 & 3751.1 & 0.02366 & -1.2913 & 0.0153 \\
                   & \textsc{Iter.M} & 0.1255 & 3929.9 & 0.02494 & -1.3213 & 0.0153\\
    \hline\rule{0pt}{2.5ex}
    T-$600^2$      & \textsc{Prev}        & 0.186  & 6534   & 0.028   &    -    & 0.016 \\
                   & \textsc{Base}      & 0.1921 & 6966.6 & 0.03262 & -1.4163 & 0.0148 \\
                   & \textsc{Appr}     & 0.1658 & 5176.1 & 0.03468 & -2.0963 & 0.0148 \\
                   & \textsc{Iter}       & 0.1456 & 5086.5 & 0.02473 & -1.7963 & 0.0148 \\
                   & \textsc{Iter(D)} & \textbf{0.1443} & 4956.8 & 0.02523 & -1.7963 & 0.0148 \\
                   & \textsc{Iter.M} & 0.1502 & 5024.5 & 0.02767 & -1.8363 & 0.0148\\
    \hline\rule{0pt}{2.5ex}

    T-1200         & \textsc{Prev}         & 0.179  & 5977   & 0.027   &    -    & 0.016 \\
                   & \textsc{Base}      & 0.1754 & 5917.6 & 0.03295 & -1.5463 & 0.0161 \\
                   & \textsc{Appr}     & 0.1725 & 5318.8 & 0.03701 & -1.8313 & 0.0161 \\
                   & \textsc{Iter}       & 0.1417 & 5071  & 0.02292 & -1.4763 & 0.0161 \\
                   & \textsc{Iter(D)} & \textbf{0.1413} & 5021.1 & 0.02316 & -1.4763 & 0.0161 \\
                   & \textsc{Iter.M} & 0.1493 & 5185.4 & 0.02576	& -1.5363 &	0.0161\\
    \hline\rule{0pt}{2.5ex}
    T-$300^2$      & \textsc{Prev}         & 0.17   & 5791   & 0.027   &    -    & 0.015 \\
                   & \textsc{Base}      & 0.1686 & 5514.9 & 0.03329 & -1.1513 & 0.015 \\
                   & \textsc{Appr}     & 0.1434 & 4105.4 & 0.03296 & -1.8063 & 0.015 \\
                   & \textsc{Iter}       & 0.1249 & 3873.2 & 0.02514 & -1.4763 & 0.015 \\
                   & \textsc{Iter(D)} & \textbf{0.1244} & 3833.7 & 0.02526 & -1.4763 & 0.015 \\
                   & \textsc{Iter.M} & 0.1308	& 3987.2 &	0.02721	 & -1.5713 & 0.015 \\
    \hline\rule{0pt}{2.5ex}
    R-600          & \textsc{Prev}         & 1.352  & 201131 & 0.112   &    -    & 0.501 \\
                   & \textsc{Base}      & 0.6046 & 1144.8 & 0.507   & -1.8263 & 0.4925 \\
                   & \textsc{Appr}     & 0.5653 & 390.25 & 0.5066  & -2.4713 & 0.4925 \\
                   & \textsc{Iter(D)} & 0.5681 & 431.62 & 0.5066  & -2.4513 & 0.4925 \\
                   & \textsc{Iter.M} & \textbf{0.5616} & 340.62 &	0.5065	& -2.5263 &	0.4925\\
    \hline\rule{0pt}{2.5ex}
    T-$200^2_{10}$     & \textsc{Base}      & 0.4165 & 21896  & 0.04706 & -1.1513 & 0.0208 \\
                   & \textsc{Appr}     & 0.2621 & 11068  & 0.0366  & -1.4213 & 0.0208 \\
                   & \textsc{Iter}       & \textbf{0.2145} & 9821  & 0.02229 & -1.1513 & 0.0208 \\
                   & \textsc{Iter(D)} & 0.2311 & 9758.5 & 0.03071 & -1.1513 & 0.0208 \\
                   & \textsc{Iter.M} & 0.2728 & 13406 & 0.02605	& -1.1513 &	0.0208\\
    \hline\rule{0pt}{2.5ex}
    T-$600_{10}$   & \textsc{Base}      & 0.2879 & 12674  & 0.03854 & -1.1513 & 0.018 \\
                   & \textsc{Appr}     & 0.2424 & 9095.8 & 0.04159 & -1.6013 & 0.018 \\
                   & \textsc{Iter}       & \textbf{0.2132} & 8697.9 & 0.02947 & -1.3063 & 0.018 \\
                   & \textsc{Iter.M} & 0.2227	& 8870.9	& 0.03294	& -1.4613	& 0.018 \\
    \hline\rule{0pt}{2.5ex}
    T-$600^2_{10}$ & \textsc{Base}      & 0.3472 & 17212  & 0.03884 & -1.1513 & 0.0186 \\
                   & \textsc{Appr}     & 0.2896 & 11618  & 0.04723 & -2.0563 & 0.0186 \\
                   & \textsc{Iter}       & \textbf{0.2431} & 10568 & 0.03057 & -1.5713 & 0.0186 \\
    \bottomrule
    \end{tabular}
\end{center}
  \label{tab:app_pac}
\end{table}

\end{document}